\documentclass[twoside]{article}

\usepackage[accepted]{aistats_template/aistats2023}
\usepackage{url}
\usepackage{hyperref}    
\usepackage{booktabs}        
\usepackage{tabularx}       
\usepackage{amsmath,amssymb,amsthm,amsfonts,mathtools,bbm}
\usepackage{nicefrac}    
\usepackage{enumerate}
\usepackage{graphicx}
\usepackage{subfigure}
\usepackage{mathtools}
\usepackage[ruled,vlined]{algorithm2e}
\usepackage{todonotes}
\usepackage{microtype}
\usepackage{minitoc}
\usepackage{comment,bm}
\usepackage[toc,page,header]{appendix}
\usepackage{textcomp}

\usepackage{tikz}

\newcommand*\circled[1]{\tikz[baseline=(char.base)]{\node[shape=circle,draw,inner sep=2pt] (char) {#1};}}

\newcommand{\aggregation}{F}

\newcommand{\cenna}{\ensuremath{\text{NNM}}}

\newcommand{\tmean}[1]{\text{CWTM}{\left(#1\right)}}

\DeclarePairedDelimiter\ceil{\lceil}{\rceil}
\DeclarePairedDelimiter\floor{\lfloor}{\rfloor}

\DeclareMathOperator*{\argmax}{arg\,max}
\DeclareMathOperator*{\argmin}{arg\,min}

\def\D{\mathcal D}
\def\R{\mathbb{R}}
\def\N{\mathbb{N}}

\def\P{\mathcal{P}}

\def\H{\mathcal{H}}

\def\Z{\mathcal{Z}}

\newtheorem{assumption}{Assumption}
\newtheorem{theorem}{\bfseries Theorem}
\newtheorem{lemma}{\bfseries Lemma}
\newtheorem{proposition}{Proposition}
\newtheorem{corollary}{Corollary}
\newtheorem{definition}{Definition}
\newtheorem{remark}{Remark}
\newtheorem{observation}{Observation}

\makeatletter
\newtheorem*{rep@theorem}{\rep@title}
\newcommand{\newreptheorem}[2]{%
\newenvironment{rep#1}[1]{%
 \def\rep@title{#2 \ref{##1}}%
 \begin{rep@theorem}}%
 {\end{rep@theorem}}}
\makeatother
\newreptheorem{theorem}{Theorem}

\makeatletter
\newtheorem*{rep@corollary}{\rep@title}
\newcommand{\newrepcorollary}[2]{%
\newenvironment{rep#1}[1]{%
 \def\rep@title{#2 \ref{##1}}%
 \begin{rep@corollary}}%
 {\end{rep@corollary}}}
\makeatother
\newreptheorem{corollary}{Corollary}

\makeatletter
\newtheorem*{rep@lemma}{\rep@title}
\newcommand{\newreplemma}[2]{%
\newenvironment{rep#1}[1]{%
 \def\rep@title{#2 \ref{##1}}%
 \begin{rep@lemma}}%
 {\end{rep@lemma}}}
\makeatother
\newreptheorem{lemma}{Lemma}

\makeatletter
\newtheorem*{rep@proposition}{\rep@title}
\newcommand{\newrepproposition}[2]{%
\newenvironment{rep#1}[1]{%
 \def\rep@title{#2 \ref{##1}}%
 \begin{rep@proposition}}%
 {\end{rep@proposition}}}
\makeatother
\newreptheorem{proposition}{Proposition}

\makeatletter
\newtheorem*{rep@definition}{\rep@title}
\newcommand{\newrepdefinition}[2]{%
\newenvironment{rep#1}[1]{%
 \def\rep@title{#2 \ref{##1}}%
 \begin{rep@definition}}%
 {\end{rep@definition}}}
\makeatother
\newreptheorem{definition}{Definition}

\definecolor{gainsboro}{rgb}{0.86, 0.86, 0.86}


\newcommand{\expect}[1]{\mathop{{}\mathbb{E}}\left[{#1}\right]}
\newcommand{\condexpect}[2]{\mathbb{E}_{#1}\left[{#2}\right]}

\newcommand{\suchthat}{\ensuremath{~\middle|~}}
\newcommand{\knowing}{\suchthat{}}
\newcommand{\card}[1]{\left\lvert{#1}\right\rvert}
\newcommand{\absv}[1]{\card{#1}}

\newcommand{\norm}[1]{\left\lVert{#1}\right\rVert}

\newcommand{\indexvar}[3]{\ensuremath{{{#3}^{\ifthenelse{\equal{#1}{}}{}{\left({#1}\right)}}_{#2}}}}
\newcommand{\indexvarNoPar}[3]{\ensuremath{{{#3}^{\ifthenelse{\equal{#1}{}}{}{\left{#1}\right}}_{#2}}}}

\newcommand{\params}[2]{\indexvarNoPar{#1}{#2}{\theta}}

\providecommand{\iprod}[2]{\ensuremath{\left\langle #1,\,#2  \right\rangle}}
\providecommand{\mnorm}[1]{\ensuremath{\left\lvert#1\right\rvert}}
\providecommand{\norm}[1]{\ensuremath{\left\lVert#1\right\rVert }}

\newcommand{\loss}{\mathcal{L}}
\newcommand{\weight}[1]{\params{}{#1}}

\newcommand{\gradient}[2]{\indexvar{#1}{#2}{g}}

\newcommand{\proba}[2]{\ensuremath{\text{P}\!\left({#1}\ifthenelse{\equal{#2}{}}{}{\knowing{}{#2}}\right)}}

\renewcommand{\paragraph}[1]{\textbf{#1}~}

\newcommand{\drift}[1]{\xi_{#1}}
\newcommand{\dev}[1]{\delta_{#1}}
\newcommand{\mmt}[2]{m^{(#1)}_{#2}}

\newcommand{\AvgMmt}[1]{\overline{m}_{#1}}

\newcommand{\bucketSize}{s}

\usepackage{xcolor}
\usepackage{soul}
\newcommand{\mathcolorbox}[2]{\colorbox{#1}{$\displaystyle #2$}}
\usepackage{fancybox}
\usepackage{mdframed}

\begin{document}

\runningauthor{Allouah, Farhadkhani, Guerraoui, Gupta, Pinot, Stephan}

\twocolumn[
\aistatstitle{Fixing by Mixing: A Recipe for Optimal Byzantine ML under Heterogeneity}
\aistatsauthor{Youssef Allouah$^{\dag *}$ \And Sadegh Farhadkhani$^\dag$ \And Rachid Guerraoui$^\dag$} \vspace{3pt}
\aistatsauthor{Nirupam Gupta$^\dag$\And Rafaël Pinot$^\dag$\And  John Stephan$^\dag$ }  
\vspace{25pt}
]

\begin{abstract}

    Byzantine machine learning (ML) aims to ensure the resilience of distributed learning algorithms to misbehaving (or \emph{Byzantine}) machines. Although this problem received significant attention, prior works often assume the data held by the machines to be {\em homogeneous}, which is seldom true in practical settings. Data \emph{heterogeneity} makes Byzantine ML considerably more challenging, since a Byzantine machine can hardly be distinguished from a non-Byzantine outlier. A few solutions have been proposed to tackle this issue, but these provide suboptimal probabilistic guarantees and fare poorly in practice.
    
    This paper closes the theoretical gap, achieving optimality and inducing good empirical results. 
    In fact, we show how to automatically adapt existing solutions for (homogeneous) Byzantine ML to the heterogeneous setting through a powerful mechanism, we call  {\em nearest neighbor mixing} (NNM), which boosts  any standard robust distributed gradient descent variant to yield optimal Byzantine resilience under heterogeneity.
    We obtain similar guarantees (in expectation) by plugging NNM in the distributed {\em stochastic} heavy ball method, a practical substitute to distributed gradient descent. We obtain empirical results that significantly outperform state-of-the-art Byzantine ML solutions.
\end{abstract}

\section{Introduction}

Distributed machine learning (ML), i.e., distributing the training process amongst several machines (or {\em workers}), has been pivotal to the development of large complex models with high accuracy guarantees~\cite{Tensorflow2015,konevcny2016federated,OpenProblemsinFed2021}. In the now standard {\em master-worker} architecture, distributed ML essentially consists in the workers sharing their local actions with the help of a master machine (a.k.a., {\em server}) to compute an accurate {\em global} model. Despite its rising popularity, distributed ML is arguably very fragile and not yet ready for real-world deployment. In particular, a handful of misbehaving (a.k.a.,~{\em Byzantine}~\cite{lamport82}) workers can highly compromise the efficacy of standard distributed ML algorithms such as distributed gradient descent (D-GD)~\cite{bertsekas2015parallel}, by sending erroneous information to the server, e.g., see~\cite{little, empire}. Such behaviors can be caused by software and hardware bugs, by poisoned data or by malicious players controlling part of the system. 
Besides, this vulnerability can lead to severe societal repercussions if the resulting ML models are deployed in sensitive data-oriented applications such as medicine.

The problem of Byzantine resilience in distributed ML (or {\em Byzantine ML}) has received significant attention in recent years~\cite{krum,chen17,yin2018,allen2020byzantine,collaborativeElMhamdi21,Karimireddy2021,karimireddy2022byzantinerobust,farhadkhani2022byzantine}. It consists in designing a distributed algorithm that delivers an accurate model despite the presence of a subset of Byzantine workers.
The standard approach
(in a {\em master-worker} architecture) 
consists in having the server compute a robust aggregation to merge the information sent by the workers, to discard outliers. Most prior works relies however upon the strong assumption of \emph{homogeneity}; i.e, the data sampled by the workers during the training process are assumed to be identically distributed. This assumption, although justifiable in a centralized setting~\cite{bottou2018optimization}, is impractical in a distributed environment~\cite{OpenProblemsinFed2021}. Indeed, as each worker only holds a small part of the entire training dataset, the data samples at the workers are usually {\em heterogeneous} and need not be an accurate representation of the entire population.
These differences in the workers' data samples can camouflage disruptive deviations of Byzantine machines from the prescribed algorithm,  making the problem of Byzantine ML under heterogeneity significantly more challenging than its homogeneous counterpart~\cite{collaborativeElMhamdi21, karimireddy2022byzantinerobust}.

Recent works have shown that   heterogeneity in workers' data inevitably prevents any distributed ML algorithm from delivering an arbitrarily accurate model in the presence of Byzantine workers~\cite{charikar2017learning, liu2021approximate}. 
More specifically, in the context of smooth loss functions, a Byzantine ML solution can only deliver an approximate stationary point with an error lower bounded by the fraction of Byzantine workers times the maximal dispersion in workers' gradients (stemming from data heterogeneity)~\cite{karimireddy2022byzantinerobust}.
Despite a few attempts~\cite{gupta2020fault,collaborativeElMhamdi21,karimireddy2022byzantinerobust}, no  solution has so far matched the above lower bound deterministically, i.e., achieved \emph{optimal Byzantine resilience}. 
The best solutions so far~\cite{karimireddy2022byzantinerobust} are  randomized and could only match the lower bound in \emph{expectation}
(see Section~\ref{sec:back}).

\paragraph{Main result.} We close the theoretical gap and reach optimal Byzantine resilience under heterogeneity. Specifically, 
we show how to automatically adapt existing solutions for homogeneous Byzantine ML to the heterogeneous setting, while ensuring optimality. 
We do so through \textit{nearest neighbor mixing} ($\cenna$), a pre-aggregation method that averages each input with a subset of their nearest neighbors. 
We show that enhancing D-GD using a composition of $\cenna$ and a standard robust aggregation 
directly yields the first Byzantine ML solution to achieve optimal resilience under heterogeneity. Our guarantee holds as long as less than half of the workers are Byzantine, which is optimal. 

\paragraph{Technical contributions.} To prove our guarantees, we introduce a novel robustness criterion called {\em $(f,\kappa)$-robustness}. This notion quantifies the ability of an aggregation rule to estimate the average of honest workers' inputs despite $f$ out of $n$ workers being Byzantine. 
Crucially, our criterion characterizes a class of aggregations rules (for which $\kappa = \mathcal{O}\left( f/n\right)$) that grant optimal Byzantine resilience to D-GD under heterogeneity. 
While many notable aggregation rules (e.g., {\em geometric median}~\cite{small1990survey}, {\em coordinate-wise median}~\cite{yin2018}, and {\em Krum}~\cite{krum}) satisfy $(f,\kappa)$-robustness, they fall short of optimal robustness, as their $\kappa$ is in $\Theta{\left( 1\right)}$. 
Our main technical contribution is showing that $\cenna$ overcomes this shortcoming.
Particularly, we prove that $\cenna$ deterministically reduces the variance of the honest inputs by a factor $\mathcal{O}{(f/n)}$, while sufficiently limiting their drift from the true average. 
Consequently, we show that composing a $(f, \mathcal{O}(1))$-robust aggregation with $\cenna$ 
enables the larger class of $(f, \mathcal{O}(1))$-robust aggregation rules to confer optimal Byzantine resilience to D-GD.

\paragraph{Empirical evaluation.} 
Although we prove D-GD enhanced by $\cenna$ to be optimal, workers still need to compute the {\em full} gradient of their local loss function at every step, and risk getting stuck at saddle points~\cite{du2017gradient,bottou2018optimization}. 
We go one step further and show that applying $\cenna$ to a stochastic variant of D-GD, namely the distributed stochastic heavy ball method (D-SHB)~\cite{polyak1964some},
matches the lower bound in expectation\footnote{The expectation is on the randomness due to data subsampling, unlike~\cite{karimireddy2022byzantinerobust} which features additional sources of randomness that hinder empirical performance. 
See Section~\ref{sec:dshb-analysis} for details.}.
We empirically show that the resulting scheme significantly improves over the state-of-the-art
when tested on standard classification tasks such as MNIST and CIFAR-10.
In short, our approach enables a modular practice of Byzantine resilient ML under heterogeneity, by first patching existing solutions with $\cenna$ and then deploying them with D-SHB.

\paragraph{Paper outline.}
Section~\ref{sec:back} formally presents the problem of Byzantine ML and discusses the related work. Section~\ref{sec:dgd} introduces the analysis framework for robust variants of D-GD. Section~\ref{sec:optimal} presents $\cenna$, our solution to render existing methods optimal.
Section~\ref{sec:stochastic} presents a practical stochastic extension of our approach.
Section~\ref{sec:exp} presents our experimental results. 
We defer all proofs to Appendix~\ref{app:res-averaging}-\ref{app:convergence-dshb}.

\section{Background \& Related Work}
\label{sec:back}
We consider a master-worker distributed architecture with $n$ workers $w_1, \dots ,w_n$ and a central server. The workers hold local datasets $\D_1, \dots,\D_n$, composed each of $m$ data points from an input space $\Z$. Specifically, for any $i \in [n]$, $\D_i \coloneqq \{z_1^{(i)}, \dots, z_m^{(i)}\} \subset \Z^m$. For a given model parameterized by vector $\theta \in \R^d$, each worker $w_i$ has a loss function
\begin{equation*}
    \loss_i{(\theta)} \coloneqq \frac{1}{m}\sum_{k=1}^m \ell (\theta, z_k^{(i)} ),
\end{equation*}
where $\ell: \mathbb{R}^d \times \Z \rightarrow \R$ is a point-wise loss function, which we assume to be differentiable with respect to $\theta$. 
Furthermore, we assume each loss function $\loss_i$ to be $L$-smooth; i.e., for all $\theta, \theta' \in \R^d$,
\begin{align*}
    \norm{\nabla \loss_{i}(\theta{}) - \nabla \loss_{i}(\theta{}')} \leq L \norm{\theta{} - \theta{}'}.
\end{align*}

\paragraph{Objective in a Byzantine-free setting.} 
When all the workers are honest, i.e., they follow the prescribed algorithm correctly, the goal of the server is to compute a stationary point of the global loss function\footnote{Ideally, the server seeks to find a global minimizer of $\loss$. However, as the loss could be non-convex (e.g., for neural networks), global minimization is NP-hard in general~\cite{boyd2004convex}.
Hence, we usually aim to find stationary points.} $\loss(\theta) \coloneqq \frac{1}{n} \sum_{i = 1}^n \loss_i{(\theta)}$.
Specifically, the server seeks to compute a stationary point $\weight{}^*$, i.e. $\nabla \loss(\weight{}^*) = 0$, where $\nabla{\loss}$ denotes the {\em gradient} of $\loss$.
Assuming that $\loss$ admits a stationary point, this goal can be achieved using the classical distributed (stochastic) gradient descent (D-GD/SGD) method~\cite{bertsekas2015parallel}, up to an approximation error.
In this method, the server maintains a model which is updated iteratively upon averaging the (stochastic) gradients computed by the workers on their loss functions.

\paragraph{Objective with Byzantine workers.} 
We consider an adversarial setting where $f$ workers, of a priori unknown identities, are Byzantine~\cite{lamport82}. These workers need not follow the prescribed protocol and may send arbitrary messages to the server. Here, the goal of the server is finding a stationary point of the average loss function of the honest (i.e., non-Byzantine) workers. We formally define Byzantine resilience in this context as follows.

\vspace{5pt}
\begin{definition}[{\bf $(f, \varepsilon)$-Byzantine resilience}]
\label{def:resilience}
A learning algorithm is said {\em $(f, \, \varepsilon)$-Byzantine resilient} if, despite the presence of $f$ Byzantine workers, it outputs $\hat{\theta}$ such that 
$$\norm{ \nabla \loss_{\H}(\hat{\theta}) }^2 \leq \varepsilon,$$
where $\H$ denotes the set of indices of honest workers and $\loss_{\H}{(\theta{})} \coloneqq \frac{1}{\absv{\H}} \sum_{i \in \H} \loss_i(\theta{})$.
\end{definition}

In words, a $(f, \varepsilon)$-Byzantine resilient algorithm finds an {\em $\varepsilon$-approximate} stationary point for the honest empirical loss even in the presence of $f$ Byzantine workers.
Note that $(f,\varepsilon)$-Byzantine resilience is impossible in general (for any $\varepsilon$) when $f \geq \nicefrac{n}{2}$, see~\cite{liu2021approximate}. Therefore, throughout the paper, we assume that $f < \nicefrac{n}{2}$.

\paragraph{Standard Byzantine ML solutions.} A typical strategy to obtain $(f,\varepsilon)$-Byzantine resilience is to enhance the D-GD (or D-SGD) method by replacing the simple averaging of the workers’ gradients at the server by a \emph{robust aggregation rule}. Basically, such a scheme aims to mitigate the negative impact of Byzantine gradients by accurately estimating the average of honest workers' gradients. Prominent aggregation rules include Krum~\cite{krum}, geometric median (GM)~\cite{small1990survey,pillutla2019robust,acharya2021}, coordinate-wise median (CWMed)~\cite{yin2018}, coordinate-wise trimmed mean (CWTM)~\cite{yin2018}, and minimum diameter averaging (MDA)~\cite{rousseeuw1985multivariate,collaborativeElMhamdi21}. Recent studies have also explored the idea of using the {\em history} of workers' gradients to strengthen the resilience guarantees of these solutions, e.g., by using distributed momentum~\cite{karimireddy2022byzantinerobust,farhadkhani2022byzantine} or by tracking the history of the workers' gradients~\cite{allen2020byzantine}. 

Nevertheless, all these works rely heavily on the assumption that the honest workers have {\em homogeneous} data, i.e., there exists a {\em ground-truth distribution} $\mathcal{D}$ such that $D_i \sim \mathcal{D}^m$ for all $i \in \H$. Crucially, the robustness of these methods deteriorates drastically when this assumption is violated~\cite{karimireddy2022byzantinerobust}.

\paragraph{The challenge of {\em heterogeneity} in Byzantine ML.} The data across the honest workers is arguably {\em heterogeneous} in real-world distributed settings. In the context of non-convex optimization, data heterogeneity can be modeled as stated per the following assumption~\cite{karimireddy2020scaffold,karimireddy2022byzantinerobust}.
\begin{assumption}[Bounded heterogeneity]
\label{asp:hetero}
There exists a real value $G$ such that for all $\theta{} \in \mathbb{R}^d$,
\begin{align*}
    \frac{1}{\card{\H}} \sum_{i \in \H}\norm{\nabla \loss_i(\theta{}) - \nabla \loss_{\H}(\theta{})}^2 \leq G^2.
\end{align*}
\end{assumption}
Such heterogeneity makes the problem of Byzantine ML much more challenging, as the server can confuse incorrect gradients (from a Byzantine worker) with correct gradients from an honest worker holding outlier data points.
Indeed, recent works show that there exists a lower bound on the error of any distributed algorithm in the presence of Byzantine workers~\cite{collaborativeElMhamdi21, karimireddy2022byzantinerobust}. 
Specifically, we have the following result, owing to Theorem~III in \cite{karimireddy2022byzantinerobust}.
\begin{proposition}
\label{prop:lower-bound}
If a learning algorithm $\mathcal{A}$ is $(f,\varepsilon)$-Byzantine resilient for every collection of smooth loss functions $\loss_1, \ldots, \loss_n$ satisfying Assumption~\ref{asp:hetero},
then $\varepsilon = \Omega \left(\nicefrac{f}{n}\, G^2 \right)$.
\end{proposition}

\paragraph{Brittleness of existing solutions for heterogeneity.} Only a handful of prior works have studied Byzantine ML under heterogeneity. The problem was first formally addressed in~\cite{li19} by proposing {\em RSA}, a variant of D-SGD built upon $\ell_p$ regularization. However, the analysis of \emph{RSA} relies on the assumption of strong convexity, which is rarely satisfied in modern-day ML. A subsequent work~\cite{gupta2020fault} introduced a novel aggregation rule, namely comparative gradient elimination (CGE). While this work provides some valuable insight on the general problem of heterogeneous Byzantine ML (notably on the need for redundancy), CGE 
fails to guarantee convergence
even in the homogeneous setting (see~\cite{farhadkhani2022byzantine}). The work~\cite{collaborativeElMhamdi21} considers a peer-to-peer setting with asynchronous communication and heterogeneous data. However, the proposed algorithm is only analyzed asymptotically and provides suboptimal probabilistic robustness.
In this peer-to-peer setting, another work used a nearest neighbor scheme similar to ours~\cite{farhadkhani2022making}, with honest nodes using their own honest vectors as pivots. However, in our master-worker setting, their technique cannot be used as the server does not have access to any reliable vector.

The closest relevant work to ours is~\cite{karimireddy2022byzantinerobust}, which also proposes a pre-aggregation step called \emph{Bucketing}, which is reminiscent of the {\em median-of-means} estimator~\cite{nemirovski1983problem,jerrum1986random,alon1996space, tu2021variance}. Essentially, Bucketing consists in randomly partitioning the inputs into {\em buckets}, and then feeding the average of the inputs of each bucket to the robust aggregation rule. The randomness of the partition process reduces the {\em empirical variance} of the honest inputs in expectation, but it highly compromises the worst-case robustness of the scheme (see Appendix~\ref{app:Bucketing} for a detailed explanation). Furthermore, our experiments expose the inability of these methods to defend against state-of-the-art Byzantine attacks~\cite{empire,little,allen2020byzantine, karimireddy2022byzantinerobust}.
\section{Robust Distributed Gradient Descent}
\label{sec:dgd}
In this section, we introduce a general framework for analyzing the convergence of robust variants of D-GD (or {\em robust D-GD}). We use this framework later in Section~\ref{sec:optimal} to showcase the benefits of our proposed pre-aggregation step $\cenna$. We start by presenting the skeleton of robust D-GD in Section~\ref{sec:RDGD} below. Then, we present the convergence analysis of robust D-GD in Section~\ref{sec:analysisRDGD} upon introducing the notion of $(f,\kappa)$-robustness.

\subsection{Description of Robust D-GD}
\label{sec:RDGD}

\begin{table*}[t!]
\centering
\def\arraystretch{2}
\begin{tabular}{c|cccc|c} 
\textbf{Aggregation}  & GM & CWTM  & CWMed  & Krum & \textbf{Lower bound}\\ 
\hline
\textbf{$\kappa$}& $\left(1+\frac{f}{n-2f}\right)^2$ & $\frac{f}{n-2f}\left(1+\frac{f}{n-2f}\right)$ & $\left(1+\frac{f}{n-2f}\right)^2$ & $1+\frac{f}{n-2f}$ & $\frac{f}{n-2f}$\\
\end{tabular}
\caption{Robustness coefficient $\kappa$ for different $(f,\kappa)$-robust aggregation rules, ignoring numerical constants.
The exact robustness coefficients are deferred to Appendix~\ref{app:gars}.
Note also that our analysis is tight and significantly improves over previous works~\cite{farhadkhani2022byzantine,karimireddy2022byzantinerobust}, e.g., eliminating the dependence on dimension for CWMed and CWTM.
}
\label{tab:kappa}
\end{table*}
Robust D-GD, summarized below in Algorithm~\ref{gd}, is an iterative algorithm that proceeds in $T$ steps.

\begin{algorithm}[ht!]
\caption{Robust D-GD}
\label{gd}
\textbf{Input:} 
Initial model $\theta_0$,
robust aggregation $F$,
learning rate $\gamma$, and number of steps $T$.\\

\For{$t=1 \dots T$}{
\textcolor{blue!80}{{\textbf{Server}}} broadcasts $\theta_{t-1}$ to all workers\;
\For{\textbf{\textup{every}} \textcolor{orange}{\textbf{honest worker}, $\mathbf{i}$} \textbf{\textup{in parallel}}}{

Compute and send gradient $g_t^{(i)} = \nabla{\loss_i{(\theta_{t-1})}}$\;
}
\noindent

\textcolor{blue!80}{{\textbf{Server}}} aggregates the gradients:
$R_t = F{(g_t^{(1)}, \ldots, g_t^{(n)})}$\;

\textcolor{blue!80}{{\textbf{Server}}} updates the model:
$\theta_{t} = \theta_{t-1} - \gamma R_t$\;
}

\textcolor{blue!80}{{\textbf{Server}}} finds $\tau \in \argmin\limits_{1 \leq t \leq T} \norm{R_t}$\, and sets $\hat{\theta} = \theta_{\tau-1}$.

\textbf{return} $\hat{\theta}$\;
\end{algorithm}

Essentially, the server starts by initializing a parameter vector $\theta_0$. At each step $t \in [T]$, the server broadcasts the model $\theta_{t-1}$ to all the workers. After receiving $\theta_{t-1}$, each honest worker sends back the gradient $g_t^{(i)} = \nabla{\loss_i{(\theta_{t-1})}}$, computed by evaluating $\theta_{t-1}$ on their local dataset $\D_i$. While the honest workers follow the algorithm correctly, a Byzantine worker $w_i$ may send any arbitrary value for $g_t^{(i)}$.
Upon receiving the gradients from all the workers, the server aggregates them using a robust aggregation rule $F \colon \R^{d \times n} \to \R^d$. Specifically, the server computes
\begin{equation*}
    R_t = F{\left(g_t^{(1)}, \ldots, g_t^{(n)}\right)}.
\end{equation*}
Finally, the server updates the model to 
$\theta_{t} = \theta_{t-1} - \gamma R_t,$ where $\gamma > 0$ is referred to as the learning rate.  
After $T$ steps, the server outputs the model $\theta_t$ for which the associated aggregate $R_t$ has the smallest norm. That is, the algorithm outputs 
$\hat{\theta}= \theta_{\tau-1},\text{ where }\tau \in \argmin\limits_{1 \leq t \leq T} \norm{R_t}.$

\subsection{Analysis of Robust D-GD}
\label{sec:analysisRDGD}

At the core of robust D-GD lies the aggregation rule $F$. To tightly analyze the utility of $F$, we introduce the notion of $(f,\kappa)$-robustness which unifies previous robustness criteria and is sufficiently fine-grained to obtain tight convergence guarantees.  In words, $(f,\kappa)$-robustness ensures that the error of an aggregation rule, in estimating the average of the honest inputs, is \emph{uniformly} bounded by $\kappa$ times the variance of honest inputs. Formally, it is defined as follows.

\begin{definition}[{\bf $(f, \kappa)$-robustness}]
\label{def:resaveraging}
Let $f < \nicefrac{n}{2}$ and $\kappa \geq 0$. An aggregation rule $\aggregation$ is said to be {\em $(f, \kappa)$-robust} if for any vectors $x_1, \ldots, \, x_n \in \R^d$, and any set $S \subseteq [n]$ of size $n-f$, 
\begin{align*}
    \norm{\aggregation(x_1, \ldots, \, x_n) - \overline{x}_S}^2 \leq \frac{\kappa}{\card{S}} \sum_{i \in S} \norm{x_i - \overline{x}_S}^2
\end{align*}
where $\overline{x}_S = \frac{1}{\card{S}} \sum_{i \in S} x_i$. We refer to $\kappa$ as the {\em robustness coefficient}.
\end{definition}
Our criterion unifies the existing robustness definitions including {\em $(f,\lambda)$-resilient averaging}~\cite{farhadkhani2022byzantine} and {\em $(\delta_{max},c)$-ARAgg}~\cite{karimireddy2022byzantinerobust}.
Specifically, 
when $\kappa = \mathcal{O}{(\nicefrac{f}{n})}$, our definition implies both $(f,\lambda)$-resilient averaging and $(\delta_{max},c)$-ARAgg\footnote{Although for satisfying $(f,\lambda)$-resilient averaging condition having $\kappa = \sqrt{\lambda}$ suffices, i.e., it need not be in $\mathcal{O}{(\nicefrac{f}{n})}$.}.
We prove this claim in Appendix~\ref{app:unification}. 
Furthermore, $(f, \kappa)$-robustness allows us to devise a general convergence analysis for robust D-GD when up to $f$ workers are Byzantine, under the standard heterogeneity assumption. We present our general convergence analysis of robust D-GD in Theorem~\ref{thm:conv-gd} below, assuming $F$ to be $(f,\kappa)$-robust. Recall that $\H$ denotes the set of indices for honest workers.

\begin{theorem}
\label{thm:conv-gd}
Let Assumption~\ref{asp:hetero} hold and recall that $\loss_\H$ is $L$-smooth.
Consider Algorithm~\ref{gd} with $T \geq 1$ and learning rate $\gamma = \nicefrac{1}{L}$.
If $F$ is $(f,\kappa)$-robust then
$$
   \norm{\nabla{\loss_{\H}{(\hat{\theta})}}}^2 
    \leq  4 \kappa G^2
    + \frac{4 L (\loss_\H{(\theta_0)}-\loss^*)}{T},
$$
where $\loss^* \coloneqq \inf_{\theta \in \R^d} \loss_{\H}(\theta)$.
\end{theorem}

According to Theorem~\ref{thm:conv-gd}, the asymptotic error for robust D-GD (when $T \to \infty$) is optimal, i.e., it matches the lower bound from Proposition~\ref{prop:lower-bound} if $F$ is $(f,\kappa)$-robust with $\kappa = \mathcal{O}\left( \nicefrac{f}{n}\right)$. In the homogeneous case, i.e., when $G=0$, robust D-GD can asymptotically reach a stationary point of the average honest loss function despite the presence of $f$ Byzantine workers, as long as $F$ is $(f,\kappa)$-robust with a bounded $\kappa$. Note that the convergence rate of $\nicefrac{1}{T}$ is standard for smooth non-convex loss functions when analyzing first-order methods such as gradient descent~\cite{ghadimi2016accelerated}. 

\paragraph{Suboptimality of existing aggregations.}
Several existing aggregation rules such as \textnormal{CWTM}, \textnormal{Krum}, \textnormal{GM}, and  \textnormal{CWMed} can be shown to be $(f, \kappa)$-robust. The robustness coefficients $\kappa$ for these rules are listed in Table~\ref{tab:kappa}, and the formal derivations are deferred to Appendix~\ref{app:gars}. Note also that for any $f < \nicefrac{n}{2}$, an aggregation rule cannot be $(f,\kappa)$-robust for $\kappa < \frac{f}{n-2f}$ (see Appendix~\ref{app:kappa-lb} for details). This lower bound means that, in general, a robust aggregation rule cannot provide an estimate that is arbitrarily close to the average of honest inputs. This also indicates that the values in Table~\ref{tab:kappa} for \textnormal{Krum}, \textnormal{GM}, and  \textnormal{CWMed} are suboptimal, as they do not match the lower bound. We show in the next section that $\cenna$ solves this issue by boosting the robustness of these aggregation rules and provides optimal convergence for robust D-GD.

\section{Fixing by Nearest Neighbor Mixing}
\label{sec:optimal}
In this section, we present a principled way of fixing the suboptimality of existing solutions in terms of Byzantine resilience. Specifically, we introduce a pre-aggregation algorithm called nearest neighbor mixing ($\cenna$), and prove optimal Byzantine robustness when it is embedded in D-GD.
We describe the $\cenna$ procedure in Section~\ref{sec:nnm-description} and demonstrate in Section~\ref{sec:nnm-analysis} that $\cenna$ amplifies robustness when applied prior to an aggregation rule.

\subsection{Description of $\cenna$}
\label{sec:nnm-description}

Given a set of $n$ input vectors $x_1,\ldots,x_n \in \R^d$, $\cenna$ replaces every vector with the average of its $n-f$ nearest neighbors (including itself). Formally, $\cenna$ outputs $(y_1, \ldots, y_n) = \cenna{(x_1,\ldots,x_n)}$ where for each $i \in [n]$,
\begin{equation}
        y_i = \frac{1}{n-f} \sum_{j=1}^{n-f} x_{i:j} \; ;
    \end{equation}
where $x_{i:j}$ is the $j^{th}$ nearest vector to $x_i$ in $(x_1,\ldots,x_n)$.
Intuitively, in the context of robust D-GD, applying $\cenna$ mixes the gradients artificially, hence making every \emph{mixed gradient} a better representation of $\nabla \loss_{\H}(\theta_{t-1})$. The overall procedure for $\cenna$ can be found in Algorithm~\ref{algo:cenna}.

\begin{algorithm}[ht!]
\label{algo:cenna}
\textbf{Input:} number of inputs $n$, number of Byzantine inputs $f < \nicefrac{n}{2}$, vectors $x_1, \ldots, x_n \in \R^d$.

\For{$i = 1 \ldots n$}{
    Sort inputs to get $(x_{i:1}, \dots, x_{i:n})$ such that
    \begin{equation*}
        \|x_{i:1} - x_i\| \leq \ldots \leq \|x_{i:n} - x_i\|;
    \end{equation*}
    Average the $n-f$ nearest neighbors of $x_i$, i.e.,
    \vspace{-5pt}
    \begin{equation*}
        y_i = \frac{1}{n-f} \sum_{j=1}^{n-f} x_{i:j} \; ;
    \end{equation*}
    \vspace{-10pt}
}

\textbf{return} $y_1, \ldots, y_n$\;
\caption{Nearest Neighbor Mixing ($\cenna$)}
\end{algorithm}

\begin{remark}
The computational cost of $\cenna$ is $\mathcal{O}{\left(d n^2\right)}$ in the worst case, which is due to the search of the $n-f$ nearest neighbors of each input.
Faster algorithms for approximate nearest neighbor search~\cite{hajebi2011fast,muja2014scalable} could be used for efficiency.
Nevertheless, we argue that the cost of NNM is comparable to (or even smaller than) several aggregation rules including Krum~\cite{krum}, Multi-Krum~\cite{krum}, and MDA~\cite{rousseeuw1985multivariate,brute_bulyan}. 
Below, we list the cost of prominent aggregation rules: Krum and Multi-Krum: $\mathcal{O}{(dn^2)}$, CWMed~\cite{yin2018} and MeaMed~\cite{meamed}: $\mathcal{O}{(dn)}$, CWTM~\cite{yin2018}: $\mathcal{O}{(dn \log{n})}$, $\epsilon$-approximate GM~\cite{acharya2021}: $\mathcal{O}{(dn + d \epsilon^{-2})}$,
MDA: $\mathcal{O}{(dn^2 + {n \choose f} n^2)}$. 
Finally, unlike spectral methods~\cite{shejwalkar2021manipulating}, we stress that our algorithm preserves linear dependency in d, which may be extremely large in modern-day ML (i.e., $d \gg n $).
\end{remark}

\subsection{Analysis of $\cenna$}
\label{sec:nnm-analysis}

We now present in Lemma~\ref{lem:cenna} the robustness amplification that $\cenna$ brings to a $(f, \kappa)$-robust aggregation rule. We then show in Corollary~\ref{cor:resilience} how it leads to optimal Byzantine resilience guarantees under heterogeneity.  

\begin{lemma}
\label{lem:cenna}
Let $f < \nicefrac{n}{2}$ and $F \colon \R^{d \times n} \to \R^d$.
If $F$ is $(f,\kappa)$-robust, then $F \circ \cenna$ is $(f,\kappa')$-robust with
$$\kappa' \leq \frac{8f}{n-f}(\kappa+1).$$
\end{lemma}

Essentially, Lemma~\ref{lem:cenna} means that the composition of $\cenna$ with any $(f,\kappa)$-robust aggregation rule improves the order of magnitude of the robustness coefficient 
to $\mathcal{O}{\left(\nicefrac{f}{n}(\kappa+1)\right)}$. Specifically, this means that if $F$ has a robustness coefficient $\kappa = \mathcal{O}{(1)}$, $\cenna$ renders the new robustness coefficient optimal. We believe that the condition $\kappa = \mathcal{O}\left(1 \right)$ is general enough to hold for any standard robust aggregation rule. In fact, assuming that there exists $\nu >0$ such that $n \geq (2+\nu)f$, all the aforementioned aggregation rules are $(f,\kappa)$-robust with $\kappa = \mathcal{O}\left( 1 \right)$. Thus, from a theoretical point of view, any aggregation rule from Table~\ref{tab:kappa} becomes a good candidate in Algorithm~\ref{gd} when combined with $\cenna$. Then, as stated in Corollary~\ref{cor:resilience} below, we obtain optimal Byzantine resilience under heterogeneity.

\begin{corollary}
\label{cor:resilience}
Let Assumption~\ref{asp:hetero} hold and recall that $\loss_\H$ is $L$-smooth.
Consider Algorithm~\ref{gd} with $\gamma = \nicefrac{1}{L}$ and aggregation $F \circ \cenna$.
If $F$ is $(f,\kappa)$-robust with $\kappa = \mathcal{O}(1)$, then Algorithm~\ref{gd} is $(f,\varepsilon)$-Byzantine resilient with
$$\varepsilon = \mathcal{O}{\left(\nicefrac{f}{n}G^2 + \nicefrac{1}{T}\right)}.$$
\end{corollary}

\begin{remark}
Note that CWTM achieves an order-optimal robustness coefficient $\kappa = \mathcal{O}{(\nicefrac{f}{n})}$ without the use of $\cenna$, whenever $n \geq (2+\nu)f$ for some constant $\nu >0$.
However, using $\cenna$ prior to CWTM significantly improves its empirical performance (see Section~\ref{sec:exp}).
We believe that an average-case analysis, instead of our worst-case approach, could capture this improvement in theory.
\end{remark}
\section{Stochastic Extension}
\label{sec:stochastic}
Despite its optimality, our solution for robust D-GD is computationally demanding, as it requires the honest workers to compute each gradient on the whole dataset. In practice, it is more common to consider stochastic variants of D-GD, where workers compute gradients on random mini-batches of datasets. To accommodate this, we show that $\cenna$ can also be used to enhance the performance of robust variants of distributed stochastic heavy ball (D-SHB) that has been recently proven to perform well in Byzantine ML in the homogeneous setting~\cite{el2021distributed,Karimireddy2021,farhadkhani2022byzantine}. 

\subsection{Description of Robust D-SHB}
Similarly to robust D-GD, the algorithm proceeds in $T$ iterations as follows. At every step $t \in [T]$, the server holds a model $\theta_{t-1}$  and each honest worker holds a local momentum $m_{t-1}^{(i)}$\footnote{$\theta_0$ is set by the server and $m_0^{(i)}=0$ for all honest workers.}. The server broadcasts the current model $\theta_{t-1}$ to all the workers for them to update their local momentum. To do so, each honest worker samples a mini-batch of data $S_t^{(i)}$ at random from $D_i$ and computes a \emph{stochastic} estimate $g_t^{(i)}$ of its gradient $\nabla \loss_i(\theta_{t-1})$, defined as
\begin{equation}
\label{eq:stochgradient}
      g_t^{(i)} = \frac{1}{\vert S_t^{(i)} \vert} \sum_{z \in S_t^{(i)}} \nabla \ell{\left( \theta_{t-1}, z \right)}.
      \vspace{-5pt}
\end{equation}
Then, each honest worker updates and sends to the server its local momentum
\begin{align}
\label{eq:momentum}
    m_{t}^{(i)}=\beta m_{t-1}^{(i)} + (1-\beta) g_t^{(i)},
\end{align}
where $\beta \in (0,1)$ is the \emph{momentum parameter} and is shared by all the honest users. Similarly to D-SGD, the server computes an aggregate of the momentums it receives as
$R_t = F{(m_t^{(1)}, \ldots, m_t^{(n)})}.$
Finally, the server updates the model to 
$\theta_{t} = \theta_{t-1} - \gamma R_t,$ where $\gamma > 0$ is the learning rate. After the $T$ iterations, the server outputs $\hat{\theta}$ by sampling uniformly from $(\theta_0,\ldots,\theta_{T-1})$. The overall procedure for robust D-SHB is summarized in Algorithm~\ref{sgd}.

\begin{algorithm}[ht!]
\caption{Robust D-SHB}
\label{sgd}
\textbf{Input:} 
Initial model $\theta_0$,
initial momentum $m_0^{(i)} = 0$ for honest workers,
robust aggregation $F$, learning rate $\gamma$, momentum coefficient $\beta$, and number of steps $T$.\\

\For{$t=1 \dots T$}{
\textcolor{blue!80}{{\textbf{Server}}} broadcasts $\theta_{t-1}$ to all workers\;
\For{\textbf{\textup{every}} \textcolor{orange}{\textbf{honest worker} $\mathbf{i}$}, \textbf{\textup{in parallel}}}{


Compute a stochastic gradient $g_t^{(i)}$ as per~\eqref{eq:stochgradient} \;
Update local momentum
$m_t^{(i)}$ as per~\eqref{eq:momentum} \; 
Send $m_t^{(i)}$ to the server \;
}

\textcolor{blue!80}{{\textbf{Server}}} aggregates the momentums:
$R_t = F{(m_t^{(1)}, \ldots, m_t^{(n)})}$\;

\textcolor{blue!80}{{\textbf{Server}}} updates the model:
$\theta_{t} = \theta_{t-1} - \gamma R_t$\;
}
Sample $\hat{\theta}$ uniformly from $(\theta_0,\ldots,\theta_{T-1})$\;

\textbf{return}  $\hat{\theta}$\;
\end{algorithm}

\subsection{Analysis of Robust D-SHB with $\cenna$}
\label{sec:dshb-analysis}
We now provide convergence guarantees for robust D-SHB with $\cenna$. To do so, we make an additional (standard) assumption on the variance of the stochastic gradients.
\begin{assumption}[Bounded variance]
\label{asp:bnd_var}
For each honest worker $w_i, i \in \H$, and all $\theta{} \in \mathbb{R}^d$, it holds that
\begin{equation*}
    \frac{1}{m}\sum_{ z \in \D_i}
    \norm{ \nabla_{\theta{}}{\ell{(\theta,z)}} - \nabla \loss_i\left(\theta{}\right)}^2
    \leq \sigma^2.
\end{equation*}
\end{assumption}
\vspace{-7pt}

We now present in Theorem~\ref{thm:conv} the extension of our results to D-SHB. Essentially, we analyze Algorithm~\ref{sgd} upon assuming a constant learning rate, that assumptions~\ref{asp:hetero} and \ref{asp:bnd_var} hold, and that $\loss_{\H}$ is $L$-smooth. 
For convenience, we introduce the following numerical constants before stating the theorem:
\begin{align*}
    &a_1 \coloneqq 36,
    a_2 \coloneqq 6 \sqrt{\loss_{\H}(\weight{0}) - \loss^*},
    a_3 \coloneqq 1728L, \\
    &a_4 \coloneqq 288L,
    a_5 \coloneqq 6L\, a_2^2,
    \text{ and } 
    \loss^* = \inf_{\weight{} \in \R^d} \loss_{\H}(\weight{}).
\end{align*}
\begin{theorem}
\label{thm:conv}
Let assumption~\ref{asp:hetero} and~\ref{asp:bnd_var} hold and recall that $\loss_\H$ is $L$-smooth.
Let $F$ be a $(f,\kappa)$-robust aggregation rule. Consider Algorithm~\ref{sgd} with momentum coefficient $\beta = \sqrt{1 - 24 \gamma L} $, and learning rate $$\gamma= \min{\left\{\frac{1}{24L}, ~ \frac{a_2}{2a_{\kappa}\sigma\sqrt{T}}\right\}}, $$
with $a_\kappa^2 \coloneqq a_3 \kappa + \frac{a_4}{n-f}$. For all $T \geq 1$, we have
\begin{align*}
   \expect{\norm{\nabla \loss_{\H} (\hat{\weight{}} )}^2} \hspace{-2pt}
    \leq  a_1 \kappa G^2
    + \frac{a_2 a_{\kappa}\sigma}{\sqrt{T}}
    + \frac{a_5}{T}
    + \frac{a_2 a_4 \sigma}{n a_\kappa T^{\nicefrac{3}{2}}}, 
\end{align*}
where the expectation is over the algorithm's randomness.
\end{theorem}


\paragraph{Tight probabilistic guarantee.}
The non-vanishing error in Theorem~\ref{thm:conv} is
in $\mathcal{O}{(\kappa G^2)}$. Hence, this error is tight when $\kappa = \mathcal{O}(\nicefrac{f}{n})$. However, the main difference with robust D-GD (Theorem~\ref{thm:conv-gd}) is that the inequality only holds in expectation, and therefore may not verify $(f,\varepsilon)$-Byzantine resilience.
Nevertheless, this result is consistent with the state-of-the-art convergence guarantees~\cite{karimireddy2022byzantinerobust}~(Theorem~II). Note that a subtle difference remains between our result and the one from \cite{karimireddy2022byzantinerobust}. The randomness of our result in Theorem~\ref{thm:conv} only depends on the random subsampling of data
and the final choice of the model. 
These are natural sources of randomness that are usually considered when studying stochastic gradient descent~\cite{bottou2018optimization}. 
On the other hand, the results from~\cite{karimireddy2022byzantinerobust} also incorporate an additional (exogenous) randomness introduced by the shuffling operation of Bucketing. This source of randomness cannot be canceled even if true gradients are computed by the workers, and it may amplify the uncertainty in the computations (see Section~\ref{sec:exp} and Appendix~\ref{app:Bucketing} for more details). As a result, we believe the probabilistic convergence guarantees from Theorem~\ref{thm:conv} to be strictly stronger than those obtained in \cite{karimireddy2022byzantinerobust}.

Once again, using Lemma~\ref{lem:cenna}, we can show that the tight (probabilistic) resilience guarantee implied by Theorem~\ref{thm:conv} holds for the larger class of $(f,\kappa)$-robust aggregation rules with $\kappa = \mathcal{O}{(1)}$.
We formalize this in Corollary~\ref{cor:nnm-dshb} below. \vspace{0.2cm}

\begin{corollary}
\label{cor:nnm-dshb}
Let Assumption~\ref{asp:hetero} hold and recall that $\loss_\H$ is $L$-smooth.
Consider Algorithm~\ref{sgd} with aggregation $F \circ \cenna$, under the same setting as Theorem~\ref{thm:conv}.
If $F$ is $(f,\kappa)$-robust with $\kappa = \mathcal{O}(1)$, then we have
\begin{align*}
   \expect{\norm{\nabla \loss_{\H} (\hat{\weight{}} )}^2}
   = \mathcal{O}{\left( \nicefrac{f}{n} G^2 + \nicefrac{1}{\sqrt{T}}\right)}, 
\end{align*}
where the expectation is over the algorithm's randomness.
\end{corollary}




\section{Experimental Evaluation}
\label{sec:exp}
In this section, we investigate the practical performances of $\cenna$. We report on a comprehensive set of experiments evaluating our solution against the state-of-the-art on three benchmark image classification tasks and under five different Byzantine attacks.

\begin{table*}[ht!]
\centering
\resizebox{\textwidth}{!}{%
\def\arraystretch{1.2}
\begin{tabular}{c|lccccc|c} 
\toprule
&\textbf{Aggregation} & \textbf{ALIE} & \textbf{FOE} & \textbf{LF} & \textbf{SF} & \textbf{Mimic} & \textbf{Worst Case}\\ \midrule
 & Krum & $10.19 \pm 00.61$ & $10.28 \pm 00.57$ & 12.76 $\pm$ 06.25 & 51.17 $\pm$ 06.14 & 86.78 $\pm$ 07.04 & $\mathcolorbox{red!50}{10.19 \pm 00.61}$\\
\hspace{-0.2cm}\circled{1}\hspace{-0.2cm}& Bucketing + Krum & $44.78 \pm 06.18$ & $36.10 \pm 15.70$ & 56.21 $\pm$ 19.49 & 47.54 $\pm$ 11.74 & 94.92 $\pm$ 03.93 & $\mathcolorbox{orange!0}{36.10 \pm 15.70}$\\
 & NNM + Krum & \textbf{78.30 $\pm$ 07.78} & \textbf{70.07 $\pm$ 04.39} & \textbf{93.07 $\pm$ 05.24} & \textbf{82.44 $\pm$ 02.86} & \textbf{97.69 $\pm$ 00.77} & \textbf{$\mathcolorbox{green!50}{70.07 \pm 04.39}$}\\ \midrule
&GM & \textbf{92.01 $\pm$ 04.35} & 65.61 $\pm$ 12.17 & 93.94 $\pm$ 03.70 &  57.86 $\pm$ 10.42 & 96.85 $\pm$ 01.57 & $\mathcolorbox{orange!0}{57.86 \pm 10.42}$\\
\hspace{-0.2cm}\circled{2}\hspace{-0.2cm} &Bucketing + GM & 39.83 $\pm$ 11.35 & 44.73 $\pm$ 16.47 & \textbf{96.22 $\pm$ 02.83} & \textbf{91.30 $\pm$ 03.91} & \textbf{97.68 $\pm$ 00.91} & $\mathcolorbox{red!50}{39.83 \pm 11.35}$\\
&NNM + GM & 81.26 $\pm$ 08.91 & \textbf{75.27 $\pm$ 02.69} & 94.23 $\pm$ 03.20 & 86.33 $\pm$ 03.73 & 97.17 $\pm$ 01.09 & $\mathcolorbox{green!50}{75.27 \pm 02.69}$\\ \midrule
&CWMed & 68.74 $\pm$ 06.99 & 19.48 $\pm$ 10.97 & 33.34 $\pm$ 17.02 & 27.96 $\pm$ 09.97 & 64.01 $\pm$ 12.77 & $\mathcolorbox{red!50}{19.48 \pm 10.97}$\\
\hspace{-0.2cm}\circled{3}\hspace{-0.2cm} &Bucketing + CWMed & 55.86 $\pm$ 10.00 & 42.80 $\pm$ 21.25 & 70.16 $\pm$ 11.65 & 50.96 $\pm$ 16.52 & 94.43 $\pm$ 03.48 & $\mathcolorbox{orange!0}{42.80 \pm 21.25}$\\
&NNM + CWMed & \textbf{80.52 $\pm$ 07.45} & \textbf{75.20 $\pm$ 08.80} & \textbf{93.42 $\pm$ 02.98} & \textbf{85.10 $\pm$ 06.05} & \textbf{97.38 $\pm$ 00.70} & $\mathcolorbox{green!50}{75.20 \pm 08.80}$\\ \midrule
&CWTM & 76.16 $\pm$ 07.68 & 69.96 $\pm$ 16.57 & 36.87 $\pm$ 21.43 & 27.45 $\pm$ 08.83 & 89.83 $\pm$ 02.83 & $\mathcolorbox{red!50}{27.45 \pm 08.83}$\\
\hspace{-0.2cm}\circled{4}\hspace{-0.2cm} &Bucketing + CWTM & 55.86 $\pm$ 10.00 & 42.80 $\pm$ 21.25 & 70.16 $\pm$ 11.65 & 50.96 $\pm$ 16.52 & 94.43 $\pm$ 03.48 & $\mathcolorbox{orange!0}{42.80 \pm 21.25}$\\
&NNM + CWTM & \textbf{79.04 $\pm$ 09.19} & \textbf{79.91 $\pm$ 03.94} & \textbf{94.75 $\pm$ 02.22} &  \textbf{84.78 $\pm$ 05.78} & \textbf{96.02 $\pm$ 03.25} & $\mathcolorbox{green!50}{79.04 \pm 09.19}$\\ \bottomrule
\end{tabular}}
\vspace{0.1cm}
\caption{Maximum test accuracies ($\%$) across $T=800$ learning steps on MNIST, under extreme heterogeneity ($\alpha = 0.1$) and five Byzantine attacks. There are $f=4$ Byzantine workers among $n=17$. The baseline accuracy (D-SHB) is $98.03 \pm 0.70 \%$. 
In each of the four horizontal blocks and under each attack, we highlight in \textbf{bold} the best accuracy. For every method, we also show the worst-case accuracy across attacks, and highlight the $\mathcolorbox{green!50}{\text{best}}$ and $\mathcolorbox{red!50}{\text{worst}}$ one in each block.}
\label{table:results_mnist}
\end{table*}

\subsection{Experimental Setup}

\paragraph{Datasets, models, and hyperparameters.} We consider three image classification datasets, namely MNIST~\cite{mnist}, Fashion-MNIST~\cite{fashion-mnist}, and CIFAR-10~\cite{cifar}; and we implement NNM on top of robust D-SHB.
Due to space limitation, we only present results on MNIST and CIFAR-10, and defer the remaining results to Appendix~\ref{app:exp_results}, including experiments on D-GD with NNM. 

On MNIST, we train a convolutional neural network (CNN) for $T = 800$ steps using a batch size $b = 25$, with a decaying learning rate starting at 0.75, and a momentum parameter $\beta = 0.9$. On CIFAR-10, we use a CNN with $b = 50$, $T = 2000$, $\gamma = 0.25$ that decays at step 1500, and $\beta = 0.9$. Furthermore, we implement our solution with four aggregation rules namely Krum, CWTM, CWMed, and GM\footnote{We implement GM using the approximation from~\cite{pillutla2019robust}.}, and compare its performance against the Bucketing~\cite{karimireddy2022byzantinerobust} method and vanilla aggregation rules.
As a benchmark, we also implement vanilla D-SHB (i.e., robust D-SHB with $F = $ average) in a setting where there are no faults ($f=0$).


\paragraph{Heterogeneity.} We simulate heterogeneity in honest workers' data by sampling from the original dataset using a Dirichlet distribution of parameter $\alpha$ (as done in~\cite{dirichlet}). We consider three heterogeneity regimes: \textit{extreme} ($\alpha = 0.1$), \textit{moderate} ($\alpha = 1$), and \textit{low} ($\alpha = 10$). A pictorial representation of the resulting heterogeneity as a function of $\alpha$ can be found in Appendix~\ref{app:exp_setup}. We run our algorithm on MNIST and Fashion-MNIST over the whole spectrum of heterogeneity defined above. However, as CIFAR-10 is considerably more challenging, we restrict the heterogeneity to $\alpha \in \{1, 10\}$.


\paragraph{Distributed system and Byzantine attacks.}
We consider a distributed system of $n=17$ workers, among which $f < \nicefrac{n}{2}$ can be Byzantine. We vary $f \in \{4, 6, 8\}$ on the MNIST dataset, and $f \in \{2, 3, 4\}$ on CIFAR-10. The Byzantine workers execute five state-of-the-art gradient attacks, namely Fall of Empires (FOE)~\cite{empire}, A Little is Enough (ALIE)~\cite{little}, Sign Flipping (SF)~\cite{allen2020byzantine}, Label Flipping~\cite{allen2020byzantine}, and Mimic~\cite{karimireddy2022byzantinerobust}. Note that for ALIE and FOE, we design optimized versions of the attacks, as done in~\cite{shejwalkar2021manipulating}. We explain this further in Appendix~\ref{app:exp_setup}.

\paragraph{Reproducibility and reusability.}
All experiments are run with five seeds from 1 to 5 for reproducibility purposes. The code will also be made available for reusability. Additional details on the setup
can be found in Appendix~\ref{app:exp_setup}.


\subsection{Empirical Results on MNIST}\label{exp_results_mnist}
In Table~\ref{table:results_mnist}, we carefully examine the performance of NNM on MNIST under extreme heterogeneity ($\alpha = 0.1$), in comparison with Bucketing and vanilla aggregation rules. For every block (i.e., every aggregation rule) and under every attack, we highlight in bold the algorithm resulting in the highest accuracy in the considered scenario.

\paragraph{NNM improves robustness.} We clearly see that our algorithm boosts the resilience of aggregation rules in Byzantine settings, and provides the most consistent behavior across attacks. In fact, for Krum, CWMed, and CWTM (i.e., blocks 1, 3, and 4 in Table~\ref{table:results_mnist}), NNM outputs the maximal accuracy under all attacks. The two other techniques (Bucketing and vanilla) showcase much weaker performances and are sometimes on par with a random classifier (e.g., 10.19\% under ALIE and 10.28\% under FOE for vanilla Krum).

The case of GM is less evident since the other techniques can outperform our method in some settings (e.g., ALIE, SF). However, a crucial observation is that there always exists at least one attack that considerably deteriorates the performance of Bucketing+GM and vanilla GM, whereas NNM+GM consistently yields desirable accuracies in all attack scenarios. Indeed, the minimum accuracy across attacks achieved by NNM+GM is 75.27\%, which is considerably better than Bucketing+GM and vanilla GM with 39.83\% and 57.86\% minimum accuracies, respectively.
Moreover, although vanilla GM scores the highest under ALIE, it showcases low accuracies of 65.61\% and 57.86\% under FOE and SF, respectively. Bucketing+GM also fails considerably under ALIE and FOE with accuracies far below 50\%. Finally, even though Bucketing+GM scores the highest under LF and Mimic, NNM+GM is also excellent under these two attacks with accuracies greater than 96\%.

\paragraph{Worst-case performance.} We show in the last column of Table~\ref{table:results_mnist} the \emph{worst-case} performance (across attacks) of every aggregation technique (i.e., every row), and rank them within each block from worst (in red) to best (in green). We argue that this is a critical metric to correctly evaluate Byzantine resilience, as the same algorithm can simultaneously greatly defend against some attacks but perform poorly against others.
Accordingly, we see from the last column that our method always displays the ``best" worst case. In fact, the lowest accuracy NNM yields across attacks and aggregation rules is 70.07\% with Krum. Furthermore, the worst case performances of the other techniques are much worse than ours, yielding values as low as 10.19\% with Krum and 39.83\% with Bucketing+GM.

Additionally, the worst case performance of Bucketing is almost equally poor for all aggregation rules (36.1\%, 39.83\%, and 42.80\%). However, all worst case accuracies of NNM are within the range 70-79\%. This highly suggests the unreliability of Bucketing, due to its subpar worst case behavior independently of the aggregation rule used.

\newpage

\subsection{Empirical Results on CIFAR-10}\label{exp_results_cifar}
In Figure~\ref{fig:plots_cifar_main}, we plot the performance of NNM and Bucketing on CIFAR-10 during $T=2000$ steps of learning, with GM, CWMed, CWTM, and Krum as base aggregation rules\footnote{In the considered setting, Bucketing with CWMed and CWTM are exactly equivalent.}. We consider two heterogeneity levels ($\alpha=1$ in row 1, and $\alpha=10$ in row 2), with two Byzantine workers executing the ALIE and LF attacks.

\begin{figure*}[ht]
    \centering
    \includegraphics[width=0.4\textwidth]{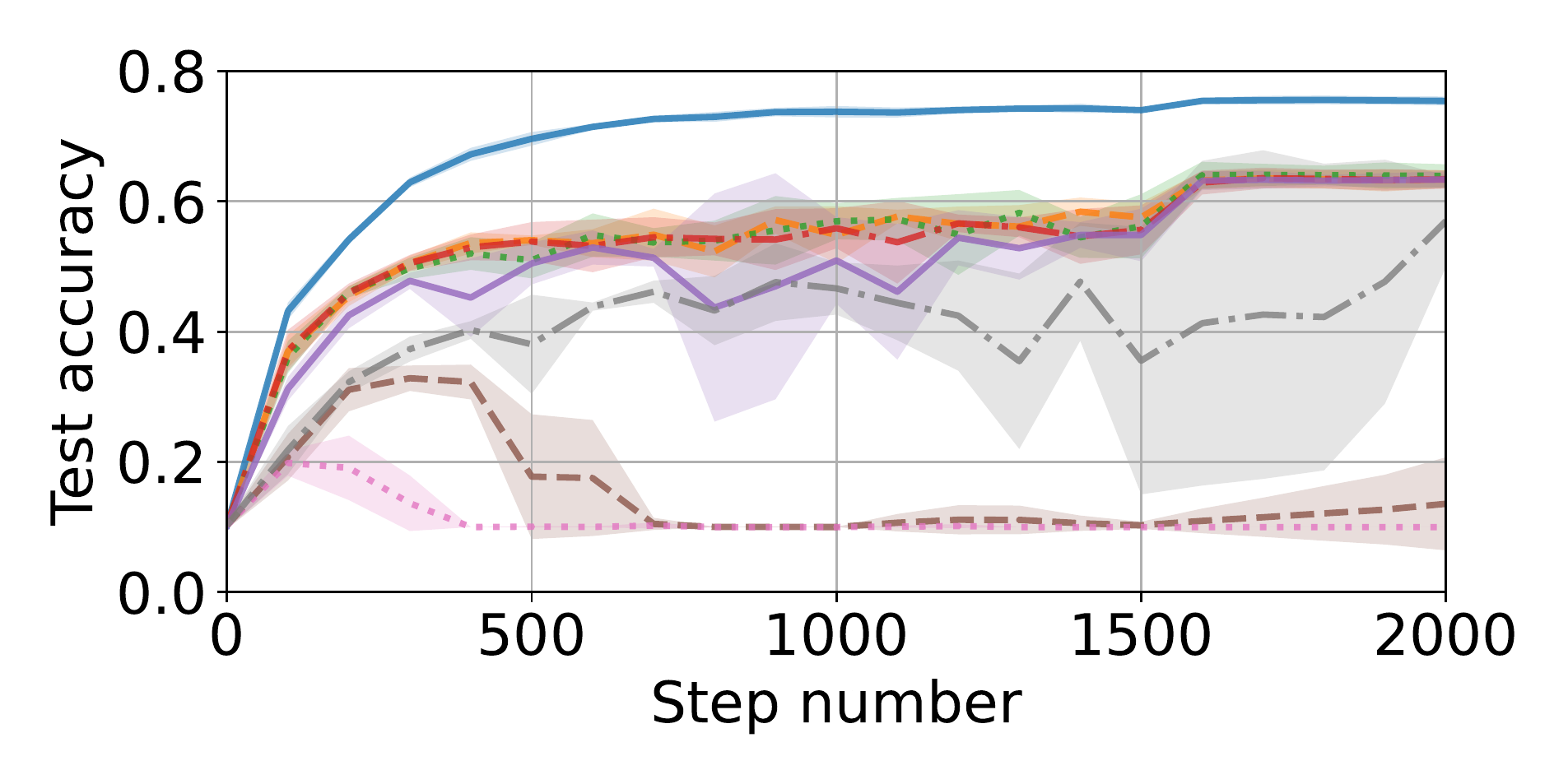}
    \includegraphics[width=0.4\textwidth]{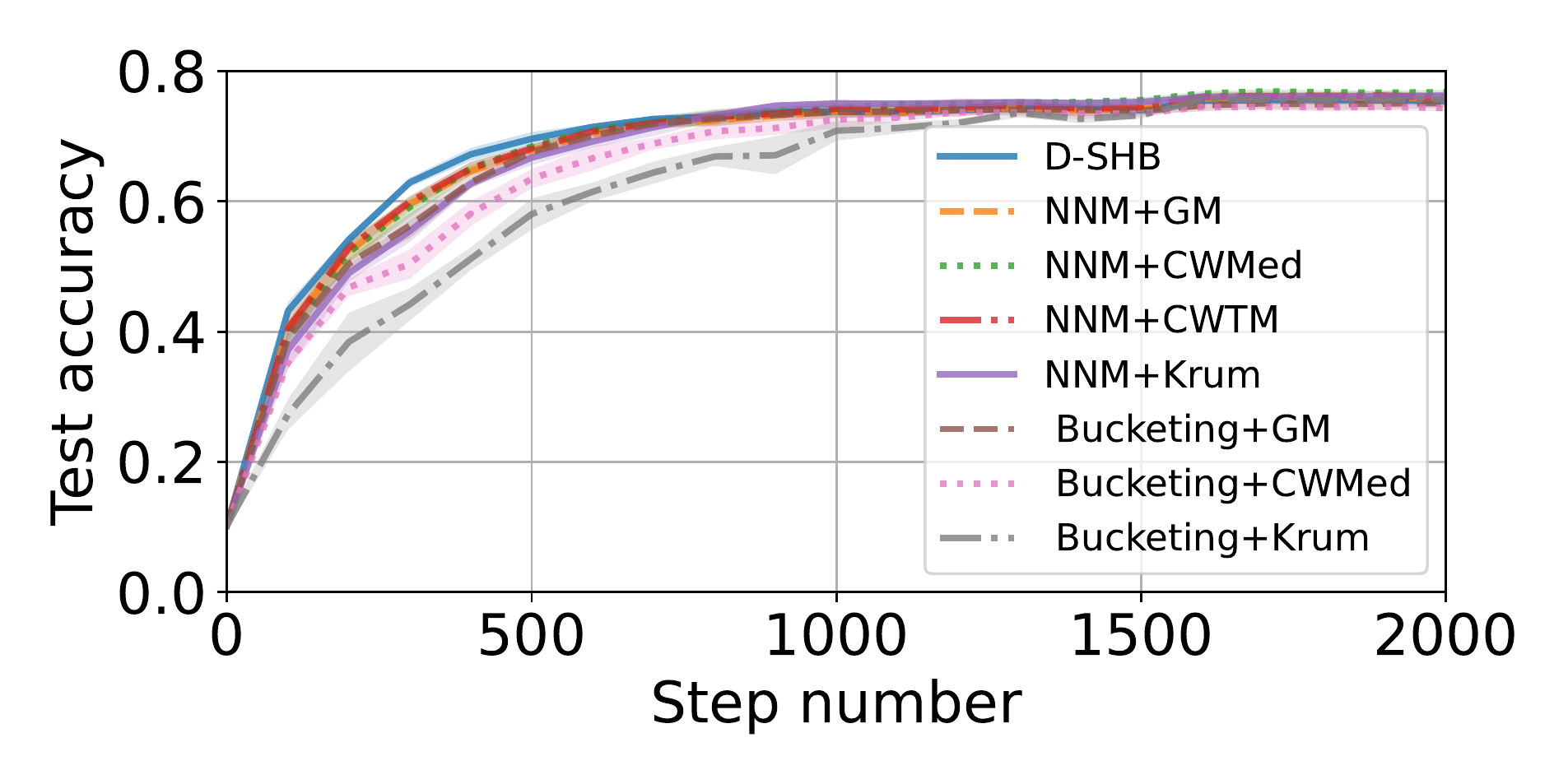}\\[-0.2cm]
    \includegraphics[width=0.4\textwidth]{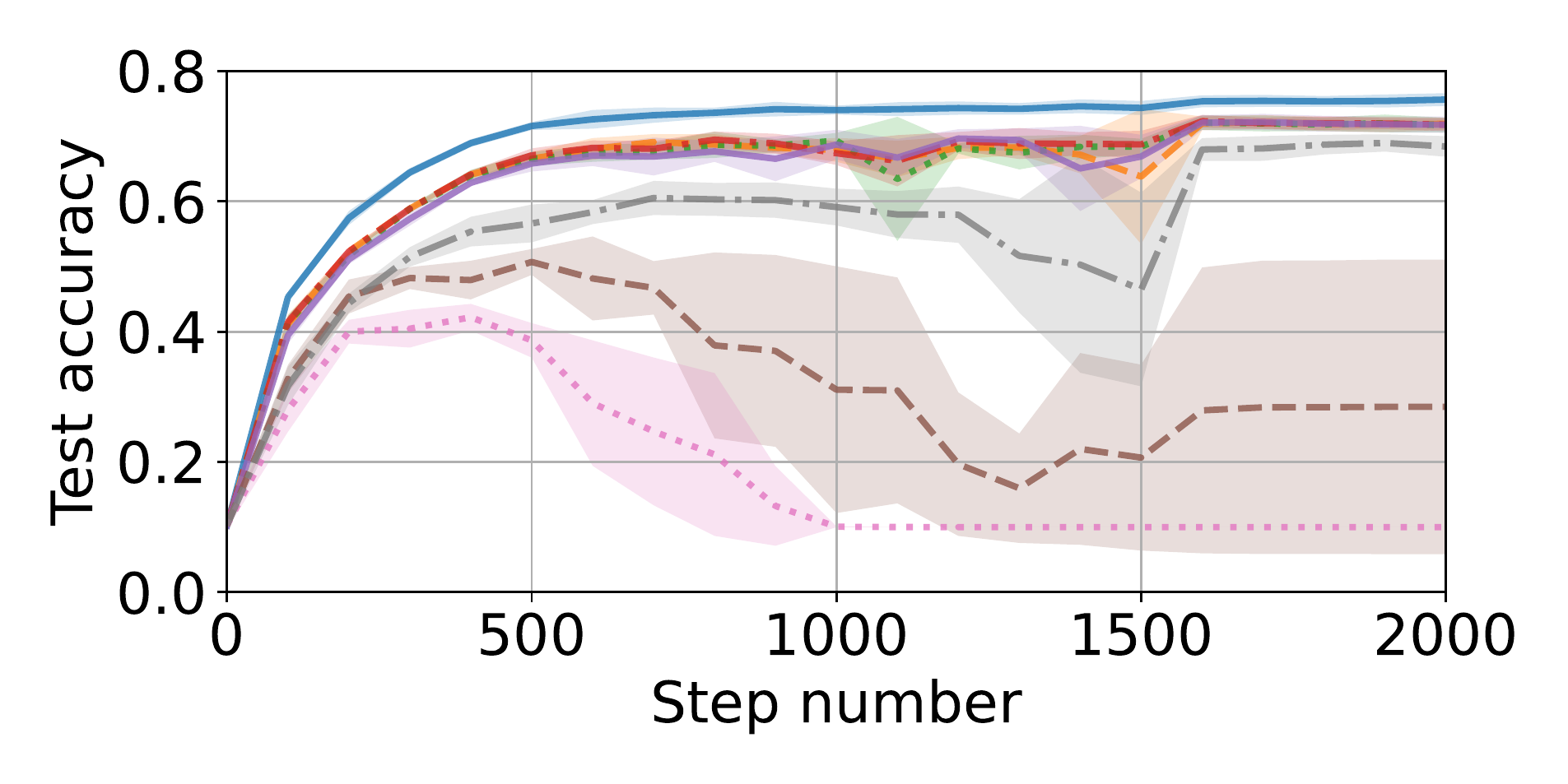}%
    \includegraphics[width=0.4\textwidth]{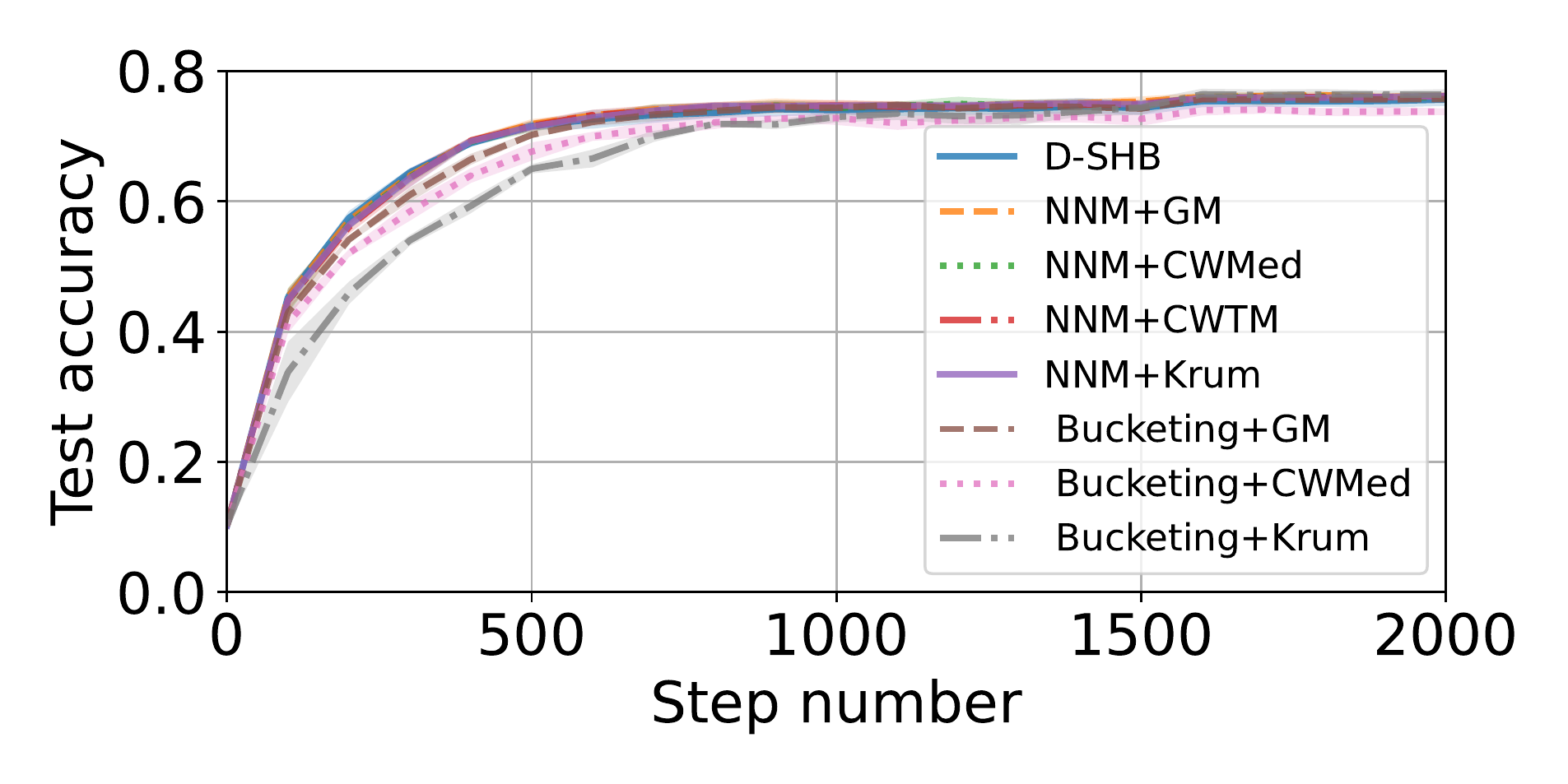}%
    \caption{Experiments on CIFAR-10 with $f=2$ Byzantine workers among $n = 17$. The Byzantine workers execute the ALIE (\textit{column 1}) and LF (\textit{column 2}) attacks. \textit{$1^{\text{st}}$ row}: moderate heterogeneity ($\alpha = 1$). \textit{$2^{\text{nd}}$ row}: low heterogeneity ($\alpha = 10$).}
\label{fig:plots_cifar_main}
\end{figure*}

\paragraph{Heterogeneity impairs learning.} Our first observation is that stronger heterogeneity regimes hinder the learning, as expected. As can be seen in Figure~\ref{fig:plots_cifar_main}, increasing $\alpha$ from 1 to 10 significantly improves all NNM aggregations (especially under ALIE), whereas Bucketing+CWMed still stagnates at 10\% and Bucketing+GM barely reaches 30\%.

\paragraph{Empirical superiority of NNM.} Under ALIE, we clearly see that NNM greatly outperforms Bucketing in both heterogeneity regimes. In particular, ALIE deteriorates the learning when using Bucketing with GM and CWMed, with both methods reaching a final accuracy close to 10\% when $\alpha = 1$. Although Bucketing+Krum has a better performance, it still displays a lower accuracy and a much larger variance than any NNM aggregation.
On the other hand, we observe that LF is a much weaker attack. Although all aggregation methods are able to converge to a desirable accuracy, our algorithm still portrays better convergence rates than Bucketing. This is particularly apparent for $\alpha = 10$ where all NNM aggregations almost exactly match D-SHB's convergence rate, whereas Bucketing converges slower, especially in the case of Krum. The remaining three attacks have a similar behavior to LF and are in Appendix~\ref{app:exp_results}.
\section{Conclusion and Future Work}
We show that robust D-GD enhanced by NNM is an optimal solution for Byzantine ML under heterogeneity. We also derive similar results for robust D-SHB (with NNM) in expectation. We believe that an interesting future direction would be to tighten these guarantees by investigating almost sure convergence rates for our stochastic solution. Such guarantees have been recently introduced in the Byzantine-free setting for SHB~\cite{sebbouh2021almost,liu22d,gadat2018stochastic}. Yet, adapting them to our setting remains an open question. In general, we believe subsequent works on Byzantine ML should strive to obtain almost sure convergence guarantees (instead of expectation), precluding the use of unnecessary randomness and certifying the robustness of these algorithms.
\vspace{-2pt}
\subsubsection*{Acknowledgements}
\vspace{-2pt}
This work has been supported in part by SNSF grants 200021\_200477 and 200021\_182542, and an EPFL-Ecocloud postdoctoral grant.
The authors are thankful to the anonymous reviewers for their constructive comments.

\bibliographystyle{plain}
\bibliography{references}




\onecolumn
\thispagestyle{empty}
\hsize\textwidth
  \linewidth\hsize \toptitlebar {\centering
  {\Large\bfseries Supplementary Materials: \\
Fixing by Mixing: A Recipe for Optimal Byzantine ML under Heterogeneity \par}}
 \bottomtitlebar
 
\section*{Organization of the Appendix}
Appendix~\ref{app:res-averaging} contains proofs related to $(f,\kappa)$-robustness.
Appendix~\ref{app:cenna} contains the analysis of $\cenna$ (proof of Lemma~\ref{lem:cenna}).
Appendix~\ref{app:Bucketing} contains a formal discussion on Bucketing~\cite{karimireddy2022byzantinerobust}.
Appendix~\ref{app:convergence-dgd} contains the analysis of robust D-GD (proof of Theorem~\ref{thm:conv-gd}).
Appendix~\ref{app:corollary} contains the analysis of robust D-GD with $\cenna$ (proof of Corollary~\ref{cor:resilience}), and the Byzantine resilience lower bound (Proposition~\ref{prop:lower-bound}).
Appendix~\ref{app:convergence-dshb} contains the analysis of robust D-SHB with $\cenna$ (proofs of Theorem~\ref{thm:conv} and Corollary~\ref{cor:nnm-dshb}).
Appendix~\ref{app:exp_setup} contains details on the experimental setup.
Appendix~\ref{app:exp_results} contains all experimental results.

\section{Robustness Analysis}
\label{app:res-averaging}

In this section, we prove all our claims related to $(f,\kappa)$-robustness.
In Section~\ref{app:gars}, we prove several existing aggregation rules to be $(f,\kappa)$-robust and give their exact robustness coefficients.
We then establish the tightness of our analysis in Section~\ref{app:kappa-lb} by proving a universal lower bound on $\kappa$, and an aggregation-specific lower bound.
Finally, we prove that $(f,\kappa)$-robustness unifies existing robustness definitions in Section~\ref{app:unification}.

We first recall the definition of $(f,\kappa)$-robustness:

\begin{repdefinition}{def:resaveraging}
Let $f < \nicefrac{n}{2}$ and $\kappa \geq 0$. An aggregation rule $\aggregation$ is said to be {\em $(f, \kappa)$-robust} if for any vectors $x_1, \ldots, \, x_n \in \R^d$, and any set $S \subseteq [n]$ of size $n-f$, 
\begin{align*}
    \norm{\aggregation(x_1, \ldots, \, x_n) - \overline{x}_S}^2 \leq \frac{\kappa}{\card{S}} \sum_{i \in S} \norm{x_i - \overline{x}_S}^2
\end{align*}
where $\overline{x}_S = \frac{1}{\card{S}} \sum_{i \in S} x_i$. We refer to $\kappa$ as the {\em robustness coefficient}.
\end{repdefinition}

\subsection{$(f,\kappa)$-robust Aggregation Rules}
\label{app:gars}
In this section, we prove that coordinate-wise Trimmed Mean (CWTM)~\cite{yin2018}, Krum~\cite{krum}, Geometric Median (GM)~\cite{small1990survey}, and coordinate-wise Median (CWMed)~\cite{yin2018} all satisfy $(f,\kappa)$-robustness.

\subsubsection{Trimmed Mean}

Let $x \in \mathbb{R}^d$, we denote by $[x]_k$, the $k$-th coordinate of $x$. Given the input vectors $x_1, \ldots, \, x_n \in \R^d$, we let $\tau_k$ denote a permutation on $[n]$ that sorts the $k$-th coordinate of the input vectors in non-decreasing order, i.e., $[x_{\tau_k(1)}]_k\leq [x_{\tau_k(2)}]_k \leq\ldots \leq [x_{\tau_k(n)}]_k$. Then, the coordinate-wise trimmed mean of $x_1, \ldots, \, x_n$, denoted by $\text{CWTM}(x_1, \ldots, x_n)$, is a vector in $\R^d$ whose $k$-th coordinate is defined as follows,
\begin{equation*}
    \left[\text{CWTM}(x_1, \ldots, x_n)\right]_k \coloneqq \frac{1}{n-2f} \sum_{j=f+1}^{n-f} [x_{\tau_k(j)}]_k. \label{eqn:def_cwtm}
\end{equation*}

We first show a general lemma simplifying the analysis of $(f,\kappa)$-robustness for coordinate-wise aggregations, by reducing the analysis to scalars without loss of generality.  
Specifically, we show that if $F$ is a coordinate-wise function, i.e., each $k$-th coordinate of $F$ denoted by $F_k$ only depends on the respective $k$-th coordinates of the inputs $\left[x_1\right]_k, \ldots , \, \left[ x_n \right]_k$, then coordinate-wise robustness implies overall robustness.

\begin{lemma}
\label{lem:cw-kappa}
Assume that $F: \R^{d \times n} \to \R^d$ is a coordinate-wise aggregation function, i.e., there exist $d$ real-valued functions $F_1,\ldots,F_d \colon \R^n \to \R$ such that for all $x_1,\ldots,x_n \in \R^d$, 
    $\left[F{(x_1,\ldots,x_n)} \right]_k = F_k{([x_1]_k,\ldots,[x_n]_k)}$.
If for each $k \in [d]$, $F_k$ is $(f,\kappa)$-robust then $F$ is $(f,\kappa)$-robust.
\end{lemma}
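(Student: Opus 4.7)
The plan is to exploit the fact that both the squared Euclidean norm and the empirical mean decompose across coordinates, so the multi-dimensional inequality reduces to a sum of scalar inequalities that follow from the hypothesis on each $F_k$.

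First I would fix arbitrary inputs $x_1, \ldots, x_n \in \R^d$ and a subset $S \subseteq [n]$ with $\card{S} = n-f$. The first observation is that $\overline{x}_S$ commutes with coordinate projection: for every $k \in [d]$, $[\overline{x}_S]_k = \frac{1}{\card{S}} \sum_{i \in S} [x_i]_k$. In other words, the $k$-th coordinate of the mean of $\{x_i : i \in S\}$ equals the mean of the $k$-th coordinates of the same set, which I will denote $\overline{[x]}_{S,k}$ for clarity.

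Next I would expand the squared norm of the deviation of $F$ from $\overline{x}_S$ coordinate by coordinate,
\[
    \norm{F(x_1,\ldots,x_n) - \overline{x}_S}^2 = \sum_{k=1}^{d} \bigl([F(x_1,\ldots,x_n)]_k - [\overline{x}_S]_k\bigr)^2 = \sum_{k=1}^d \bigl( F_k([x_1]_k,\ldots,[x_n]_k) - \overline{[x]}_{S,k} \bigr)^2,
\]
where the second equality uses the coordinate-wise structure of $F$. Now I would apply the scalar $(f,\kappa)$-robustness hypothesis for each $F_k$ with inputs $[x_1]_k,\ldots,[x_n]_k$ and the same set $S$, bounding each summand above by $\frac{\kappa}{\card{S}} \sum_{i \in S} ([x_i]_k - \overline{[x]}_{S,k})^2$.

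Finally I would swap the order of the two resulting sums and recognize the inner sum over $k$ as the squared norm $\norm{x_i - \overline{x}_S}^2$, yielding the desired bound with coefficient $\kappa$. There is no real obstacle here; the only subtle point to flag is the commutation between averaging and projection in the first step, and the fact that the same subset $S$ can be used in every coordinate-wise invocation of the hypothesis, which is what makes the argument tight and preserves the coefficient $\kappa$ without any dimension-dependent loss.
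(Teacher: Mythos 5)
Your proposal is correct and follows essentially the same argument as the paper's proof: decompose the squared norm coordinate-wise, apply the scalar robustness hypothesis to each coordinate with the same set $S$, and swap the order of summation to reassemble the bound. The only addition is your explicit remark that $[\overline{x}_S]_k$ equals the mean of the $k$-th coordinates, which the paper uses implicitly.
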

\begin{proof}
Consider $n$ arbitrary vectors in $\R^d$, $x_1,\ldots,x_n$, and an arbitrary set $S \subseteq [n]$ such that $\card{S}=n-f$. Assume that for each $k \in [d]$, $F_k$ is $(f,\kappa)$-robust.
Since $F$ is assume to be a coordinate-wise aggregation function, we have
\begin{align}
    \norm{F(x_1,\ldots,x_n)-\overline{x}_S}^2
    &= \sum_{k=1}^d \absv{F_k{([x_1]_k,\ldots,[x_n]_k)}-[\overline{x}_S]_k}^2. \label{eqn:cw-kappa_1}
\end{align}
Since for each $k \in [d]$, $F_k$ is $(f,\kappa)$-robust, we have
\begin{align*}
 \sum_{k=1}^d \absv{F_k{([x_1]_k,\ldots,[x_n]_k)}-[\overline{x}_S]_k}^2 &\leq \sum_{k=1}^d \frac{\kappa}{\card{S}} \sum_{i \in S} \absv{[x_i]_k-[\overline{x}_S]_k}^2 = \frac{\kappa}{\card{S}} \sum_{i \in S} \sum_{k=1}^d  \absv{[x_i]_k-[\overline{x}_S]_k}^2
    = \frac{\kappa}{\card{S}} \sum_{i \in S} \norm{x_i-\overline{x}_S}^2.
\end{align*}
Substituting from above in~\eqref{eqn:cw-kappa_1} concludes the proof.
\end{proof}

We now show an important property in Lemma~\ref{lem:tmean-main} below on the sorting of real values that proves essential in obtaining a tight $(f, \kappa)$-robustness guarantee for trimmed mean. 

\begin{lemma}
\label{lem:tmean-main}
Consider $n$ real values $x_1, \ldots, x_n$ be such that $x_1 \leq \ldots \leq x_n$.
Let $S \subseteq [n]$ of size $\card{S}=n-f$,
and $I \coloneqq \{f+1,\ldots,n-f\}$. 
We obtain that
\begin{equation*}
    \sum_{i \in I \setminus S} \absv{x_i - \overline{x}_S}^2 \leq \sum_{i \in S \setminus I} \absv{x_i - \overline{x}_S}^2.
\end{equation*}

\end{lemma}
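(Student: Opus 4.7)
My plan is to construct an injection $I \setminus S \to S \setminus I$ along which $\absv{x_\bullet - \overline{x}_S}$ is pointwise non-decreasing, so that the lemma follows immediately by squaring and summing. The cleanest way to set this up is to split both sides according to their ``sign'' relative to $\overline{x}_S$ and according to extreme side: let
\[
J^- := \{i \in I \setminus S : x_i \leq \overline{x}_S\}, \quad J^+ := \{i \in I \setminus S : x_i > \overline{x}_S\},
\]
\[
L_S := S \cap \{1, \ldots, f\}, \quad R_S := S \cap \{n-f+1, \ldots, n\},
\]
so that $\{J^-, J^+\}$ partitions $I \setminus S$ and $\{L_S, R_S\}$ partitions $S \setminus I$ (the latter using $[n] \setminus I = \{1,\ldots,f\} \cup \{n-f+1,\ldots,n\}$).

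The sortedness hypothesis then supplies the key pointwise inequality: for any $i \in J^-$ (so $i \geq f+1$) and any $j \in L_S$ (so $j \leq f$), we have $x_j \leq x_i \leq \overline{x}_S$, hence $\absv{x_j - \overline{x}_S} \geq \absv{x_i - \overline{x}_S}$. By symmetry, for any $i \in J^+$ and $j \in R_S$, we have $x_j \geq x_i > \overline{x}_S$, giving the same type of domination. So as long as I can inject $J^-$ into $L_S$ and $J^+$ into $R_S$ separately, the proof closes by summing.

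The remaining counting step I would handle as follows. Setting $k := \card{I \setminus S}$, the identity $\card{S \setminus I} - \card{I \setminus S} = \card{S} - \card{I} = f$ gives $\card{L_S} + \card{R_S} = f + k$, while the hard caps $\card{L_S} \leq f$ and $\card{R_S} \leq f$ (each sits inside a set of size $f$) force both $\card{L_S} \geq k$ and $\card{R_S} \geq k$. Since $\card{J^-}, \card{J^+} \leq k$, the required injections $\phi^- : J^- \to L_S$ and $\phi^+ : J^+ \to R_S$ exist, and summing the pointwise inequalities along them yields
\[
\sum_{i \in I \setminus S} \absv{x_i - \overline{x}_S}^2 \leq \sum_{j \in L_S} \absv{x_j - \overline{x}_S}^2 + \sum_{j \in R_S} \absv{x_j - \overline{x}_S}^2 = \sum_{j \in S \setminus I} \absv{x_j - \overline{x}_S}^2.
\]

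The main obstacle I would expect is designing the right injection in the first place. A naive greedy pairing, e.g.\ matching each middle hole with its nearest available extreme in $S$, is not obviously injective nor monotone in the correct sense. The clean fix is the sign/side decomposition above: once both sides are partitioned this way, each pointwise comparison becomes a trivial consequence of sortedness, and the only genuinely nontrivial point is the cardinality check ensuring that neither $L_S$ nor $R_S$ is under-resourced, which is exactly what the identity $\card{L_S} + \card{R_S} = f + k$ together with the cap $\card{R_S} \leq f$ (and symmetrically for $L_S$) delivers.
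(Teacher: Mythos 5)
Your proof is correct and follows essentially the same route as the paper's: both rest on the counting fact that $S$ contains at least $\card{I \setminus S}$ indices in each of $\{1,\ldots,f\}$ and $\{n-f+1,\ldots,n\}$, and both build a pointwise-dominating injection from $I \setminus S$ into $S \setminus I$ using sortedness. Your variant---classifying each $i \in I \setminus S$ by the sign of $x_i - \overline{x}_S$ and routing it to the corresponding extreme side---is a slightly more transparent way of assembling that injection than the paper's choice of $\phi(i)$ as an argmax over a pair $(\psi^-(i), \psi^+(i))$, but the substance is identical.
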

\begin{proof}
First, note that, as $\card{I} = n-2f$ and $\card{S} = n-f$, $\card{I \setminus S} \leq f$ and $\card{S \setminus I} \geq f$. Thus, $\card{I \setminus S}  \leq  \card{S \setminus I}$. To prove the lemma, we show that there exists an injection $\phi \colon I \setminus S \to S \setminus I$ such that
\begin{equation}
\label{eq:lem-tmean}
    \forall i \in I \setminus S,~ \absv{x_i - \overline{x}_S} \leq \absv{x_{\phi(i)} - \overline{x}_S}.
\end{equation}
As $\card{I \setminus S}  \leq  \card{S \setminus I}$, we have $\sum_{i \in I \setminus S} \absv{x_{\phi(i)} - \overline{x}_S}^2 \leq \sum_{i \in S \setminus I} \absv{x_i - \overline{x}_S}^2$. Hence,~\eqref{eq:lem-tmean} proves the lemma. 

\vspace{0.2cm}

We denote by $B$ the complement of $S$ in $[n]$, i.e., $B = [n] \setminus S$. Therefore, $\card{B} = f$ and $I \setminus S = I \cap B $.
We denote $I^+ \coloneqq \{n-f+1,\ldots,n\}$, and $I^- \coloneqq \{1,\ldots,f\}$ to be the indices of values that are larger (or equal to) and smaller (or equal to) the values in $I$, respectively. Let $\card{I \cap B} = q$. As $B = [n] \setminus S$, we obtain that
\begin{align}
    \card{I^+ \cap S} = f - \card{I^+ \cap B} \geq f - \card{B \setminus (I \cap B)} = f - (f - q) = q. \label{eqn:i+b}
\end{align}
Similarly, we obtain that 
\begin{align}
    \card{I^- \cap S} \geq q. \label{eqn:i-b}
\end{align}
Recall that $q = \card{I \cap B}$ where $I \setminus S = I \cap B$.
Therefore, due to~\eqref{eqn:i+b} and~\eqref{eqn:i-b}, there exists an injection from $I \setminus S$ to $(I^- \cap S) \times (I^+ \cap S)$. Let $\psi$ be such an injection. For each $i \in I \setminus S$, $\psi(i)$ is a pair, denoted by $(\psi^-(i), \, \psi^+(i))$, in $(I^- \cap S) \times (I^+ \cap S)$. Consider an arbitrary $i \in I \setminus S$. By definition of $I^-$ and $I^+$, we have 
\begin{align*}
    x_{\psi^-(i)} \leq x_i \leq x_{\psi^+(i)}.
\end{align*}
Therefore, for any real value $y$, 
\begin{align*}
    \absv{x_i - y} \leq \max\left\{ \absv{x_{\psi^+(i)} - y}, \, \absv{x_{\psi^-(i)} - y} \right\}.
\end{align*}
The above proves~\eqref{eq:lem-tmean}, where the injection $\phi$ is defined as $\phi(i) = \argmax_{j \in \{\psi^+(i), \, \psi^-(i)\}} \absv{x_j - \overline{x}_S}$ for all $i \in I \setminus S$. 
\end{proof}

We now prove the $(f, \, \kappa)$-robustness property of trimmed mean in the proposition below.

\begin{proposition}
\label{prop:tmean}
Let $n \in \mathbb{N}^*$ and $f < \nicefrac{n}{2}$.
CWTM is $(f,\kappa)$-robust with $\kappa = \frac{6 f}{n-2f}\, \left( 1 + \frac{f}{n-2f} \right)$.

\end{proposition}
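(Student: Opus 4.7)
The plan is to combine Lemma~\ref{lem:cw-kappa} with Lemma~\ref{lem:tmean-main} to reduce everything to a clean scalar computation. Since CWTM acts coordinate-wise, Lemma~\ref{lem:cw-kappa} lets us assume $d=1$: it suffices to prove the bound with constant $\kappa = \tfrac{6f(n-f)}{(n-2f)^2}$ (which equals the expression in the statement) for $n$ real inputs. Because permuting inputs does not change the trimmed mean or the right-hand side (after relabelling $S$), I will assume without loss of generality that $x_1 \leq \ldots \leq x_n$, so that $T \coloneqq \mathrm{CWTM}(x_1,\ldots,x_n) = \tfrac{1}{n-2f}\sum_{i \in I} x_i$ with $I = \{f+1,\ldots,n-f\}$.

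The key algebraic step is to rewrite $T - \overline{x}_S$ in terms of deviations $e_i \coloneqq x_i - \overline{x}_S$. Since $|I|=n-2f$, we have $T - \overline{x}_S = \tfrac{1}{n-2f}\sum_{i \in I} e_i$. Splitting $I = (I \cap S) \cup (I \setminus S)$ and using the identity $\sum_{i \in S} e_i = 0$, which yields $\sum_{i \in I \cap S} e_i = -\sum_{i \in S \setminus I} e_i$, I obtain the critical representation
\begin{equation*}
  T - \overline{x}_S \;=\; \frac{1}{n-2f}\left(\sum_{i \in I \setminus S} e_i \;-\; \sum_{i \in S \setminus I} e_i\right).
\end{equation*}
This is the bridge that makes Lemma~\ref{lem:tmean-main} usable: the error involves only deviations indexed by the symmetric difference between $I$ and $S$.

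Next I would square, apply $(a+b)^2 \leq 2(a^2+b^2)$, and then Cauchy--Schwarz on each sum to get
\begin{equation*}
  (T - \overline{x}_S)^2 \;\leq\; \frac{2}{(n-2f)^2}\Bigl(\card{I \setminus S} \sum_{i \in I \setminus S} e_i^2 + \card{S \setminus I} \sum_{i \in S \setminus I} e_i^2\Bigr).
\end{equation*}
Since $|S| = n-f$ and $|I| = n-2f$, elementary counting gives $|I \setminus S| \leq f$ and $|S \setminus I| \leq 2f$. I then invoke Lemma~\ref{lem:tmean-main} to replace $\sum_{i \in I \setminus S} e_i^2$ by $\sum_{i \in S \setminus I} e_i^2$, yielding a bound of $\tfrac{2(f+2f)}{(n-2f)^2}\sum_{i \in S \setminus I} e_i^2 \leq \tfrac{6f}{(n-2f)^2}\sum_{i \in S} e_i^2$. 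Rewriting $\tfrac{6f}{(n-2f)^2} = \tfrac{1}{n-f}\cdot\tfrac{6f(n-f)}{(n-2f)^2} = \tfrac{\kappa}{n-f}$ completes the proof.

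The only delicate point is choosing where in the chain to invoke Lemma~\ref{lem:tmean-main}: applying it directly to the representation of $T - \overline{x}_S$ (rather than after squaring and using Cauchy--Schwarz) would lose the correct constant. Everything else is bookkeeping of set sizes and standard inequalities; the main obstacle is really packaged inside Lemma~\ref{lem:tmean-main}, which supplies the structural fact that the deviations ``lost'' by trimming (indices in $I \setminus S$) are dominated, in squared-sum, by the deviations ``kept'' outside the trimmed window (indices in $S \setminus I$).
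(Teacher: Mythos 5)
Your proposal is correct and follows essentially the same route as the paper's proof: the same reduction to $d=1$ via Lemma~\ref{lem:cw-kappa}, the same decomposition of $T-\overline{x}_S$ over $I\setminus S$ and $S\setminus I$ using $\sum_{i\in S}(x_i-\overline{x}_S)=0$, the same cardinality bounds, and the same invocation of Lemma~\ref{lem:tmean-main}, arriving at the identical constant $\tfrac{6f}{(n-2f)^2}$. The only cosmetic difference is that you apply $(a-b)^2\leq 2a^2+2b^2$ followed by Cauchy--Schwarz on each block separately, whereas the paper applies Jensen's inequality once to the combined sum of $\card{I\setminus S}+\card{S\setminus I}\leq 3f$ terms; both yield the same bound.
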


\begin{proof}
First, note that trimmed mean Trimmed Mean is a coordinate-wise aggregation, defined in Lemma~\ref{lem:cw-kappa}. 
Thus, due to Lemma~\ref{lem:cw-kappa}, it suffices to show that Trimmed Mean is $(f,\kappa)$-robust in the scalar domain, i.e., when $d=1$. \\
 
Let $x_1, \ldots, \, x_n \in \R$ and, without loss of generality, let us assume that $x_1 \leq \ldots \leq x_n$. We denote by $I \coloneqq \{f+1,\ldots,n-f\}$, and let $S$ be an arbitrary subset of $[n]$ of size $n-f$. Recall that  the set of indices selected by $\mathrm{TM}\left(x_1, \ldots, \, x_n \right) = \frac{1}{n-2f} \sum_{i \in I} x_i$. We obtain that
\begin{align*}
    \absv{\tmean{x_1,\ldots,x_n}-\overline{x}_S}^2
    &= \absv{\frac{1}{n-2f} \sum_{i \in I} x_i - \overline{x}_S}^2 
    = \absv{\frac{1}{n-2f} \sum_{i \in I} (x_i-\overline{x}_S)}^2 \\
    &= \absv{\frac{1}{n-2f} \sum_{i \in I} (x_i-\overline{x}_S) - \frac{1}{n-2f} \sum_{i \in S} (x_i-\overline{x}_S)}^2 &\Big(\text{as } \sum_{i \in S} (x_i-\overline{x}_S)=0\Big) \\
    &= \frac{1}{(n - 2f)^2} \, \absv{\sum_{i \in I \setminus S} (x_i-\overline{x}_S) - \sum_{i \in S \setminus I} (x_i-\overline{x}_S) }^2. \\
\end{align*}
Using Jensen's inequality above, we obtain that
\begin{align*}
    \absv{\tmean{x_1,\ldots,x_n}-\overline{x}_S}^2
    \leq \frac{\card{I \setminus S} + \card{S \setminus I}}{(n-2f)^2}\left(\sum_{i \in I \setminus S} \absv{x_i-\overline{x}_S}^2 + \sum_{i \in S \setminus I} \absv{x_i-\overline{x}_S}^2 \right).
\end{align*}
Note that $\card{I \setminus S} = \card{I \cup S} - \card{S} \leq n - (n-f) = f$. Similarly, $\card{S \setminus I} = \card{I \cup S} - \card{I} \leq n - (n-2f) = 2f$. Accordingly, $\card{I \setminus S} + \card{S \setminus I} \leq 3f$. Therefore, we have
\begin{align*}
    \absv{\tmean{x_1,\ldots,x_n}-\overline{x}_S}^2
    &\leq \frac{3f}{(n-2f)^2}\left(\sum_{i \in I \setminus S} \absv{x_i-\overline{x}_S}^2 + \sum_{i \in S \setminus I} \absv{x_i-\overline{x}_S}^2 \right).
\end{align*}
Recall from Lemma~\ref{lem:tmean-main} that $\sum_{i \in I \setminus S} \absv{x_i-\overline{x}_S}^2 \leq \sum_{i \in S \setminus I} \absv{x_i-\overline{x}_S}^2$.
Using this fact above we obtain that
\begin{align*}
    \absv{\tmean{x_1,\ldots,x_n}-\overline{x}_S}^2
    \leq \frac{6f}{(n-2f)^2}\sum_{i \in S \setminus I} \absv{x_i-\overline{x}_S}^2
    \leq \frac{6f}{(n-2f)^2}\sum_{i \in S} \absv{x_i-\overline{x}_S}^2.
\end{align*}
Finally, as 
$\frac{f}{(n-2f)^2} = \frac{f (n-f)}{(n-2f)^2} \, \left( \frac{1}{n-f}\right) = \Big(\frac{f}{n-2f}+\left(\frac{f}{n-2f}\right)^2\Big)\frac{1}{n-f}$, we obtain that
\begin{align}
    \absv{\tmean{x_1,\ldots,x_n}-\overline{x}_S}^2
    \leq 6\left(\frac{f}{n-2f}+\left(\frac{f}{n-2f}\right)^2\right) \frac{1}{\card{S}}\sum_{i \in S} \absv{x_i-\overline{x}_S}^2.
\end{align}
The above concludes the proof.
\end{proof}

\subsubsection{Krum}

In this section, we study a slight adaptation of the Krum algorithm first introduced in~\cite{krum}. 
Essentially, given the input vectors $x_1, \ldots, \, x_n$, Krum outputs the vector that is the nearest to its neighbors upon discarding $f$ (as opposed to $f+1$ in the original version) furthest vectors. Specifically, we denote by $\mathcal{N}_j$ the set the of indices of the $n-f$ nearest neighbors of $x_j$ in $\{x_1, \ldots, x_n\}$, with ties arbitrarily broken. 
Krum outputs the vector $x_{k^*}$ such that
\begin{equation*}
    k^* \in \argmin_{j \in [n]} \sum_{i \in \mathcal{N}_j} \norm{x_j - x_i}^2,
\end{equation*}
with ties arbitrarily broken if the set of minimizers above includes more than one element.

\begin{proposition}
Let $n \in \mathbb{N}^*$ and $f < \nicefrac{n}{2}$.
Krum is $(f,\kappa)$-robust with $\kappa = 6(1+\frac{f}{n-2f})$.
\end{proposition}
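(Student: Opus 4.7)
The plan is to upper-bound $\norm{x_{k^*} - \overline{x}_S}^2$ by first reducing it to squared distances from $x_{k^*}$ to points in $\mathcal{N}_{k^*} \cap S$, and then using Krum's selection rule together with the nearest-neighbor property to relate those distances to the empirical variance on $S$.

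First, by inclusion-exclusion $\card{\mathcal{N}_{k^*} \cap S} \geq (n-f) + (n-f) - n = n-2f \geq 1$, so the intersection is non-empty. For any $s \in \mathcal{N}_{k^*} \cap S$, the elementary inequality $\norm{a+b}^2 \leq 2\norm{a}^2 + 2\norm{b}^2$ gives $\norm{x_{k^*} - \overline{x}_S}^2 \leq 2\norm{x_{k^*} - x_s}^2 + 2\norm{x_s - \overline{x}_S}^2$. Averaging this over $s \in \mathcal{N}_{k^*} \cap S$, and upper-bounding the two partial sums by the sums over the enclosing sets $\mathcal{N}_{k^*}$ and $S$ respectively, I obtain
\begin{align*}
    \norm{x_{k^*} - \overline{x}_S}^2 \leq \frac{2}{n-2f}\left(\sum_{s \in \mathcal{N}_{k^*}} \norm{x_{k^*} - x_s}^2 \,+\, \sum_{i \in S} \norm{x_i - \overline{x}_S}^2\right).
\end{align*}

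The core step is then to bound the neighborhood sum $\sum_{s \in \mathcal{N}_{k^*}} \norm{x_{k^*} - x_s}^2$ by the variance on $S$. Two ingredients combine: (i) Krum's optimality yields $\sum_{s \in \mathcal{N}_{k^*}} \norm{x_{k^*} - x_s}^2 \leq \sum_{s \in \mathcal{N}_j} \norm{x_j - x_s}^2$ for every $j \in [n]$, and (ii) since $\mathcal{N}_j$ is by definition the set of $n-f$ indices closest to $x_j$ and $\card{S} = n-f$, we have $\sum_{s \in \mathcal{N}_j} \norm{x_j - x_s}^2 \leq \sum_{s \in S} \norm{x_j - x_s}^2$. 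Chaining (i) and (ii) for each $j \in S$ and then averaging the resulting inequality over $j \in S$ gives
\begin{align*}
    \sum_{s \in \mathcal{N}_{k^*}} \norm{x_{k^*} - x_s}^2 \leq \frac{1}{\card{S}} \sum_{j \in S} \sum_{s \in S} \norm{x_j - x_s}^2 = 2 \sum_{i \in S} \norm{x_i - \overline{x}_S}^2,
\end{align*}
where the final equality is the standard pairwise-variance identity.

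Substituting back into the first display yields $\norm{x_{k^*} - \overline{x}_S}^2 \leq \frac{6}{n-2f}\sum_{i \in S}\norm{x_i - \overline{x}_S}^2$, and the algebraic rewriting $\frac{6}{n-2f} = \frac{6(n-f)}{(n-2f)\card{S}} = \frac{1}{\card{S}} \cdot 6\left(1 + \frac{f}{n-2f}\right)$ closes the argument with $\kappa = 6(1 + f/(n-2f))$. The main subtlety is the averaging-over-$j \in S$ trick in the core step: a pointwise use of Krum's criterion with a single $j \in S$ only produces a bound of the form $\sum_{s \in S} \norm{x_j - x_s}^2$ for one $j$, which cannot be converted to the variance on $S$ without incurring an extra constant; averaging over $j \in S$ is precisely what symmetrizes the pairwise distances and invokes the variance identity, keeping the final constant at $6$. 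The rest is triangle-inequality bookkeeping and a cardinality count.
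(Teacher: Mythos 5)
Your proof is correct and follows essentially the same route as the paper's: both rest on the chain $\sum_{i \in \mathcal{N}_{k^*}}\norm{x_{k^*}-x_i}^2 \leq \frac{1}{\card{S}}\sum_{j,i \in S}\norm{x_j-x_i}^2 = 2\sum_{i\in S}\norm{x_i-\overline{x}_S}^2$ via Krum's minimality and the nearest-neighbor property, combined with the cardinality bound $\card{\mathcal{N}_{k^*}\cap S}\geq n-2f$ to convert distances to $x_{k^*}$ into distances to $\overline{x}_S$. The only cosmetic difference is that you average the Jensen inequality over $s\in\mathcal{N}_{k^*}\cap S$ directly, whereas the paper reaches the identical intermediate bound by rearranging a summed lower bound on $\norm{x_{k^*}-x_i}^2$; the constants and logic coincide.
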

\begin{proof}
Let $n \in \mathbb{N}$, $f < \nicefrac{n}{2}$, and $x_1, \ldots, x_n \in \R^d$.
Consider any subset $S \subseteq [n]$ of size $\card{S}=n-f$.
In the following, for every $j \in [n]$, we denote by $\mathcal{N}_j$ the set the of indices of the $n-f$ nearest neighbors of $x_j$ in $\{x_1, \ldots, x_n\}$, with ties arbitrarily broken.
Observe that this implies, for every $j \in [n]$, that
\begin{equation}
\label{eq:3}
    \sum_{i \in \mathcal{N}_j} \norm{x_j-x_i}^2 \leq \sum_{i \in S} \norm{x_j-x_i}^2.
\end{equation}

Let $k^* \in [n]$ be the index selected by Krum.
By definition, it holds that
$
    k^* \in \argmin_{j \in [n]} \sum_{i \in \mathcal{N}_j} \norm{x_j - x_i}^2.
$
Therefore, leveraging~\eqref{eq:3} we have
\begin{align}
    \label{ineq2}
    \sum_{i \in \mathcal{N}_{k^*}}\norm{x_{k^*}-x_i}^2 
    &= \min_{j \in [n]} \sum_{i \in \mathcal{N}_j} \norm{x_j - x_i}^2
    \leq \min_{j \in S} \sum_{i \in \mathcal{N}_j} \norm{x_j - x_i}^2
    \leq \frac{1}{\card{S}} \sum_{j \in S} \sum_{i \in \mathcal{N}_j} \norm{x_j - x_i}^2\nonumber\\
    &\leq \frac{1}{\card{S}} \sum_{j \in S} \sum_{i \in S} \norm{x_j - x_i}^2
    = \frac{1}{\card{S}} \sum_{i, j \in S} \norm{x_j - \overline{x}_S - (x_i - \overline{x}_S)}^2 \nonumber\\
    &= \frac{1}{\card{S}} \sum_{i, j \in S} \Big (\norm{x_j - \overline{x}_S}^2 +\norm{x_i - \overline{x}_S}^2 - 2 \iprod{x_j - \overline{x}_S}{x_i - \overline{x}_S} \Big ) \nonumber\\
    &= \frac{1}{\card{S}} \Big [ \sum_{i, j \in S} \norm{x_j - \overline{x}_S}^2 + \sum_{i, j \in S}\norm{x_i - \overline{x}_S}^2 - 2 \sum_{i, j \in S} \iprod{x_j - \overline{x}_S}{x_i - \overline{x}_S} \Big ] \nonumber\\
    &= \frac{1}{\card{S}} \Big [ 2 \card{S} \sum_{i \in S} \norm{x_i - \overline{x}_S}^2 - 2 \sum_{i, j \in S} \iprod{x_j - \overline{x}_S}{x_i - \overline{x}_S} \Big ] \nonumber\\
    &= \frac{1}{\card{S}} \Big [ 2 \card{S} \sum_{i \in S} \norm{x_i - \overline{x}_S}^2 - 2 \sum_{i \in S} \iprod{ \underbrace{\sum_{j \in S} (x_j - \overline{x}_S)}_{=0}}{x_i - \overline{x}_S} \Big ]
    = 2 \sum_{i \in S} \norm{x_i - \overline{x}_S}^2.
\end{align}

Now, using Jensen's inequality, we can write for all $i \in S$,
\begin{align*}
    \norm{x_{k^*}-\overline{x}_S}^2 \leq 2\norm{x_{k^*}-x_i}^2 + 2\norm{x_i-\overline{x}_S}^2.
\end{align*}
Therefore, by rearranging the terms, we have for all $i \in S$,
\begin{align*}
    \norm{x_{k^*}-x_i}^2 \geq \frac{1}{2}\norm{x_{k^*}-\overline{x}_S}^2 - \norm{x_i-\overline{x}_S}^2.
\end{align*}
Together with the fact that $\card{S \cap \mathcal{N}_{k^*}} = \card{S} + \card{\mathcal{N}_{k^*}} - \card{S \cup \mathcal{N}_{k^*}} \geq (n-f) + (n-f) - n = n-2f$, the previous inequality implies that
\begin{align*}
    \sum_{i \in \mathcal{N}_{k^*}}\norm{x_{k^*}-x_i}^2
    &\geq \sum_{i \in S \cap \mathcal{N}_{k^*}}\norm{x_{k^*}-x_i}^2
    \geq \card{S \cap \mathcal{N}_{k^*}}\frac{1}{2}\norm{x_{k^*}-\overline{x}_S}^2 - \sum_{i \in S \cap \mathcal{N}_{k^*}}\norm{x_i-\overline{x}_S}^2\\
    &\geq \frac{n-2f}{2}\norm{x_{k^*}-\overline{x}_S}^2 - \sum_{i \in S \cap \mathcal{N}_{k^*}}\norm{x_i-\overline{x}_S}^2.
\end{align*}
By rearranging the terms, and invoking \eqref{ineq2} we can write
\begin{align*}
    \norm{x_{k^*}-\overline{x}_S}^2 
    & \leq \frac{2}{n-2f} \left [\sum_{i \in \mathcal{N}_{k^*}}\norm{x_{k^*}-x_i}^2 + \sum_{i \in S \cap \mathcal{N}_{k^*}}\norm{x_i-\overline{x}_S}^2\right] \\
    &\leq \frac{2}{n-2f} \left [\sum_{i \in \mathcal{N}_{k^*}}\norm{x_{k^*}-x_i}^2 + \sum_{i \in S}\norm{x_i-\overline{x}_S}^2\right] \\
    &\leq \frac{2}{n-2f} \left [2\sum_{i \in S}\norm{x_i-\overline{x}_S}^2+ \sum_{i \in S}\norm{x_i-\overline{x}_S}^2\right] \\
    &= \frac{6}{n-2f} \sum_{i \in S} \norm{x_i - \overline{x}_S}^2
    = \frac{6(n-f)}{n-2f} \frac{1}{\card{S}} \sum_{i \in S} \norm{x_i - \overline{x}_S}^2.
\end{align*}

We conclude by remarking that $\frac{n-f}{n-2f} = 1 + \frac{f}{n-2f}$.
\end{proof}

\subsubsection{Geometric Median}

The Geometric Median of $x_1, \ldots, \, x_n \in \R^d$ denoted by $\text{GM}(x_1, \ldots, x_n)$, is defined to be a vector that minimizes the sum of the $\ell_2$-distances to these vectors. Specifically, we have
 \begin{equation*}
     \text{GM}(x_1, \ldots, x_n) \in \argmin_{y \in \R^d} \sum_{i = 1}^n \norm{y-x_i}.
 \end{equation*}
 
\begin{proposition}
\label{prop:gm}
Let $n \in \mathbb{N}^*$ and $f < \nicefrac{n}{2}$.
GM is $(f,\kappa)$-robust with $\kappa = 4\left(1+\frac{f}{n-2f}\right)^2$.
\end{proposition}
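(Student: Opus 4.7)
The plan is to exploit the variational characterization of the geometric median: $z := \mathrm{GM}(x_1,\ldots,x_n)$ minimizes $y \mapsto \sum_{i=1}^n \norm{y - x_i}$, so in particular
\[
\sum_{i=1}^n \norm{z - x_i} \leq \sum_{i=1}^n \norm{\overline{x}_S - x_i}.
\]
Let $B = [n] \setminus S$, so $\card{B} = f$. I would split both sides according to $S$ and $B$, and then apply the triangle inequality $\norm{\overline{x}_S - x_i} \leq \norm{\overline{x}_S - z} + \norm{z - x_i}$ to each term in $B$ on the right-hand side. This cancels the $B$-sums on both sides and yields
\[
\sum_{i \in S} \norm{z - x_i} \leq \sum_{i \in S} \norm{\overline{x}_S - x_i} + f \norm{z - \overline{x}_S}.
\]

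Next, I would lower-bound the left-hand side using the reverse triangle inequality $\norm{z - x_i} \geq \norm{z - \overline{x}_S} - \norm{\overline{x}_S - x_i}$, summed over $i \in S$. Combining gives
\[
\card{S} \norm{z - \overline{x}_S} - \sum_{i \in S} \norm{\overline{x}_S - x_i} \leq \sum_{i \in S} \norm{\overline{x}_S - x_i} + f \norm{z - \overline{x}_S},
\]
so, since $\card{S} - f = n - 2f$,
\[
(n-2f)\norm{z - \overline{x}_S} \leq 2 \sum_{i \in S} \norm{x_i - \overline{x}_S}.
\]

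To finish, I would square both sides and apply the Cauchy--Schwarz inequality
$\left( \sum_{i \in S} \norm{x_i - \overline{x}_S} \right)^2 \leq \card{S} \sum_{i \in S} \norm{x_i - \overline{x}_S}^2 = (n-f) \sum_{i \in S} \norm{x_i - \overline{x}_S}^2$,
giving
\[
\norm{z - \overline{x}_S}^2 \leq \frac{4(n-f)^2}{(n-2f)^2} \cdot \frac{1}{\card{S}} \sum_{i \in S} \norm{x_i - \overline{x}_S}^2,
\]
and then simplify $\frac{n-f}{n-2f} = 1 + \frac{f}{n-2f}$ to recover the stated constant $\kappa = 4(1 + \frac{f}{n-2f})^2$.

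The only subtle step is the cancellation maneuver in the first paragraph: one has to be careful to isolate the $S$-sum on the left before invoking the reverse triangle inequality, otherwise the $f\norm{z - \overline{x}_S}$ term would be absorbed by the left-hand side with the wrong sign and one would not obtain a useful bound. Everything else is a direct application of triangle inequality and Cauchy--Schwarz, and no properties of GM beyond its defining optimality are needed.
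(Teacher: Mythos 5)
Your proof is correct and follows essentially the same route as the paper's: both use the optimality of the geometric median together with triangle and reverse-triangle inequalities split over $S$ and its complement to reach the key bound $(n-2f)\norm{z - \overline{x}_S} \leq 2\sum_{i \in S}\norm{x_i - \overline{x}_S}$, and then conclude by squaring and applying Cauchy--Schwarz. The only difference is the order in which the GM optimality and the triangle inequalities are invoked, which does not change the substance of the argument.
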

\begin{proof}
The proof is similar to the analysis of Geometric Median in~\cite{karimireddy2022byzantinerobust}.
For completeness, we provide the full proof adapted to the definition of  $(f,\kappa)$-robustness.
Let us denote by $x^*:=\text{GM}(x_1, \ldots, x_n)$, the geometric median of the input vectors. Consider any subset $S \subseteq [n]$ of size $\card{S}=n-f$. By the reverse triangle inequality, for any $i \in S$, we have
\begin{equation}
\label{eq:GMONE}
    \norm{x^* - x_i} \geq \norm{x^* - \overline{x}_S} - \norm{x_i - \overline{x}_S}.
\end{equation}
Similarly, for any $i \in [n]\setminus S$, we obtain
\begin{equation}
\label{eq:GMTWO}
    \norm{x^* - x_i} \geq  \norm{x_i - \overline{x}_S} - \norm{x^* - \overline{x}_S}.
\end{equation}
Summing up~\eqref{eq:GMONE} and~\eqref{eq:GMTWO} over all input vectors we obtain
\begin{equation*}
    \sum_{i \in [n]} \norm{x^* - x_i} \geq (n-2f) \norm{x^* - \overline{x}_S} + \sum_{i \in [n]\setminus S} \norm{x_i - \overline{x}_S} - \sum_{i \in S} \norm{x_i - \overline{x}_S}.
\end{equation*}
Rearranging the terms, we obtain
\begin{align*}
         \norm{x^* - \overline{x}_S} &\leq \frac{1}{n-2f} \left( \sum_{i \in [n]} \norm{x^* - x_i}  - \sum_{i \in [n]\setminus S} \norm{x_i - \overline{x}_S} +  \sum_{i \in S} \norm{x_i - \overline{x}_S} \right)
\end{align*}
Note that by the definition of the geometric median, we have
\begin{equation*}
     \sum_{i \in [n]} \norm{x^* - x_i} \leq \sum_{i \in [n]} \norm{\overline{x}_S - x_i}.
\end{equation*}
Therefore,
\begin{align*}
         \norm{x^* - \overline{x}_S} &\leq \frac{1}{n-2f} \left( \sum_{i \in [n]} \norm{x_i - \overline{x}_S} - \sum_{i \in [n]\setminus S} \norm{x_i - \overline{x}_S} +  \sum_{i \in S} \norm{x_i - \overline{x}_S} \right)
         \leq  \frac{2}{n-2f} \sum_{i \in S} \norm{x_i - \overline{x}_S}.
\end{align*}
Squaring both sides and using Jensen's inequality, we obtain
\begin{align*}
    \norm{x^* - \overline{x}_S}^2 
    &\leq \frac{4(n-f)}{(n-2f)^2} \sum_{i \in S} \norm{x_i - \overline{x}_S}^2
    = \frac{4(n-f)^2}{(n-2f)^2} \cdot \frac{1}{n-f} \sum_{i \in S} \norm{x_i - \overline{x}_S}^2 \\
    &= 4 \left(1 +\frac{f}{n-2f} \right)^2 \frac{1}{|S|} \sum_{i \in S} \norm{x_i - \overline{x}_S}^2.
\end{align*}
This is the desired result.
\end{proof}


\subsubsection{Median}

For input vectors $x_1, \ldots, \, x_n$, their coordinate-wise median, denoted by $\text{CWMed}(x_1, \ldots, x_n)$, is defined to be a vector whose $k$-th coordinate, for all $k \in [d]$, is defined to be
\begin{equation}
    \left[\text{CWMed}\left(x_1, \ldots, x_n\right)\right]_k \coloneqq \text{Median}\left( [x_1]_k, \ldots [x_n]_k \right). \label{eqn:def_CWMed}
\end{equation}

\begin{proposition}
Let $n \in \mathbb{N}^*$ and $f < \nicefrac{n}{2}$.
CWMed is $(f,\kappa)$-robust with $\kappa = 4\left(1+\frac{f}{n-2f}\right)^2$.
\end{proposition}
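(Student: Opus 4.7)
The plan is to reduce the claim to the scalar case and then invoke the $(f,\kappa)$-robustness bound already established for the Geometric Median in Proposition~\ref{prop:gm}.

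First, I would observe that $\text{CWMed}$ is, by construction, a coordinate-wise aggregation rule in the sense of Lemma~\ref{lem:cw-kappa}: according to~\eqref{eqn:def_CWMed}, the $k$-th coordinate of the output depends only on the $k$-th coordinates of the inputs, through the scalar $\text{Median}$ function. Therefore, by Lemma~\ref{lem:cw-kappa}, it suffices to prove that the scalar median is $(f,\kappa)$-robust on $\mathbb{R}$ with the stated $\kappa$, for the same constant as claimed in the proposition.

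Next, I would exploit the fact that in dimension one the scalar median coincides with a Geometric Median. Indeed, the set $\argmin_{y \in \mathbb{R}} \sum_{i=1}^n |y - x_i|$ is exactly the set of medians of $x_1, \ldots, x_n$, which is the standard characterization of the $\ell_1$ minimizer on the real line. Consequently, any valid output of the scalar $\text{Median}$ is also a valid output of $\text{GM}$ in dimension one, so the bound from Proposition~\ref{prop:gm}, namely $\kappa = 4\left(1 + \frac{f}{n-2f}\right)^2$, applies verbatim to the scalar median.

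Combining the coordinate-wise reduction from Lemma~\ref{lem:cw-kappa} with the scalar instantiation of Proposition~\ref{prop:gm} then yields the proposition. I do not anticipate any substantial obstacle: the only point that deserves care is the handling of ties when $n$ is even, since in that case the set of $\ell_1$ minimizers is a whole interval. However, every tiebreaker admissible for the scalar $\text{Median}$ lies inside this set of minimizers, so the GM analysis of Proposition~\ref{prop:gm} remains valid regardless of the tiebreaking rule used by $\text{CWMed}$.
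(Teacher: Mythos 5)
Your proposal is correct and follows essentially the same route as the paper: reduce to the scalar case via Lemma~\ref{lem:cw-kappa}, then identify the one-dimensional median with the Geometric Median and invoke Proposition~\ref{prop:gm}. Your extra remark about ties is a valid (and welcome) clarification, since the GM analysis only uses the minimizing property of the output, which every admissible median tiebreaker satisfies.
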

\begin{proof}
Given that Geometric Median coincides with Median for one-dimensional inputs, it follows from Proposition~\ref{prop:gm} that, for one-dimensional inputs, Median is $(f,\kappa)$-robust with $\kappa = 4\left(1+\frac{f}{n-2f}\right)^2$.
Since Median is coordinate-wise, we deduce from Lemma~\ref{lem:cw-kappa} that the latter holds for any $d$-dimensional inputs, and conclude the proof.
\end{proof}

\subsection{Lower Bounds}
\label{app:kappa-lb}

In this section, we establish the tightness of our analysis by proving a universal lower bound on $\kappa$ in Proposition~\ref{prop:kappa-lb}, and an aggregation-specific lower bound in Proposition~\ref{prop:kappa-gar-lb}.

\begin{proposition}
\label{prop:kappa-lb}
Let $n \in \mathbb{N}^*, f<n$ and $\kappa >0$.
If $\aggregation$ is $(f,\kappa)$-robust, then $n>2f$ and $\kappa \geq \frac{f}{n-2f}$.
\end{proposition}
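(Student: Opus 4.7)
The plan is to prove both claims by exhibiting explicit ``bad'' configurations of inputs and invoking the robustness inequality for two different choices of the subset $S$.

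For the first claim ($n>2f$), I would argue by contradiction through an indistinguishability argument. Suppose $n \leq 2f$. Then there exist two disjoint subsets $S_1, S_2 \subseteq [n]$ each of size $n-f$, since $2(n-f) \leq n$. Set $x_i = 0$ for every $i \in S_1$, and $x_i = v$ (for some fixed nonzero vector $v \in \mathbb{R}^d$) for every $i \in S_2$; the remaining indices (if any) are set arbitrarily. Applying the $(f,\kappa)$-robustness inequality with $S = S_1$ gives $\overline{x}_{S_1} = 0$ with zero empirical variance, forcing $\aggregation(x_1,\ldots,x_n) = 0$. Applying it with $S = S_2$ gives $\overline{x}_{S_2} = v$ with zero empirical variance, forcing $\aggregation(x_1,\ldots,x_n) = v$. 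These two conclusions contradict each other, so $n > 2f$ must hold.

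For the second claim, I would construct a single scalar configuration: set $x_1 = \cdots = x_{n-f} = 0$ and $x_{n-f+1} = \cdots = x_n = 1$. With $S_1 = \{1,\ldots,n-f\}$ we have $\overline{x}_{S_1}=0$ and zero variance, so the robustness inequality forces $\aggregation(x_1,\ldots,x_n) = 0$. Now take $S_2 = \{f+1,\ldots,n\}$; since $n > 2f$, this set is well defined and contains $n-2f$ zeros followed by $f$ ones. A short calculation yields $\overline{x}_{S_2} = \tfrac{f}{n-f}$ and
\begin{equation*}
    \frac{1}{\card{S_2}}\sum_{i \in S_2}\left(x_i - \overline{x}_{S_2}\right)^2 = \frac{f(n-2f)}{(n-f)^2}.
\end{equation*}
Plugging $\aggregation = 0$ into the robustness inequality for $S_2$ gives $\tfrac{f^2}{(n-f)^2} \leq \kappa \cdot \tfrac{f(n-2f)}{(n-f)^2}$, i.e., $\kappa \geq \tfrac{f}{n-2f}$.

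Neither step contains a genuine obstacle: the main subtlety is simply identifying the right pair of subsets $(S_1,S_2)$ so that one pins down the aggregation exactly (via zero variance) while the other produces the smallest variance-to-squared-mean ratio consistent with the constraint $\card{S}=n-f$. The choice above is essentially forced, and the resulting bound matches the claim exactly.
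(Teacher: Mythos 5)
Your proposal is correct and follows essentially the same route as the paper: pin down $\aggregation$ exactly via a zero-variance subset, then evaluate the robustness inequality on the shifted subset $\{f+1,\ldots,n\}$ to extract the bound $\kappa \geq \frac{f}{n-2f}$. The only cosmetic difference is that for the claim $n>2f$ you use two disjoint zero-variance subsets with arbitrary fill-in, whereas the paper derives the contradiction from the same single $\{0,1\}$ configuration used for the quantitative bound; both are the same indistinguishability idea.
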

\begin{proof}
  Let $n \in \mathbb{N}^*, f<n$ and $\kappa >0$.
  Assume that $F$ is $(f, \kappa)$-resilient averaging aggregation rule. 
  Consider $x_1, \ldots, \, x_n$ such that $x_1 = \ldots = x_{n-f} = 0$, and $x_{n-f+1} = \ldots = x_{n} = 1$. 
  Let us first consider a set $S_0 = \left\{1,\ldots, n-f \right\}$. Since $\card{S_0} = n-f$, by definition, we have
  \begin{equation*}
    \absv{F(x_1, \ldots, \, x_n) - \overline{x}_{S_0}}^2 \leq \kappa\, \frac{1}{n-f} \sum_{i \in S_0} \absv{x_i - \overline{x}_{S_0}}^2 = 0.
  \end{equation*}
  Thus, $\aggregation(x_1, \ldots, \, x_n) = \overline{x}_{S_0} = 0$. 
  
  Now, consider another set $S_{1} = \left\{f+1,\ldots, n \right\}$.
  Observe that we necessarily have $n>2f$.
  Assume by contradiction that $2f \geq n$, i.e., $f+1 \geq n-f+1$.
  This implies that, for every $i\in S_1$, $x_i=1$.
  Therefore, $\overline{x}_{S_1}=1$.
  And, since $\aggregation$ is $(f,\kappa)$-robust, we must have
  \begin{equation*}
    \absv{F(x_1, \ldots, \, x_n) - \overline{x}_{S_1}}^2 \leq \kappa\, \frac{1}{n-f} \sum_{i \in S_1} \absv{x_i - \overline{x}_{S_1}}^2.
  \end{equation*}
  Therefore, $0=\overline{x}_{S_0} = \aggregation(x_1,\dots,x_n)=\overline{x}_{S_1}=1$, which is a contradiction.
  
  As a result, we must have $n>2f$.
  This implies that $\overline{x}_{S_1}=\frac{f}{n-f}$. 
  Thus, 
  \begin{align}
      \absv{F(x_1, \ldots, \, x_n) - \overline{x}_{S_1}}^2 = \left(\frac{f}{n-f}\right)^2. \label{eq:low_1}
  \end{align}
  Since $F$ is $(f, \kappa)$-resilient averaging rule, we have 
  \begin{align*}
      \absv{F(x_1, \ldots, \, x_n) - \overline{x}_{S_1}}^2 
      &\leq \kappa\, \frac{1}{n-f} \sum_{i \in S_1} \absv{x_i - \overline{x}_{S_1}}^2
      =\frac{\kappa}{n-f}
      \left (
      (n-2f)\absv{0-\frac{f}{n-f}}^2 +f\absv{1-\frac{f}{n-f}}^2
      \right ) \\
      &= \frac{\kappa}{n-f} 
      \left (
      (n-2f)\frac{f^2}{(n-f)^2} +f\frac{(n-2f)^2}{(n-f)^2}
      \right ) \\
      &= \frac{\kappa\, f(n-2f)}{n-f} 
      \left (
      \frac{f + n-2f}{(n-f)^2}
      \right )
      = \kappa \frac{f(n-2f)}{(n-f)^2}.
  \end{align*}
  Plugging this inequality back in \eqref{eq:low_1}, we conclude
    $\kappa \geq \frac{f}{n-2f}$.
\end{proof}

\begin{proposition}
\label{prop:kappa-gar-lb}
Let $n \in \N^*, f \geq 1$ such that $n > 2f$, and $\kappa \geq 0$.
For any $F \in \{\text{GM}, \text{CWMed}, \text{Krum}\}$, if $F$ is $(f,\kappa)$-robust then $\kappa \in \Omega{(1)}$.
\end{proposition}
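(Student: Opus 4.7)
My plan is to construct, for each admissible pair $(n,f)$, a single one-dimensional input configuration on which GM, CWMed, and Krum are \emph{all} forced to return the value $0$, while some honest subset $S$ of size $n-f$ has mean bounded away from $0$ by a constant multiple of its standard deviation. Since $(f,\kappa)$-robustness quantifies over the dimension, a scalar counterexample suffices (embed along a single coordinate of $\R^d$).

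Concretely, I would take $k := \lfloor n/2 \rfloor + 1$ (so $k > n/2$, while $k \leq n-f$ follows since $n > 2f$ with $f \geq 1$ gives $\lfloor n/2 \rfloor \geq f$), set $x_1 = \cdots = x_k = 0$ and $x_{k+1} = \cdots = x_n = 1$, and choose $S := \{f+1, \ldots, n\}$, which consists of $k-f$ zeros and $n-k$ ones. A direct computation then gives
\[
\overline{x}_S \;=\; \frac{n-k}{n-f}
\qquad\text{and}\qquad
\frac{1}{\card{S}}\sum_{i \in S}\absv{x_i - \overline{x}_S}^2 \;=\; \frac{(k-f)(n-k)}{(n-f)^2}.
\]
For CWMed and GM, the strict majority $k > n/2$ makes the scalar median unambiguously $0$ (both middle order statistics are zero in the even case), and GM of collinear inputs reduces to the scalar median, so both rules return $0$. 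For Krum, I would compare the sum of squared distances over the $n-f$ nearest neighbours: any zero vector yields $n-f-k+1$, any one vector yields $k-f+1$, and $k > n/2$ makes the former strictly smaller, so Krum also selects a zero vector and outputs $0$.

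It follows that $\norm{F(x_1, \ldots, x_n) - \overline{x}_S}^2 = (n-k)^2/(n-f)^2$ for all three rules, whence the ratio that $\kappa$ must dominate equals $(n-k)/(k-f)$. Substituting $k = \lfloor n/2 \rfloor + 1$ and splitting on parity, this ratio equals $(n-1)/(n+1-2f) \geq 1$ for odd $n$, and $(n/2-1)/(n/2+1-f) \geq 1/2$ for even $n$ (the latter reduces to the elementary inequality $n + 2f \geq 6$, which holds since even $n$ with $n > 2f$ forces $n \geq 4$ and $f \geq 1$). Hence $\kappa \geq 1/2 \in \Omega(1)$. The only delicate point is Krum's tie-breaking, which is precisely why the construction insists on $k > n/2$ rather than $k = n/2$; with this strict majority in place, the same scalar counterexample handles all three aggregation rules simultaneously.
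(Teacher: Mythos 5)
Your proposal is correct and follows essentially the same strategy as the paper's proof: a scalar two-valued instance in which a strict majority of the inputs forces GM, CWMed, and Krum to output the majority value, while the mean of a designated honest subset $S$ sits at a constant multiple of its standard deviation away from that value. The only differences are cosmetic (you place $\lfloor n/2\rfloor+1$ points at $0$ and take $S=\{f+1,\dots,n\}$, whereas the paper splits $S=\{1,\dots,n-f\}$ between $\pm 1$ and lets the $f$ remaining inputs tip the majority), yielding the constant $\tfrac12$ in place of the paper's $\tfrac13$.
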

\begin{proof}
Let $n \in \N^*, f \geq 1$ such that $n > 2f$ and $\kappa \geq 0$.
Let $F \in \{\text{GM}, \text{CWMed}, \text{Krum}\}$.
We will prove that if $F$ is $(f,\kappa)$-robust, then there exists an \emph{absolute} constant $c >0$ such that $\kappa \geq c$.

Consider $x_1, \ldots, x_n \in \R$ such that
$x_1, \ldots, x_{\floor*{\frac{n-f}{2}}} = -1$ and 
$x_{\floor*{\frac{n-f}{2}}+1}, \ldots, x_{n-f} = 1$.
Moreover, we set $x_{n-f+1}, \ldots, x_n = 1$.
Observe that, because $f \geq 1$, there is a strict majority of the inputs taking value $1$: the number of such inputs is $\Big(n-f - \floor*{\frac{n-f}{2}} \Big) + f = \ceil*{\frac{n-f}{2}} + f \geq \floor*{\frac{n-f}{2}} + f > \floor*{\frac{n-f}{2}}$.

Besides, recall that by setting $S = \{1,\ldots,n-f\}$, any $(f,\kappa)$-robust function $F$ should verify
  \begin{equation}
\label{eq:4}
    \absv{F(x_1, \ldots, \, x_n) - \overline{x}_{S}}^2 \leq \kappa\, \frac{1}{n-f} \sum_{i \in S} \absv{x_i - \overline{x}_{S}}^2.
  \end{equation}
  
However, the average $\overline{x}_S$ is equal to 
\begin{align}
    \label{eq:average}
    \overline{x}_S &= \frac{1}{n-f} \sum_{i \in S} x_i = \frac{1}{n-f} \left[\floor*{\frac{n-f}{2}} \times(-1) + \left(n-f - \floor*{\frac{n-f}{2}}\right)\times (+1)\right]\nonumber\\
    &= \frac{1}{n-f} \left(n-f - 2\floor*{\frac{n-f}{2}}\right)
    =
    \begin{cases}
    0 & \text{ if $n-f$ is even,}\\
    \frac{1}{n-f} & \text{ if $n-f$ is odd.}
    \end{cases}
\end{align}

Besides, the empirical average $\frac{1}{n-f} \sum_{i \in S} \absv{x_i - \overline{x}_{S}}^2 $ is equal to

\begin{align}
    \label{eq:variance}
    \frac{1}{n-f} \sum_{i \in S} \absv{x_i - \overline{x}_{S}}^2
    &= \frac{1}{n-f} \sum_{i \in S} \left(x_i^2 + \overline{x}_S^2 - 2x_i \overline{x}_S \right)
    =\frac{1}{n-f} \sum_{i \in S} x_i^2 -\overline{x}_S^2 \nonumber\\
    &= 1 - \overline{x}_S^2
    =
    \begin{cases}
    1 & \text{ if $n-f$ is even,}\\
    1 - \left(\frac{1}{n-f}\right)^2 & \text{ if $n-f$ is odd.}
    \end{cases}
\end{align}

Now, observe that the value of $F(x_1,\ldots,x_n)$ is equal to $1$ for every $F \in \{\text{GM}, \text{CWMed}, \text{Krum}\}$.
Indeed, since $f \geq 1$, a strict majority of the inputs take the value $1$, while the remaining inputs take the value $-1$.
Recall also that GM is identical to CWMed in one dimension.


By plugging this in \eqref{eq:4}, using \eqref{eq:average} and \eqref{eq:variance}, and rearranging terms, we obtain for every $F \in \{\text{GM}, \text{CWMed}, \text{Krum}\}$
\begin{align}
    \label{eq:kappa-lb}
    \kappa \geq \frac{\absv{F(x_1, \ldots, \, x_n)-\overline{x}_S}^2}{\frac{1}{n-f} \sum_{i \in S} \absv{x_i - \overline{x}_{S}}^2}
    &= \frac{\absv{1-\overline{x}_S}^2}{\frac{1}{n-f} \sum_{i \in S} \absv{x_i - \overline{x}_{S}}^2}
    = \begin{cases}
    1 & \text{ if $n-f$ is even,}\\
    \frac{\absv{1-\frac{1}{n-f}}^2}{1 - \left(\frac{1}{n-f}\right)^2} & \text{ if $n-f$ is odd.}
    \end{cases}
\end{align}
Note however that
$\frac{\absv{1-\frac{1}{n-f}}^2}{1 - \left(\frac{1}{n-f}\right)^2} = \frac{1-\frac{1}{n-f}}{1 + \frac{1}{n-f}} = \frac{n-f-1}{n-f+1} = 1 - \frac{2}{n-f+1}$.
Since $f \geq 1$, then $n \geq 2f+1 \geq 3$, and thus $\frac{\absv{1-\frac{1}{n-f}}^2}{1 - \left(\frac{1}{n-f}\right)^2} = 1 - \frac{2}{n-f+1} \geq \frac{1}{3}$.
Therefore, in both cases of \eqref{eq:kappa-lb}, we have $\kappa \geq \frac{1}{3}$ for every $F \in \{\text{GM}, \text{CWMed}, \text{Krum}\}$.
This concludes the proof.
\end{proof}

We can now see that the robustness coefficients given in Table~\ref{tab:kappa} are tight in order of magnitude.
Assume that $n \geq (2+\nu)f$ for some absolute constant $\nu>0$.
The coefficient of CWTM is of order $\mathcal{O}{\left(\frac{f}{n-2f}\right)}$, which is optimal following Proposition~\ref{prop:kappa-lb}.
The coefficients of CWMed, GM and Krum are of order $\mathcal{O}{(1)}$, which is optimal following Proposition~\ref{prop:kappa-gar-lb}.

\clearpage
\subsection{Unifying Robustness Definitions}
\label{app:unification}

In this section, we prove that $(f,\kappa)$-robustness unifies existing robustness definitions in the literature~\cite{farhadkhani2022byzantine,karimireddy2022byzantinerobust}.
Specifically, we prove that verifying our definition implies verifying the other definitions.
The reason behind this is that, in $(f,\kappa)$-robustness, we control the error on estimating the average with a smaller quantity compared to existing works.

\subsubsection{$(f,\lambda)$-resilient Averaging}

Recall the definition of $(f, \lambda)$-resilient averaging~\cite{farhadkhani2022byzantine} below.
\begin{definition}[{\bf $(f, \, \lambda)$-Resilient averaging}]
For $f < n$ and real value $\lambda \geq 0$, an aggregation rule $F$ is called {\em $(f, \, \lambda)$-resilient averaging} if for any collection of $n$ vectors $x_1, \ldots, \, x_n$, and any set $S \subseteq \{1, \ldots, \, n\}$ of size $n-f$,
\begin{align*}
    \norm{F(x_1, \ldots, \, x_n) - \overline{x}_S} \leq \lambda \max_{i, j \in S} \norm{x_i - x_j},
\end{align*}
where $\overline{x}_S \coloneqq \frac{1}{\card{S}} \sum_{i \in S} x_i$, and $\card{S}$ is the cardinality of $S$.
\end{definition}

The following proposition shows that $(f,\kappa)$-robustness implies $(f,\lambda)$-resilient averaging.

\begin{proposition}
\label{prop:kappa-lambda}
Let $n \in \mathbb{N}^*$, $f<n$ and $\kappa > 0$.
If $\aggregation$ is $(f,\kappa)$-robust, then $\aggregation$ is $(f,\lambda)$-resilient with $\lambda = \sqrt{\kappa/2}$.
\end{proposition}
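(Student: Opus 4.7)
The plan is to take the square root of the $(f,\kappa)$-robustness inequality and then bound the empirical variance on the right-hand side by the squared diameter of $\{x_i : i \in S\}$. Concretely, if $F$ is $(f,\kappa)$-robust then
\[
\norm{F(x_1,\ldots,x_n) - \overline{x}_S} \leq \sqrt{\frac{\kappa}{\card{S}} \sum_{i \in S} \norm{x_i - \overline{x}_S}^2},
\]
so it suffices to show that
\[
\frac{1}{\card{S}} \sum_{i \in S} \norm{x_i - \overline{x}_S}^2 \leq \frac{1}{2}\, \max_{i,j \in S} \norm{x_i - x_j}^2.
\]

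To prove this last inequality, I would use the classical rewriting of the empirical variance as a sum of pairwise squared differences:
\[
\frac{1}{\card{S}} \sum_{i \in S} \norm{x_i - \overline{x}_S}^2 = \frac{1}{2\card{S}^2} \sum_{i,j \in S} \norm{x_i - x_j}^2.
\]
This identity follows by expanding $\norm{x_i - \overline{x}_S}^2$ and using that $\sum_{i \in S}(x_i - \overline{x}_S)=0$, or equivalently by writing $x_i - x_j = (x_i - \overline{x}_S) - (x_j - \overline{x}_S)$ and summing. Bounding each of the $\card{S}^2$ pairwise terms by $\max_{i,j \in S} \norm{x_i - x_j}^2$ immediately gives the desired $\tfrac{1}{2}$ factor.

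Combining the two displays and taking square roots yields
\[
\norm{F(x_1,\ldots,x_n) - \overline{x}_S} \leq \sqrt{\frac{\kappa}{2}} \, \max_{i,j \in S} \norm{x_i - x_j},
\]
which is exactly $(f,\lambda)$-resilient averaging with $\lambda = \sqrt{\kappa/2}$. There is no real obstacle here: the proof is a one-line application of the variance-to-pairwise identity, and the only thing to be slightly careful about is verifying the constant $\tfrac{1}{2}$ in the identity (rather than, say, $1$), which comes from counting each unordered pair twice in the double sum over $S \times S$.
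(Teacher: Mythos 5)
Your proposal is correct and follows essentially the same route as the paper: the paper's proof also rewrites the empirical variance as $\frac{1}{2\card{S}^2}\sum_{i,j\in S}\norm{x_i-x_j}^2$ (their equation~\eqref{eq:variance-pairwise}), bounds each pairwise term by the maximum, and takes square roots to obtain $\lambda=\sqrt{\kappa/2}$. The constant $\tfrac{1}{2}$ you flag is indeed the only delicate point, and your justification of it matches the paper's.
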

\begin{proof}
Let $n \in \mathbb{N}^*$, $f<n$ and $\lambda,\kappa > 0$.
Let $x_1, \ldots, x_n \in \R^d$ and $S \subseteq \{1,\ldots,n\}$ such that $\card{S}=n-f$.

If $F$ is $(f,\kappa)$-robust, then we have
\begin{align}
    \label{eq:kappa-lambda}
    \norm{F{(x_1,\ldots,x_n)}-\overline{x}_S}^2
    \leq \kappa \frac{1}{n-f} \sum_{i \in S} \norm{x_i - \overline{x}_S}^2
    =\kappa \frac{1}{2(n-f)^2}\sum_{i,j \in S} \norm{x_i - x_j}^2
    \leq \frac{\kappa}{2} \max_{i,j \in S} \norm{x_i - x_j}^2.
\end{align}
The equality above is due to
\begin{align}   
    \label{eq:variance-pairwise}
    \frac{1}{2(n-f)^2}\sum_{i,j \in S}\norm{x_i - x_j}^2
    &= \frac{1}{2(n-f)^2}\sum_{i,j \in S}\norm{x_i - \overline{x}_S + \overline{x}_S - x_j}^2\nonumber\\
    &= \frac{1}{2(n-f)^2}\sum_{i,j \in S} \left( \norm{x_i - \overline{x}_S}^2 + \norm{\overline{x}_S - x_j}^2 -2(x_i - \overline{x}_S)(x_j - \overline{x}_S)\right)\nonumber\\
    &= \frac{1}{2(n-f)^2} \left[ (n-f)\sum_{i \in S}\norm{x_i - \overline{x}_S}^2 + (n-f)\sum_{j \in S}\norm{\overline{x}_S - x_j}^2\right] \nonumber\\
    &\quad -\frac{1}{(n-f)^2}\sum_{i,j \in S}(x_i - \overline{x}_S)(x_j - \overline{x}_S)\nonumber\\
    &=\frac{1}{n-f}\sum_{i \in S}\norm{x_i - \overline{x}_S}^2 - \frac{1}{(n-f)^2}\sum_{i \in S}(x_i - \overline{x}_S)\underbrace{\sum_{j \in S}(x_j - \overline{x}_S)}_{=0}\nonumber\\
    &=\frac{1}{n-f}\sum_{i \in S}\norm{x_i - \overline{x}_S}^2.
\end{align}
Taking the square root of both sides in \eqref{eq:kappa-lambda} concludes the proof.
\end{proof}

As a consequence of Proposition~\ref{prop:kappa-lambda}, one can measure the significant improvement over the analysis of aggregation rules in~\cite{farhadkhani2022byzantine}.
For example, the coefficient $\lambda$ proved for CWMed in~\cite{farhadkhani2022byzantine} is $\frac{n}{2(n-f)}\min{\{2\sqrt{n-f},\sqrt{d}\}}$, which is either growing with the dimension or $n$.
In contrast, Proposition~\ref{prop:kappa-lambda} shows that our analysis implies the coefficient $\lambda = \sqrt{\kappa/2} = \sqrt{2}\left(1+\frac{f}{n-2f}\right)$, which is bounded whenever $n \geq (2+\nu)f$ for some $\nu>0$.
A similar observation can be made for CWTM.

\subsubsection{$(\delta_{\text{max}},c)$-agnostic Robust Aggregation}

We recall the definition of an agnostic robust aggregator (ARAgg)~\cite{karimireddy2022byzantinerobust} below.
Note that the so-called "good" subset in the original definition of $(\delta_{\text{max}},c)$-ARAgg~\cite{karimireddy2022byzantinerobust} is only required to be of size $\card{S} \geq (1-\delta)n$, where $\delta$ is an \emph{upper bound} on the fraction of Byzantine workers.
In our formalism, $f$ is the actual number of Byzantine workers, and thus we directly require $\card{S} = n-f$ without loss of generality.
\begin{definition}[$(\delta_{\text{max}},c)${\bf -ARAgg}]
Given inputs $X_1,\ldots,X_n$ such that a subset $S, \card{S} = n-f$ with $\frac{f}{n} \leq \delta_{\text{max}} < 0.5$ satisfies $\expect{\norm{X_i - X_j}^2} \leq \rho^2$ for all $i,j \in S$. 
Then, the output $\hat{X}$ of a $(\delta_{\text{max}},c)$-ARAgg satisfies $\expect{\norm{\hat{X} -\overline{X}_S}^2} \leq c \, \frac{f}{n} \rho^2$. 
\end{definition}

The following proposition shows that $(f,\kappa)$-robustness implies $(\delta_{\text{max}},c)$-agnostic robust aggregation.
We also show that to obtain $(\delta_{\text{max}},c)$-agnostic robust aggregation with $c = \mathcal{O}{(1)}$, although there is no such theoretical requirement in \cite{karimireddy2022byzantinerobust}, it is sufficient to have $\kappa \in  \mathcal{O}{\left(\frac{f}{n-2f}\right)}$.

\begin{proposition}
Let $n \in \mathbb{N}^*, 0 < f < \nicefrac{n}{2}, \kappa > 0$, and $\frac{f}{n} \leq \delta_{\text{max}} < \frac{1}{2}$.
If $\aggregation$ is $(f,\kappa)$-robust,
then $\aggregation$ is $(\delta_{\text{max}},c)$-ARAgg with $c = \kappa \,\frac{n}{2f}$.
Furthermore, if $\kappa \in  \mathcal{O}{\left(\frac{f}{n-2f}\right)}$ then $c = \mathcal{O}{(1)}$.
\end{proposition}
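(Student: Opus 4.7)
The plan is to transfer the deterministic inequality from $(f,\kappa)$-robustness into expectation and then re-express the empirical variance on the right-hand side in terms of pairwise squared distances, which is exactly the quantity the ARAgg definition controls.

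First, I would apply the definition of $(f,\kappa)$-robustness pointwise to any realization of the random inputs $X_1,\ldots,X_n$. With $\hat{X} = F(X_1,\ldots,X_n)$ and $\card{S}=n-f$, this yields
\begin{equation*}
    \norm{\hat{X} - \overline{X}_S}^2 \leq \frac{\kappa}{n-f} \sum_{i \in S} \norm{X_i - \overline{X}_S}^2.
\end{equation*}
Taking expectations on both sides preserves the inequality.

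Next, I would invoke the pairwise-variance identity proved in \eqref{eq:variance-pairwise}, namely $\sum_{i \in S} \norm{X_i - \overline{X}_S}^2 = \frac{1}{2(n-f)} \sum_{i,j \in S} \norm{X_i - X_j}^2$, and take expectations term by term. Using the hypothesis $\expect{\norm{X_i - X_j}^2} \leq \rho^2$ for all $i,j \in S$, together with $\card{S}^2 = (n-f)^2$, I obtain
\begin{equation*}
    \sum_{i \in S} \expect{\norm{X_i - \overline{X}_S}^2} = \frac{1}{2(n-f)} \sum_{i,j \in S} \expect{\norm{X_i - X_j}^2} \leq \frac{(n-f)\,\rho^2}{2}.
\end{equation*}
Plugging this into the expected version of the $(f,\kappa)$-robustness bound gives $\expect{\norm{\hat{X} - \overline{X}_S}^2} \leq \frac{\kappa \rho^2}{2}$. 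Rewriting $\frac{\kappa}{2} = \kappa \cdot \frac{n}{2f} \cdot \frac{f}{n}$ matches the ARAgg template with $c = \kappa \, \frac{n}{2f}$.

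For the second claim, if $\kappa \in \mathcal{O}\!\left(\frac{f}{n-2f}\right)$, then $c = \kappa \, \frac{n}{2f} \in \mathcal{O}\!\left(\frac{n}{n-2f}\right)$. Since the hypothesis $\frac{f}{n} \leq \delta_{\text{max}} < \frac{1}{2}$ implies $n - 2f \geq (1-2\delta_{\text{max}})\, n$, the ratio $\frac{n}{n-2f}$ is bounded by the absolute constant $\frac{1}{1-2\delta_{\text{max}}}$, so $c = \mathcal{O}(1)$. There is no real obstacle here; the only subtlety is remembering to convert the empirical variance into the pairwise form via \eqref{eq:variance-pairwise} so that the ARAgg hypothesis on $\expect{\norm{X_i-X_j}^2}$ can be applied uniformly.
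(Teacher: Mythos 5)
Your proposal is correct and follows essentially the same route as the paper: both apply the deterministic $(f,\kappa)$-robustness bound, convert the empirical variance to pairwise squared distances via the identity \eqref{eq:variance-pairwise}, take expectations, bound by $\rho^2$, and read off $c = \kappa\,\frac{n}{2f}$. Your handling of the second claim via $n-2f \geq (1-2\delta_{\text{max}})\,n$ is just a more direct phrasing of the paper's monotonicity argument for $\delta \mapsto \frac{2\delta}{1-2\delta}$, so there is nothing substantive to add.
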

\begin{proof}
Let $n \in \mathbb{N}^*, 0 < f < \nicefrac{n}{2}, \kappa > 0$, and $0 < \delta_{\text{max}} < \frac{1}{2}$.
Assume that $\aggregation$ is $(f,\kappa)$-robust.
Consider any $x_1, \ldots, x_n \in \R^d$ and $S \subseteq [n]$ such that $\card{S} = n-f$.

As $F$ is $(f,\kappa)$-robust, using \eqref{eq:variance-pairwise}, we have
\begin{align}
\label{eq:aragg-det}
    \norm{F{(x_1,\ldots,x_n)}-\overline{x}_S}^2
    \leq \frac{\kappa }{n-f} \sum_{i \in S} \norm{x_i - \overline{x}_S}^2
    = \frac{\kappa}{2(n-f)^2} \sum_{i,j \in S} \norm{x_i - x_j}^2.
\end{align}

For any random variables $X_1, \ldots, X_n$, integrating over \eqref{eq:aragg-det} with the joint probability measure of these variables then gives
\begin{align}
    \label{eq:aragg-stoch}
    \expect{\norm{F{(X_1,\ldots,X_n)} - \overline{X}_S}^2}
    &\leq \frac{\kappa}{2} \expect{\frac{1}{(n-f)^2} \sum_{i,j \in S} \norm{X_i - X_j}^2}
    = \frac{\kappa}{2} \frac{1}{\card{S}^2}\sum_{i,j \in S} \expect{\norm{X_i - X_j}^2}\nonumber\\
    &\leq \frac{\kappa}{2} \max_{i,j \in S}\expect{\norm{X_i - X_j}^2}
    = c\, \frac{f}{n} \max_{i,j \in S}\expect{\norm{X_i - X_j}^2}.
\end{align}
where $c \coloneqq \kappa \, \frac{n}{2f}$.
Thus, if $X_1, \ldots, X_n$ are such that there exists a subset $S \subseteq [n], \card{S} = n-f,$ for which $\expect{\norm{X_i - X_j}^2} \leq \rho^2$ for all $i,j \in S$.
Then, it holds that $\rho^2 \geq \max_{i,j \in S} \expect{\norm{X_i - X_j}^2}$.
This fact together with \eqref{eq:aragg-stoch} allows to conclude the desired result; that is,
\begin{align*}
    \expect{\norm{F{(X_1,\ldots,X_n)} - \overline{X}_S}^2} \leq c\, \frac{f}{n} \rho^2.
\end{align*}

Furthermore, if $\kappa = \mathcal{O}{\left(\frac{f}{n-2f}\right)}$, then we can write $\kappa \leq 2m (1-2 \delta_{\text{max}}) \frac{f}{n-2f}$ for some absolute constant $m > 0$, since $\delta_{\text{max}} \in (0,\frac{1}{2})$.
However, since $\frac{f}{n} \leq \delta_{\text{max}}$ and $\delta \mapsto \frac{2\delta}{1-2\delta}$ is non-decreasing, we have $\frac{f}{n-2f} = \frac{f}{n}(1+\frac{2f}{n-2f}) \leq \frac{f}{n}(1+\frac{2\delta_{\text{max}}}{1-2\delta_{\text{max}}})$, and thus $\kappa \leq 2m (1-2 \delta_{\text{max}}) \frac{f}{n-2f} \leq 2m\,  \frac{f}{n}$.
As a result, we have $c = \kappa\, \frac{n}{2f} \leq m = \mathcal{O}{(1)}$.
This concludes the proof.
\end{proof}

Our analysis improves over that of \cite{karimireddy2022byzantinerobust} for CWMed and Krum.
For CWMed, their rate is dimension-dependent unlike ours.
For Krum, they only prove robustness assuming $n>4f$, while our analysis holds for $n>2f$.

\clearpage
\section{Proof of Lemma~\ref{lem:cenna}: Analysis of $\cenna$}
\label{app:cenna}

\subsection{Proof Overview}

The proof of Lemma~\ref{lem:cenna} relies on the observation that $\cenna$ brings the inputs closer to the true average.
This is formalized in Lemma~\ref{lem:cenna-reduction} where we show that the empirical variance is reduced by a factor of order $\nicefrac{f}{n}$.
Note that although outputs of $\cenna$ have smaller variance than the original inputs, their average may deviate from the original average.
Then, in the proof of Lemma~\ref{lem:cenna}, we control this bias introduced by the $\cenna$ operation, and use the reduction proved in Lemma~\ref{lem:cenna-reduction} to conclude.

\paragraph{Notation.}
In the following, for every set $S \subseteq [n]$, and every vectors $x_1, \ldots, x_n \in \R^d$, we denote by $\overline{x}_S$ the average $\frac{1}{\card{S}} \sum_{i \in S} x_i$.
Let $\mu \in \R^d$.
We denote by $y_{\mu}$ the average of the $n-f$ nearest neighbors of $\mu$ in $\{x_1, \ldots, x_n\}$:
\begin{equation*}
    y_\mu \coloneqq \frac{1}{\card{\mathcal{N}_{\mu}}} \sum_{i \in \mathcal{N}_{\mu}} x_i,
\end{equation*}
where $\mathcal{N}_{\mu} \subseteq \{1, \ldots, \, n\}, \card{\mathcal{N}_{\mu}}=n-f,$ is the set of indices of the $n-f$ nearest neighbors of $\mu$.

\subsection{Proof of Supporting Lemmas}

We first prove a general lemma allowing us to control the distance between the nearest neighbor average $y_\mu$ and the true average $\overline{x}_S$ with the dispersion of $(x_i)_{i \in S}$ around the pivot $\mu$.

\begin{lemma}
\label{lem:cva-pivot}
Let $n \in \mathbb{N}^*$, $f < n$, and $\mu \in \R^d$.
For any set $S \subseteq \{1,\dots,n\}$, $\card{S}= n-f$, we have for any vectors $x_1, \dots, x_n \in \R^d$,
\begin{equation*}
    \norm{y_\mu - \overline{x}_S}^2
    \leq \frac{4f}{n-f} \frac{1}{\card{S}} \sum_{i \in S} \norm{x_i-\mu}^2.
\end{equation*}
\end{lemma}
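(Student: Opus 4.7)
The plan is to decompose $y_\mu - \overline{x}_S$ along the symmetric difference of the index sets $\mathcal{N}_\mu$ and $S$, exploit the fact that both sets have cardinality $n-f$, and then use the defining nearest-neighbor property of $\mathcal{N}_\mu$ to trade sums over $\mathcal{N}_\mu \setminus S$ for sums over $S \setminus \mathcal{N}_\mu$, which already lies inside $S$.

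Concretely, I would first write
\begin{equation*}
y_\mu - \overline{x}_S = \frac{1}{n-f}\left(\sum_{i \in \mathcal{N}_\mu \setminus S}(x_i - \mu) - \sum_{i \in S \setminus \mathcal{N}_\mu}(x_i - \mu)\right),
\end{equation*}
using the fact that $|\mathcal{N}_\mu \setminus S| = |S \setminus \mathcal{N}_\mu| =: q$ (since $|\mathcal{N}_\mu| = |S| = n-f$), which allows me to insert $-\mu$ into each term without changing the value. Setting $q \leq f$ follows from $|\mathcal{N}_\mu \setminus S| \leq |\mathcal{N}_\mu \cup S| - |S| \leq n - (n-f) = f$.

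Next, I would apply Jensen's inequality (specifically the bound $\|\sum_{k=1}^{2q} v_k\|^2 \leq 2q \sum_{k=1}^{2q}\|v_k\|^2$) to obtain
\begin{equation*}
\norm{y_\mu - \overline{x}_S}^2 \leq \frac{2q}{(n-f)^2}\left(\sum_{i \in \mathcal{N}_\mu \setminus S}\norm{x_i - \mu}^2 + \sum_{i \in S \setminus \mathcal{N}_\mu}\norm{x_i - \mu}^2\right).
\end{equation*}
The key step is now to eliminate the first sum. Since $\mathcal{N}_\mu$ collects the $n-f$ indices minimizing $\norm{x_i - \mu}$, every $j \in \mathcal{N}_\mu \setminus S$ satisfies $\norm{x_j - \mu} \leq \norm{x_i - \mu}$ for all $i \notin \mathcal{N}_\mu$; averaging this over $i \in S \setminus \mathcal{N}_\mu$ and summing over $j \in \mathcal{N}_\mu \setminus S$ yields
\begin{equation*}
\sum_{j \in \mathcal{N}_\mu \setminus S}\norm{x_j - \mu}^2 \leq \sum_{i \in S \setminus \mathcal{N}_\mu}\norm{x_i - \mu}^2,
\end{equation*}
because the two sets have equal cardinality $q$. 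Plugging this back in, using $q \leq f$, and extending $S \setminus \mathcal{N}_\mu \subseteq S$ gives
\begin{equation*}
\norm{y_\mu - \overline{x}_S}^2 \leq \frac{4q}{(n-f)^2}\sum_{i \in S}\norm{x_i - \mu}^2 \leq \frac{4f}{n-f}\cdot\frac{1}{\card{S}}\sum_{i \in S}\norm{x_i - \mu}^2,
\end{equation*}
as required.

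The only substantive step is the nearest-neighbor comparison in the middle; the rest is bookkeeping. I expect that step to be the main obstacle only in the sense of phrasing it cleanly: the cleanest route is the uniform bound $\norm{x_j - \mu} \leq \norm{x_i - \mu}$ for every $j \in \mathcal{N}_\mu$ and $i \notin \mathcal{N}_\mu$, which holds by definition of $\mathcal{N}_\mu$ (with ties broken arbitrarily).
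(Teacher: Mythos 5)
Your proposal is correct and follows essentially the same route as the paper: the same decomposition of $y_\mu - \overline{x}_S$ over the symmetric difference $\mathcal{N}_\mu \triangle S$, the same insertion of $-\mu$ justified by the equal cardinalities, the same Jensen step, and the same use of the nearest-neighbor property of $\mathcal{N}_\mu$ to control the sum over $\mathcal{N}_\mu \setminus S$. The only (immaterial) difference is that you invoke the element-wise comparison $\norm{x_j - \mu} \leq \norm{x_i - \mu}$ for $j \in \mathcal{N}_\mu$, $i \notin \mathcal{N}_\mu$ to bound $\sum_{\mathcal{N}_\mu \setminus S}$ by $\sum_{S \setminus \mathcal{N}_\mu}$ directly, whereas the paper uses the set-level inequality $\sum_{i \in \mathcal{N}_\mu} \norm{x_i - \mu}^2 \leq \sum_{i \in S} \norm{x_i - \mu}^2$; both yield the same constant.
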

\begin{proof}
Let $n \in \mathbb{N}^*$, $f<n$, $\mu \in \R^d$, $x_1, \ldots, x_n \in \R^d$, and $S \subseteq \{1,\dots,n\}$, $\card{S}= n-f$.
Recall that, by definition, we have
    $y_\mu = \frac{1}{n-f} \sum_{i \in \mathcal{N}_{\mu}} x_i$,
where $\mathcal{N}_{\mu} \subseteq \{1, \ldots, \, n\}, \card{\mathcal{N}_{\mu}}=n-f,$ is the set of indices of the $n-f$ nearest neighbors of $\mu$.
We then have
\begin{align*}
    \norm{y_\mu - \overline{x}_S}^2
    &= \norm{\frac{1}{n-f} \sum_{i \in \mathcal{N}_{\mu}} x_i - \frac{1}{n-f} \sum_{i \in S} x_i}^2
    = \frac{1}{(n-f)^2}
    \norm{\sum_{i \in \mathcal{N}_{\mu}} x_i - \sum_{i \in S} x_i}^2 \\
    &= \frac{1}{(n-f)^2}
    \norm{\sum_{i \in \mathcal{N}_{\mu} \setminus S} x_i - \sum_{i \in S \setminus \mathcal{N}_{\mu}} x_i}^2.
\end{align*}
Observe that, since $\card{S} = \card{\mathcal{N}_{\mu}} = n-f$, we have
$\card{S \setminus \mathcal{N}_{\mu}} = \card{\mathcal{N}_{\mu} \setminus S} = \card{S \cup \mathcal{N}_{\mu}} - \card{S} \leq n - (n-f) \leq f$.
As a result, by applying Jensen's inequality, we have
\begin{align*}
    \norm{y_\mu - \overline{x}_S}^2
    &= \frac{1}{(n-f)^2}
    \norm{\sum_{i \in \mathcal{N}_{\mu} \setminus S} x_i - \sum_{i \in S \setminus \mathcal{N}_{\mu}} x_i}^2
    = \frac{1}{(n-f)^2}
    \norm{\sum_{i \in \mathcal{N}_{\mu} \setminus S} (x_i-\mu) - \sum_{i \in S \setminus \mathcal{N}_{\mu}} (x_i-\mu)}^2 \\
    &\leq \frac{\card{S \setminus \mathcal{N}_{\mu}} + \card{\mathcal{N}_{\mu} \setminus S}}{(n-f)^2}
    \left [
    \sum_{i \in \mathcal{N}_{\mu} \setminus S} \norm{x_i-\mu}^2 +
    \sum_{i \in S \setminus \mathcal{N}_{\mu}} \norm{x_i-\mu}^2
    \right] \\
    &\leq \frac{2f}{(n-f)^2}
    \left [
    \sum_{i \in \mathcal{N}_{\mu} \setminus S} \norm{x_i-\mu}^2 +
    \sum_{i \in S \setminus \mathcal{N}_{\mu}} \norm{x_i-\mu}^2
    \right].
\end{align*}
On one hand, since $\mathcal{N}_{\mu}$ is the set of $n-f$ nearest neighbors to $\mu$, the first term can be bounded by
\begin{align*}
    \sum_{i \in \mathcal{N}_{\mu} \setminus S} \norm{x_i-\mu}^2
    \leq \sum_{i \in \mathcal{N}_{\mu}} \norm{x_i-\mu}^2
    \leq \sum_{i \in S} \norm{x_i-\mu}^2.
\end{align*}
On the other hand, the second term can be bounded by
\begin{align*}
    \sum_{i \in S \setminus \mathcal{N}_{\mu}} \norm{x_i-\mu}^2
    &\leq \sum_{i \in S} \norm{x_i-\mu}^2.
\end{align*}
We finally conclude that
\begin{equation*}
    \norm{y_\mu - \overline{x}_S}^2
    \leq \frac{4f}{(n-f)^2} \sum_{i \in S} \norm{x_i-\mu}^2.
\end{equation*}
\end{proof}
The second lemma crucially shows that the sum of the bias and the variance is reduced by a factor $\frac{8f}{n-f}$.
To do so, we specialize the first lemma by setting the pivot to be the element $x_i, i \in S$.

\begin{lemma}
\label{lem:cenna-reduction}
Let $n \in \mathbb{N}^*$, $f<n$ .
For any set $S \subseteq \{1,\dots,n\}$, $\card{S}= n-f$, for any vectors $x_1, \dots, x_n \in \R^d$, the vectors $(y_1,\ldots,y_n) = \cenna{(x_1,\ldots,x_n)}$ verify
\begin{equation*}
    \underbrace{\frac{1}{\card{S}} \sum_{i \in S} \norm{y_i - \overline{y}_S}^2}_{\text{variance}} + \underbrace{\norm{\overline{y}_S - \overline{x}_S}^{\mathrlap{2}}}_{\text{bias}}
    \leq \frac{8f}{n-f} \frac{1}{\card{S}} \sum_{i \in S} \norm{x_i-\overline{x}_S}^2.
\end{equation*}
\end{lemma}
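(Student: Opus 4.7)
The plan is to exploit Lemma~\ref{lem:cva-pivot} by instantiating the pivot $\mu$ at each input $x_i$ for $i \in S$, and then collapse the resulting double sum using the pairwise-variance identity already established in~\eqref{eq:variance-pairwise}. The only algebraic ingredient beyond that is the standard bias-variance decomposition around $\overline{x}_S$.

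First, I would write the bias-variance split
\begin{equation*}
\frac{1}{\card{S}}\sum_{i \in S}\norm{y_i-\overline{x}_S}^2 \;=\; \frac{1}{\card{S}}\sum_{i \in S}\norm{y_i-\overline{y}_S}^2 \;+\; \norm{\overline{y}_S-\overline{x}_S}^2,
\end{equation*}
which follows by adding and subtracting $\overline{y}_S$ inside the norm and noting that the cross term vanishes because $\overline{y}_S$ is the mean of $(y_i)_{i \in S}$. This shows that the left-hand side of the lemma equals $\frac{1}{\card{S}}\sum_{i \in S}\norm{y_i-\overline{x}_S}^2$, so it suffices to bound this single quantity.

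Next, I apply Lemma~\ref{lem:cva-pivot} with pivot $\mu = x_i$ (noting $y_i = y_{x_i}$ by the definition of $\cenna$), giving
\begin{equation*}
\norm{y_i-\overline{x}_S}^2 \;\leq\; \frac{4f}{n-f}\,\frac{1}{\card{S}}\sum_{j \in S}\norm{x_j-x_i}^2
\end{equation*}
for every $i \in S$. Averaging over $i \in S$ and exchanging the order of summation yields a double sum $\frac{1}{\card{S}^2}\sum_{i,j \in S}\norm{x_j-x_i}^2$, which by the pairwise identity~\eqref{eq:variance-pairwise} equals $\frac{2}{\card{S}}\sum_{i \in S}\norm{x_i-\overline{x}_S}^2$. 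Multiplying the prefactors gives $\frac{8f}{n-f}$, which is exactly the claimed bound.

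I do not anticipate a real obstacle: the two supporting facts (Lemma~\ref{lem:cva-pivot} and~\eqref{eq:variance-pairwise}) already do the heavy lifting, and the only subtlety is choosing $\mu = x_i$ rather than $\mu = \overline{x}_S$, which is what exposes the empirical pairwise dispersion and ultimately the variance of the $x_i$'s. Everything else is bookkeeping.
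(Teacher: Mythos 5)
Your proposal is correct and follows essentially the same route as the paper's proof: the same bias--variance decomposition around $\overline{x}_S$, the same instantiation of Lemma~\ref{lem:cva-pivot} with pivot $\mu = x_i$, and the same collapse of the double sum $\sum_{i,j \in S}\norm{x_i - x_j}^2$ into $2\card{S}\sum_{i \in S}\norm{x_i - \overline{x}_S}^2$ (the paper re-derives this identity inline where you cite~\eqref{eq:variance-pairwise}, a purely cosmetic difference). The constants combine exactly as you state, yielding $\frac{8f}{n-f}$.
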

\begin{proof}
Let $n \in \mathbb{N}^*$, $f<n$, $x_1, \ldots, x_n \in \R^d$, and $S \subseteq \{1,\dots,n\}$, $\card{S}= n-f$.
We first prove the following bias-variance decomposition
\begin{align}
    \label{eq:bias-variance}
    \frac{1}{\card{S}} \sum_{i \in S} \norm{y_i - \overline{y}_S}^2 + \norm{\overline{y}_S - \overline{x}_S}^2 = \frac{1}{\card{S}} \sum_{i \in S} \norm{y_i - \overline{x}_S}.
\end{align}
We develop the first term of the l.h.s. of the equality above and obtain
\begin{align*}
    \frac{1}{n-f}\sum_{i \in S} \norm{y_i- \overline{y}_S}^2
    &= \frac{1}{n-f}\sum_{i \in S} \norm{y_i-\overline{x}_S+\overline{x}_S- \overline{y}_S}^2\\
    &= \frac{1}{n-f}\sum_{i \in S} \norm{y_i-\overline{x}_S}^2+\norm{\overline{x}_S- \overline{y}_S}^2
    + 2\frac{1}{n-f}\sum_{i \in S} \iprod{y_i-\overline{x}_S}{\overline{x}_S- \overline{y}_S}.
\end{align*}
However, we have
\begin{align*}
    \frac{1}{n-f}\sum_{i \in S} \iprod{y_i-\overline{x}_S}{\overline{x}_S- \overline{y}_S}
    = \iprod{\underbrace{\frac{1}{n-f}\sum_{i \in S}y_i}_{= \overline{y}_S}-\overline{x}_S}{\overline{x}_S- \overline{y}_S}
    = - \norm{\overline{x}_S - \overline{y}_S}^2.
\end{align*}
As a result, we then have
\begin{align*}
    \frac{1}{n-f}\sum_{i \in S} \norm{y_i- \overline{y}_S}^2
    &= \frac{1}{n-f}\sum_{i \in S} \norm{y_i-\overline{x}_S}^2+\norm{\overline{x}_S- \overline{y}_S}^2 -2\norm{\overline{x}_S- \overline{y}_S}^2\\
    &= \frac{1}{n-f}\sum_{i \in S} \norm{y_i-\overline{x}_S}^2-\norm{\overline{x}_S- \overline{y}_S}^2.
\end{align*}
This proves \eqref{eq:bias-variance}.
Besides, for any $i \in S$, we know from Lemma~\ref{lem:cva-pivot}, with $\mu = x_i$, that
\begin{equation*}
    \norm{y_i - \overline{x}_S}^2
    \leq \frac{4f}{n-f} \frac{1}{n-f} \sum_{j \in S} \norm{x_j - x_i}^2.
\end{equation*}

As a consequence, using \eqref{eq:bias-variance}, we have
\begin{align*}
    \frac{1}{\card{S}} \sum_{i \in S} \norm{y_i - \overline{y}_S}^2 + \norm{\overline{y}_S - \overline{x}_S}^2
    &=\frac{1}{n-f} \sum_{i \in S} \norm{y_i - \overline{x}_S}^2
    \leq \frac{4f}{n-f} \frac{1}{(n-f)^2} \sum_{i, j \in S} \norm{x_j - x_i}^2\\
    &= \frac{4f}{n-f} \frac{1}{(n-f)^2} \sum_{i, j \in S} \norm{x_j - \overline{x}_S - (x_i - \overline{x}_S)}^2\\
    &= \frac{4f}{n-f} \frac{1}{(n-f)^2} \sum_{i, j \in S} \left (\norm{x_j - \overline{x}_S}^2 +\norm{x_i - \overline{x}_S}^2 - 2 \iprod{x_j - \overline{x}_S}{x_i - \overline{x}_S} \right ) \\
    &= \frac{4f}{n-f} \frac{1}{(n-f)^2} \Big [ \sum_{i, j \in S} \norm{x_j - \overline{x}_S}^2 + \sum_{i, j \in S}\norm{x_i - \overline{x}_S}^2 - 2 \sum_{i, j \in S} \iprod{x_j - \overline{x}_S}{x_i - \overline{x}_S} \Big ] \\
    &= \frac{4f}{n-f} \frac{1}{(n-f)^2} \Big [ 2(n-f) \sum_{i \in S} \norm{x_i - \overline{x}_S}^2 - 2 \sum_{i, j \in S} \iprod{x_j - \overline{x}_S}{x_i - \overline{x}_S} \Big ] \\
    &= \frac{4f}{n-f} \frac{1}{(n-f)^2} \Big [ 2(n-f) \sum_{i \in S} \norm{x_i - \overline{x}_S}^2 - 2 \sum_{i \in S} \iprod{ \underbrace{\sum_{j \in S} (x_j - \overline{x}_S)}_{=0}}{x_i - \overline{x}_S} \Big ] \\
    &= \frac{8f}{n-f} \frac{1}{n-f} \sum_{i \in S} \norm{x_i - \overline{x}_S}^2.
\end{align*}
This concludes the proof.
\end{proof}

\subsection{Proof of Lemma~\ref{lem:cenna}}
We can now prove Lemma~\ref{lem:cenna}, mainly thanks to the bias-variance trade-off guaranteed by Lemma~\ref{lem:cenna-reduction}.

\begin{replemma}{lem:cenna}
Let $f < \nicefrac{n}{2}$ and $F \colon \R^{d \times n} \to \R^d$.
If $F$ is $(f,\kappa)$-robust, then $F \circ \cenna$ is $(f,\kappa')$-robust with
$\kappa' \leq \frac{8f}{n-f}(\kappa+1).$
\end{replemma}
\begin{proof}
Let $n \in \mathbb{N}^*, f<n, x_1, \dots, x_n \in \R^d$, $S \subseteq \{1,\dots,n\}, \card{S}=n-f$.
Assume $F$ is $(f,\kappa)$-robust.
First, denote $(y_1,\ldots,y_n) \coloneqq F \circ \cenna{(x_1,\dots,x_n)}$ and $ \overline{y}_S \coloneqq \frac{1}{n-f} \sum_{i \in S} y_i$. 
Since $F$ is $(f,\kappa)$-robust, the vectors $y_1,\ldots,y_n$ satisfy
\begin{align}
    \label{eq:1}
    \norm{F \circ \cenna{(x_1,\dots,x_n)} -  \overline{y}_S}^2
    = \norm{F{(y_1,\dots,y_n)} -  \overline{y}_S}^2
    &\leq \kappa \frac{1}{\card{S}} \sum_{i \in S} \norm{y_i- \overline{y}_S}^2.
\end{align}

Now, thanks to Young's inequality we have for $c=\nicefrac{1}{\kappa}$, $(a+b)^2 \leq (1+c)a^2 + (1+\nicefrac{1}{c})b^2$.
We can then write using \eqref{eq:1}
\begin{align}
    \label{eq:2}
    \norm{F \circ \cenna{(x_1,\dots,x_n)} - \overline{x}_S}^2
    &= \norm{F \circ \cenna{(x_1,\dots,x_n)} -  \overline{y}_S +  \overline{y}_S - \overline{x}_S}^2\nonumber\\
    &\leq (1+\frac{1}{\kappa})\norm{F \circ \cenna{(x_1,\dots,x_n)} -  \overline{y}_S}^2 + (1+\kappa)\norm{ \overline{y}_S - \overline{x}_S}^2\nonumber\\
    &\leq (1+\frac{1}{\kappa})\kappa \frac{1}{\card{S}}\sum_{i \in S} \norm{y_i- \overline{y}_S}^2 + (1+\kappa)\norm{\overline{y}_S - \overline{x}_S}^2\nonumber\\
    &= \frac{1+\kappa}{n-f}\sum_{i \in S} \norm{y_i- \overline{y}_S}^2 + (1+\kappa)\norm{ \overline{y}_S - \overline{x}_S}^2.
\end{align}

Recall that Lemma~\ref{lem:cenna-reduction} shows that
$\frac{1}{\card{S}}\sum_{i \in S} \norm{y_i- \overline{y}_S}^2 + \norm{ \overline{y}_S - \overline{x}_S}^2 \leq \frac{8f}{n-f}\frac{1}{\card{S}}\sum_{i \in S} \norm{x_i-\overline{x}_S}^2.$
Plugging this in \eqref{eq:2} allows to conclude
\begin{align*}
    \norm{F \circ \cenna{(x_1,\dots,x_n)} - \overline{x}_S}^2
    &\leq \frac{8f}{n-f}\frac{1+\kappa}{n-f}\sum_{i \in S} \norm{x_i-\overline{x}_S}^2.
\end{align*}
\end{proof}
\clearpage
\section{Analysis of Bucketing}
\label{app:Bucketing}
In this section, we compare the robustness guarantees of the Bucketing algorithm~\cite{karimireddy2022byzantinerobust} with NNM. 
Bucketing is a pre-aggregation step that maps input vectors $\{x_1, \ldots, x_n \}$ to output vectors $\{y_1,\ldots,y_{\lceil{\nicefrac{n}{\bucketSize}}\rceil }\}$, where $\bucketSize$ is a parameter called the bucket size. 
The Bucketing algorithm works as follows. First, it samples a random permutation $\pi \colon [n] \to [n]$. 
Then, for each $i \in \left\{1, \ldots, \ceil*{\nicefrac{n}{\bucketSize}} \right\}$, it computes the output of bucket $i$ as\footnote{In the theoretical analysis, for simplicity, we assume that $n$ is divisible by $\bucketSize$. If this is not the case, some buckets will include less than $\bucketSize$ vectors which is less favorable for the theoretical guarantees of Bucketing but does not change the asymptotic behavior.} 
$$y_i := \frac{1}{\bucketSize} \sum_{k =\bucketSize(i-1)+1 }^{\min(\bucketSize \cdot i,n)} x_{\pi(k)}.$$

Recall that the property that enables NNM to boost the robustness of aggregations is that the heterogeneity of the output vectors $\frac{1}{\card{S}}\sum_{i \in S} \norm{y_i- \overline{x}_S}^2$ is a factor $\mathcal{O}(f/n)$ smaller than the  heterogeneity of the input vectors $\frac{1}{\card{S}}\sum_{i \in S} \norm{x_i- \overline{x}_S}^2$ (see Lemma~\ref{lem:cenna-reduction}).
But, unlike NNM, Bucketing only reduces the heterogeneity \emph{in expectation over the random permutations}
(see Lemma~1 in \cite{karimireddy2022byzantinerobust}).
Thus, there are iterations of the learning algorithm where Bucketing may not reduce the heterogeneity, and these iterations are the best opportunity for the Byzantine workers to cause the most damage to the learning.

We give in Observation~\ref{obs:reduction} below a simple example showing that deterministically reducing the heterogeneity of inputs is impossible using Bucketing in general, even in the absence of malicious inputs.
\begin{observation}
\label{obs:reduction}
Using Bucketing, it is impossible to provide a \emph{worst-case} heterogeneity reduction guarantee regardless of the value of $\bucketSize$, \emph{even in the absence of Byzantine inputs}.
\end{observation}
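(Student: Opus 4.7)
The plan is to exhibit an explicit counterexample: a family of inputs (parametrized by the bucket size $\bucketSize$) together with a single realization of the random permutation under which Bucketing leaves the empirical heterogeneity exactly unchanged. Since the observation only asks for a worst-case statement, one such bad realization suffices; the difficulty, such as it is, lies purely in choosing inputs and a permutation that align poorly, not in any delicate calculation.

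Fix any $\bucketSize \geq 2$ and take $n = 2\bucketSize$ honest inputs (so $f = 0$ and $m \coloneqq n/\bucketSize = 2$ buckets are formed). Pick an arbitrary nonzero vector $v \in \R^d$ and set $x_1 = \cdots = x_\bucketSize = v$ and $x_{\bucketSize+1} = \cdots = x_{2\bucketSize} = -v$. A direct calculation yields $\overline{x} = 0$ and input heterogeneity
\begin{equation*}
  \frac{1}{n} \sum_{i=1}^n \norm{x_i - \overline{x}}^2 = \norm{v}^2.
\end{equation*}
Now consider the identity permutation $\pi = \mathrm{id}$, which is one of the $n!$ realizations drawn by Bucketing. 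Under this $\pi$, the first bucket consists of $\bucketSize$ copies of $v$ and hence outputs $y_1 = v$, while the second bucket consists of $\bucketSize$ copies of $-v$ and outputs $y_2 = -v$. Therefore $\overline{y} = 0$ and the output heterogeneity equals
\begin{equation*}
  \frac{1}{m} \sum_{i=1}^m \norm{y_i - \overline{y}}^2 = \frac{1}{2}\bigl(\norm{v}^2 + \norm{-v}^2\bigr) = \norm{v}^2,
\end{equation*}
identical to the input heterogeneity. Hence, for this realization of $\pi$, no reduction occurs at all.

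The conceptual point, and the only subtlety worth articulating, is that the Bucketing guarantee of~\cite{karimireddy2022byzantinerobust} is an \emph{expectation} over the random permutation, whereas the reduction proved in Lemma~\ref{lem:cenna-reduction} for $\cenna$ is deterministic, i.e., holds in the worst case. The construction above is the minimal witness that these two notions are genuinely distinct: it works for every $\bucketSize \geq 2$, and the case $\bucketSize = 1$ is trivial since Bucketing reduces to the identity map. Consequently, no value of $\bucketSize$ can yield a worst-case heterogeneity reduction guarantee, which is exactly the content of the observation.
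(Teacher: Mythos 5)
Your proof is correct and rests on the same key idea as the paper's: construct inputs so that every bucket is homogeneous (contains identical copies of one vector), making the bucket-averaging step a no-op on the empirical variance. The only (immaterial) difference is the order of quantifiers — the paper fixes an arbitrary permutation $\pi$ and builds bad inputs for it, whereas you fix the inputs ($\bucketSize$ copies of $v$ and of $-v$) and exhibit one bad realization ($\pi = \mathrm{id}$); both suffice to rule out a worst-case guarantee, and your closing remark contrasting the in-expectation guarantee of Bucketing with the deterministic reduction of $\cenna$ is exactly the point the paper makes in the surrounding discussion.
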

 \begin{proof}
Let $\pi$ be any permutation over $[n]$. We will construct an instance of $n$ inputs such that an execution of Bucketing using permutation $\pi$ does not reduce the heterogeneity.
Consider the inputs $(x_1,\ldots,x_n)$ such that for all $1 \leq i \leq \ceil*{\nicefrac{n}{\bucketSize}}$ and all $k,l \in \{s(i-1)+1, \ldots, s \cdot i$\}, it holds that $x_{\pi(k)} = x_{\pi(l)} \eqqcolon y_i$.
Thus, applying the permutation $\pi$ on these inputs yields
  $\big(\overbrace{y_1, \ldots, y_1}^{\bucketSize  \text{ times}},\ldots,\overbrace{ y_{\nicefrac{n}{\bucketSize}}, \ldots, y_{\nicefrac{n}{\bucketSize}}}^{\bucketSize  \text{ times}} \Big)$ in which each vector $y_i$ is repeated $\bucketSize$ times.
  Overall, the execution of Bucketing with $\pi$ will produce $(y_1,\ldots,y_{\nicefrac{n}{\bucketSize}})$, which has the same variance as the original inputs, and thus heterogeneity was not reduced.
 \end{proof}

Moreover, given that the buckets are randomly chosen, the (expected) reduction of heterogeneity is only achieved at the expense of an increase in the fraction of Byzantine inputs:
\begin{observation}
Bucketing increases the fraction of Byzantine workers by a factor $\bucketSize$ in the worst case.
\end{observation}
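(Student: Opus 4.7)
The plan is to exhibit a realization of the Bucketing pipeline under which $f$ of the $\lceil n/\bucketSize \rceil$ output vectors are fully corrupted, so that the Byzantine fraction among the outputs reaches $f/\lceil n/\bucketSize \rceil \approx f\bucketSize/n$ --- a factor $\bucketSize$ larger than the original fraction $f/n$. The argument has two pieces: first, show that any bucket touched by a single Byzantine worker can be pushed to an arbitrary value; second, exhibit a permutation that spreads the $f$ Byzantine workers across $f$ distinct buckets.

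For the first piece, I would use the standard observation that if a bucket $B_i$ contains honest indices $B_i \cap H$ together with at least one Byzantine index $j$, then worker $j$ can report
\begin{equation*}
    x_j = \bucketSize \cdot v - \sum_{k \in B_i \cap H} x_k
\end{equation*}
to force the bucket output $y_i = v$ for any target $v \in \R^d$. Consequently, a bucket must be considered corrupted in the worst case as soon as it contains a single Byzantine index, regardless of the other entries.

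For the second piece, whenever $f \leq \lceil n/\bucketSize \rceil$, one can construct a permutation $\pi$ that sends the $j$-th Byzantine worker to the $j$-th bucket for $j \in \{1, \ldots, f\}$. This permutation lies in the support of the uniform distribution sampled by Bucketing, so the scenario is realizable with positive probability. Combining both pieces yields exactly $f$ corrupted outputs out of $\lceil n/\bucketSize \rceil$, giving a post-Bucketing Byzantine fraction of $f/\lceil n/\bucketSize \rceil = \Theta(f\bucketSize/n)$, which is the claimed factor-$\bucketSize$ amplification.

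I expect no genuinely hard step: the argument is essentially a pigeonhole count combined with an algebraic manipulation inside a single bucket. The only minor subtleties are (a) interpreting ``worst case'' as the worst case over the support of $\pi$, which is the natural reading since Bucketing itself samples $\pi$ uniformly at random, and (b) handling the case where $n$ is not divisible by $\bucketSize$, which is resolved transparently by keeping $\lceil n/\bucketSize \rceil$ throughout; neither affects the asymptotic factor $\bucketSize$.
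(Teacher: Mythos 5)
Your proposal is correct and follows essentially the same route as the paper's proof: spread the $f$ Byzantine workers into $f$ distinct buckets (possible under the random permutation), observe that a single Byzantine entry makes a bucket's average arbitrarily manipulable, and conclude that the output fraction of corrupted vectors is a factor $\bucketSize$ larger than the input fraction. You merely make explicit the corrupting formula and the permutation construction that the paper states informally.
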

\begin{proof}
In some executions, Bucketing might assign every Byzantine input to a different bucket. In this case, since the average within each ``contaminated" bucket is arbitrarily manipulable by a single Byzantine input, the number of Byzantine output vectors is the same as the number of Byzantine input vectors. However, the total number of output vectors is $\bucketSize$ times smaller than the total number of input vectors, thereby effectively increasing the fraction of Byzantine workers.
\end{proof}
This observation implies that in order to have a worst-case robustness guarantee, the bucket size $\bucketSize$ can be at most $\left \lfloor \frac{n}{2f} \right\rfloor$, as done in \cite{karimireddy2022byzantinerobust}.
This instability in reducing the heterogeneity results in a poor estimation of the mean in some iterations. We observe this instability in practice as we show below on experiments on CIFAR-10. 

\begin{figure}[ht!]
    \centering
    \includegraphics[width=0.49\textwidth]{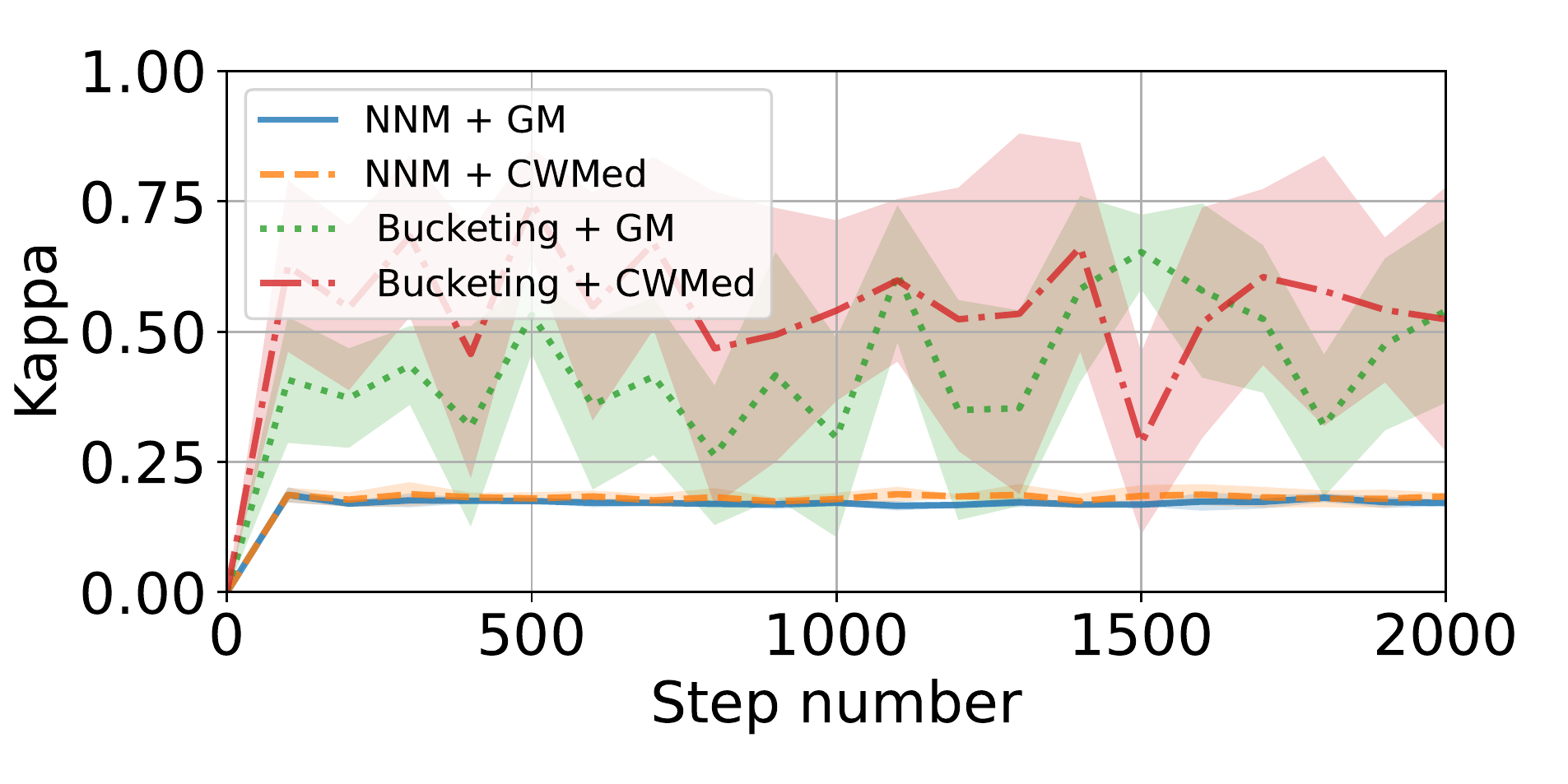}
    \includegraphics[width=0.49\textwidth]{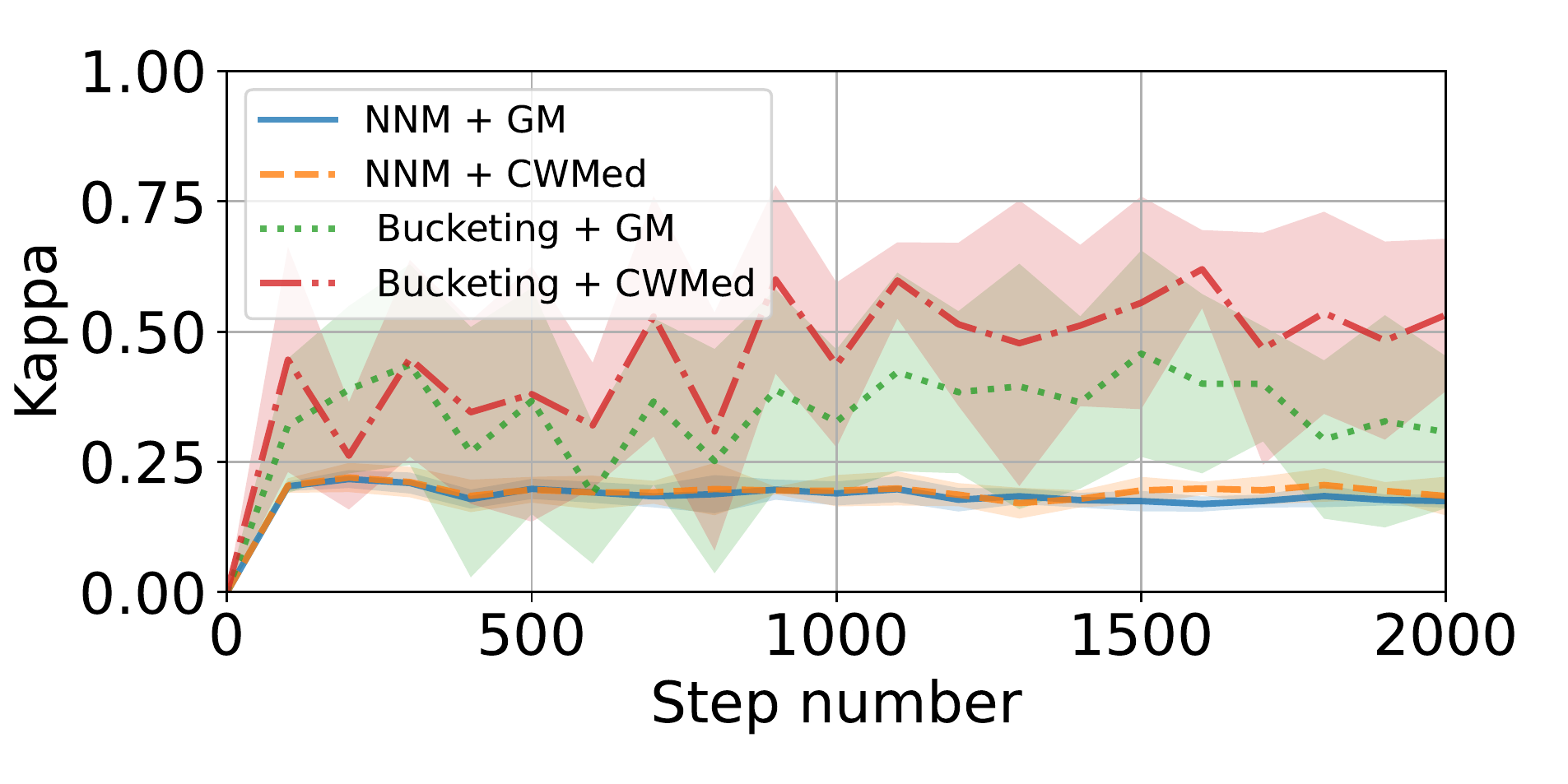}
    \vspace{-10pt}
    \caption{
    Plot of $\hat{\kappa}_t$~(defined in \eqref{eq:empirical-kappa}) on CIFAR-10 with $f=2$ Byzantine workers among $n = 17$ executing the ALIE (\textit{column 1}) and FOE (\textit{column 2}) attacks.
    The heterogeneity level is $\alpha=1$. The corresponding accuracies are in Figure~\ref{fig:plots_cifar_1}.
    }
    \label{fig:kappa}
\end{figure}

\paragraph{Experimental validation.}
Recall robust D-SHB (Algorithm~\ref{sgd}), and the corresponding notations. 
In Figure~\ref{fig:kappa}, we plot in each step $t \in [T]$ the quantity $\hat{\kappa}_t$ defined as follows
\begin{equation}
    \label{eq:empirical-kappa}
  \hat{\kappa}_t^2 \coloneqq \norm{R_t-\overline{m}_t}^2 / \frac{1}{\card{\H}}\sum_{i \in \H}\norm{m_t^{(i)}-\overline{m}_t}^2,
\end{equation}
where $\overline{m}_t \coloneqq \frac{1}{\card{\H}} \sum_{i \in \H} m_t^{(i)}$ is the average of honest momentums at step $t$.
In words, Figure~\ref{fig:kappa} shows the error in estimating the true average, scaled by the standard deviation of the honest inputs, across different steps of the learning.
The quantity in Equation~\eqref{eq:empirical-kappa} is an empirical estimation of $\kappa$, a parameter of the definition of $(f,\kappa)$-robustness in our theory.
The figure validates our insights about Bucketing and shows the superiority of $\cenna$ both in terms of stability of the error curves, as well as the quality of the average estimation. Indeed, the $\cenna$ curves are consistently below Bucketing's.

\section{Proof of Theorem~\ref{thm:conv-gd}: Analysis of Robust D-GD}
\label{app:convergence-dgd}
\begin{reptheorem}{thm:conv-gd}
Let Assumption~\ref{asp:hetero} hold and recall that $\loss_{\H}$ is $L$-smooth. 
Consider Algorithm~\ref{gd} with learning rate $\gamma = \frac{1}{L}$.
If $F$ is $(f,\kappa)$-robust, then for all $T \geq 1$
$$
   \norm{\nabla{\loss_{\H}{(\hat{\theta})}}}^2 
    \leq  4 \kappa G^2
    + \frac{4 L(\loss_\H{(\theta_0)}-\loss^*)}{T}.
$$
\end{reptheorem}
\begin{proof}
Let Assumption~\ref{asp:hetero} hold.
Assume $\loss_{\H}$ to be $L$-smooth and $F$ to be $(f,\kappa)$-robust.
Consider Algorithm~\ref{gd} with learning rate $\gamma = \frac{1}{L}$.
The update step corresponds to $\theta_t = \theta_{t-1} - \frac{1}{L} R_t$ for every $t \geq 1$.

Since $\loss_{\H}$ is $L$-smooth, we have (see~\cite{bottou2018optimization}) for all $t \geq 1$,
\begin{align}
    \label{eq:8}
    \loss_{\H}{(\theta_t)} - \loss_{\H}{(\theta_{t-1})}
    &\leq \iprod{\nabla{\loss_{\H}{(\theta_{t-1})}}}{\theta_t - \theta_{t-1}} + \frac{L}{2}\norm{\theta_t - \theta_{t-1}}^2
    = -\gamma \iprod{\nabla{\loss_{\H}{(\theta_{t-1})}}}{R_t} + \frac{1}{2}\gamma^2 L\norm{R_t}^2 \nonumber \\
    &=-\frac{1}{L} \iprod{\nabla{\loss_{\H}{(\theta_{t-1})}}}{R_t} + \frac{1}{2L}\norm{R_t}^2.
\end{align}
We expand the second term as follows
\begin{align*}
    \norm{R_t}^2
    &= \norm{R_t - \nabla{\loss_{\H}{(\theta_{t-1})}} + \nabla{\loss_{\H}{(\theta_{t-1})}}}^2\\\
    &= \norm{R_t - \nabla{\loss_{\H}{(\theta_{t-1})}}}^2 + \norm{\nabla{\loss_{\H}{(\theta_{t-1})}}}^2 + 2\iprod{\nabla{\loss_{\H}{(\theta_{t-1})}}}{R_t - \nabla{\loss_{\H}{(\theta_{t-1})}}}.
\end{align*}
Plugging this back in \eqref{eq:8}, then simplifying, yields
\begin{align*}
    \loss_{\H}{(\theta_t)} - \loss_{\H}{(\theta_{t-1})}
    &\leq -\frac{1}{L} \iprod{\nabla{\loss_{\H}{(\theta_{t-1})}}}{R_t} + \frac{1}{2L} \norm{R_t - \nabla{\loss_{\H}{(\theta_{t-1})}}}^2 + \frac{1}{2L} \norm{\nabla{\loss_{\H}{(\theta_{t-1})}}}^2 \\ 
    &\quad + \frac{1}{L}\iprod{\nabla{\loss_{\H}{(\theta_{t-1})}}}{R_t - \nabla{\loss_{\H}{(\theta_{t-1})}}}\\
    &= \frac{1}{2L} \norm{R_t - \nabla{\loss_{\H}{(\theta_{t-1})}}}^2
    -\frac{1}{2L} \norm{\nabla{\loss_{\H}{(\theta_{t-1})}}}^2.
\end{align*}
Upon rearranging terms and multiplying both sides by $2L$ we get
\begin{align*}
    \norm{\nabla{\loss_{\H}{(\theta_{t-1})}}}^2
    &\leq \norm{R_t - \nabla{\loss_{\H}{(\theta_{t-1})}}}^2 + 2L(\loss_{\H}{(\theta_{t-1})} - \loss_{\H}{(\theta_{t})}).
\end{align*}
By the $(f,\kappa)$-robustness property of $F$ and Assumption~\ref{asp:hetero}, we can bound the first term as follows
\begin{align}
    \label{eq:6}
    \norm{R_t - \nabla{\loss_{\H}{(\theta_{t-1})}}}^2
    &= \norm{F{\left(g_{t}^{(1)}, \ldots, g_{t}^{(n)}\right)} - \frac{1}{\card{\H}} \sum_{i \in \H} g_{t}^{(i)}}^2
    \leq \frac{\kappa}{\card{\H}} \sum_{i \in \H} \norm{g_t^{(i)}-\frac{1}{\card{\H}} \sum_{i \in \H} g_{t}^{(i)}}^2 \nonumber\\
    &= \frac{\kappa}{\card{\H}} \sum_{i \in \H} \norm{\nabla{\loss_i{(\theta_{t-1})}}-\nabla{\loss_{\H}{(\theta_{t-1})}}}^2
    \leq \kappa G^2.
\end{align}
As a result, we have
\begin{align*}
    \norm{\nabla{\loss_{\H}{(\theta_{t-1})}}}^2
    &\leq \kappa G^2 + 2L\left(\loss_{\H}{(\theta_{t-1})} - \loss_{\H}{(\theta_{t})}\right).
\end{align*}
By taking the average over $t$ from $1$ to $T$, and since $\loss_{\H}{(\theta_T)} \geq \inf_{\theta \in \R^d} \loss_\H{(\theta)} = \loss^*$, we have
\begin{align}
    \label{eq:7}
    \frac{1}{T}\sum_{t=0}^{T-1} \norm{\nabla{\loss_{\H}{(\theta_{t})}}}^2
    \leq \kappa G^2 + \frac{2L\left(\loss_{\H}{(\theta_0)} - \loss_{\H}{(\theta_{T})}\right)}{T}
    \leq \kappa G^2 + \frac{2L\left(\loss_{\H}{(\theta_0)} - \loss^*\right)}{T}.
\end{align}

\textbf{Final step.}
Recall from Algorithm~\ref{gd} that $\hat{\theta} = \theta_{\tau-1}$ with $\tau \in \argmin_{1 \leq t \leq T} \norm{R_{t}}$.
Thus, using Jensen's inequality, \eqref{eq:6} and then \eqref{eq:7}, we have
\begin{align*}
    \norm{\nabla{\loss_{\H}{(\hat{\theta})}}}^2
    &= \min_{1 \leq t \leq T} \norm{R_{t}}^2
    = \min_{1\leq t \leq T} \norm{\nabla{\loss_{\H}{(\theta_{t-1})}} + (R_{t}-\nabla{\loss_{\H}{(\theta_{t-1})}})}^2 \\
    &\leq \min_{1\leq t \leq T} \left \{ 2\norm{\nabla{\loss_{\H}{(\theta_{t-1})}}}^2 + 2\norm{R_{t}-\nabla{\loss_{\H}{(\theta_{t-1})}}}^2 \right \}
    \leq 2\min_{1\leq t \leq T} \norm{\nabla{\loss_{\H}{(\theta_{t-1})}}}^2 + 2\kappa G^2 \\
    &\leq 2\, \frac{1}{T} \sum_{t=0}^{T-1} \norm{\nabla \loss_{\H} \left(\weight{t} \right)}^2
    + 2\kappa G^2
    \leq 2\kappa G^2 + \frac{4L(\loss_{\H}(\theta_0)-\loss^*)}{T} + 2\kappa G^2 \\
    &= 4\kappa G^2 + \frac{4L(\loss_{\H}(\theta_0)-\loss^*)}{T}.
\end{align*}
This concludes the proof.
\end{proof}

\section{Proof of Corollary~\ref{cor:resilience}: Analysis of Robust D-GD with NNM}\label{app:corollary}


\begin{repcorollary}{cor:resilience}
Let Assumption~\ref{asp:hetero} hold and recall that $\loss_\H$ is $L$-smooth.
Consider Algorithm~\ref{gd} with $\gamma = \nicefrac{1}{L}$ and aggregation $F \circ \cenna$.
If $F$ is $(f,\kappa)$-robust with $\kappa = \mathcal{O}(1)$, then Algorithm~\ref{gd} is $(f,\varepsilon)$-Byzantine resilient with
$$\varepsilon = \mathcal{O}{\left(\nicefrac{f}{n}G^2 + \nicefrac{1}{T}\right)}.$$
\end{repcorollary}
\begin{proof}
Let Assumption~\ref{asp:hetero} hold.
Assume $\loss_\H$ to be $L$-smooth and $F$ to be $(f,\kappa)$-robust.
First recall that, by Lemma~\ref{lem:cenna}, the composition $F \circ \cenna$ is $(f,\kappa')$-robust with $\kappa' = \frac{8f}{n-f}(\kappa+1)$.

Consider Algorithm~\ref{gd} with learning rate $\gamma = \nicefrac{1}{L}$ and aggregation rule $F \circ \cenna$.
Following Theorem~\ref{thm:conv-gd}, we have for every $T \geq 1$,
\begin{equation*}
    \norm{\nabla \loss_{\H} \left(\hat{\weight{}} \right)}^2
    \leq  4\kappa' G^2
    + \frac{4L(\loss_{\H}(\theta_0)-\loss^*)}{T}
    = \frac{32f}{n-f}(\kappa+1)G^2 + \frac{4L(\loss_{\H}(\theta_0)-\loss^*)}{T}.
\end{equation*}

Recall that, as $f < \nicefrac{n}{2}$, we have $\frac{f}{n-f} \leq \frac{2f}{n}$.
Thus, if $\kappa = \mathcal{O}{(1)}$, we can write
\begin{equation*}
    \norm{\nabla \loss_{\H} \left(\hat{\weight{}} \right)}^2
    = \mathcal{O}{\left(\nicefrac{f}{n} G^2 + \nicefrac{1}{T}\right)}.
\end{equation*}
This concludes the proof.
\end{proof}

We provide the proof of the lower bound on Byzantine resilience  under heterogeneity~(Proposition~\ref{prop:lower-bound}) below.
\begin{repproposition}{prop:lower-bound}
If a learning algorithm $\mathcal{A}$ is $(f,\varepsilon)$-Byzantine resilient for every collection of smooth loss functions $\loss_1, \ldots, \loss_n$ satisfying Assumption~\ref{asp:hetero},
then $\varepsilon = \Omega \left(\nicefrac{f}{n}\, G^2 \right)$.
\end{repproposition}
\begin{proof}
The proof is similar to that of Theorem~III~\cite{karimireddy2022byzantinerobust}.
Assume learning algorithm $\mathcal{A}$ is $(f,\varepsilon)$-Byzantine resilient for every collection of smooth loss functions $\loss_1, \ldots, \loss_n$ satisfying Assumption~\ref{asp:hetero}.
Consider the following quadratic loss functions $\loss_1 = \ldots = \loss_{n-f} = \theta \mapsto \frac{1}{2}\norm{\theta}^2$ and $\loss_{n-f+1} = \ldots = \loss_n = \theta \mapsto \frac{1}{2}\norm{\theta - z}^2$, where $z \in \R^d$ is such that $\norm{z}^2 = \frac{(n-f)^2}{f(n-2f)}G^2$.
Consider the two situations $\H_1 = \{1,\ldots,n-f\}$ and $\H_2 = \{f+1,\ldots,n\}$.

We first show that the loss functions satisfy Assumption~\ref{asp:hetero} in both situations.
This is straightforward in situation $\H_1$ since honest losses are identical.
In situation $\H_2$, we have for all $\theta \in \R^d$, 
\begin{align*}
  \nabla{\loss_{\H_2}{(\theta)}} = \frac{1}{n-f}\sum_{i \in \H_2} \nabla{\loss_{i}{(\theta)}} = \frac{n-2f}{n-f} \theta + \frac{f}{n-f}(\theta - z) = \theta - \frac{f}{n-f}z.
\end{align*}
Therefore, thanks to the choice of $z$, we now show that Assumption~\ref{asp:hetero} holds, as for all $\theta \in \R^d$ we have
\begin{align*}
    \frac{1}{\card{\H_2}} \sum_{i \in \H_2} \norm{\nabla{\loss_{i}{(\theta)}} - \nabla{\loss_{\H_2}{(\theta)}}}^2 = \frac{n-2f}{n-f}\norm{\frac{f}{n-f}z}^2 + \frac{f}{n-f}\norm{\frac{n-2f}{n-f}z}^2= \frac{f(n-2f)}{(n-f)^2}\norm{z}^2 = G^2.
\end{align*}
Now, since learning algorithm $\mathcal{A}$ is $(f,\varepsilon)$-Byzantine resilient, it outputs $\hat{\theta}$ such that $\norm{\nabla{\loss_{\H_1}{(\hat{\theta})}}}^2 \leq \varepsilon$ and $\norm{\nabla{\loss_{\H_2}{(\hat{\theta})}}}^2 \leq \varepsilon$.
Note that situations $\H_1$ and $\H_2$ are indistinguishable to algorithm $\mathcal{A}$ because it ignores the Byzantine identities, and thus $\hat{\theta}$ is the same in both situations.
Therefore, invoking Jensen's inequality, we have
\begin{align*}
\varepsilon 
&\geq \max{\left\{\norm{\nabla{\loss_{\H_1}{(\hat{\theta})}}}^2,\norm{\nabla{\loss_{\H_2}{(\hat{\theta})}}}^2\right\}}
\geq \frac{1}{2} \left(\norm{\nabla{\loss_{\H_1}{(\hat{\theta})}}}^2 + \norm{\nabla{\loss_{\H_2}{(\hat{\theta})}}}^2 \right)\\
&= \frac{1}{2} \left(\norm{\hat{\theta}}^2 + \norm{\hat{\theta} - \frac{f}{n-f}z}^2 \right)
\geq \frac{1}{4} \norm{\frac{f}{n-f}z}^2 
= \frac{1}{4} \left(\frac{f}{n-f}\right)^2 \frac{(n-f)^2}{f(n-2f)} G^2
= \frac{1}{4}\frac{f}{n-2f} G^2.
\end{align*}
Since $n -2f \leq n$, we obtain $\varepsilon \geq \frac{1}{4} \nicefrac{f}{n}\, G^2$, which concludes the proof.
\end{proof}

\section{Proof of Theorem~\ref{thm:conv}: Analysis of Robust D-SHB}
\label{app:convergence-dshb}


\subsection{Proof Outline}
\label{sec:resultsskeleton}
Our analysis of robust D-SHB (Algorithm~\ref{sgd}) follows \cite{farhadkhani2022byzantine} and consists of four elements:
(i) {\em Momentum drift} (Lemma~\ref{lem:mmt_drift}),
(ii) {\em Aggregation error} (Lemma~\ref{lem:drift}),
(iii) {\em Momentum deviation} (Lemma~\ref{lem:dev}), and
(iv) {\em Descent bound} (Lemma~\ref{lem:growth_Q}).
We combine these elements to obtain the final convergence result stated in Theorem~\ref{thm:conv}.
The originality of our analysis, compared to \cite{farhadkhani2022byzantine}, is (i) the tighter analysis of the aggregation error thanks to $(f,\kappa)$-robustness, and (ii) the extension of the momentum drift analysis to the heterogeneous setting.
There are other subtle differences, such as the choice of the learning rate.

\noindent \paragraph{Notation.} Recall that for each step $t$, for each honest worker $w_i$,
\begin{equation}
    m_t^{(i)} = \beta m_{t-1}^{(i)} + (1-\beta) g_t^{(i)} \label{eqn:mmt_i}
\end{equation}
where $m_0^{(i)} = 0$ by convention. As we analyze Algorithm~\ref{sgd} with aggregation $F$, we denote 
\begin{align}
    R_t \coloneqq F{\left(\mmt{1}{t}, \ldots, \mmt{n}{t} \right)}, \label{eqn:R}
\end{align}
and 
\begin{align}
    \theta_t = \theta_{t  - 1} - \gamma R_t. \label{eqn:SGD}
\end{align}
We denote by $\mathcal{P}_t$ the history from steps $1$ to $t$. Specifically, 
\[\P_t \coloneqq \left\{\weight{0}, \ldots, \, \weight{t-1}; ~ \mmt{i}{1}, \ldots, \, \mmt{i}{t-1}; i = 1, \ldots, \, n \right\}.\] 
By convention, $\P_1 = \{ \weight{0}\}$. We denote by $\condexpect{t}{\cdot}$ and $\expect{\cdot}$ the conditional expectation $\expect{\cdot ~ \vline ~ \P_t}$ and the total expectation, respectively. 
Thus, $\expect{\cdot} = \condexpect{1}{ \cdots \condexpect{T}{\cdot}}$.


\subsubsection{Momentum Drift}
\label{sec:drift}

Along the trajectory $\theta_0,\ldots,\theta_{t-1}$, the honest workers' local momentums may drift away from each other.
This is in part due to the heterogeneity between the training sets.
This induces a dissimilarity between honest workers' local gradients that we can control thanks to Assumption~\ref{asp:hetero}.
The drift is also due to the stochasticity of the local gradients.

We show in Lemma~\ref{lem:mmt_drift} below that the bound on the {\em drift} between the honest workers' momentums can be controlled by tuning the momentum coefficient $\beta$.
In fact, the smaller $(1-\beta)$ the smaller the bound on the drift.
Recall that we denote by $\H \subseteq [n]$ the set of $n-f$ honest workers. We define
\begin{align}
    \AvgMmt{t} \coloneqq \nicefrac{1}{(n-f)} \sum_{i \in \H} \mmt{i}{t}, \label{eqn:def_avg_mmt}
\end{align}
the average of honest workers' local momentums.
We obtain the following bound on the momentum drift, i.e., Lemma~\ref{lem:mmt_drift}, proof of which can be found in Appendix~\ref{app:mmt_drift}.

\begin{lemma}
\label{lem:mmt_drift}
Suppose that assumptions~\ref{asp:hetero} and ~\ref{asp:bnd_var} hold true. Consider Algorithm~\ref{sgd}. For each $t \in [T]$, we obtain that
\begin{align*}
    \expect{\frac{1}{\card{\H}}\sum_{i \in \H}\norm{\mmt{i}{t} - \AvgMmt{t}}^2 } 
    \leq 3\left(\frac{1-\beta}{1+\beta} \right) \,  \left ( 1+\frac{1}{n-f} \right)\sigma^2
    + 3 G^2.
\end{align*}
\end{lemma}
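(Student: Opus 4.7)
The plan is to unroll the momentum recursion and decompose the per-worker momentum deviation into a local noise term, a shared noise term, and a deterministic heterogeneity term. Starting from $\mmt{i}{0}=0$, induction on $\mmt{i}{k}=\beta\mmt{i}{k-1}+(1-\beta)g_k^{(i)}$ yields the closed form $\mmt{i}{t}=(1-\beta)\sum_{k=1}^{t}\beta^{t-k}g_k^{(i)}$. Averaging over $i\in\H$ gives the analogous formula for $\AvgMmt{t}$, and subtracting the two yields
\begin{equation*}
\mmt{i}{t}-\AvgMmt{t}=(1-\beta)\sum_{k=1}^{t}\beta^{t-k}\bigl(g_k^{(i)}-\overline{g}_k\bigr),
\end{equation*}
where $\overline{g}_k \coloneqq \frac{1}{n-f}\sum_{j\in\H}g_k^{(j)}$.

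Next, I would write $g_k^{(i)}=\nabla\loss_i(\theta_{k-1})+\xi_k^{(i)}$, with $\xi_k^{(i)}$ the conditionally mean-zero noise bounded by $\sigma^2$ per Assumption~\ref{asp:bnd_var}, and set $\overline{\xi}_k \coloneqq \frac{1}{n-f}\sum_{j\in\H}\xi_k^{(j)}$ together with $h_k^{(i)} \coloneqq \nabla\loss_i(\theta_{k-1})-\nabla\loss_\H(\theta_{k-1})$. Plugging in and regrouping splits the drift as $\mmt{i}{t}-\AvgMmt{t}=P_t^{(i)}-Q_t+R_t^{(i)}$, with
\begin{equation*}
P_t^{(i)} \coloneqq (1-\beta)\sum_{k=1}^t\beta^{t-k}\xi_k^{(i)},\quad Q_t \coloneqq (1-\beta)\sum_{k=1}^t\beta^{t-k}\overline{\xi}_k,\quad R_t^{(i)} \coloneqq (1-\beta)\sum_{k=1}^t\beta^{t-k}h_k^{(i)}.
\end{equation*}
The three-term inequality $\norm{a+b+c}^2\le 3(\norm{a}^2+\norm{b}^2+\norm{c}^2)$ then reduces the problem to bounding each piece in expectation.

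For $P_t^{(i)}$, since $\xi_k^{(i)}$ is $\mathcal{P}_{k+1}$-measurable and conditionally mean-zero given $\P_k$, a tower-property argument kills all cross-time inner products, leaving $\expect{\norm{P_t^{(i)}}^2}=(1-\beta)^2\sum_{k=1}^t \beta^{2(t-k)}\expect{\norm{\xi_k^{(i)}}^2}\le\frac{1-\beta}{1+\beta}\sigma^2$ via the geometric-sum identity $(1-\beta)^2\sum_{k=1}^t\beta^{2(t-k)}\le\frac{1-\beta}{1+\beta}$. For $Q_t$, I would additionally use the conditional independence of the $\xi_k^{(j)}$ across honest workers to get $\condexpect{k}{\norm{\overline{\xi}_k}^2}\le\sigma^2/(n-f)$, so that $\expect{\norm{Q_t}^2}\le\frac{1-\beta}{1+\beta}\cdot\frac{\sigma^2}{n-f}$. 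For $R_t^{(i)}$, since the weights $(1-\beta)\beta^{t-k}$ sum to at most $1$, Jensen's inequality applied to the convex function $\norm{\cdot}^2$ gives $\norm{R_t^{(i)}}^2\le (1-\beta)\sum_{k=1}^t\beta^{t-k}\norm{h_k^{(i)}}^2$, and averaging over $i\in\H$ together with Assumption~\ref{asp:hetero} bounds this by $G^2$. Summing the three contributions and multiplying by $3$ then yields exactly $3\frac{1-\beta}{1+\beta}\bigl(1+\frac{1}{n-f}\bigr)\sigma^2 + 3G^2$.

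The main delicate step is the careful conditioning argument needed to eliminate both cross-time and cross-worker inner products of the stochastic noises; this requires invoking not just the conditional variance bound but also the mutual conditional independence of the honest workers' gradient oracles, a property implicit in Assumption~\ref{asp:bnd_var}. The rest of the plan is largely a mechanical bookkeeping exercise once the three-way split above is in place.
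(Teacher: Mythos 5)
Your proposal is correct and follows essentially the same route as the paper's proof: the same unrolling of the momentum recursion, the same three-way split into local noise, averaged noise, and heterogeneity terms (bounded at $\frac{1-\beta}{1+\beta}\sigma^2$, $\frac{1-\beta}{1+\beta}\frac{\sigma^2}{n-f}$, and $G^2$ respectively), with the cross-time terms killed by the tower property and the heterogeneity term handled via Jensen on the geometric weights. The only cosmetic difference is that the paper bounds the first two terms by a recursion on $A_t$ and $B_t$ rather than writing the variance sum in closed form.
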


\subsubsection{Aggregation Error}
\label{sec:error}
By building upon this first lemma and the $(f,\kappa)$-robustness property, we can obtain a bound on the error between the aggregate $R_t$  and $\overline{m}_t$ the average momentum of honest workers for the case. Specifically, when defining the error
\begin{align}
    \drift{t} \coloneqq R_t - \AvgMmt{t}, \label{eqn:drift}
\end{align}
we get the following bound on the error in Lemma~\ref{lem:drift}, proof of which can be found in Appendix~\ref{app:lem_drift}.
\begin{lemma}
\label{lem:drift}
Suppose that assumptions~\ref{asp:hetero} and ~\ref{asp:bnd_var} hold true.
Assume $F$ is $(f,\kappa)$-robust.
Consider Algorithm~\ref{sgd} with aggregation $F$. For each step $t \in [T]$, we obtain that
\begin{align*}
    \expect{\norm{\drift{t}}^2}
    \leq 6 \kappa \frac{1-\beta}{1+\beta}\sigma^2 + 3\kappa G^2.
\end{align*}
\end{lemma}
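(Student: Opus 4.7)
The plan is to apply the $(f,\kappa)$-robustness of $F$ directly to the vectors $m_t^{(1)},\ldots,m_t^{(n)}$ with the honest set $S=\H$, which has exactly size $n-f$, and then combine the resulting pathwise bound with the momentum drift bound from Lemma~\ref{lem:mmt_drift}.

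More precisely, first I would observe that by definition $R_t=F(m_t^{(1)},\ldots,m_t^{(n)})$ and $\AvgMmt{t}=\overline{m}_\H$, so invoking Definition~\ref{def:resaveraging} with $S=\H$ yields the deterministic (pointwise) inequality
\begin{equation*}
    \norm{\drift{t}}^2 = \norm{R_t-\AvgMmt{t}}^2 \leq \frac{\kappa}{\card{\H}} \sum_{i\in\H} \norm{m_t^{(i)}-\AvgMmt{t}}^2.
\end{equation*}
This reduces the aggregation error to a constant multiple of the empirical variance of the honest momentums, which is exactly the quantity controlled in Lemma~\ref{lem:mmt_drift}.

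Next, I would take total expectation on both sides and apply Lemma~\ref{lem:mmt_drift} to obtain
\begin{equation*}
    \expect{\norm{\drift{t}}^2} \leq \kappa\left[3\,\frac{1-\beta}{1+\beta}\left(1+\frac{1}{n-f}\right)\sigma^2 + 3G^2\right].
\end{equation*}
Finally, using that $n-f\geq 1$ (there is at least one honest worker), we have $1+\frac{1}{n-f}\leq 2$, which gives the desired bound $6\kappa\frac{1-\beta}{1+\beta}\sigma^2 + 3\kappa G^2$.

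I do not anticipate a real obstacle here: the statement is essentially a direct composition of $(f,\kappa)$-robustness with the momentum drift bound that was already established. The only subtlety worth double-checking is that $\H$ is a valid choice of set $S$ of size exactly $n-f$ in Definition~\ref{def:resaveraging}, and that the constants match after the trivial estimate $1+\tfrac{1}{n-f}\leq 2$. The genuinely technical work — handling the stochastic noise and the heterogeneity simultaneously through the momentum recursion — is entirely absorbed into Lemma~\ref{lem:mmt_drift}, so the present lemma is effectively a one-line application of $(f,\kappa)$-robustness followed by a constant simplification.
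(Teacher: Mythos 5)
Your proposal is correct and follows exactly the paper's own argument: apply $(f,\kappa)$-robustness pointwise with $S=\H$, take total expectation, invoke Lemma~\ref{lem:mmt_drift}, and absorb the factor $1+\tfrac{1}{n-f}\leq 2$ into the constant. No gaps.
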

\subsubsection{Momentum Deviation}
\label{sec:deviation}

Next, we study the momentum deviation; i.e., the distance between the average honest momentum $\AvgMmt{t}$ and the true gradient $\nabla \loss_{\H}(\weight{t-1})$ in an arbitrary step $t$. Specifically, we define {\em deviation} to be
\begin{align}
    \dev{t} \coloneqq \AvgMmt{t} - \nabla \loss_{\H}\left( \weight{t-1} \right), \label{eqn:dev}
\end{align}
and obtain in Lemma~\ref{lem:dev} below an upper bound on the growth of the deviation over the learning steps $t \in [T]$. (Proof of Lemma~\ref{lem:dev} can be found in Appendix~\ref{app:lem_dev}.) 

\begin{lemma}
\label{lem:dev}
Suppose that Assumption \ref{asp:bnd_var} holds.
Recall that $\loss_\H$ is $L$-smooth.
Consider Algorithm~\ref{sgd} with $T > 1$. For all $t \geq 2$ we obtain that
\begin{align*}
    \expect{\norm{\dev{t}}^2} \leq & \beta^2 c \expect{\norm{\dev{t-1}}^2} +  4 \gamma L ( 1 + \gamma L) \beta^2  \expect{\norm{\nabla \loss_{\H}(\weight{t-2})}^2} +(1 - \beta)^2 \frac{\sigma^2}{(n-f)} \\
    & + 2 \gamma L ( 1 + \gamma L)\beta^2  \expect{\norm{\drift{t-1}}^2},
\end{align*}
where $c \coloneqq (1 + \gamma L ) \left(1 + 4 \gamma   L \right)$.
\end{lemma}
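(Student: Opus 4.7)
The plan is to unfold one step of the momentum recursion, isolate a stochastic noise component whose conditional expectation vanishes, and control the remaining deterministic drift via $L$-smoothness.

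The first step is the decomposition
\begin{align*}
\dev{t} &= \AvgMmt{t} - \nabla\loss_\H(\theta_{t-1})
= \beta\AvgMmt{t-1} + (1-\beta)\overline{g}_t - \nabla\loss_\H(\theta_{t-1})\\
&= \underbrace{\beta\dev{t-1}}_{A} + \underbrace{\beta\bigl(\nabla\loss_\H(\theta_{t-2}) - \nabla\loss_\H(\theta_{t-1})\bigr)}_{B} + \underbrace{(1-\beta)\bigl(\overline{g}_t - \nabla\loss_\H(\theta_{t-1})\bigr)}_{C},
\end{align*}
where $\overline{g}_t = \tfrac{1}{n-f}\sum_{i\in\H} g_t^{(i)}$. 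Since $A+B$ is $\P_t$-measurable and $\condexpect{t}{C}=0$, the cross term vanishes and $\condexpect{t}{\norm{\dev{t}}^2} = \norm{A+B}^2 + \condexpect{t}{\norm{C}^2}$. The noise term contributes $(1-\beta)^2\,\sigma^2/(n-f)$ directly from Assumption~\ref{asp:bnd_var} and the independence of honest workers' stochastic gradients, which yields the third summand in the target bound.

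Next, I would handle $\norm{A+B}^2$ by Young's inequality with parameter $\gamma L$, obtaining $(1+\gamma L)\norm{A}^2 + (1+\tfrac{1}{\gamma L})\norm{B}^2$. By $L$-smoothness of $\loss_\H$ and the update rule $\theta_{t-1}-\theta_{t-2} = -\gamma R_{t-1}$, we get $\norm{B} \leq \beta\gamma L\norm{R_{t-1}}$, and $(1+\tfrac{1}{\gamma L})\gamma^2 L^2 = \gamma L(1+\gamma L)$. Finally, using $R_{t-1} = \AvgMmt{t-1} + \drift{t-1} = \dev{t-1} + \nabla\loss_\H(\theta_{t-2}) + \drift{t-1}$ together with two applications of $\norm{u+v}^2 \leq 2\norm{u}^2+2\norm{v}^2$, I would bound
\begin{equation*}
\norm{R_{t-1}}^2 \leq 4\norm{\dev{t-1}}^2 + 4\norm{\nabla\loss_\H(\theta_{t-2})}^2 + 2\norm{\drift{t-1}}^2.
\end{equation*}
Collecting the coefficient of $\norm{\dev{t-1}}^2$ produces $\beta^2\bigl[(1+\gamma L) + 4\gamma L(1+\gamma L)\bigr] = \beta^2(1+\gamma L)(1+4\gamma L) = \beta^2 c$, while the coefficients $4\gamma L(1+\gamma L)\beta^2$ and $2\gamma L(1+\gamma L)\beta^2$ match the target bound for $\norm{\nabla\loss_\H(\theta_{t-2})}^2$ and $\norm{\drift{t-1}}^2$ respectively.

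The only genuine design choice is picking the Young's-inequality parameter equal to $\gamma L$, which is what makes the coefficients collapse into the clean factorization $(1+\gamma L)(1+4\gamma L)$ and ensures the auxiliary factor $(1+\tfrac{1}{\gamma L})\gamma^2L^2$ reduces to $\gamma L(1+\gamma L)$. Everything else is mechanical; taking total expectation at the end gives the stated inequality.
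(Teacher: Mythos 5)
Your proposal is correct and follows essentially the same route as the paper: the identical three-term decomposition of $\dev{t}$, vanishing of the cross terms with the zero-mean noise under $\condexpect{t}{\cdot}$, the smoothness bound $\norm{\nabla\loss_\H(\weight{t-2})-\nabla\loss_\H(\weight{t-1})}\leq\gamma L\norm{R_{t-1}}$, and the same expansion $\norm{R_{t-1}}^2\leq 2\norm{\drift{t-1}}^2+4\norm{\dev{t-1}}^2+4\norm{\nabla\loss_\H(\weight{t-2})}^2$. Your Young's inequality with parameter $\gamma L$ is algebraically the same step as the paper's Cauchy--Schwarz followed by $2ab\leq a^2+b^2$, and yields the identical coefficients.
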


\subsubsection{Descent Bound} 
\label{sec:growth}

Finally, we analyze the fourth element, i.e., the growth of cost function $\loss_{\H}(\weight{})$ along the trajectory of Algorithm~\ref{sgd}. From~\eqref{eqn:SGD} and~\eqref{eqn:R}, we obtain that, for each step $t$,
\begin{align*}
    \weight{t} = \weight{t-1} - \gamma  R_t= \weight{t-1} - \gamma    \, \AvgMmt{t} - \gamma  \left(R_t -   \, \AvgMmt{t} \right),
\end{align*}
Furthermore, by~\eqref{eqn:drift}, $R_t -   \, \AvgMmt{t} = \drift{t}$. Thus, for all $t$,
\begin{align}
    \weight{t} = \weight{t-1} - \gamma    \, \AvgMmt{t} - \gamma  \drift{t}. \label{eqn:sgd_new}
\end{align}
This means that Algorithm~\ref{sgd} can actually be treated as distributed SGD with a momentum term that is subject to perturbation proportional to $\drift{t}$ at each step $t$. This perspective leads us to Lemma~\ref{lem:growth_Q}, proof of which can be found in Appendix~\ref{app:growth_Q}. 

\begin{lemma} 
\label{lem:growth_Q}
Recall that $\loss_\H$ is $L$-smooth.
Consider Algorithm~\ref{sgd}. For all $t \in [T]$, we obtain that
\begin{align*}
    \expect{2 \loss_{\H}(\weight{t}) - 2 \loss_{\H}(\weight{t-1})} \leq & - \gamma    \left( 1 - 4 \gamma  L  \right) \expect{\norm{\nabla \loss_{\H}(\weight{t-1})}^2}  + 2 \gamma    \left( 1 + 2 \gamma  L   \right) \expect{ \norm{\dev{t}}^2} \\
    & + 2 \gamma  \left(  1 + \gamma  L \right) \expect{\norm{\drift{t}}^2}.
\end{align*}
\end{lemma}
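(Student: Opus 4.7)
The plan is to apply the standard descent inequality from $L$-smoothness of $\loss_{\H}$ to the consecutive iterates $\weight{t-1}$ and $\weight{t}$, namely
\begin{equation*}
\loss_{\H}(\weight{t}) - \loss_{\H}(\weight{t-1}) \leq \iprod{\nabla \loss_{\H}(\weight{t-1})}{\weight{t} - \weight{t-1}} + \tfrac{L}{2}\norm{\weight{t} - \weight{t-1}}^2,
\end{equation*}
and then substitute the update rule $\weight{t} - \weight{t-1} = -\gamma \AvgMmt{t} - \gamma \drift{t}$ supplied by~\eqref{eqn:sgd_new}. This turns the right-hand side into an expression involving only $\nabla \loss_{\H}(\weight{t-1})$, $\AvgMmt{t}$, and $\drift{t}$.

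My next move is to replace $\AvgMmt{t}$ by $\nabla \loss_{\H}(\weight{t-1}) + \dev{t}$ using the definition~\eqref{eqn:dev}. The linear part then reads $-\gamma\norm{\nabla \loss_{\H}(\weight{t-1})}^2 - \gamma\iprod{\nabla \loss_{\H}(\weight{t-1})}{\dev{t} + \drift{t}}$, and I would bound the remaining cross term via Young's inequality in the symmetric form $-\iprod{a}{b} \leq \tfrac{1}{2}\norm{a}^2 + \tfrac{1}{2}\norm{b}^2$ together with $\norm{b+c}^2 \leq 2\norm{b}^2 + 2\norm{c}^2$, giving at most $\tfrac{\gamma}{2}\norm{\nabla \loss_{\H}(\weight{t-1})}^2 + \gamma\norm{\dev{t}}^2 + \gamma\norm{\drift{t}}^2$. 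For the quadratic term $\tfrac{L\gamma^2}{2}\norm{\AvgMmt{t} + \drift{t}}^2$, I would iterate $\norm{a+b}^2 \leq 2\norm{a}^2 + 2\norm{b}^2$ twice, first to split $\AvgMmt{t}$ from $\drift{t}$ and then to split $\nabla \loss_{\H}(\weight{t-1})$ from $\dev{t}$ inside $\AvgMmt{t}$, yielding the bound $2L\gamma^2\norm{\nabla \loss_{\H}(\weight{t-1})}^2 + 2L\gamma^2\norm{\dev{t}}^2 + L\gamma^2\norm{\drift{t}}^2$.

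Collecting coefficients in front of each of the three norms then produces $-\tfrac{\gamma}{2}(1 - 4\gamma L)\norm{\nabla \loss_{\H}(\weight{t-1})}^2 + \gamma(1 + 2\gamma L)\norm{\dev{t}}^2 + \gamma(1 + \gamma L)\norm{\drift{t}}^2$; multiplying both sides by $2$ and taking total expectation gives exactly the inequality claimed in the lemma. The whole chain is deterministic, so conditioning on $\P_t$ plays no role. The only place where care is required is the bookkeeping of the Young's splits: one must absorb precisely $\tfrac{\gamma}{2}\norm{\nabla \loss_{\H}(\weight{t-1})}^2$ from the cross term and precisely $2L\gamma^2\norm{\nabla \loss_{\H}(\weight{t-1})}^2$ from the $L$-smoothness term, in order to land on the tight constants $1 - 4\gamma L$, $1 + 2\gamma L$, and $1 + \gamma L$ rather than looser versions that would prevent the recursion of Lemma~\ref{lem:dev} from being closed in the subsequent analysis.
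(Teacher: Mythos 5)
Your proposal is correct and follows essentially the same route as the paper's proof: the $L$-smoothness descent inequality, substitution of the update $\weight{t}-\weight{t-1}=-\gamma\AvgMmt{t}-\gamma\drift{t}$, the decomposition $\AvgMmt{t}=\nabla\loss_{\H}(\weight{t-1})+\dev{t}$, Young's inequality on the cross terms, and Jensen on the quadratic term, landing on the identical coefficients $1-4\gamma L$, $1+2\gamma L$, and $1+\gamma L$. The only cosmetic difference is that you bound the combined cross term $\iprod{\nabla\loss_{\H}(\weight{t-1})}{\dev{t}+\drift{t}}$ in one Young step while the paper treats the two inner products separately, which yields the same constants.
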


Putting all of the previous lemmas together, we prove Theorem~\ref{thm:conv} in Section~\ref{app:conv} below.
As a corollary of Theorem~\ref{thm:conv} and Lemma~\ref{lem:cenna}, we prove Corollary~\ref{cor:nnm-dshb} in Section~\ref{app:dshb-corollary}.

\subsection{Proof of Theorem~\ref{thm:conv}}
\label{app:conv}
We recall the theorem statement below for convenience. Recall that
\begin{align}
    \loss^* = \inf_{\weight{} \in \R^d} \loss_{\H}(\weight{}), ~ a_1= 36, a_2 = 6 \sqrt{\loss_{\H}(\weight{0}) - \loss^*},
    a_3 = 1728L,
    a_4 = 288L,
    \text{ and } a_5 = 6L\, a_2^2. \label{eqn:a1a2}
\end{align}
\begin{reptheorem}{thm:conv}
Let assumption~\ref{asp:hetero} and~\ref{asp:bnd_var} hold and recall that $\loss_\H$ is $L$-smooth.
Let $F$ be a $(f,\kappa)$-robust aggregation rule. Consider Algorithm~\ref{sgd} with momentum coefficient $\beta = \sqrt{1 - 24 \gamma L} $, and learning rate $$\gamma= \min{\left\{\frac{1}{24L}, ~ \frac{a_2}{2a_{\kappa}\sigma\sqrt{T}}\right\}}, $$
with $a_\kappa^2 \coloneqq a_3 \kappa + \frac{a_4}{n-f}$. For all $T \geq 1$
\begin{align*}
   \expect{\norm{\nabla \loss_{\H} (\hat{\weight{}} )}^2} \hspace{-2pt}
    \leq  a_1 \kappa G^2
    + \frac{a_2 a_{\kappa}\sigma}{\sqrt{T}}
    + \frac{a_5}{T}
    + \frac{a_2 a_4 \sigma}{n a_\kappa T^{\nicefrac{3}{2}}}, 
\end{align*}
where the expectation is over the randomness of the algorithm.
\end{reptheorem}
\begin{proof}
Define 
\begin{equation}
    \gamma_o \coloneqq \frac{1}{18L}.
\end{equation}
Note that as specified in the theorem statement, by definition of $\gamma$, we have
\begin{equation}
\gamma \leq \frac{1}{24L} \leq \gamma_o.
    \label{eqn:gamma_gamma_o}
\end{equation}
Therefore, $\beta = \sqrt{1 - 24  \gamma L}$ (as defined) is a well-defined real value in $(0, 1)$. 

To obtain the convergence result we 
define the Lyapunov function to be
\begin{align}
    V_t \coloneqq \expect{2 \loss_{\H}(\weight{t-1}) + z \norm{\dev{t}}^2} ~ \text{ and } z = \frac{1}{8L}. \label{eqn:lyap_func}
\end{align}
We consider an arbitrary $t \in [T]$. \\

{\bf Invoking Lemma~\ref{lem:dev}.} Upon substituting from Lemma~\ref{lem:dev}, we obtain that 
\begin{align}
    \expect{ z \norm{\dev{t+1}}^2 - z \norm{\dev{t}}^2} \leq & z \beta^2 c \expect{\norm{\dev{t}}^2} +  4 z \gamma L ( 1 + \gamma L) \beta^2   \expect{\norm{\nabla \loss_{\H}(\weight{t-1})}^2} + z (1 - \beta)^2 \frac{\sigma^2}{n-f} \nonumber \\
    & + 2 z \gamma L ( 1 + \gamma L)\beta^2 \expect{\norm{\drift{t}}^2} - z \expect{\norm{\dev{t}}^2}. \label{eqn:dev_gamma}
\end{align}
Recall that 
\begin{align}
    c = (1 + \gamma L) \left(1 + 4 \gamma  L \right) = 1 + 5 \gamma L + 4 \gamma^2  L^2. \label{eqn:zeta_expand}
\end{align}

{\bf Invoking Lemma~\ref{lem:growth_Q}.} Substituting from Lemma~\ref{lem:growth_Q} we obtain that
\begin{align}
    \expect{2 \loss_{\H}(\weight{t}) - 2 \loss_{\H}(\weight{t-1})}
    &\leq - \gamma   \left( 1 - 4 \gamma  L \right) \expect{\norm{\nabla \loss_{\H}(\weight{t-1})}^2} 
    + 2 \gamma   \left( 1 + 2 \gamma  L  \right) \expect{\norm{\dev{t}}^2} \nonumber \\
    &\quad + 2\gamma  \left(1 + \gamma  L \right) \expect{\norm{\drift{t}}^2}. \label{eqn:growth_gamma}
\end{align}
Substituting from~\eqref{eqn:dev_gamma} and~\eqref{eqn:growth_gamma} in~\eqref{eqn:lyap_func} we obtain that
\begin{align}
    V_{t+1} - V_t = & \expect{2 \loss_{\H}(\weight{t}) - 2 \loss_{\H}(\weight{t-1})} + \expect{ z \norm{\dev{t+1}}^2 - z \norm{\dev{t}}^2} \nonumber \\
    \leq & - \gamma   \left( 1 - 4 \gamma  L \right) \expect{\norm{\nabla \loss_{\H}(\weight{t-1})}^2} 
    + 2 \gamma   \left( 1 + 2 \gamma  L  \right) \expect{\norm{\dev{t}}^2}
    + 2\gamma  \left(  1 + \gamma  L \right) \expect{\norm{\drift{t}}^2} \nonumber \\
    & +  z \beta^2 c \expect{\norm{\dev{t}}^2} +  4 z \gamma L ( 1 + \gamma L) \beta^2   \expect{\norm{\nabla \loss_{\H}(\weight{t-1})}^2} + z (1 - \beta)^2 \frac{\sigma^2}{n-f} \nonumber \\
    & + 2 z \gamma L ( 1 + \gamma L)\beta^2 \expect{\norm{\drift{t}}^2} - z \expect{\norm{\dev{t}}^2}. \label{eqn:Vt-t}
\end{align}
Upon re-arranging the R.H.S.~in~\eqref{eqn:Vt-t} we obtain that
\begin{align*}
    V_{t+1} - V_t \leq & - \gamma \left(  \left( 1 - 4 \gamma L \right) - 4 z L( 1 + \gamma L) \beta^2   \right) \expect{\norm{\nabla \loss_{\H}(\weight{t-1})}^2} +  z (1 - \beta)^2 \frac{\sigma^2}{n-f} \nonumber \\
    & + \left( 2 \gamma  \left( 1 + 2 \gamma L  \right) +  z \beta^2 c - z \right)  \expect{\norm{\dev{t}}^2}  + 2\gamma \left( 1 + \gamma L + z L (1 + \gamma L) \beta^2 \right) \expect{\norm{\drift{t}}^2}.
\end{align*}
For simplicity, we define
\begin{align}
    A \coloneqq \left( 1 - 4 \gamma L \right) - 4 z L( 1 + \gamma L) \beta^2 , \label{eqn:def_At}
\end{align}
\begin{align}
    B \coloneqq 2 \gamma  \left( 1 + 2 \gamma L  \right) +  z \beta^2 c - z, \label{eqn:def_Bt}
\end{align}
and
\begin{align}
    C \coloneqq 2\gamma \left( 1 + \gamma L + zL (1 + \gamma L) \beta^2 \right), \label{eqn:def_Ct}
\end{align}
Thus,
\begin{align}
    V_{t+1} - V_t \leq - A \gamma \expect{\norm{\nabla \loss_{\H}(\weight{t-1})}^2} + B  \expect{\norm{\dev{t}}^2}  + C  \expect{\norm{\drift{t}}^2} +  z (1 - \beta)^2 \frac{\sigma^2}{n-f}. \label{eqn:after_At}
\end{align}
We now analyse below the terms $A$, $B$ and $C$.\\

{\bf Term $A$.} Recall from~\eqref{eqn:gamma_gamma_o} that $ \gamma \leq \gamma_{o} = \frac{1}{18L} $. Upon using this in~\eqref{eqn:def_At}, and the facts that $z = \frac{1}{8L}$ and $\beta^2 < 1$,  we obtain that
\begin{align}
    A \geq 1 - 4 \gamma_o L  - \frac{4L}{8L} ( 1 + \gamma_o L ) \geq \frac{1}{2} - \frac{9\gamma_oL}{2}\geq \frac{1}{4}. \label{eqn:At_3}
\end{align}

{\bf Term $B$.} 
Substituting $c$ from~\eqref{eqn:zeta_expand} in~\eqref{eqn:def_Bt} we obtain that
\begin{align*}
    B &= 2\gamma \left( 1 + 2\gamma L \right) + z \beta^2 \left( 1 + 5 \gamma L + 4 \gamma^2  L^2 \right) - z \\
    & =  - \left(1 - \beta^2 \right) z +\gamma \left( 2 + 4\gamma L +5 z \beta^2 L + 4 z \beta^2 L \gamma L \right).
\end{align*}
Using the facts that $\beta^2 \leq 1$ and $\gamma \leq \gamma_o \leq \frac{1}{18L}$, and then substituting  $z = \frac{1}{8L}$ we obtain that
\begin{align}
    B &\leq \frac{-(1-\beta^2)}{8L} + \gamma \left( 2 + \frac{4}{18} + \frac{5}{8} + \frac{4}{18 \times 8} \right)  \leq \frac{-(1-\beta^2)}{8L} + 3 \gamma
    \leq  \frac{-(1-\beta^2) + 24 \gamma L}{8L} = 0, \label{eqn:Bt_2} 
\end{align}
where the last equality follows from the fact that $1 - \beta^2 = 24 \gamma L$.\\

{\bf Term $C$.} Substituting $z = \frac{1}{8L}$ in~\eqref{eqn:def_Ct}, and then using the fact that $\beta^2 < 1$, we obtain that
\begin{align*}
    C = 2 \gamma \left( 1 + \gamma L + \frac{1}{8} (1 + \gamma L) \right) \leq \frac{9 \gamma}{4} \left( 1 +\gamma L \right).
\end{align*}
As $\gamma \leq \gamma_o \leq \frac{1}{18 L}$, from above we obtain that
\begin{align}
    C \leq \frac{9 \gamma}{4} \left( 1 +\frac{1}{18} \right) \leq 3 \gamma. \label{eqn:Ct_2}
\end{align}


{\bf Combining terms $A$, $B$, and $C$.} Finally, substituting from~\eqref{eqn:At_3},~\eqref{eqn:Bt_2}, and ~\eqref{eqn:Ct_2} in~\eqref{eqn:after_At} (and recalling that $z = \frac{1}{8L}$) we obtain that
\begin{align*}
    V_{t+1} - V_t \leq - \frac{ \gamma}{4} \expect{\norm{\nabla \loss_{\H}(\weight{t-1})}^2} + 3\gamma \expect{\norm{\drift{t}}^2} +   (1 - \beta)^2 \frac{\sigma^2}{8L(n-f)}.
\end{align*}
As the above is true for an arbitrary $t \in [T]$, by taking summation on both sides from $t = 1$ to $t = T$ we obtain that
\begin{align*}
    V_{T+1} - V_1 \leq - \frac{\gamma}{4} \sum_{t = 1}^{T} \expect{\norm{\nabla \loss_{\H}(\weight{t-1})}^2}  + 3 \gamma \sum_{t = 1}^{T} \expect{\norm{\drift{t}}^2} + (1 - \beta)^2 \frac{\sigma^2}{8L(n-f)} T. 
\end{align*}
Thus,
\begin{align} 
    \frac{\gamma}{4} \sum_{t = 1}^{T} \expect{\norm{\nabla \loss_{\H}(\weight{t-1})}^2} \leq V_1 - V_{T+1} + 3 \gamma  \sum_{t = 1}^{T} \expect{\norm{\drift{t}}^2} +   (1 - \beta)^2 \frac{\sigma^2}{8L(n-f)} T. \label{eqn:lyap_before_rational} 
\end{align}
Note that, as $\beta > 0$, and $1 - \beta^2 = 24 \gamma L$, we have
\begin{align*}
    (1 - \beta)^2 = \frac{\left(1 - \beta^2\right)^2}{\left( 1 + \beta \right)^2} \leq \left( 1 - \beta^2 \right)^2 = 576 \gamma^2 L^2.
\end{align*}
Substituting from above in~\eqref{eqn:lyap_before_rational} we obtain that
\begin{align*}
    \frac{\gamma}{4} \sum_{t = 1}^{T} \expect{\norm{\nabla \loss_{\H}(\weight{t-1})}^2} \leq V_1 - V_{T+1} + 3 \gamma  \sum_{t = 1}^{T} \expect{\norm{\drift{t}}^2} +  \frac{ 576 \gamma^2 L^2  \sigma^2}{8L(n-f)} \, T.
\end{align*}
Multiplying both sides by $4/\gamma$ we obtain that
\begin{align}
    \sum_{t = 1}^{T} \expect{\norm{\nabla \loss_{\H}(\weight{t-1})}^2} &\leq \frac{4 \left(V_1 - V_{T+1} \right)}{\gamma} + 12 \sum_{t = 1}^{T} \expect{\norm{\drift{t}}^2} +  \frac{576  \gamma L \sigma^2}{2(n-f)} \, T. \label{eqn:lyap_before_rational_2} 
\end{align}

{\bf Invoking Lemma~\ref{lem:drift}.} 
Next, we use Lemma~\ref{lem:drift} to derive an upper bound on $\sum_{t = 1}^{T} \expect{\norm{\drift{t}}^2} $.
Since $F$ is $(f,\kappa)$-robust, we have from Lemma~\ref{lem:drift} that for all $t \in [T]$,
\begin{align*}
    \expect{\norm{\drift{t}}^2}
    \leq 6\kappa\frac{1-\beta}{1+\beta}\sigma^2 + 3\kappa G^2.
\end{align*}
By summing over $t$ from $1$ to $T$, we obtain that
\begin{align}
    \sum_{t = 1}^{T} \expect{\norm{\drift{t}}^2} 
    & \leq 6\kappa\frac{1-\beta}{1+\beta}\sigma^2 T + 3\kappa G^2 T. 
    \label{eqn:sum_drift_t}
\end{align}
As $\beta > 0$, and the fact that $ 1 - \beta^2 = 24 \gamma L$, we have
\begin{align*}
    \frac{1 - \beta}{1 + \beta} = \frac{1 - \beta^2}{( 1 + \beta)^2}  \leq 1 - \beta^2 = 24 \gamma L.
\end{align*}
Substituting the above in~\eqref{eqn:sum_drift_t},
we obtain that
\begin{align*}
    \sum_{t = 1}^{T} \expect{\norm{\drift{t}}^2} 
    &\leq (24 \times 6) \sigma^2 \kappa \gamma L T + 3 \kappa G^2 T = 144 \sigma^2 \kappa \gamma L T + 3 \kappa G^2 T.
\end{align*}
Substituting from above in~\eqref{eqn:lyap_before_rational_2} we obtain that
\begin{align*}
    \sum_{t = 1}^{T} \expect{\norm{\nabla \loss_{\H}(\weight{t-1})}^2} 
    \leq & \frac{4 \left(V_1 - V_{T+1} \right)}{\gamma} + (12 \times 144) \sigma^2\kappa \gamma LT + (12 \times 3) \kappa G^2 T
    + \frac{288  \gamma L \sigma^2}{(n-f)}.
\end{align*}
Recall that 
\begin{align*}
    a_1 = (12 \times 3) = 36, a_3 = (12 \times 144) L = 1728 L, \text{ and } a_4 = 288L
\end{align*}
Thus, from above we obtain that
\begin{align*}
    \sum_{t = 1}^{T} \expect{\norm{\nabla \loss_{\H}(\weight{t-1})}^2} 
    &\leq \frac{4 \left(V_1 - V_{T+1} \right)}{\gamma} + a_3 \kappa \sigma^2 \gamma T +  \frac{ a_4 \sigma^2 }{(n-f)} \, \gamma T + a_1 \kappa G^2 T.
\end{align*}
Diving both sides by $T$ we obtain that
\begin{align}
    \frac{1}{T}\sum_{t = 1}^{T} \expect{\norm{\nabla \loss_{\H}(\weight{t-1})}^2} 
    &\leq \frac{4 \left(V_1 - V_{T+1} \right)}{\gamma T} + a_3 \kappa \sigma^2 \, \gamma +  \frac{a_4 \sigma^2}{(n-f)} \, \gamma 
    + a_1 \kappa G^2.  \label{eqn:before_Qstar}
\end{align}


{\bf Analyzing $V_t$.} Recall that $\loss^* = \inf_{\weight{} \in \R^d} \loss_{\H}(\weight{})$. Note that for an arbitrary $t$, by definition of $V_t$ in~\eqref{eqn:lyap_func}, 
\[V_t - 2 \loss^* = 2 \expect{\loss_{\H}(\weight{t-1}) - \loss^*} + z \expect{\norm{\dev{t}}^2} \geq 0 + z \expect{\norm{\dev{t}}^2} \geq 0.\]
Thus, 
\begin{align}
    V_1 - V_{T+1} = V_1 - 2 \loss^* - \left(V_{T+1} -  2 \loss^* \right) \leq V_1 - 2 \loss^*. \label{eqn:v1-vt}
\end{align}
Moreover,
\begin{align}
    V_1 = 2 \loss_{\H}(\weight{0}) + z \expect{\norm{\dev{1}}^2}. \label{eqn:bnd_v2}
\end{align}
By definition of $\dev{t}$ in~\eqref{eqn:dev}, the definition of $\AvgMmt{t}$ in~\eqref{eqn:def_avg_mmt}, and the fact that $m_0^{(i)} = 0$ for all $i \in \H$, we obtain that
\begin{align*}
    \expect{\norm{\dev{1}}^2} = \expect{\norm{\AvgMmt{1} - \nabla \loss_{\H}(\weight{0})}^2} = \expect{\norm{(1 - \beta) \overline{g}_1 - \nabla \loss_{\H}(\weight{0})}^2} 
\end{align*}
where $\overline{g}_1$, defined in~\eqref{eqn:over_g}, is the average of $n-f$ honest workers' stochastic gradients in step $1$. Expanding the R.H.S.~above we obtain that
\begin{align*}
    \expect{\norm{\dev{1}}^2} = (1 - \beta)^2 \expect{ \norm{\overline{g}_1 - \nabla \loss_{\H}(\weight{0})}^2 } + \beta^2 \norm{\nabla \loss_{\H}(\weight{0})}^2 - 2 \beta (1 - \beta) \iprod{\expect{\overline{g}_1} - \nabla \loss_{\H}(\weight{0})}{ \nabla \loss_{\H}(\weight{0})}.
\end{align*}
Recall that $\expect{\overline{g}_1} = \nabla \loss_{\H}(\weight{0})$, and that (due to Assumption~\ref{asp:bnd_var}) $\expect{ \norm{\overline{g}_1 - \nabla \loss_{\H}(\weight{0})}^2} \leq \sigma^2/ (n-f)$. Therefore, 
\begin{align*}
    \expect{\norm{\dev{1}}^2} \leq \frac{(1 - \beta)^2 \sigma^2}{(n-f)} + \beta^2 \norm{\nabla \loss_{\H}(\weight{0})}^2.
\end{align*}
Recall that $\loss_\H$ is $L$-smooth. Thus, $\norm{\nabla \loss_{\H}(\weight{0})}^2 \leq 2L(\loss_{\H}{(\weight{0})}-\loss^*)$ (see~\cite{nesterov2018lectures}, Theorem 2.1.5).
Therefore,
\begin{align*}
    \expect{\norm{\dev{1}}^2} \leq \frac{(1 - \beta)^2 \sigma^2}{(n-f)} + 2\beta^2 L (\loss_{\H}{(\weight{0})}-\loss^*).
\end{align*}
Substituting from above in~\eqref{eqn:bnd_v2} we obtain that
\begin{align*}
    V_1 \leq 2 \loss_{\H}(\weight{0}) + z \left( \frac{(1 - \beta)^2 \sigma^2}{(n-f)} + 2\beta^2 L (\loss_{\H}{(\weight{0})}-\loss^*) \right).
\end{align*}
Recall that $(1 - \beta)^2 \leq \left(1 - \beta^2\right)^2 = 576 \gamma^2 L^2$. Using this, and the facts that $\beta^2 < 1$ and $z = \frac{1}{8L}$, we obtain that
\begin{align*}
    V_1 &\leq 2 \loss_{\H}(\weight{0}) +  \frac{1}{4}(\loss_{\H}{(\weight{0})}-\loss^*)+ \frac{72 \gamma^2 L^2   \sigma^2}{L(n-f)}.
\end{align*}
Recall that $a_4 = 288L $. Therefore, 
\begin{align*}
    V_1 \leq 2 \loss_{\H}(\weight{0}) + \frac{1}{4}(\loss_{\H}{(\weight{0})}-\loss^*) + \frac{a_4 \sigma^2}{4(n-f)} \, \gamma^2.
\end{align*}
Substituting the above in~\eqref{eqn:v1-vt} we obtain that
\begin{align*}
    V_1 - V_{T+1} \leq 2 \loss_{\H}(\weight{0}) - 2 \loss^* +  \frac{1}{4}(\loss_{\H}{(\weight{0})}-\loss^*) + \frac{a_4 \sigma^2}{4(n-f)} \, \gamma^2
    = \frac{9}{4}(\loss_{\H}{(\weight{0})}-\loss^*) + \frac{a_4 \sigma^2}{4(n-f)} \, \gamma^2. 
\end{align*}
Substituting from above in~\eqref{eqn:before_Qstar} we obtain that
\begin{align*}
    \frac{1}{T}\sum_{t = 1}^{T} \expect{\norm{\nabla \loss_{\H}(\weight{t-1})}^2} 
    \leq & \frac{(4 \times \frac{9}{4})(\loss_{\H}{(\weight{0})}-\loss^*) }{\gamma T} + \left(\frac{a_4 \sigma^2}{n-f}\right) \frac{\gamma}{T} + a_3 \kappa \sigma^2 \, \gamma +  \frac{a_4 \sigma^2}{(n-f)} \, \gamma
    + a_1 \kappa G^2.
\end{align*}
Upon re-arranging the terms on R.H.S.~above we obtain that
\begin{align*}
    \frac{1}{T}\sum_{t = 1}^{T} \expect{\norm{\nabla \loss_{\H}(\weight{t-1})}^2} 
    \leq & a_1 \kappa G^2 +\frac{9(\loss_{\H}(\weight{0}) - \loss^*)}{\gamma T}
    + \left( a_3 \kappa  + \frac{a_4}{n-f} \right) \sigma^2 \gamma + \left( \frac{a_4 \sigma^2}{n-f}\right) \frac{\gamma}{T}. 
\end{align*}
Recall that $a_2^2 =  36(\loss_{\H}(\weight{0}) - \loss^*)$ and $a_\kappa^2 \coloneqq a_3 \kappa + \frac{a_4}{n-f}$, we obtain that 
\begin{align}
    \frac{1}{T}\sum_{t = 1}^{T} \expect{\norm{\nabla \loss_{\H}(\weight{t-1})}^2} 
    \leq & a_1\kappa G^2 + \frac{a_2^2}{4\gamma T} + a_\kappa^2 \sigma^2 \gamma + \left( \frac{a_4 \sigma^2}{n-f}\right) \frac{\gamma}{T}
    \label{eqn:after_Qstar}
\end{align}

{\bf Final step.} Recall that by definition
\begin{align*}
    \gamma = \min{\left\{\frac{1}{24L}, ~ \frac{a_2}{2 a_\kappa \sigma \sqrt{T}}\right\}},
\end{align*}
and thus $\frac{1}{\gamma} = \max{\left\{24L, \frac{2a_\kappa \sigma\sqrt{T}}{a_2}\right\}} \leq 24L + \frac{2a_\kappa \sigma\sqrt{T}}{a_2}$.

Upon substituting this value of $\gamma$ in~\eqref{eqn:after_Qstar}, and recalling that $a_5 = 6L a_2^2$, we obtain that
\begin{align*}
    \frac{1}{T}\sum_{t = 1}^{T} \expect{\norm{\nabla \loss_{\H}(\weight{t-1})}^2} 
    \leq& a_1\kappa G^2 + \frac{a_2 a_\kappa \sigma}{\sqrt{T}}
    + \frac{24L a_2^2}{4T}
    + \frac{a_2 a_4 \sigma^2}{2(n-f)a_\kappa T^{\nicefrac{3}{2}}}\\
    \leq& a_1\kappa G^2 + \frac{a_2 a_\kappa \sigma}{\sqrt{T}}
    + \frac{a_5}{T}
    + \frac{a_2 a_4 \sigma}{n a_\kappa T^{\nicefrac{3}{2}}}. &(\text{since} ~~  n \geq 2f)
\end{align*}
Finally, recall from Algorithm~\ref{sgd} that $\hat{\weight{}}$ is chosen randomly from the set of computed parameter vectors $\left(\weight{0}, \ldots, \, \weight{T-1} \right)$. Thus, $\expect{\norm{\nabla \loss_{\H}\left(\hat{\weight{}} \right)}^2} = \frac{1}{T}\sum_{t = 1}^{T} \expect{\norm{\nabla \loss_{\H}(\weight{t-1})}^2}$. Substituting this above proves the theorem.
\end{proof}

\clearpage
\subsection{Proof of Corollary~\ref{cor:nnm-dshb}: Analysis of Robust D-SHB with NNM}\label{app:dshb-corollary}


\begin{repcorollary}{cor:nnm-dshb}
Let Assumption~\ref{asp:hetero} hold and recall that $\loss_\H$ is $L$-smooth.
Consider Algorithm~\ref{sgd} with aggregation $F \circ \cenna$, under the same setting as Theorem~\ref{thm:conv}.
If $F$ is $(f,\kappa)$-robust with $\kappa = \mathcal{O}(1)$, then we have
\begin{align*}
   \expect{\norm{\nabla \loss_{\H} (\hat{\weight{}} )}^2}
   = \mathcal{O}{\left( \nicefrac{f}{n} G^2 + \nicefrac{1}{\sqrt{T}}\right)}, 
\end{align*}
where the expectation is over the randomness of the algorithm.
\end{repcorollary}
\begin{proof}
Let Assumption~\ref{asp:hetero} hold.
Assume $\loss_\H$ to be $L$-smooth and $F$ to be $(f,\kappa)$-robust.
First recall that, by Lemma~\ref{lem:cenna}, the composition $F \circ \cenna$ is $(f,\kappa')$-robust with $\kappa' = \frac{8f}{n-f}(\kappa+1)$.

Consider Algorithm~\ref{gd} with aggregation rule $F \circ \cenna$, and learning rate and momentum coefficient set as in Theorem~\ref{thm:conv}.
Following Theorem~\ref{thm:conv}, and recalling the constants defined in \eqref{eqn:a1a2}, we have for every $T \geq 1$,
\begin{align*}
    \expect{\norm{\nabla \loss_{\H} (\hat{\weight{}} )}^2}
    &\leq  a_1 \kappa' G^2
    + \frac{a_2 a_{\kappa'}\sigma}{\sqrt{T}}
    + \frac{a_5}{T}
    + \frac{a_2 a_4 \sigma}{n a_{\kappa'} T^{\nicefrac{3}{2}}}\\
    &= \frac{8 a_1 f}{n-f}(\kappa+1) G^2
    + \frac{a_2 a_{\kappa'}\sigma}{\sqrt{T}}
    + \frac{a_5}{T}
    + \frac{a_2 a_4 \sigma}{n a_{\kappa'} T^{\nicefrac{3}{2}}}.
\end{align*}
Recall that $a_{\kappa'}^2 = a_3 \kappa' + \frac{a_4}{n-f} = \frac{8 a_3 f}{n-f}(\kappa+1) + \frac{a_4}{n-f}$.
As $\kappa \geq 0$ and $\kappa = \mathcal{O}{(1)}$ by assumption, we have $a_{\kappa'} = \Theta{(1)}$.
Now, ignoring constants and the last two terms on the RHS above (since they are dominated by $\frac{1}{\sqrt{T}}$), we obtain
\begin{equation*}
    \expect{\norm{\nabla \loss_{\H} (\hat{\weight{}} )}^2}
    = \mathcal{O}{\left(\frac{f}{n-f}(\kappa+1) G^2 + \frac{1}{\sqrt{T}}\right)}.
\end{equation*}

Recall that, as $f < \nicefrac{n}{2}$, we have $\frac{f}{n-f} \leq \frac{2f}{n}$.
Thus, if $\kappa = \mathcal{O}{(1)}$, we can write
\begin{equation*}
    \expect{\norm{\nabla \loss_{\H} (\hat{\weight{}} )}^2}
    = \mathcal{O}{\left(\nicefrac{f}{n} G^2 + \nicefrac{1}{\sqrt{T}}\right)}.
\end{equation*}
This concludes the proof.
\end{proof}

\clearpage
\subsection{Proof of Supporting Lemmas}
\subsubsection{Proof of Lemma~\ref{lem:mmt_drift}}
\label{app:mmt_drift}

We now recall Lemma~\ref{lem:mmt_drift} below, and present its proof.
\begin{replemma}{lem:mmt_drift}
Suppose that assumptions~\ref{asp:bnd_var} and~\ref{asp:hetero} hold true. Consider Algorithm~\ref{sgd}. For each $t \in [T]$, we have
\begin{align*}
    \expect{\frac{1}{\card{\H}}\sum_{i \in \H}\norm{\mmt{i}{t} - \AvgMmt{t}}^2 } 
    \leq 3\left(\frac{1-\beta}{1+\beta} \right) \,  \left ( 1+\frac{1}{n-f} \right)\sigma^2
    + 3 G^2.
\end{align*}
\end{replemma}
\begin{proof}
Recall from~\eqref{eqn:def_avg_mmt} that 
\begin{align*}
    \AvgMmt{t} \coloneqq \nicefrac{1}{(n-f)} \sum_{i \in \H} \mmt{i}{t}. 
\end{align*}
We consider an arbitrary $i \in \H$. For simplicity we define
\begin{align}
    \overline{g}_t \coloneqq \nicefrac{1}{(n-f)} \sum_{j \in \H}g^{(j)}_t. \label{eqn:over_g}
\end{align}
Now, we consider an arbitrary step $t \in [T]$. Expanding the sum in~\eqref{eqn:mmt_i} we obtain that
\begin{align*}
    \mmt{i}{t} = (1-\beta)\sum_{k=1}^t \beta^{t-k}g^{(i)}_k.
\end{align*}
Therefore, applying Jensen's inequality, we write
\begin{align}
    \expect{\frac{1}{\card{\H}}\sum_{i \in \H}\norm{\mmt{i}{t}-\AvgMmt{t}}^2}
    &= (1-\beta)^2\expect{ \frac{1}{\card{\H}}\sum_{i \in \H} \norm{\sum_{k=1}^t \beta^{t-k}(g^{(i)}_k - \overline{g}_t)}^2} \nonumber\\
    &\leq 3(1-\beta)^2 \expect{\frac{1}{\card{\H}}\sum_{i \in \H}\norm{\sum_{k=1}^t \beta^{t-k}(g^{(i)}_k - \nabla{\loss_i{(\theta_{k-1})}})}^2} \nonumber\\
    &\quad+ 3(1-\beta)^2 \expect{\frac{1}{\card{\H}}\sum_{i \in \H}\norm{\sum_{k=1}^t \beta^{t-k}(\overline{g}_k - \nabla{\loss_{\H}{(\theta_{k-1})}})}^2} \nonumber\\
    &\quad+ 3(1-\beta)^2 \expect{\frac{1}{\card{\H}}\sum_{i \in \H}\norm{\sum_{k=1}^t \beta^{t-k}(\nabla{\loss_i{(\theta_{k-1})}} - \nabla{\loss_{\H}{(\theta_{k-1})}})}^2}.
    \label{eq:lem1}
\end{align}
We obtain below upper bounds for each of the three terms on the right-hand side of~\eqref{eq:lem1}.

\vspace{0.2cm}
For the first term on the R.H.S in~\eqref{eq:lem1} we denote
\begin{align*}
    A_t \coloneqq \expect{\norm{\sum_{k=1}^t \beta^{t-k}(g^{(i)}_k - \nabla{\loss_i{(\theta_{k-1})}})}^2}.
\end{align*}
We obtain that
\begin{align*}
    A_t &= \expect{\norm{\sum_{k=1}^{t-1} \beta^{t-k}(g^{(i)}_k - \nabla{\loss_i{(\theta_{k-1})}}) + (g^{(i)}_t - \nabla{\loss_i{(\theta_{t-1})}})}^2} \\
    &= \expect{\norm{\sum_{k=1}^{t-1} \beta^{t-k}(g^{(i)}_k - \nabla{\loss_i{(\theta_{k-1})}})}^2}
    + \expect{\norm{g^{(i)}_t - \nabla{\loss_i{(\theta_{t-1})}}}^2} \\
    &\quad + 2\expect{\iprod{\sum_{k=1}^{t-1} \beta^{t-k}(g^{(i)}_k - \nabla{\loss_i{(\theta_{k-1})}})}{g^{(i)}_t - \nabla{\loss_i{(\theta_{t-1})}}}}
\end{align*}
The last term on the right-hand side is zero by the total law of expectation, and the unbiasedness of stochastic gradients (Assumption~\ref{asp:bnd_var}):
\begin{align*}
    \expect{\iprod{\sum_{k=1}^{t-1} \beta^{t-k}(g^{(i)}_k - \nabla{\loss_i{(\theta_{k-1})}})}{g^{(i)}_t - \nabla{\loss_i{(\theta_{t-1})}}}}
    &= \expect{\condexpect{t}{\iprod{\sum_{k=1}^{t-1} \beta^{t-k}(g^{(i)}_k - \nabla{\loss_i{(\theta_{k-1})}})}{g^{(i)}_t - \nabla{\loss_i{(\theta_{t-1})}}}}} \\
    &= \expect{\iprod{\sum_{k=1}^{t-1} \beta^{t-k}(g^{(i)}_k - \nabla{\loss_i{(\theta_{k-1})}})}{\underbrace{\condexpect{t}{g^{(i)}_t - \nabla{\loss_i{(\theta_{t-1})}}}}_{=0}}} \\
    &=0.
\end{align*}
By Assumption~\ref{asp:bnd_var}, we have
$\expect{\norm{g^{(i)}_t-\nabla{\loss_i{(\theta_{t-1})}}}^2} \leq \sigma^2$.
Thus, we have
\begin{align*}
    A_t 
    &\leq \expect{\norm{\sum_{k=1}^{t-1} \beta^{t-k}(g^{(i)}_k - \nabla{\loss_i{(\theta_{k-1})}})}^2}
    + \sigma^2
    = \beta^2 A_{t-1} + \sigma^2.
\end{align*}
By recursion, we obtain
\begin{align*}
    A_t \leq \beta^{2(t-1)}A_1 + \sigma^2 \sum_{l=0}^{t-2}\beta^{2l}.
\end{align*}
As $A_1 = \expect{\norm{g^{(i)}_1 - \nabla{\loss_i{(\theta_0)}}}^2} \leq \sigma^2$, from above we obtain that
\begin{align}
    A_t \leq \sigma^2\sum_{l=0}^{t-1}\beta^{2l}
    \leq \frac{\sigma^2}{1-\beta^2}. \label{eqn:mmt_dft_at}
\end{align}
\vspace{0.2cm}

For the second term on the R.H.S in~\eqref{eq:lem1}, we denote
\begin{align*}
    B_t \coloneqq \expect{\norm{\sum_{k=1}^t \beta^{t-k}(\overline{g}_k - \nabla{\loss_{\H}{(\theta_{k-1})}})}^2}.
\end{align*}
By Assumption~\ref{asp:bnd_var} and the mutual independence of stochastic gradients we have 
\begin{align*}
    \expect{\norm{\overline{g}_t - \nabla{\loss_{\H}{(\theta_{t-1})}}}^2} 
    = \frac{1}{(n-f)^2}\sum_{i \in \H}\expect{\norm{g^{(i)}_t - \nabla{\loss_{\H}{(\theta_{t-1})}}}^2}
    \leq \frac{\sigma^2}{n-f}.
\end{align*}
Thus, similar to the bound on $A_t$, we obtain that
\begin{align}
    B_t \leq \frac{\sigma^2}{(1-\beta^2)(n-f)}. \label{eqn:mmt_dft_bt}
\end{align}
\vspace{0.2cm}

For the third term on the R.H.S in~\eqref{eq:lem1}, we denote
\begin{align*}
    C_t \coloneqq \expect{\frac{1}{\card{\H}}\sum_{i \in \H}\norm{\sum_{k=1}^t \beta^{t-k}(\nabla{\loss_i{(\theta_{k-1})}} - \nabla{\loss_{\H}{(\theta_{k-1})}})}^2}.
\end{align*}
By applying Jensen's inequality, followed by Assumption~\ref{asp:hetero}, we have
\begin{align}
    C_t 
    &\leq \left(\sum_{k=1}^t \beta^{t-k}\right)\expect{\frac{1}{\card{\H}}\sum_{i \in \H}\sum_{k=1}^t \beta^{t-k} \norm{\nabla{\loss_i{(\theta_{k-1})}} - \nabla{\loss_{\H}{(\theta_{k-1})}}}^2} \nonumber \\
    &= \left(\sum_{k=1}^t \beta^{t-k}\right)\expect{\sum_{k=1}^t \beta^{t-k} \frac{1}{\card{\H}}\sum_{i \in \H}\norm{\nabla{\loss_i{(\theta_{k-1})}} - \nabla{\loss_{\H}{(\theta_{k-1})}}}^2} \nonumber \\
    &\leq \left(\sum_{k=1}^t \beta^{t-k}\right)^2 \cdot G^2
    \leq \frac{G^2}{(1-\beta)^2}. \label{eqn:mmt_dft_ct} 
\end{align}
Finally, substituting from~\eqref{eqn:mmt_dft_at},~\eqref{eqn:mmt_dft_bt} and~\eqref{eqn:mmt_dft_ct} in~\eqref{eq:lem1} we obtain that
\begin{align*}
    \expect{\frac{1}{\card{\H}}\sum_{i \in \H}\norm{\mmt{i}{t}-\AvgMmt{t}}^2}
    &\leq 3(1-\beta)^2 \frac{1}{\card{\H}}\sum_{i \in \H} \left( A_t+B_t+C_t \right) \nonumber \\
    &\leq 3(1-\beta)^2 \frac{1}{\card{\H}}\sum_{i \in \H} \left(\frac{\sigma^2}{1-\beta^2}
    +\frac{\sigma^2}{(1-\beta^2)(n-f)}
    +\frac{G^2}{(1-\beta)^2}\right) \nonumber \\
    &= 3\,\frac{1-\beta}{1+\beta} \left ( 1+\frac{1}{n-f} \right)\sigma^2
    + 3G^2. 
\end{align*}
The above proves the lemma.
\end{proof}

\subsubsection{Proof of Lemma~\ref{lem:drift}}
\label{app:lem_drift}

\begin{replemma}{lem:drift}
Suppose that assumptions~\ref{asp:hetero} and ~\ref{asp:bnd_var} hold true.
Assume $F$ is $(f,\kappa)$-robust.
Consider Algorithm~\ref{sgd} with aggregation $F$. For each step $t \in [T]$, we obtain that
\begin{align*}
    \expect{\norm{\drift{t}}^2}
    \leq 6 \kappa \frac{1-\beta}{1+\beta}\sigma^2 + 48\kappa G^2.
\end{align*}
\end{replemma}

\begin{proof}
Recall from~\eqref{eqn:R} and~\eqref{eqn:drift}, respectively, that
\begin{align*}
    m_t \coloneqq F  \left(\mmt{1}{t}, \ldots, \, \mmt{n}{t} \right) ~ \text{ and } ~ \drift{t} \coloneqq m_t - \AvgMmt{t}.
\end{align*}
We consider an arbitrary step $t$. 
Since $F$ is $(f,\kappa)$-robust, we obtain that
\begin{equation}
\label{eqn:lem_drift_t}
    \norm{\drift{t}}^2 
    = \norm{m_t - \AvgMmt{t}}^2
    \leq \frac{\kappa}{n-f}\sum_{i \in \H} \norm{\mmt{i}{t}-\AvgMmt{t}}^2.
\end{equation}
Upon taking total expectations on both sides we obtain that
\begin{align}
    \expect{\norm{\drift{t}}^2} \leq \frac{\kappa}{n-f} \sum_{i \in \H} \expect{\norm{\mmt{i}{t} - \AvgMmt{t}}^2}. \label{eqn:before_lemma_drift}
\end{align}
From Lemma~\ref{lem:mmt_drift}, under Assumption~\ref{asp:bnd_var}, we have
\begin{align*}
    \expect{\frac{1}{\card{\H}}\sum_{i 
    \in \H}\norm{\mmt{i}{t} - \AvgMmt{t}}^2}
    &\leq 3\,\frac{1-\beta}{1+\beta} \left ( 1+\frac{1}{n-f} \right)\sigma^2
    + 3G^2 \\
    &\leq 6\,\frac{1-\beta}{1+\beta}\sigma^2 + 3 G^2.
    &(n-f\geq1)
\end{align*}
Substituting from above in~\eqref{eqn:before_lemma_drift} proves the lemma, i.e., we conclude that
\begin{align*}
     \expect{\norm{\drift{t}}^2}
     &\leq 6\kappa\frac{1-\beta}{1+\beta}\sigma^2 + 3\kappa G^2.
\end{align*}
This concludes the proof.
\end{proof}

\subsubsection{Proof of Lemma~\ref{lem:dev}}
\label{app:lem_dev}

The proof of Lemma~\ref{lem:dev} is similar to that of Lemma~3 in \cite{farhadkhani2022byzantine}.
We recall the lemma and the proof below for completeness.

\begin{replemma}{lem:dev}
Suppose that Assumption \ref{asp:bnd_var} holds.
Recall that $\loss_\H$ is $L$-smooth.
Consider Algorithm~\ref{sgd} with $T > 1$. For all $t \geq 2$ we obtain that
\begin{align*}
    \expect{\norm{\dev{t}}^2} \leq & \beta^2 c \expect{\norm{\dev{t-1}}^2} +  4 \gamma L ( 1 + \gamma L) \beta^2  \expect{\norm{\nabla \loss_{\H}(\weight{t-2})}^2} +(1 - \beta)^2 \frac{\sigma^2}{(n-f)} \\
    & + 2 \gamma L ( 1 + \gamma L)\beta^2  \expect{\norm{\drift{t-1}}^2},
\end{align*}
where $c \coloneqq (1 + \gamma L ) \left(1 + 4 \gamma   L \right)$.
\end{replemma}

\begin{proof}
Recall from~\eqref{eqn:dev} that 
\begin{align*}
    \dev{t} \coloneqq \AvgMmt{t} - \nabla \loss_{\H}\left( \weight{t - 1} \right).
\end{align*}
Consider an arbitrary step $t > 1$. Substituting from~\eqref{eqn:mmt_i} and~\eqref{eqn:def_avg_mmt} we obtain that
\begin{align*}
    \dev{t} = \beta \, \AvgMmt{t-1} + (1 - \beta) \, \overline{g}_{t} - \nabla \loss_{\H}\left( \weight{t - 1} \right).
\end{align*}
Upon adding and subtracting $\beta \nabla \loss_{\H}(\weight{t-2})$ and $\beta \nabla \loss_{\H}(\weight{t-1})$ on the R.H.S.~above we obtain that
\begin{align*}
    \dev{t} & = \beta \, \AvgMmt{t-1} - \beta \nabla \loss_{\H}(\weight{t-2}) + (1 - \beta) \, \overline{g}_{t} - \nabla \loss_{\H}\left( \weight{t} \right) + \beta \nabla \loss_{\H}(\weight{t-1}) + \beta \nabla \loss_{\H}(\weight{t-2}) - \beta \nabla \loss_{\H}(\weight{t-1}) \\
    & = \beta \left( \AvgMmt{t-1} - \nabla \loss_{\H}(\weight{t-2}) \right) + (1 - \beta) \, \overline{g}_{t} - (1 - \beta) \nabla \loss_{\H}\left( \weight{t-1} \right) + \beta \left( \nabla \loss_{\H}(\weight{t-2}) - \nabla \loss_{\H}(\weight{t-1})  \right).
\end{align*}
As $\AvgMmt{t-1} - \nabla \loss_{\H}(\weight{t-2}) = \dev{t-1}$ (by~\eqref{eqn:dev}), from above we obtain that
\begin{align*}
    \dev{t} = \beta \dev{t-1} + (1 - \beta) \, \left( \overline{g}_{t} - \nabla \loss_{\H}\left( \weight{t-1} \right) \right) + \beta \left( \nabla \loss_{\H}(\weight{t-2}) - \nabla \loss_{\H}(\weight{t-1}) \right).
\end{align*}
Therefore, 
\begin{align*}
    \norm{\dev{t}}^2 = & \beta^2 \norm{\dev{t-1}}^2 + (1 - \beta)^2 \norm{ \overline{g}_{t} - \nabla \loss_{\H}\left( \weight{t-1} \right)}^2 + \beta^2 \norm{\nabla \loss_{\H}(\weight{t-2}) - \nabla \loss_{\H}(\weight{t-1}) }^2 + 2 \beta (1 - \beta) \iprod{\dev{t-1}}{\overline{g}_{t} - \nabla \loss_{\H}\left( \weight{t-1} \right)} \\
    & + 2 \beta^2 \iprod{\dev{t-1}}{\nabla \loss_{\H}(\weight{t-2}) - \nabla \loss_{\H}(\weight{t-1})} + 2 \beta ( 1- \beta) \iprod{\overline{g}_{t} - \nabla \loss_{\H}\left( \weight{t-1} \right)}{\nabla \loss_{\H}(\weight{t-2}) - \nabla \loss_{\H}(\weight{t-1})}.
\end{align*}
By taking conditional expectation $\condexpect{t}{\cdot}$ on both sides, and recalling that $\dev{t-1}$, $\weight{t-1}$ and $\weight{t-2}$ are deterministic values when the history $\P_t$ is given, we obtain that 
\begin{align*}
    \condexpect{t}{\norm{\dev{t}}^2} = & \beta^2 \norm{\dev{t-1}}^2 + (1 - \beta)^2 \condexpect{t}{\norm{ \overline{g}_{t} - \nabla \loss_{\H}\left( \weight{t-1} \right)}^2} + \beta^2 \norm{\nabla \loss_{\H}(\weight{t-2}) - \nabla \loss_{\H}(\weight{t-1}) }^2 + \\
    & 2 \beta (1 - \beta) \iprod{\dev{t-1}}{\condexpect{t}{\overline{g}_{t}} - \nabla \loss_{\H}\left( \weight{t-1} \right)} + 2 \beta^2 \iprod{\dev{t-1}}{\nabla \loss_{\H}(\weight{t-2}) - \nabla \loss_{\H}(\weight{t-1})}\\
    & + 2 \beta ( 1- \beta) \iprod{\condexpect{t}{\overline{g}_{t}} - \nabla \loss_{\H}\left( \weight{t-1} \right)}{\nabla \loss_{\H}(\weight{t-2}) - \nabla \loss_{\H}(\weight{t-1})}.
\end{align*}
Recall that $\overline{g}_t \coloneqq \nicefrac{1}{(n-f)} \sum_{j \in \H}g^{(i)}_t$. Thus, we have $\condexpect{t}{\overline{g}_{t}} = \nabla \loss_{\H}(\weight{t-1})$. Using this above we obtain that
\begin{align*}
    \condexpect{t}{\norm{\dev{t}}^2} = & \beta^2 \norm{\dev{t-1}}^2 + (1 - \beta)^2 \condexpect{t}{\norm{ \overline{g}_{t} - \nabla \loss_{\H}\left( \weight{t - 1} \right)}^2} + \beta^2 \norm{\nabla \loss_{\H}(\weight{t-2}) - \nabla \loss_{\H}(\weight{t - 1}) }^2 \\
    & + 2 \beta^2 \iprod{\dev{t-1}}{\nabla \loss_{\H}(\weight{t-2}) - \nabla \loss_{\H}(\weight{t - 1})}.
\end{align*}
Also, by Assumption~\ref{asp:bnd_var} and the fact that $\gradient{j}{t}$'s for $j \in \H$ are independent of each other, we obtain that $\condexpect{t}{\norm{ \overline{g}_{t} - \nabla \loss_{\H}\left( \weight{t - 1} \right)}^2} \leq \frac{\sigma^2}{(n-f)}$. Thus,
\begin{align*}
    \condexpect{t}{\norm{\dev{t}}^2} \leq \beta^2 \norm{\dev{t-1}}^2 + (1 - \beta)^2 \frac{\sigma^2}{(n-f)} + \beta^2 \norm{\nabla \loss_{\H}(\weight{t-2}) - \nabla \loss_{\H}(\weight{t - 1}) }^2 + 2 \beta^2 \iprod{\dev{t-1}}{\nabla \loss_{\H}(\weight{t-2}) - \nabla \loss_{\H}(\weight{t - 1})}.
\end{align*}
By the Cauchy-Schwartz inequality, $\iprod{\dev{t-1}}{\nabla \loss_{\H}(\weight{t-2}) - \nabla \loss_{\H}(\weight{t - 1})} \leq \norm{\dev{t-1}} \norm{\nabla \loss_{\H}(\weight{t-2}) - \nabla \loss_{\H}(\weight{t - 1})}$. 
Since $\loss_\H$ is $L$-smooth, we have $\norm{ \nabla \loss_{\H}(\weight{t-2}) - \nabla \loss_{\H}(\weight{t-1})} \leq L \norm{\weight{t-1} - \weight{t-2}}$. 
Recall from~\eqref{eqn:SGD} that $\weight{t} = \weight{t-1} - \gamma  m_t$. Thus,$\norm{\nabla \loss_{\H}(\weight{t-2}) - \nabla \loss_{\H}(\weight{t-1})} \leq \gamma  L \norm{m_{t-1}}$. Using this above we obtain that
\begin{align*}
    \condexpect{t}{\norm{\dev{t}}^2} \leq \beta^2 \norm{\dev{t-1}}^2 + (1 - \beta)^2 \frac{\sigma^2}{(n-f)} + \gamma^2 \beta^2 L^2 \norm{m_{t-1}}^2 + 2 \gamma  \beta^2 L \norm{\dev{t-1}} \norm{m_{t-1}}.
\end{align*}
As $2 ab \leq a^2 + b^2$, from above we obtain that
\begin{align}
    \condexpect{t}{\norm{\dev{t}}^2} & \leq \beta^2 \norm{\dev{t-1}}^2 + (1 - \beta)^2 \frac{\sigma^2}{(n-f)} + \gamma^2 \beta^2 L^2 \norm{m_{t-1}}^2 + \gamma  L \beta^2 \left( \norm{\dev{t-1}}^2 +  \norm{m_{t-1}}^2\right) \nonumber \\
    & = (1 + \gamma L ) \beta^2 \norm{\dev{t-1}}^2 + (1 - \beta)^2 \frac{\sigma^2}{(n-f)} + \gamma L (1 + \gamma L) \beta^2  \norm{m_{t-1}}^2. \label{eqn:dev_before_ab}
\end{align}
By definition of $\drift{t}$ in~\eqref{eqn:drift}, we have $m_{t-1} = \drift{t-1} +  \AvgMmt{t-1}$. Thus, owing to the triangle inequality and the fact that $2 ab \leq a^2 + b^2$, we have $\norm{m_{t-1}}^2 \leq 2 \norm{\drift{t-1}}^2 + 2  \norm{\AvgMmt{t-1}}^2$. Similarly, by definition of $\dev{t}$ in~\eqref{eqn:dev}, we have $\norm{\AvgMmt{t-1}}^2 \leq 2 \norm{\dev{t-1}}^2 + 2 \norm{\nabla \loss_{\H}(\weight{t-2})}^2$. Thus, $\norm{m_{t-1}}^2 \leq 2 \norm{\drift{t-1}}^2 + 4  \norm{\dev{t-1}}^2 + 4  \norm{\nabla \loss_{\H}(\weight{t-2})}^2$. 
Using this in~\eqref{eqn:dev_before_ab} we obtain that
\begin{align*}
    \condexpect{t}{\norm{\dev{t}}^2} \leq & (1 + \gamma L ) \beta^2 \norm{\dev{t-1}}^2 + (1 - \beta)^2 \frac{\sigma^2}{(n-f)} \\
    & + 2 \gamma L( 1 + \gamma L)\beta^2  \left( \norm{\drift{t-1}}^2 + 2  \norm{\dev{t-1}}^2 + 2  \norm{\nabla \loss_{\H}(\weight{t-2})}^2 \right).
\end{align*}
By re-arranging the terms on the R.H.S.~we get
\begin{align*}
    \condexpect{t}{\norm{\dev{t}}^2} \leq & \beta^2 (1 + \gamma L ) \left(1 + 4 \gamma   L \right) \norm{\dev{t-1}}^2 +  4 \gamma  L( 1 + \gamma L) \beta^2   \norm{\nabla \loss_{\H}(\weight{t-2})}^2 +(1 - \beta)^2 \frac{\sigma^2}{(n-f)} \\
    & + 2 \gamma  L( 1 + \gamma L)\beta^2 \norm{\drift{t-1}}^2.
\end{align*}
Substituting $c = (1 + \gamma  L) \left(1 + 4 \gamma   L \right)$ above we obtain that
\begin{align*}
    \condexpect{t}{\norm{\dev{t}}^2} \leq & \beta^2 c \norm{\dev{t-1}}^2 +  4 \gamma L ( 1 + \gamma L) \beta^2 \norm{\nabla \loss_{\H}(\weight{t-2})}^2 +(1 - \beta)^2 \frac{\sigma^2}{(n-f)} + 2 \gamma L ( 1 + \gamma L)\beta^2 \norm{\drift{t-1}}^2.
\end{align*}
Recall that $t$ in the above is an arbitrary value in $[T]$ greater than $1$. Hence, upon taking total expectation on both sides above proves the lemma.

\end{proof}

\subsubsection{Proof of Lemma~\ref{lem:growth_Q}}
\label{app:growth_Q}

The proof of Lemma~\ref{lem:growth_Q} is similar to that of Lemma~4 in \cite{farhadkhani2022byzantine}.
We recall the lemma and the proof below for completeness.

\begin{replemma}{lem:growth_Q} 
Recall that $\loss_\H$ is $L$-smooth.
Consider Algorithm~\ref{sgd}. For all $t \in [T]$, we obtain that
\begin{align*}
    \expect{2 \loss_{\H}(\weight{t}) - 2 \loss_{\H}(\weight{t-1})} \leq & - \gamma    \left( 1 - 4 \gamma  L  \right) \expect{\norm{\nabla \loss_{\H}(\weight{t-1})}^2}  + 2 \gamma    \left( 1 + 2 \gamma  L   \right) \expect{ \norm{\dev{t}}^2} \\
    & + 2 \gamma  \left(  1 + \gamma  L \right) \expect{\norm{\drift{t}}^2}.
\end{align*}
\end{replemma}

\begin{proof}
Consider an arbitrary step $t$. Since $\loss_\H$ is $L$-smooth, we have (see~\cite{bottou2018optimization})
\begin{align*}
    \loss_{\H}(\weight{t}) - \loss_{\H}(\weight{t-1}) \leq \iprod{\weight{t} - \weight{t-1}}{\nabla \loss_{\H}(\weight{t-1})} + \frac{L}{2} \norm{\weight{t} - \weight{t-1}}^2.
\end{align*}
Substituting from~\eqref{eqn:sgd_new}, i.e., $\weight{t} = \weight{t-1} - \gamma    \, \AvgMmt{t} - \gamma  \drift{t}$, we obtain that
\begin{align*}
    \loss_{\H}(\weight{t}) - \loss_{\H}(\weight{t-1}) &\leq - \gamma   \iprod{\AvgMmt{t}}{\nabla \loss_{\H}(\weight{t-1})} - \gamma  \iprod{\drift{t}}{\nabla \loss_{\H}(\weight{t-1})} + \gamma^2 \frac{L}{2} \norm{ \, \AvgMmt{t} + \drift{t}}^2 \\
    & = - \gamma   \iprod{\AvgMmt{t} - \nabla \loss_{\H}(\weight{t-1}) + \nabla \loss_{\H}(\weight{t-1})}{\nabla \loss_{\H}(\weight{t-1})} - \gamma  \iprod{\drift{t}}{\nabla \loss_{\H}(\weight{t-1})} + \gamma^2 \frac{L}{2} \norm{ \, \AvgMmt{t} + \drift{t}}^2.
\end{align*}
By Definition~\eqref{eqn:dev}, $\AvgMmt{t} - \nabla \loss_{\H}(\weight{t-1}) = \dev{t}$. Thus, from above we obtain that (scaling by factor of $2$)
\begin{align}
    2 \loss_{\H}(\weight{t}) - 2 \loss_{\H}(\weight{t-1}) \leq - 2 \gamma   \norm{\nabla \loss_{\H}(\weight{t-1})}^2 - 2 \gamma   \iprod{\dev{t}}{\nabla \loss_{\H}(\weight{t-1})} - 2 \gamma  \iprod{\drift{t}}{\nabla \loss_{\H}(\weight{t-1})} + \gamma^2 L \norm{ \, \AvgMmt{t} + \drift{t}}^2. \label{eqn:norm_1}
\end{align}
Now, we consider the last three terms on the R.H.S.~separately. Using Cauchy-Schwartz inequality, and the fact that $2 ab \leq \frac{1}{c} a^2 + c b^2$ for any $c > 0$, we obtain that (by substituting $c = 2$)
\begin{align}
    2 \mnorm{\iprod{\dev{t}}{\nabla \loss_{\H}(\weight{t-1})}} \leq 2 \norm{\dev{t}} \norm{\nabla \loss_{\H}(\weight{t-1})} \leq \frac{2}{1} \norm{\dev{t}}^2 + \frac{1}{2} \norm{\nabla \loss_{\H}(\weight{t-1})}^2 . \label{eqn:rho_1}
\end{align}
Similarly, 
\begin{align}
    2 \mnorm{\iprod{\drift{t}}{\nabla \loss_{\H}(\weight{t-1})}} \leq 2 \norm{\drift{t}} \norm{\nabla \loss_{\H}(\weight{t-1})} \leq \frac{ 2}{ 1} \norm{\drift{t}}^2 +  \frac{1}{2} \norm{\nabla \loss_{\H}(\weight{t-1})}^2. \label{eqn:rho_2}
\end{align}
Finally, using triangle inequality and the fact that $2ab \leq a^2 + b^2$ we have
\begin{align}
    \norm{ \, \AvgMmt{t} + \drift{t}}^2 & \leq 2  \, \norm{\AvgMmt{t}}^2 + 2 \norm{\drift{t}}^2 = 2  \, \norm{\AvgMmt{t} - \nabla \loss_{\H}(\weight{t}) + \nabla \loss_{\H}(\weight{t-1})}^2 + 2 \norm{\drift{t}}^2 \nonumber \\
    & \leq 4  \, \norm{\dev{t}}^2 + 4  \, \norm{\nabla \loss_{\H}(\weight{t-1})}^2 + 2 \norm{\drift{t}}^2. \quad \quad [\text{since} ~ ~ \AvgMmt{t} - \nabla \loss_{\H}(\weight{t-1}) = \dev{t}] \label{eqn:last_term}
\end{align}
Substituting from~\eqref{eqn:rho_1},~\eqref{eqn:rho_2} and~\eqref{eqn:last_term} in~\eqref{eqn:norm_1} we obtain that
\begin{align*}
    2 \loss_{\H}(\weight{t}) - 2 \loss_{\H}(\weight{t-1}) \leq & - 2 \gamma   \norm{\nabla \loss_{\H}(\weight{t-1})}^2 + \gamma   \left( 2 \norm{\dev{t}}^2 + \frac{1}{2}\norm{\nabla \loss_{\H}(\weight{t-1})}^2 \right) + \gamma  \left( 2 \norm{\drift{t}}^2 + \frac{1}{2} \norm{\nabla \loss_{\H}(\weight{t-1})}^2 \right) \nonumber \\
    & + \gamma^2 L \left( 4  \, \norm{\dev{t}}^2 + 4  \, \norm{\nabla \loss_{\H}(\weight{t-1})}^2 + 2 \norm{\drift{t}}^2 \right).
\end{align*}
Upon re-arranging the terms in the R.H.S.~we obtain that
\begin{align*}
    2 \loss_{\H}(\weight{t}) - 2 \loss_{\H}(\weight{t-1}) \leq - \gamma   \left( 1 - 4 \gamma  L \right) \norm{\nabla \loss_{\H}(\weight{t-1})}^2  + 2 \gamma   \left( 1 + 2 \gamma  L  \right) \norm{\dev{t}}^2 + 2 \gamma  \left(  1 + \gamma  L \right) \norm{\drift{t}}^2.
\end{align*}
As $t$ is arbitrarily chosen from $[T]$, taking expectation on both sides above proves the lemma.
\end{proof}
\newpage
\section{Experimental Setup}\label{app:exp_setup}

\subsection{Detailed Experimental Setup}

The architecture of the models, as well as additional details on the experimental setup, are presented in Table~\ref{table_exp}.
Note that CNN stands for convolutional neural network, and NLL refers to the negative log likelihood loss.
In order to present the architecture of the models used, we introduce the following compact notation.\\

\noindent \fcolorbox{black}{gainsboro!40}{
\parbox{0.98\textwidth}{
L(\#outputs) represents a \textbf{fully-connected linear layer}, R stands for \textbf{ReLU activation}, S stands for \textbf{log-softmax}, C(\#channels) represents a \textit{fully-connected 2D-convolutional layer} (kernel size 5, padding 0, stride 1), M stands for \textbf{2D-maxpool} (kernel size 2), B stands for \textbf{batch-normalization}, and D represents \textbf{dropout} (with fixed probability 0.25).
}}.

\newcolumntype{P}[1]{>{\centering\arraybackslash}p{#1}}
\begin{table}[h!]
\centering
\def\arraystretch{1.5}
\begin{tabular}{|P{15em}||P{15em}|P{15em}|} 
 \hline
 \textit{Dataset} & MNIST \& Fashion-MNIST & CIFAR-10 \\ [0.5ex] 
 \hline
 \textit{Model type} & CNN & CNN\\ 
 \hline
 \textit{Model architecture} & C(20)-R-M-C(20)-R-M-L(500)-R-L(10)-S & (3,32×32)-C(64)-R-B-C(64)-R-B-M-D-C(128)-R-B-C(128)-R-B-M-D-L(128)-R-D-L(10)-S\\
 \hline
 \textit{Loss} & NLL & NLL\\
  \hline
  \textit{Gradient clipping} & 2 & 5\\
  \hline
  \textit{$\ell_2$-regularization} & $10^{-4}$ & $10^{-2}$\\
  \hline
  \textit{Number of steps} & $T= 800$ & $T = 2000$\\
  \hline
  \textit{Learning rate} & \begin{equation*}
    \gamma_t= \frac{0.75}{1 + \floor{\frac{t}{50}}}
  \end{equation*} & \begin{equation*}
    \gamma_t=
    \begin{cases}
      0.25 & t \leq 1500 \\
      0.025 & 1500 < t \leq 2000
    \end{cases}
  \end{equation*}\\
  \hline
  \textit{Momentum parameter} & $\beta = 0.9$ & $\beta = 0.9, 0.99$\\
  \hline
  \textit{Batch size} & $b = 25$ & $b = 50$\\
  \hline
 \textit{Total number of workers} & $n=17$ & $n=17$\\
 \hline
 \textit{Number of Byzantine workers} & $f= 4, 6, 8$ & $f = 2, 3, 4$\\
 \hline
\end{tabular}
\caption{Model Architectures, Hyperparameters, and Distributed Settings}
\label{table_exp}
\end{table}

On all datasets, we implement $\cenna{}$ and Bucketing with four aggregation rules namely geometric median (GM), coordinate-wise median (CWMed), coordinate-wise trimmed mean (CWTM), and Krum. In every setting, we execute the Bucketing algorithm with bucket size $s = \floor{\frac{n}{2f}}$~\cite{karimireddy2022byzantinerobust}. We also implement the vanilla aggregation rules.\\
Note that when $f=4$ (out of $n=17$), Bucketing+CWMed and Bucketing+CWTM are exactly equivalent. In fact, when $f=4$, buckets are of size $s=2$. Therefore, applying Bucketing results in a total of $n'=9$ buckets, out of which $f'=4$ buckets might be potentially Byzantine (i.e., contaminated by at least one Byzantine worker). Therefore, we can clearly see that executing CWMed and CWTM with $n'=9$ and $f'=4$ (i.e., post Bucketing) outputs the same vector. Accordingly, we only show the performance of the learning under Bucketing+CWMed in the plots of Appendix~\ref{app:exp_results} when $f=4$.
Finally, note that we use folklore techniques from deep learning to improve accuracy, e.g. gradient clipping. The latter may in fact have an additional positive impact on the robust aggregation in our setting, which we also are investigating.

\subsection{Dataset Preprocessing}
MNIST receives an input image normalization of mean $0.1307$ and standard deviation $0.3081$, while Fashion-MNIST is expanded with horizontally flipped images.
Furthermore, the images of CIFAR-10 are horizontally flipped, and per channel normalization is also applied with means 0.4914, 0.4822, 0.4465 and standard deviations 0.2023, 0.1994, 0.2010.

\subsection{Byzantine Attacks}
In our experiments, the Byzantine workers execute five state-of-the-art gradient attacks, namely A Little is Enough (ALIE)~\cite{little}, Fall of Empires (FOE)~\cite{empire}, Sign-flipping (SF)~\cite{allen2020byzantine}, Label-flipping (LF)~\cite{allen2020byzantine}, and Mimic~\cite{karimireddy2022byzantinerobust}.
The first three attacks rely on the same primitive explained below.\\
Let $a_t$ be the attack vector in step $t$ and $\eta \geq 0$ a fixed real number. In every step $t$, the Byzantine workers send to the server the Byzantine vector $B_t = \overline{s}_t + \eta a_t$, where $\overline{s}_t$ is an estimation of the true gradient (or momentum) at step $t$.\\
Experimentally, when running the gradient descent (GD) algorithm, we set $\overline{s}_t = \frac{1}{|\mathcal{H}|} \sum\limits_{i \in \mathcal{H}} g_t^{(i)}$, where $g_t^{(i)}$ is the gradient computed by honest worker $w_i$ in step $t$. However, when executing the stochastic heavy ball (SHB) method, we set $\overline{s}_t = \frac{1}{|\mathcal{H}|} \sum\limits_{i \in \mathcal{H}} m_t^{(i)}$, where $m_t^{(i)}$ is the momentum vector sent by honest worker $w_i$ in step $t$.

\begin{itemize}
    \item \textbf{ALIE:} In this attack, $a_t = \sigma_t$, where $\sigma_t$ is coordinate-wise standard deviation of $\overline{s}_t$.
    \item \textbf{FOE:} In this attack, $a_t = - \overline{s}_t$. All Byzantine workers thus send $(1 - \eta) \overline{s}_t$ in step $t$.
    \item \textbf{SF:} In this attack, $a_t = - \overline{s}_t$, and $\eta = 2$. All Byzantine workers thus send $B_t = a_t = - \overline{s}_t$ in step $t$.
\end{itemize}

In our experiments, as done in~\cite{shejwalkar2021manipulating}, we implement optimized versions of ALIE and FOE, where the optimal $\eta$ is determined greedily by linearly searching over a defined range of values. In every step $t$, we pick the value of $\eta$ that maximises the L2-distance between the output of the aggregation rule $R_t$ at the server and the average of honest gradients or momentums, i.e., $\overline{s}_t$. In other words, we pick the value of $\eta$ that causes the maximum damage (in terms of L2-distance from the average of honest inputs) by the Byzantine workers.

LF and Mimic on the other hand are executed as follows:
\begin{itemize}
    \item \textbf{LF:} Every Byzantine worker computes its gradient on flipped labels. Since the labels $l$ for MNIST, Fashion-MNIST, and CIFAR-10 are in $\{0, 1, ..., 9\}$, the Byzantine workers flip the labels by computing $l' = 9 - l$ on the batch, where $l'$ is the flipped/modified label.
    \item \textbf{Mimic:} All Byzantine workers ``mimic" a certain honest worker by simply sending its gradient or momentum to the server. In order to chose which honest worker to mimic during the learning, we adopt the heuristic introduced by~\cite{karimireddy2022byzantinerobust}.
\end{itemize}

\subsection{Data Heterogeneity}
In order to simulate heterogeneity in our experiments, we sample from the original datasets (i.e., MNIST, Fashion-MNIST, and CIFAR-10) using a \textbf{Dirichlet} distribution of parameter $\alpha$, which indicates the level of heterogeneity induced in the workers' datasets. The smaller is $\alpha$, the more heterogeneous is the setting (i.e., the more likely it is that workers possess samples from only one class). We consider three heterogeneity regimes in our experiments, namely \textit{extreme} where $\alpha = 0.1$, \textit{moderate} where $\alpha = 1$, and \textit{low} where $\alpha = 10$. The corresponding distributions of the number of samples across workers and class labels (depending on $\alpha$) are shown in Figure~\ref{fig:distribution}.

\begin{figure*}[ht!]
    \centering
    \includegraphics[width=0.33\textwidth]{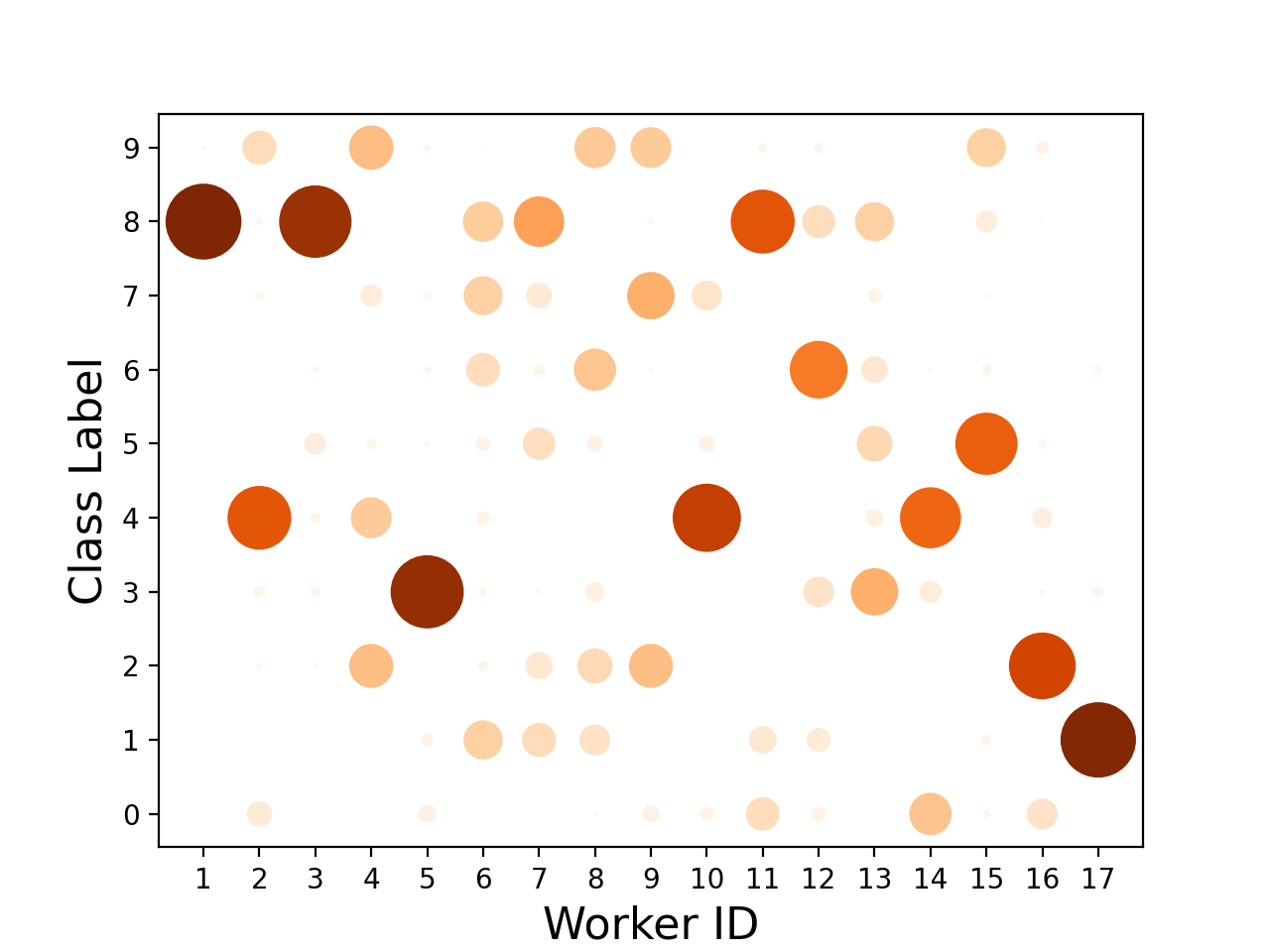}%
    \includegraphics[width=0.33\textwidth]{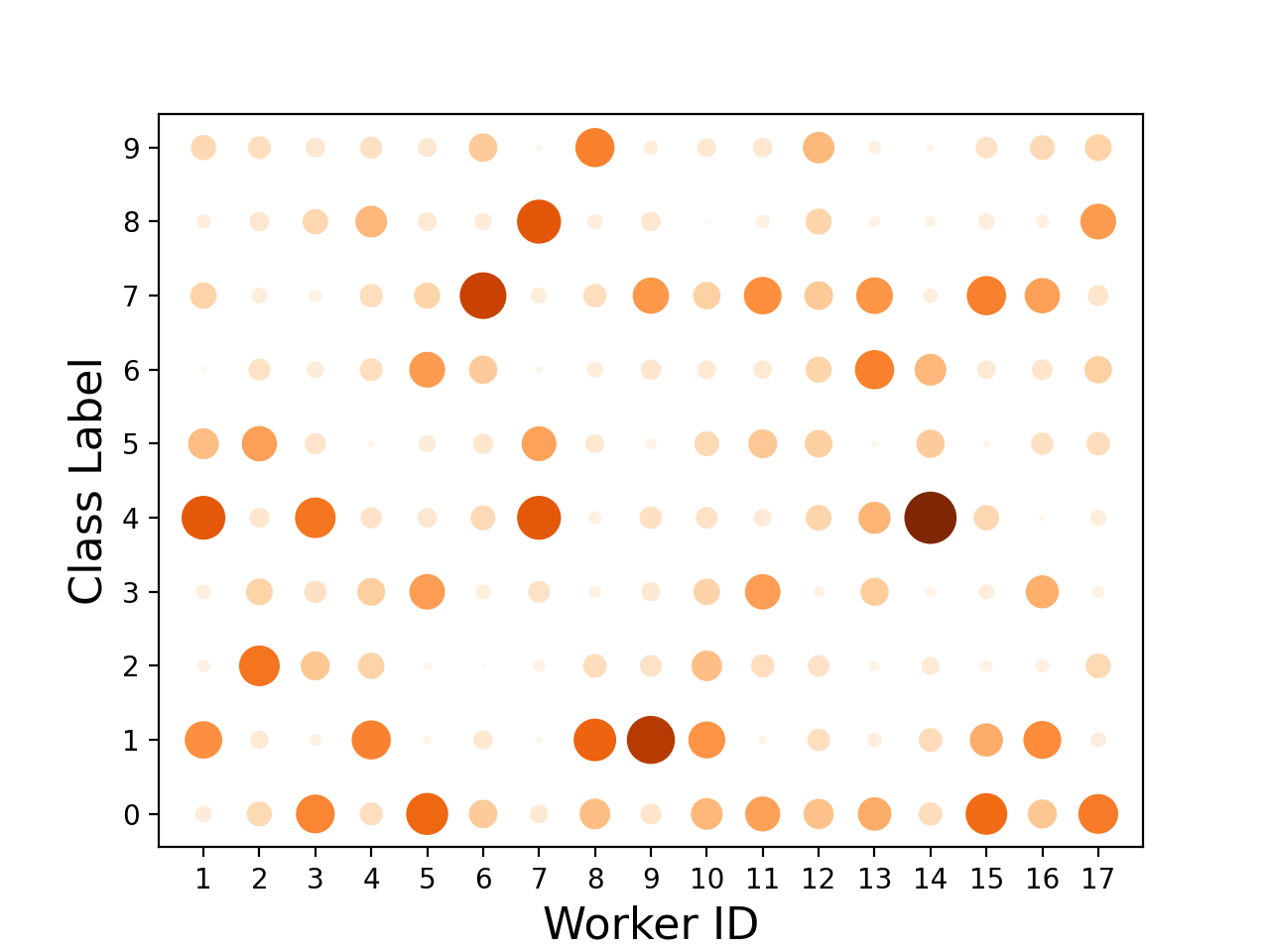}%
    \includegraphics[width=0.33\textwidth]{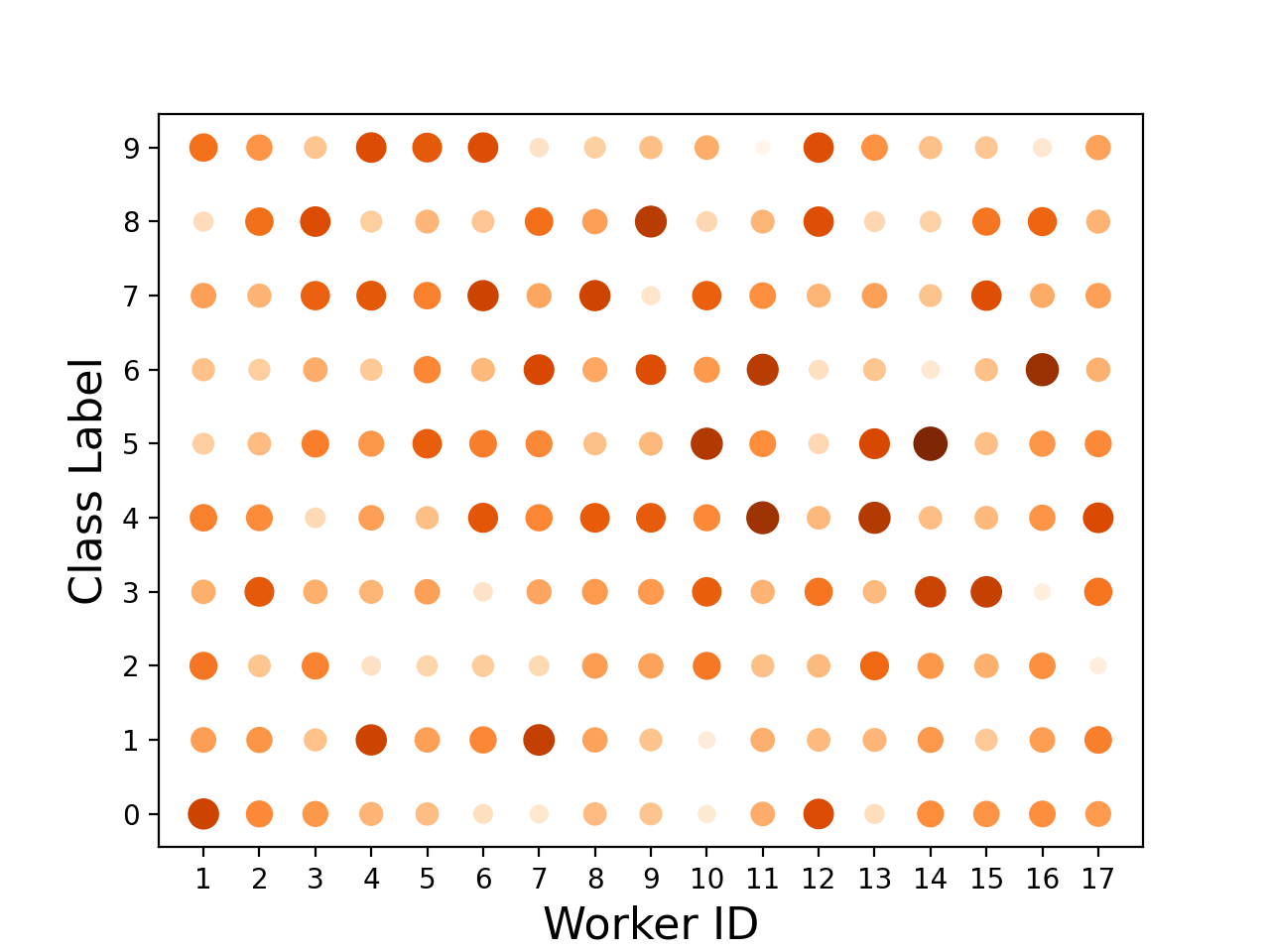}%
    \caption{Distribution of data samples across workers and class labels on MNIST/CIFAR-10 when sampling from a Dirichlet distribution of parameter $\alpha$. \textit{Left}: $\alpha = 0.1$, \textit{Middle}: $\alpha = 1$, \textit{Right}: $\alpha = 10$.}
\label{fig:distribution}
\end{figure*}
\newpage
\section{Full Experimental Results}\label{app:exp_results}

Section~\ref{app:exp_results_mnist} presents our results on the MNIST dataset. Section~\ref{app:exp_results_fashion} contains our results on Fashion-MNIST. Section~\ref{app:exp_results_cifar} shows the performance of our algorithm on CIFAR-10.

\subsection{Comprehensive Results on MNIST}\label{app:exp_results_mnist}
In this section, we present the entirety of our results on the MNIST dataset.
\subsubsection{Results on D-SHB}
We consider three Byzantine regimes: $f = 4$, $f = 6$, and $f=8$ (largest possible) out of $n=17$ workers in total. We also consider three heterogeneity regimes: $\alpha=0.1$ (extreme), $\alpha=1$ (moderate), and $\alpha=10$ (low).
We compare the performance of $\cenna{}$ and Bucketing when executed with four aggregation rules namely Krum~\cite{krum}, GM~\cite{small1990survey}, CWMed~\cite{yin2018}, and CWTM~\cite{yin2018}.
The plots are presented below and complement our (partial) results in Table~\ref{table:results_mnist} in Section~\ref{exp_results_mnist} of the main paper.
We note that in the two strongest Byzantine settings, i.e., when $f > 4$, Bucketing can only be applied with bucket size $s=1$~\cite{karimireddy2022byzantinerobust}. This means that when $f=6, 8$, executing Bucketing boils down to running the vanilla aggregation rules.

\begin{figure*}[ht!]
    \centering
    \includegraphics[width=0.5\textwidth]{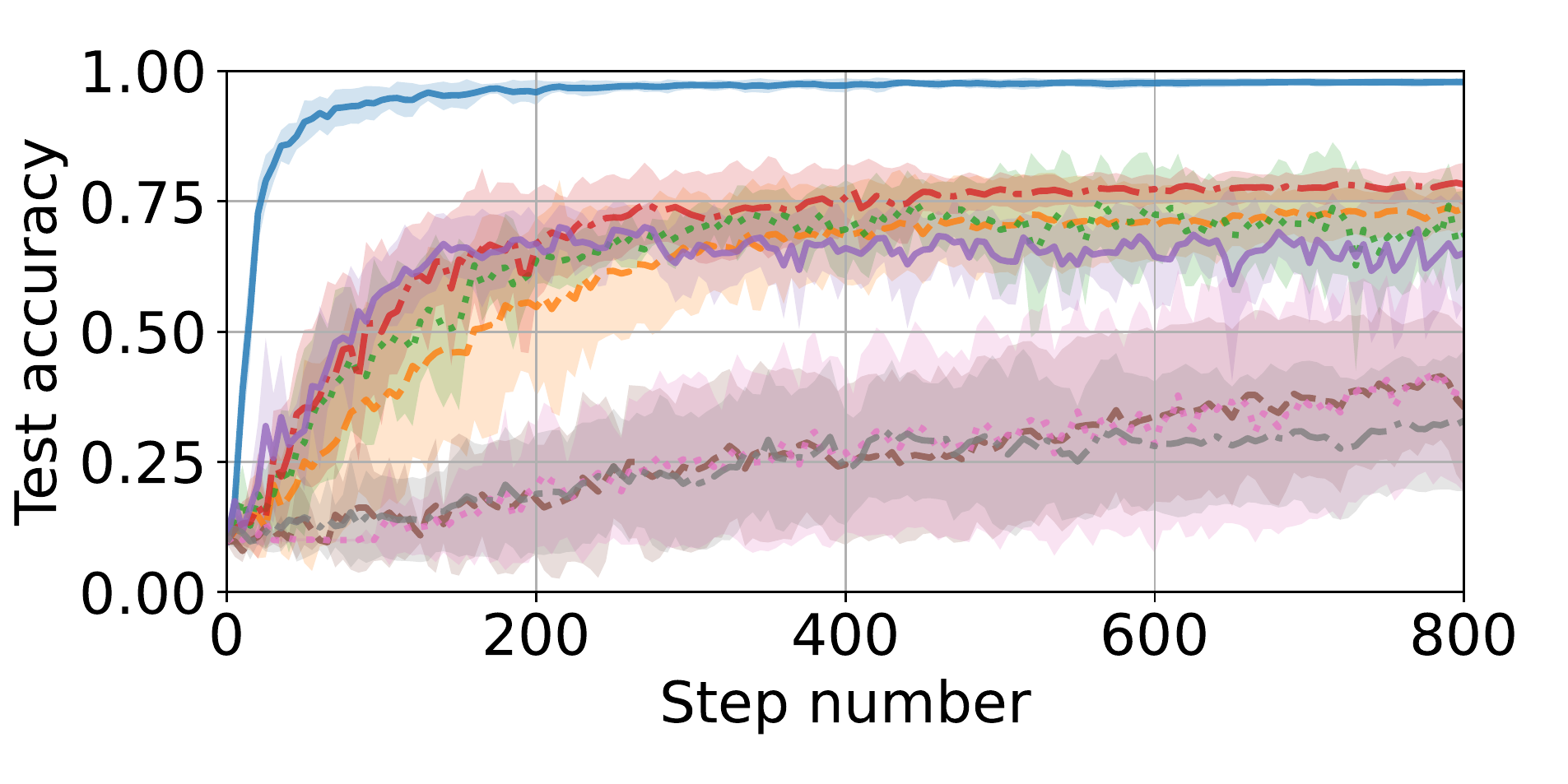}%
    \includegraphics[width=0.5\textwidth]{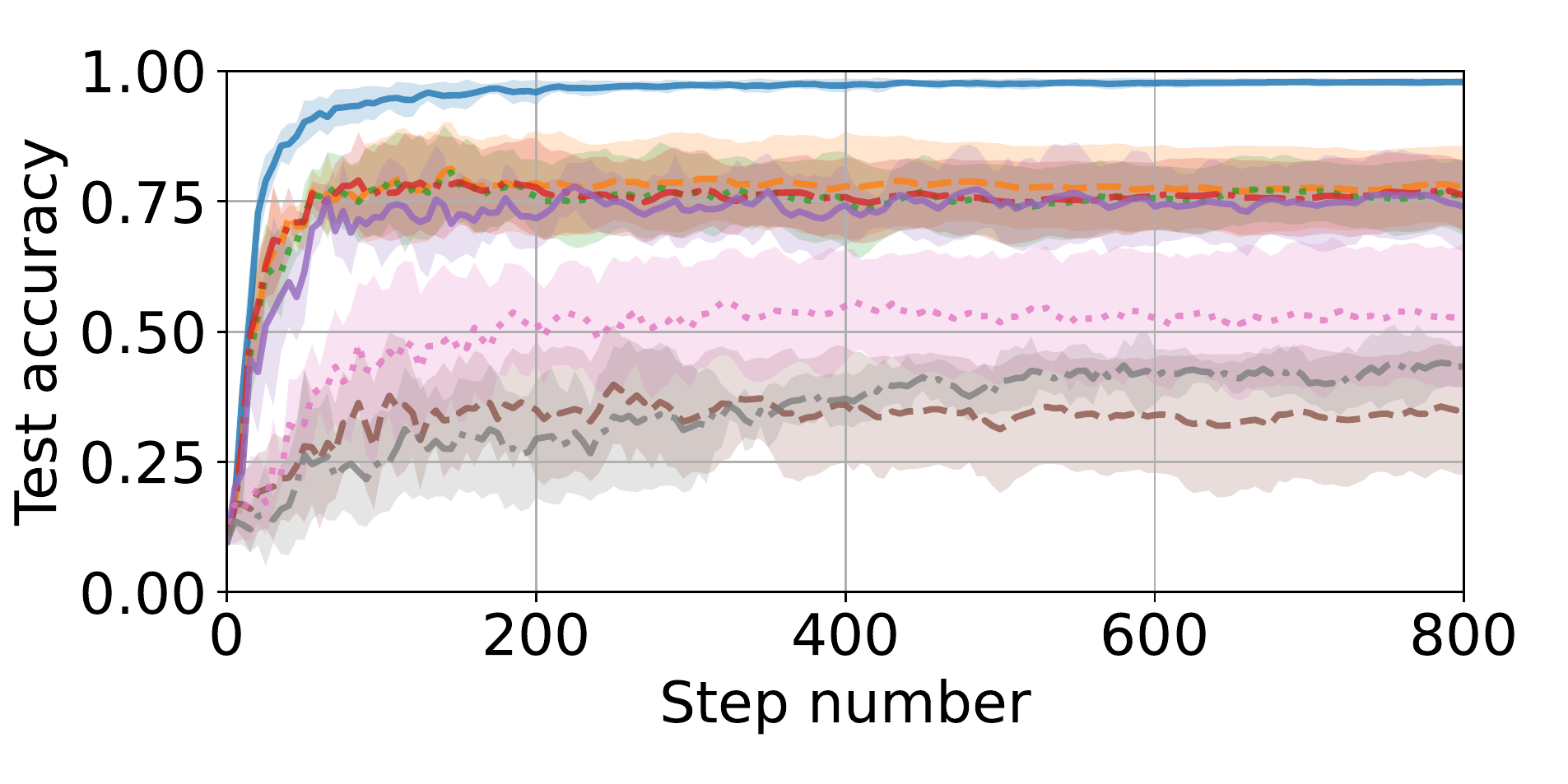}\\%
    \includegraphics[width=0.5\textwidth]{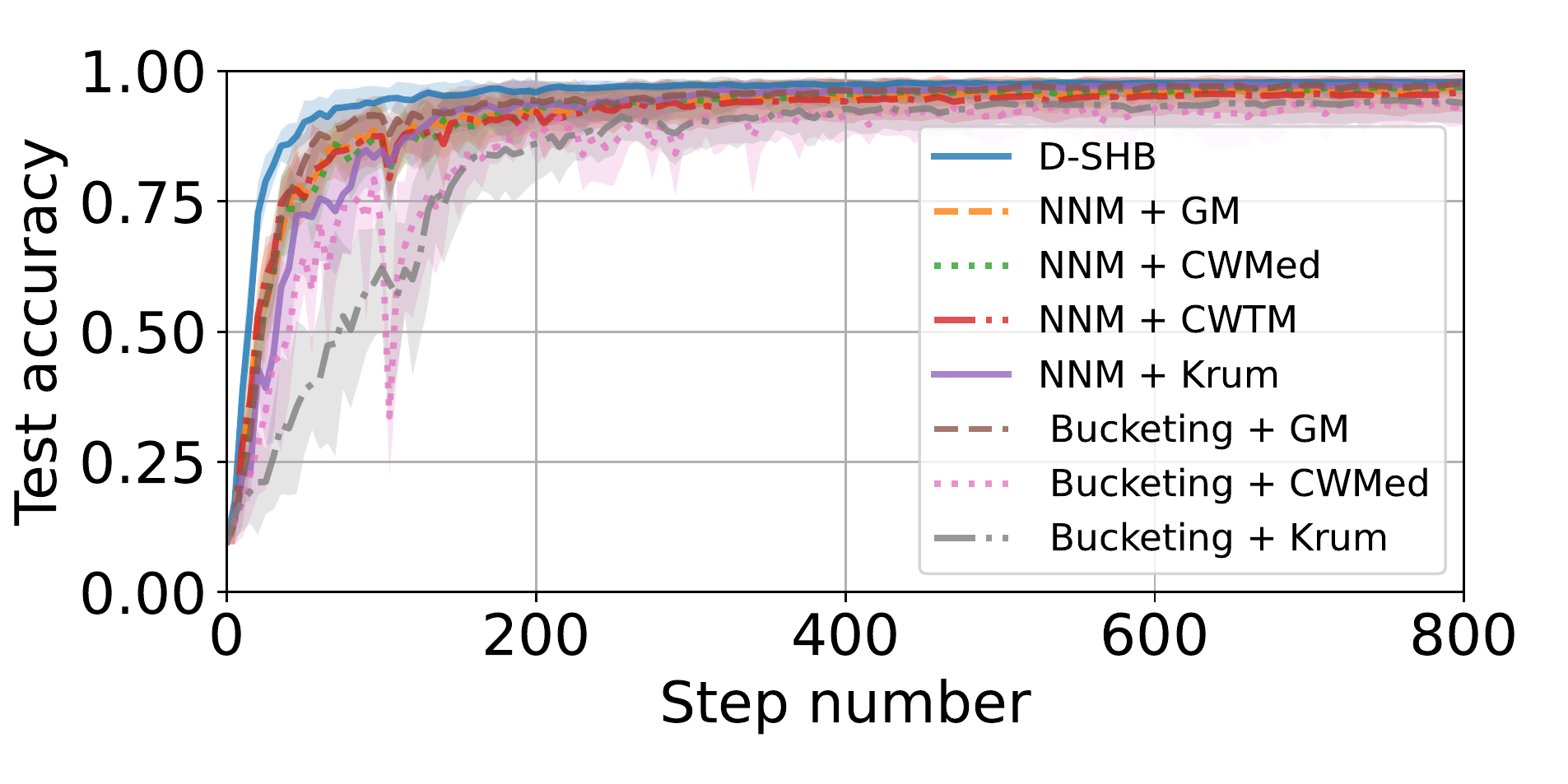}%
    \includegraphics[width=0.5\textwidth]{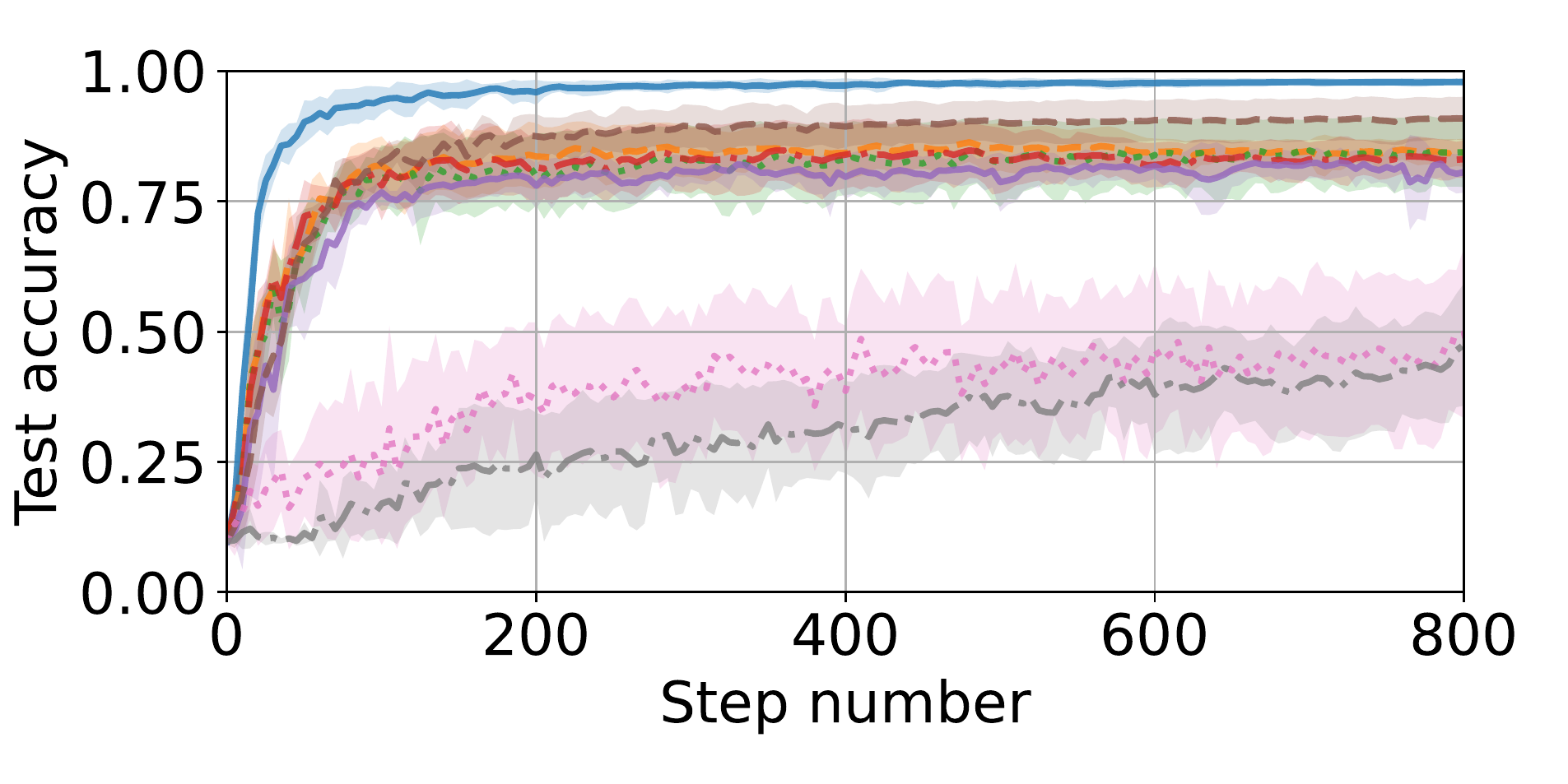}\\%
     \includegraphics[width=0.5\textwidth]{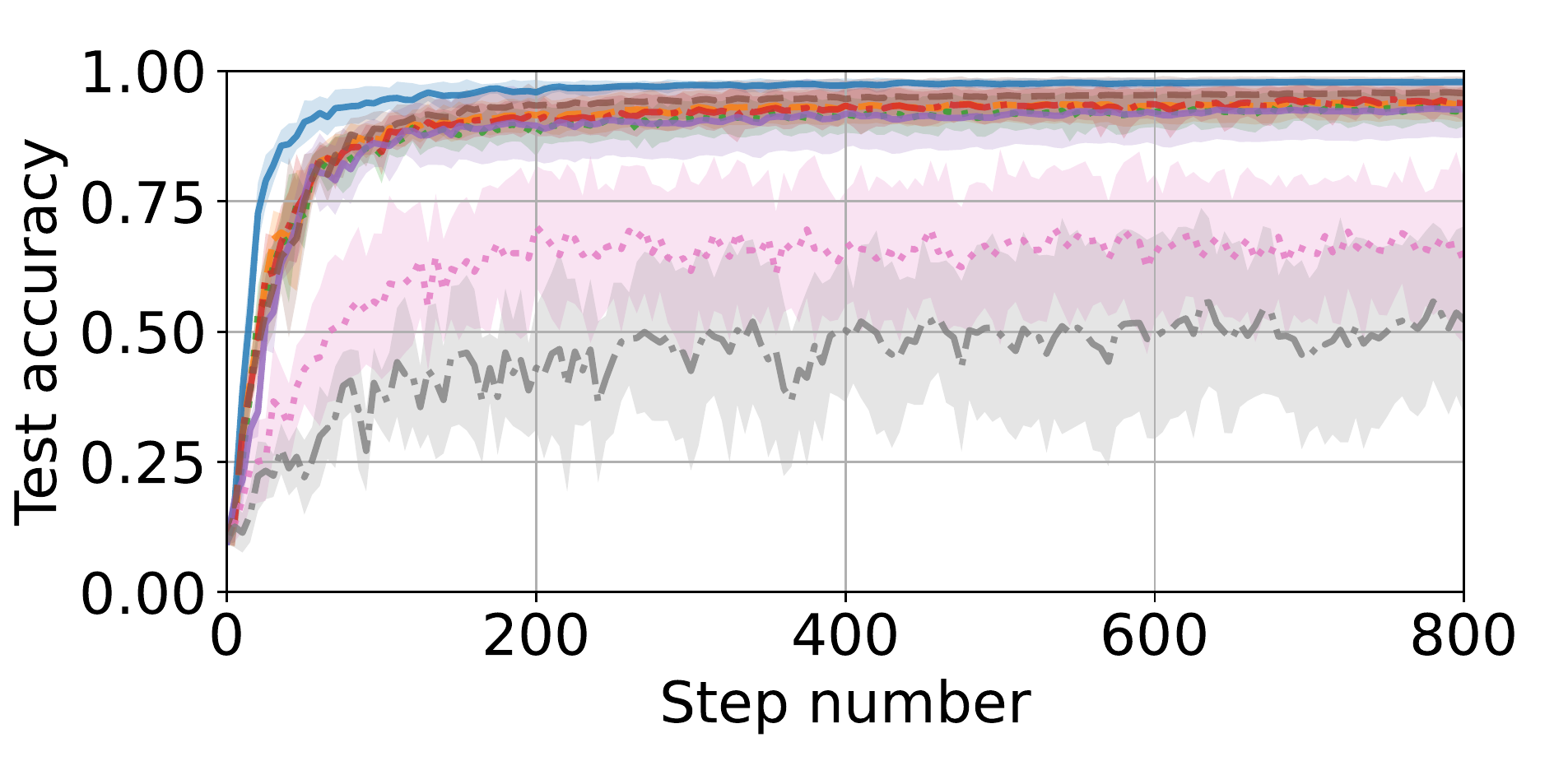}%
    \caption{Experiments on MNIST using robust D-SHB with $f = 4$ among $n = 17$ workers, with $\beta = 0.9$ and $\alpha = 0.1$. The Byzantine workers execute FOE (\textit{row 1, left}), ALIE (\textit{row 1, right}), Mimic (\textit{row 2, left}), SF (\textit{row 2, right}), and LF (\textit{row 3}).}
\label{fig:plots_mnist_1}
\end{figure*}

\begin{figure*}[ht!]
    \centering
    \includegraphics[width=0.5\textwidth]{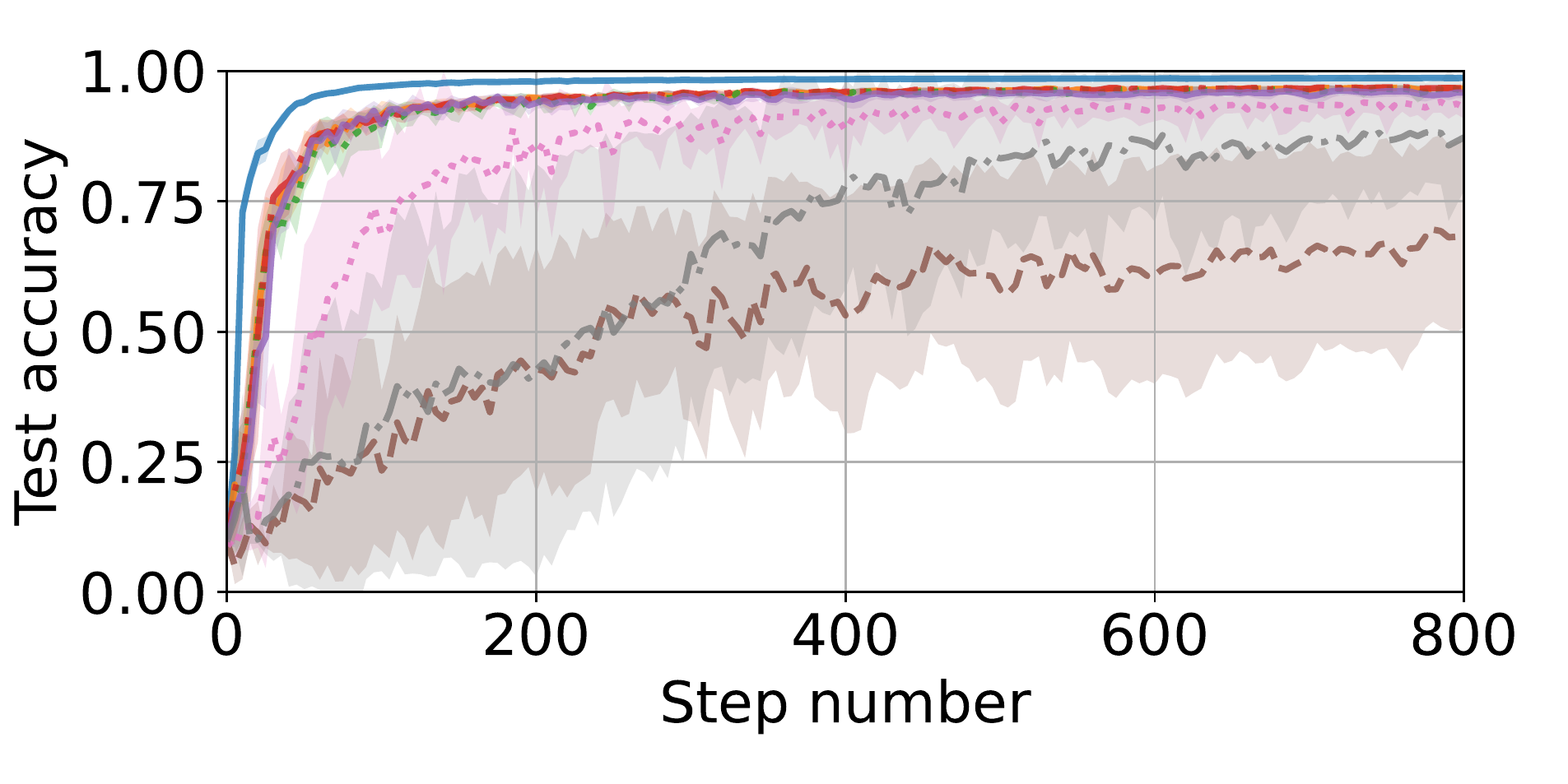}%
    \includegraphics[width=0.5\textwidth]{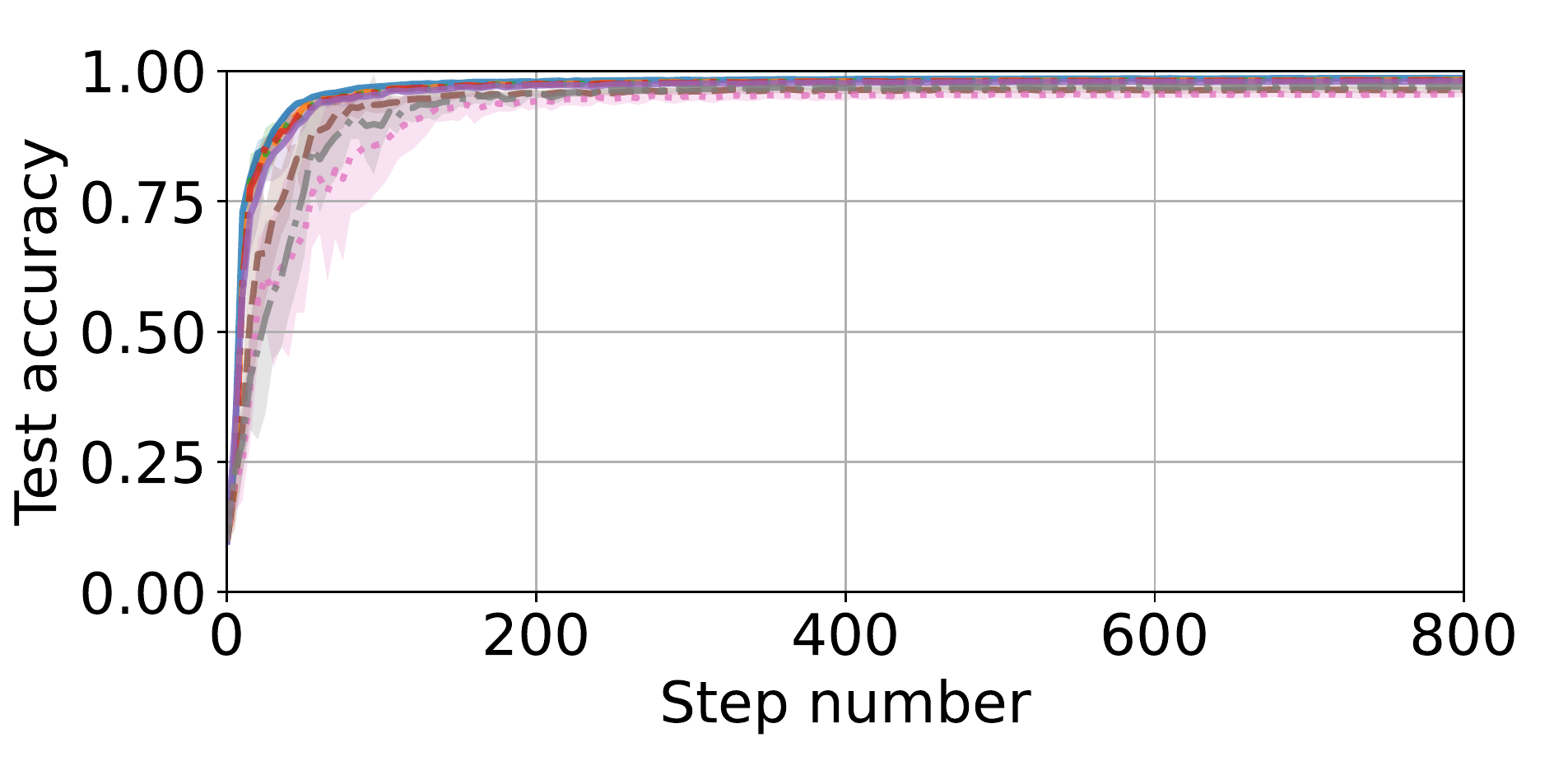}\\%
    \includegraphics[width=0.5\textwidth]{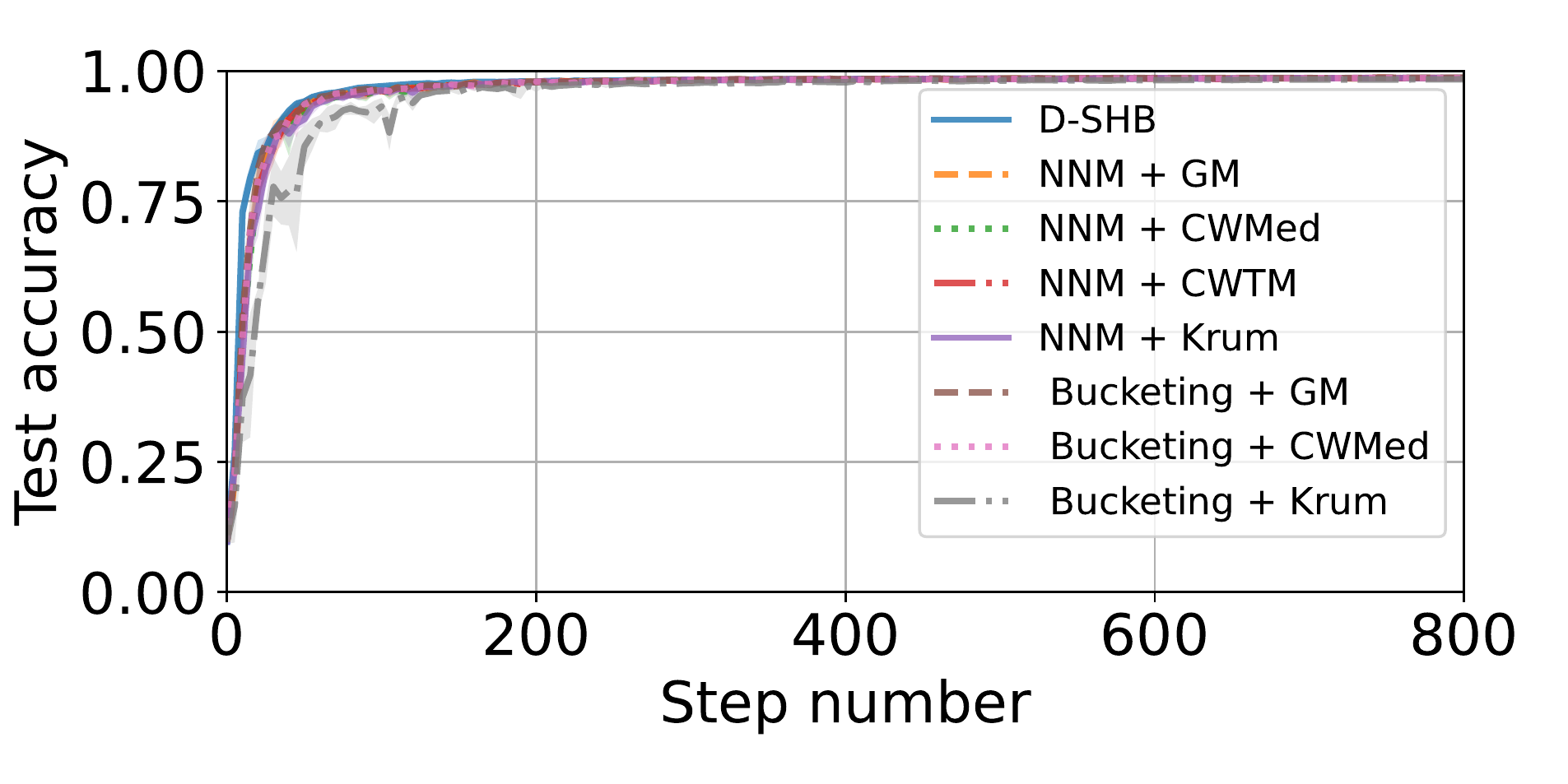}%
    \includegraphics[width=0.5\textwidth]{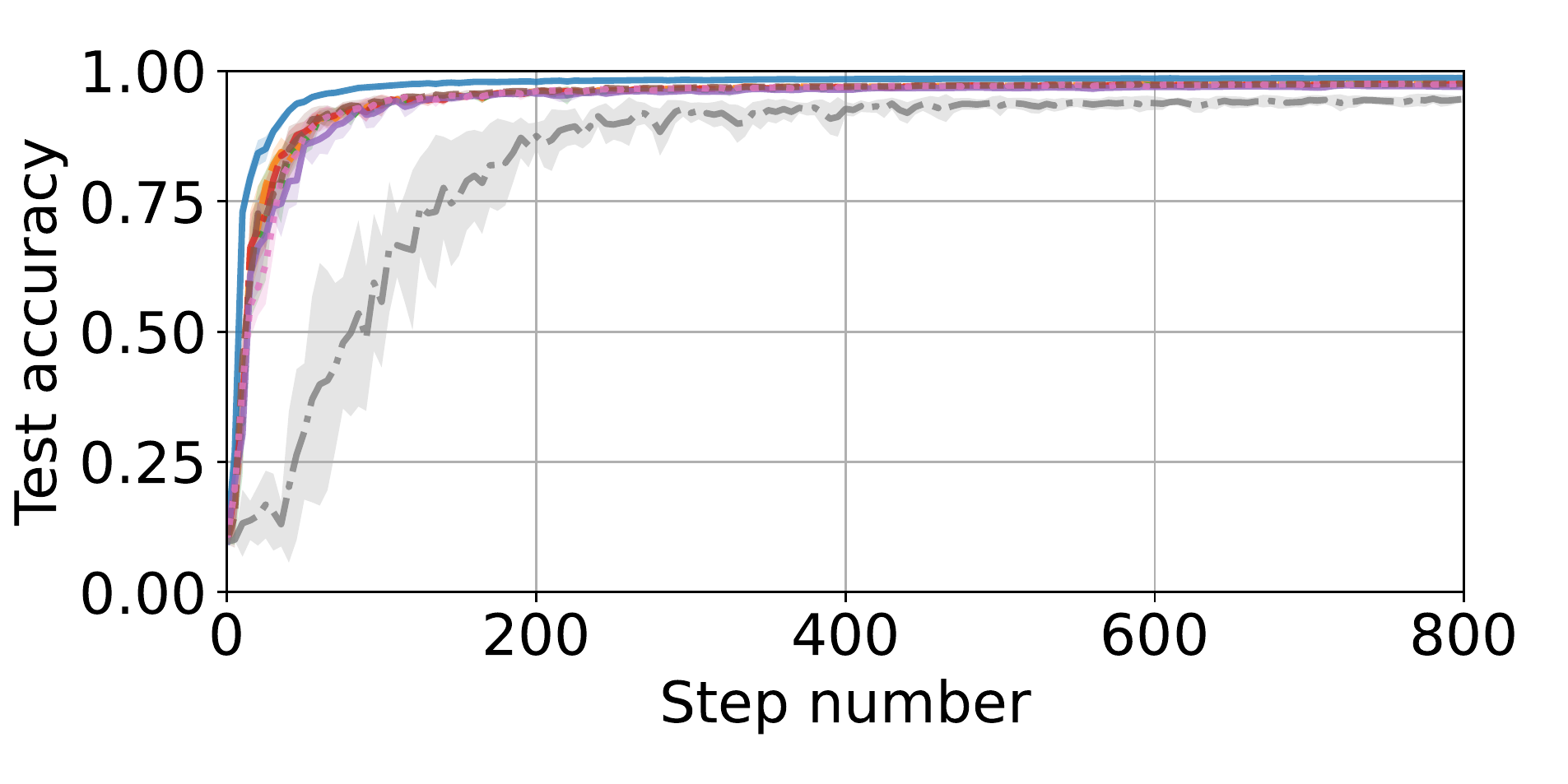}\\%
     \includegraphics[width=0.5\textwidth]{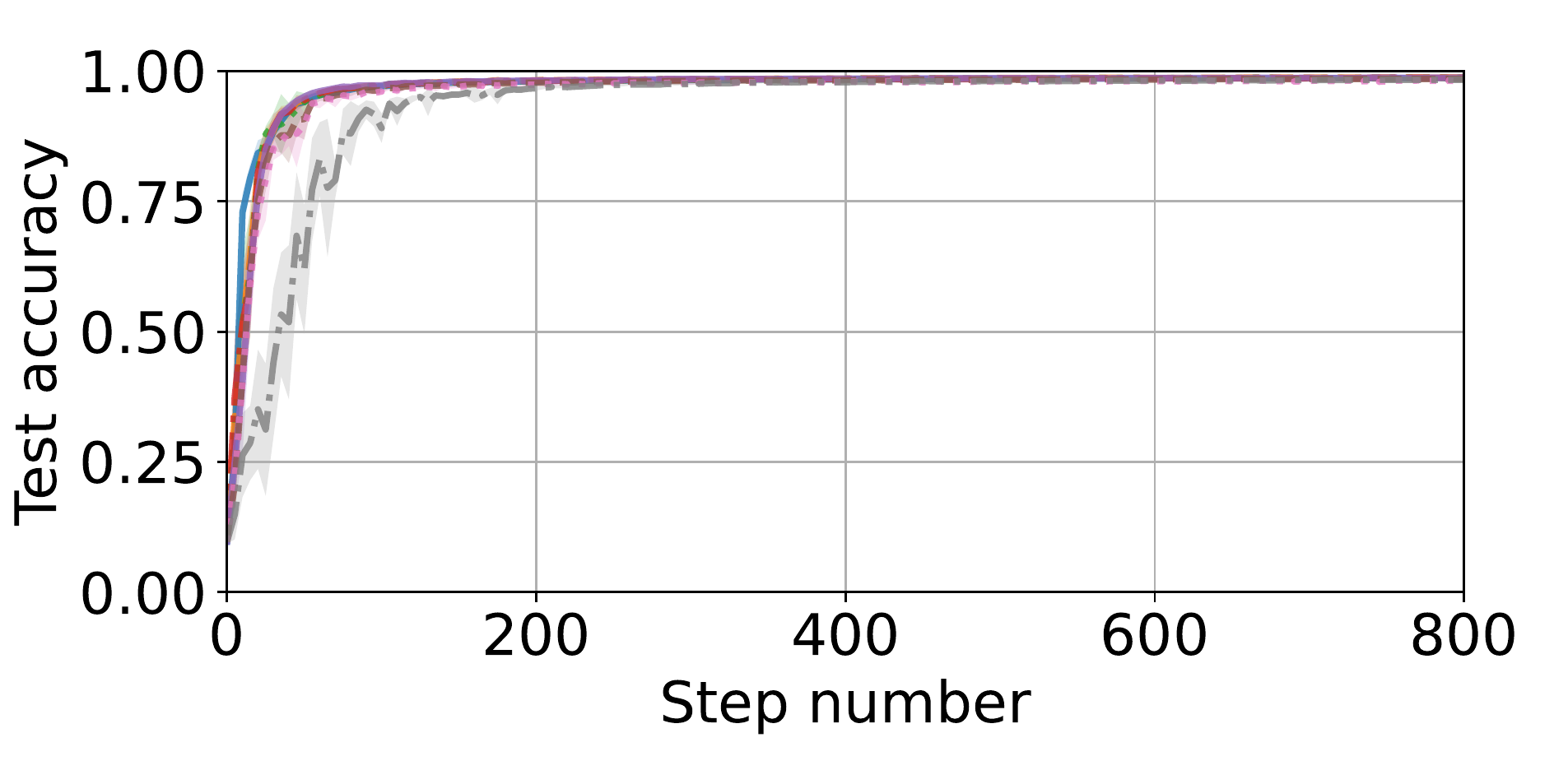}%
    \caption{Experiments on MNIST using robust D-SHB with $f = 4$ among $n = 17$ workers, with $\beta = 0.9$ and $\alpha = 1$. The Byzantine workers execute FOE (\textit{row 1, left}), ALIE (\textit{row 1, right}), Mimic (\textit{row 2, left}), SF (\textit{row 2, right}), and LF (\textit{row 3}).}
\label{fig:plots_mnist_2}
\end{figure*}

\begin{figure*}[ht!]
    \centering
    \includegraphics[width=0.5\textwidth]{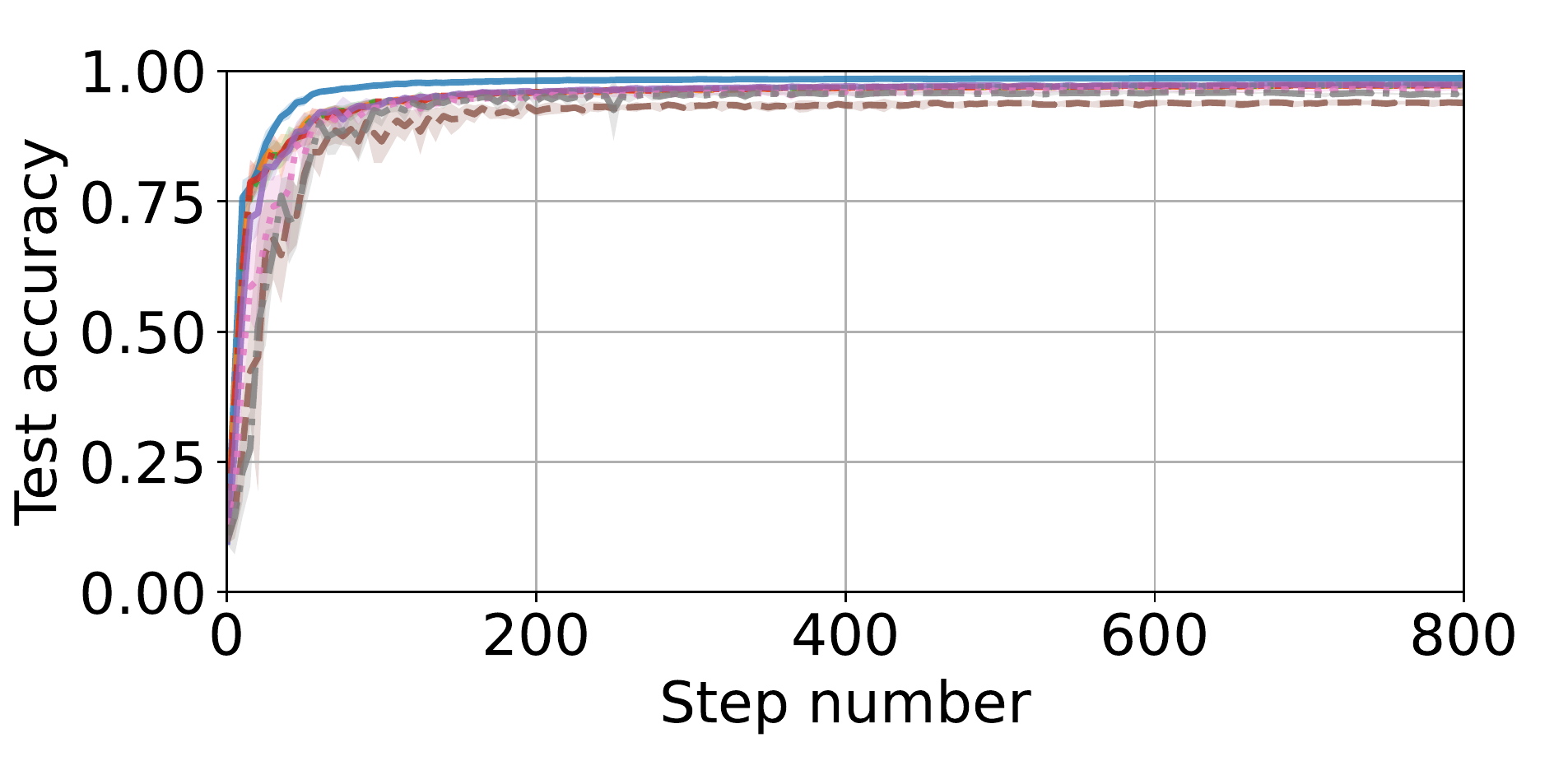}%
    \includegraphics[width=0.5\textwidth]{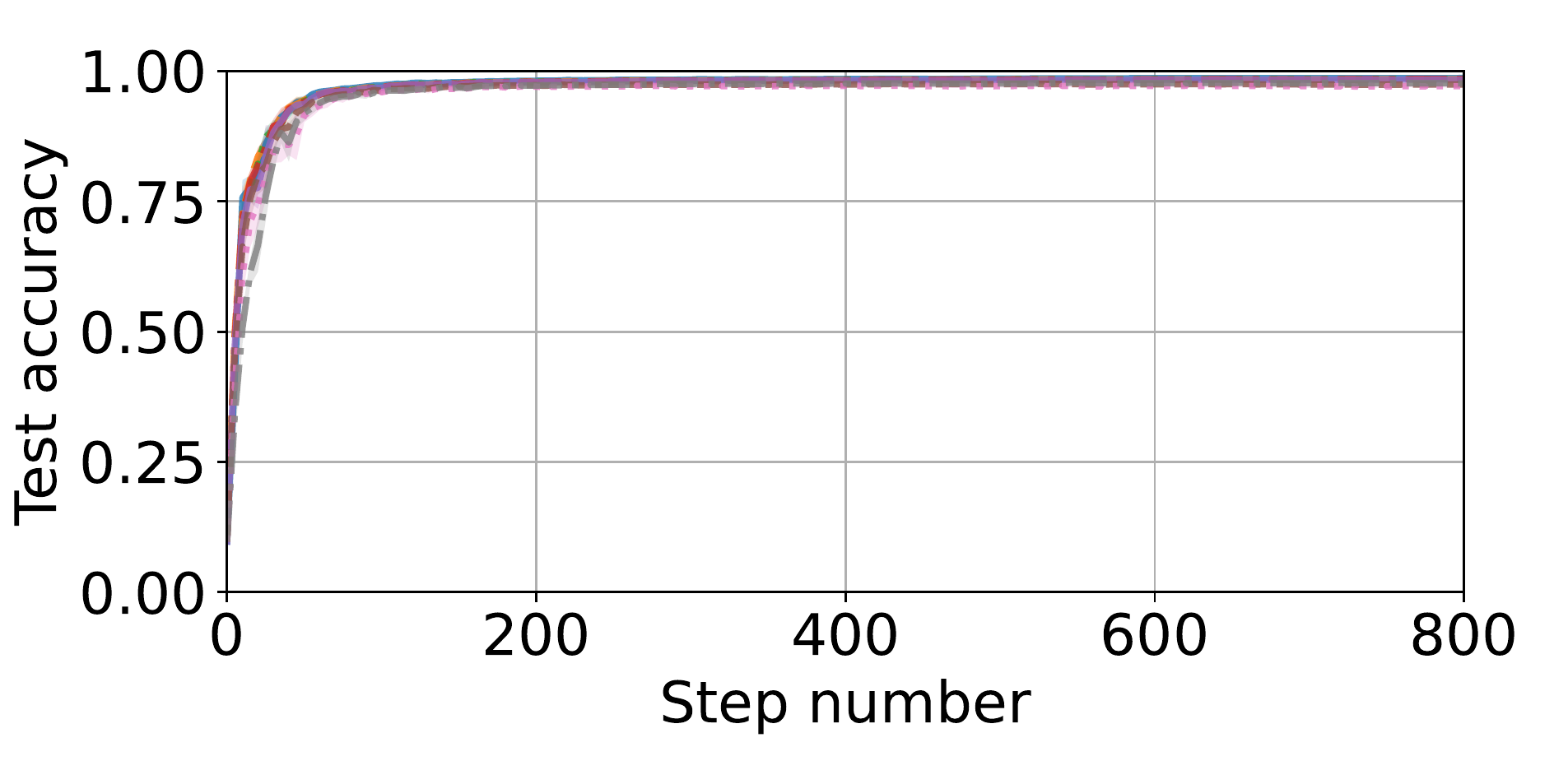}\\%
    \includegraphics[width=0.5\textwidth]{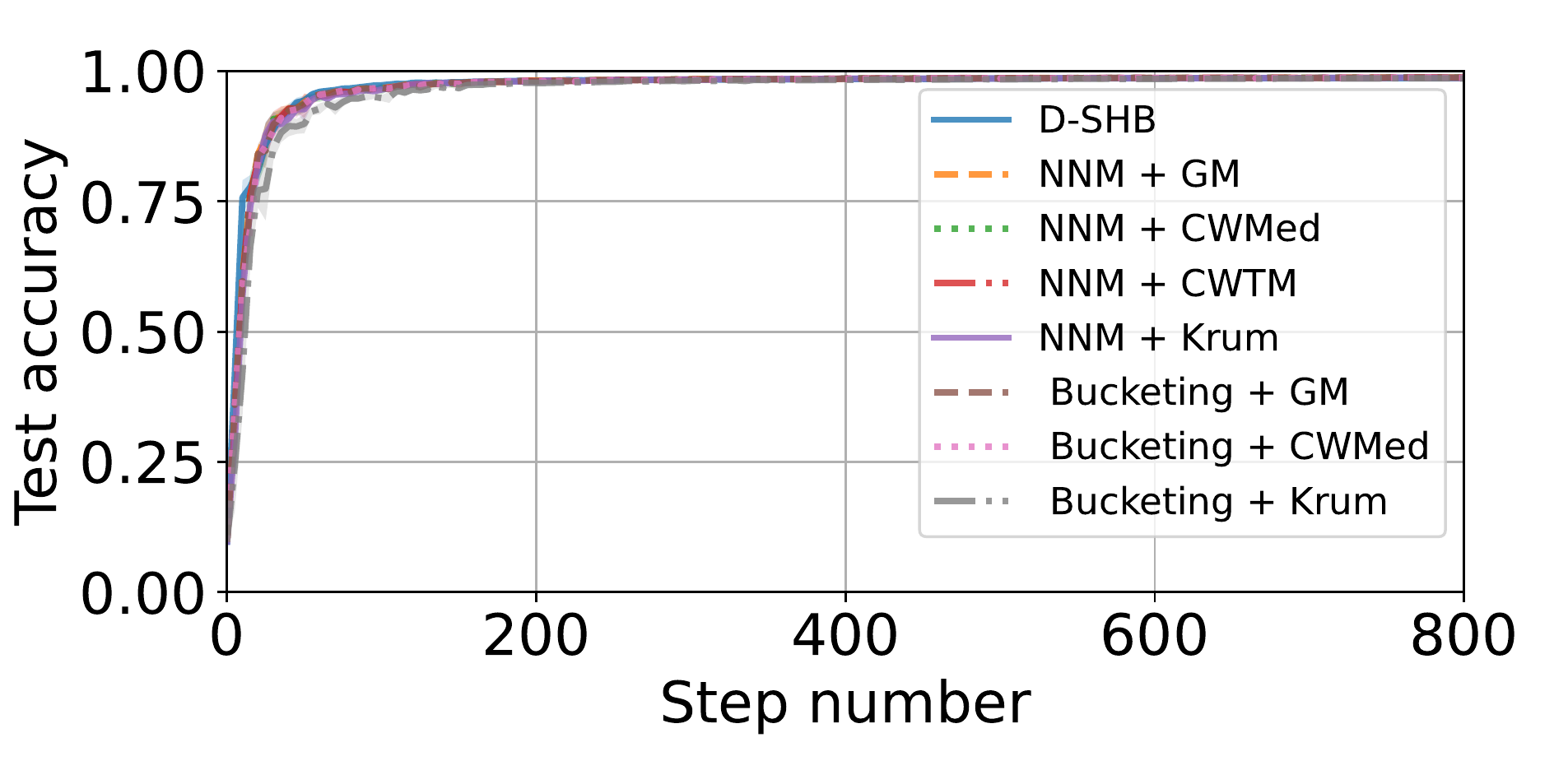}%
    \includegraphics[width=0.5\textwidth]{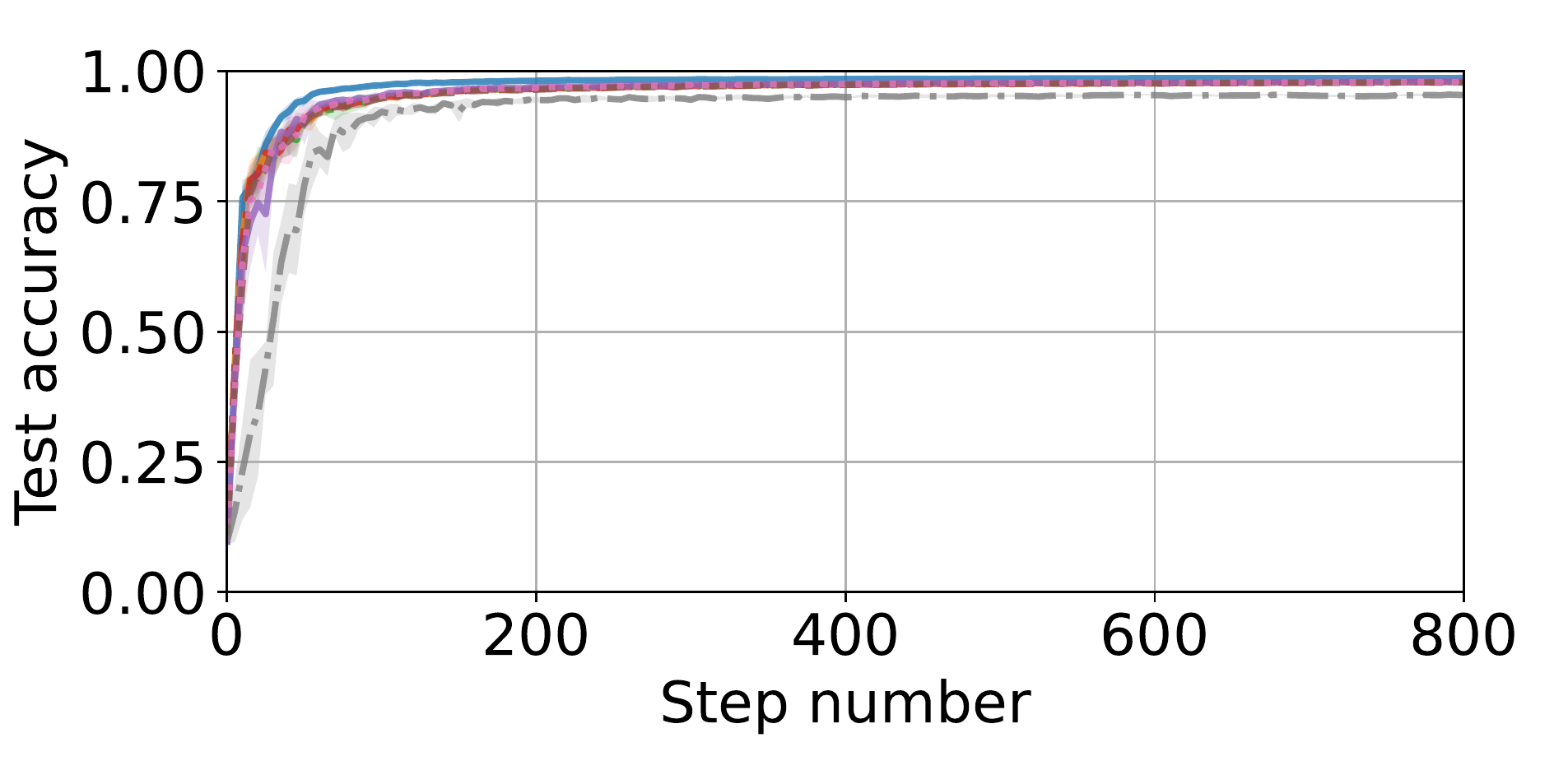}\\%
     \includegraphics[width=0.5\textwidth]{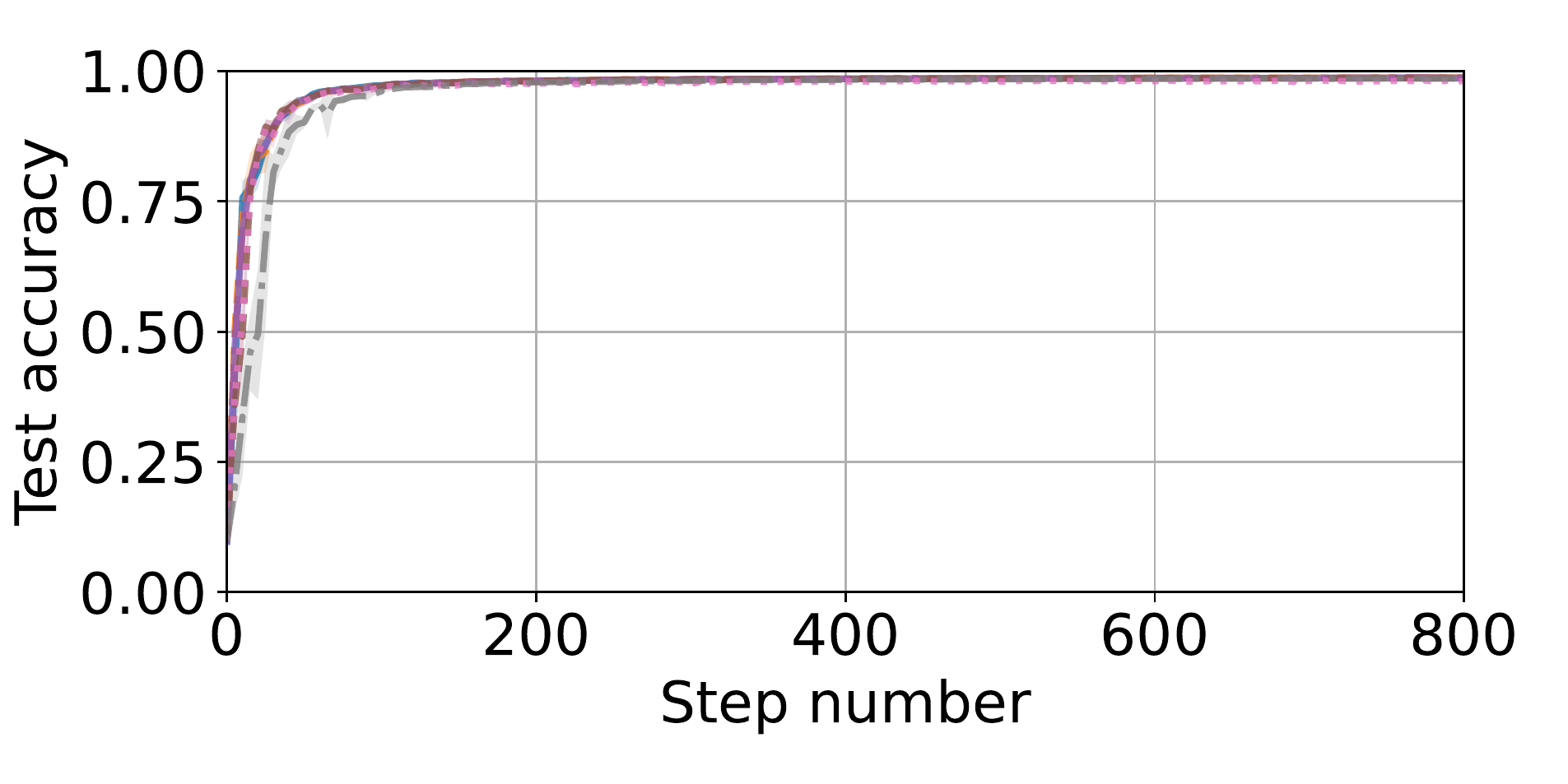}%
    \caption{Experiments on MNIST using robust D-SHB with $f = 4$ among $n = 17$ workers, with $\beta = 0.9$ and $\alpha = 10$. The Byzantine workers execute FOE (\textit{row 1, left}), ALIE (\textit{row 1, right}), Mimic (\textit{row 2, left}), SF (\textit{row 2, right}), and LF (\textit{row 3}).}
\label{fig:plots_mnist_3}
\end{figure*}

\begin{figure*}[ht!]
    \centering
    \includegraphics[width=0.5\textwidth]{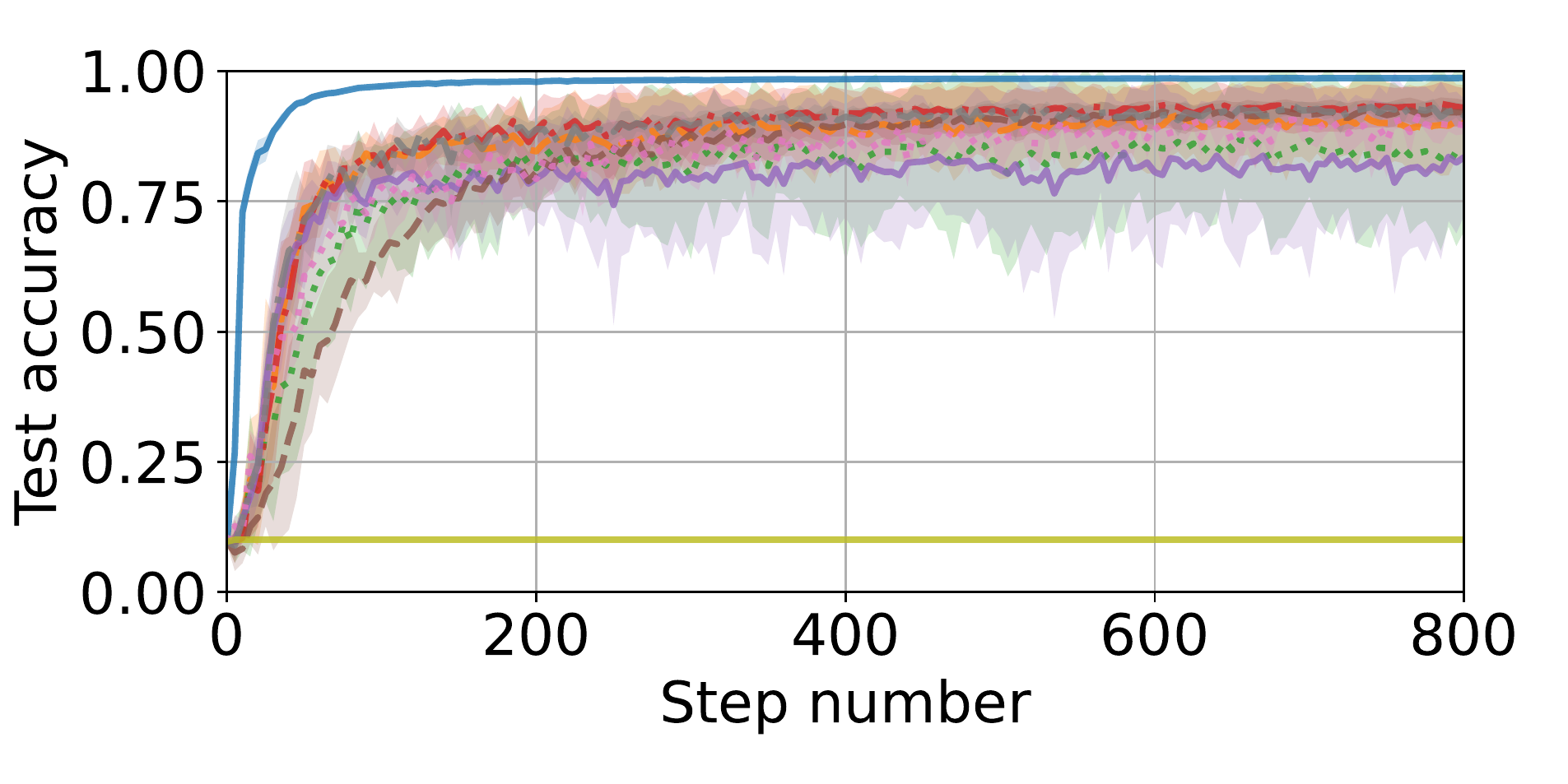}%
    \includegraphics[width=0.5\textwidth]{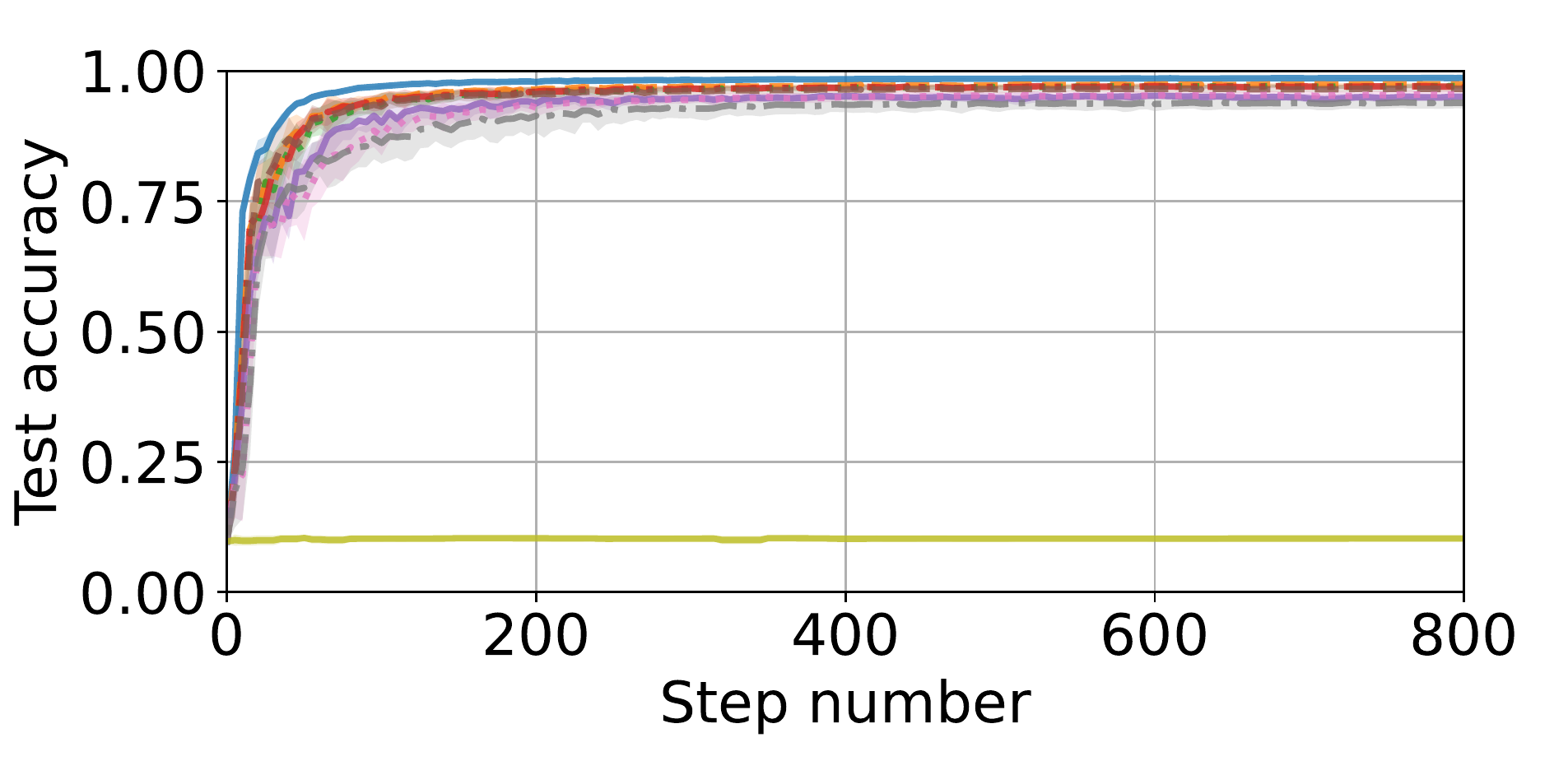}\\%
    \includegraphics[width=0.5\textwidth]{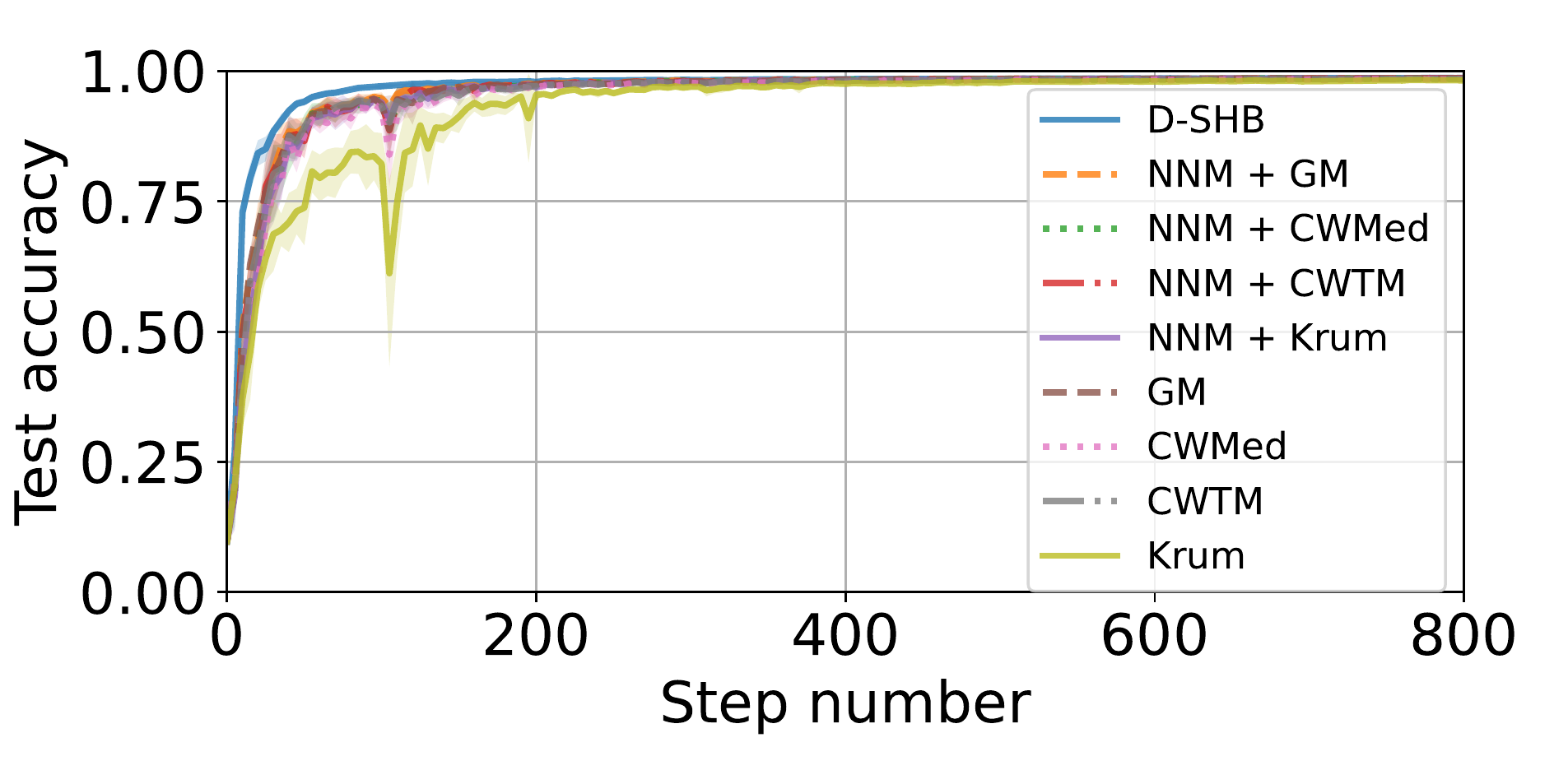}%
    \includegraphics[width=0.5\textwidth]{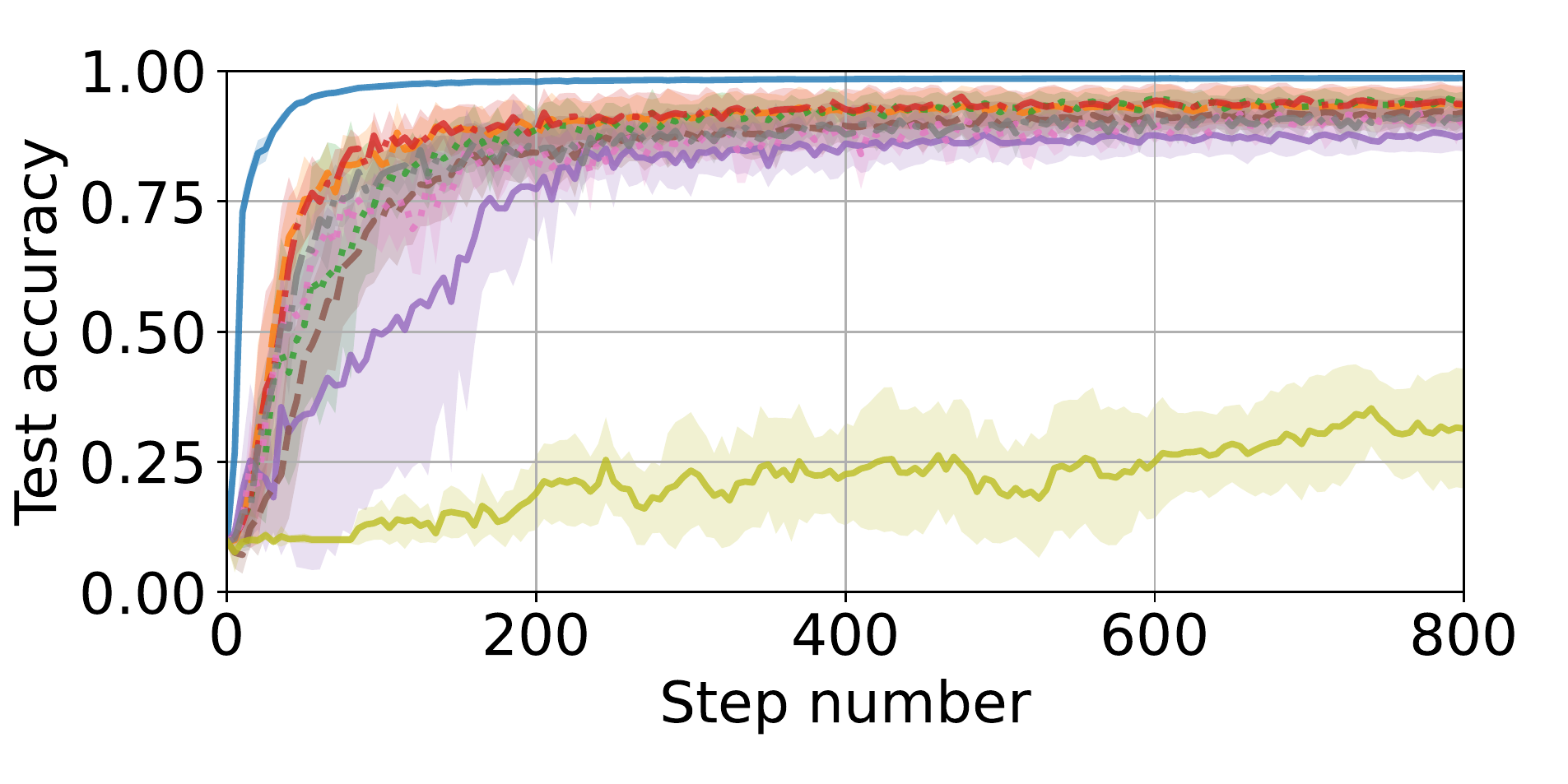}\\%
    \includegraphics[width=0.5\textwidth]{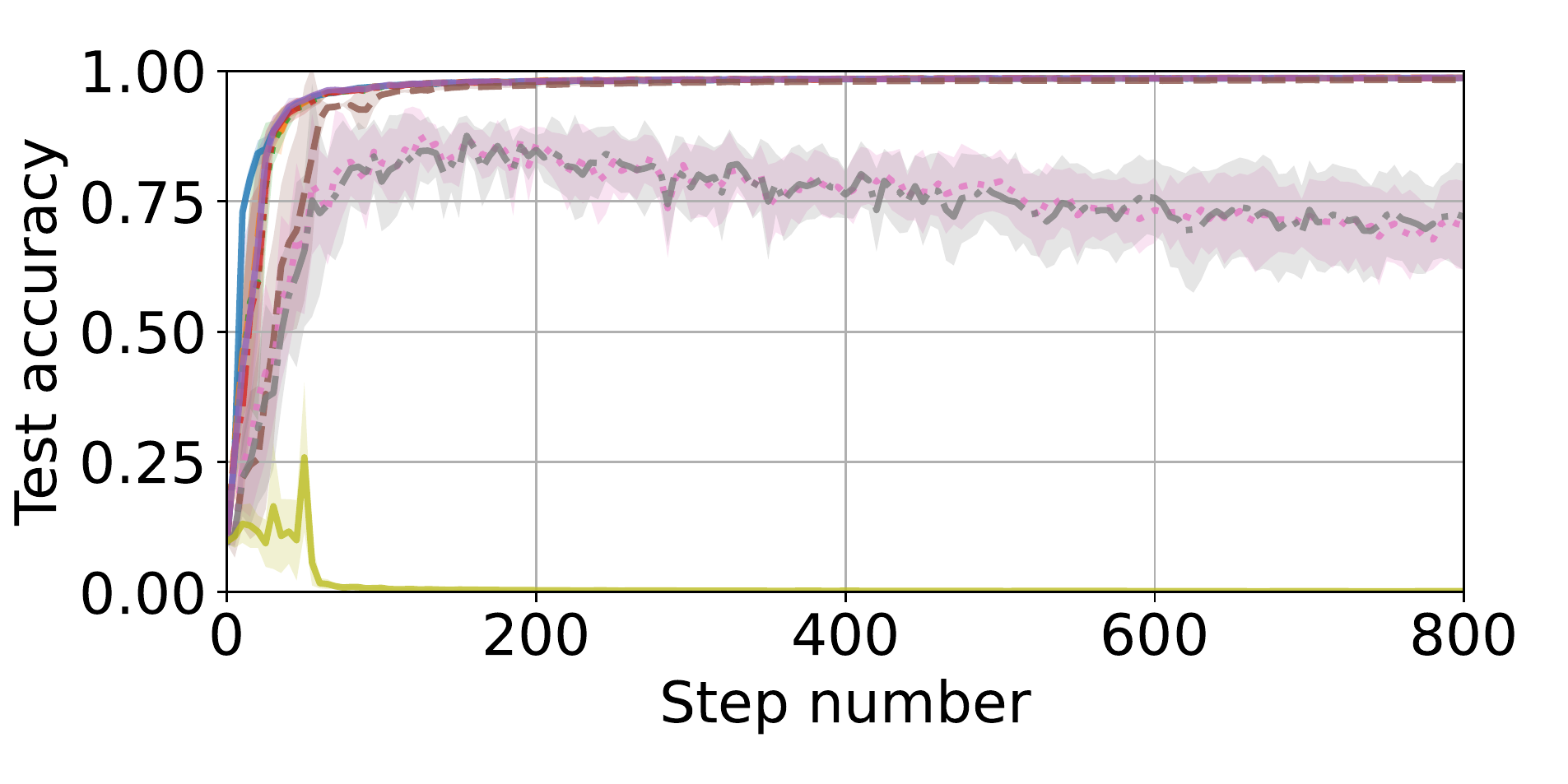}%
    \caption{Experiments on MNIST using robust D-SHB with $f = 6$ among $n = 17$ workers, with $\beta = 0.9$ and $\alpha = 1$. The Byzantine workers execute FOE (\textit{row 1, left}), ALIE (\textit{row 1, right}), Mimic (\textit{row 2, left}), SF (\textit{row 2, right}), and LF (\textit{row 3}).}
\label{fig:plots_mnist_4}
\end{figure*}

\begin{figure*}[ht!]
    \centering
    \includegraphics[width=0.5\textwidth]{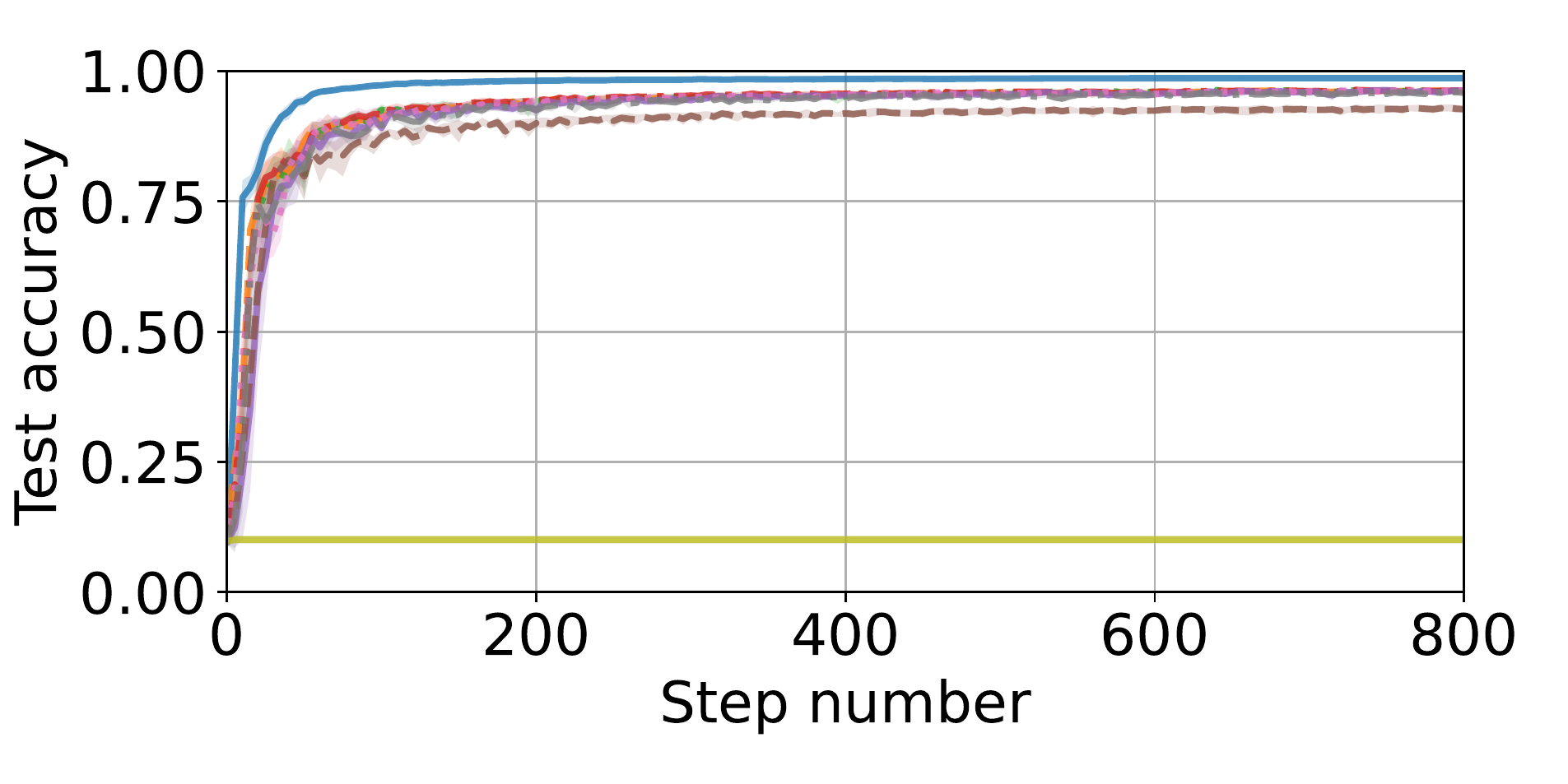}%
    \includegraphics[width=0.5\textwidth]{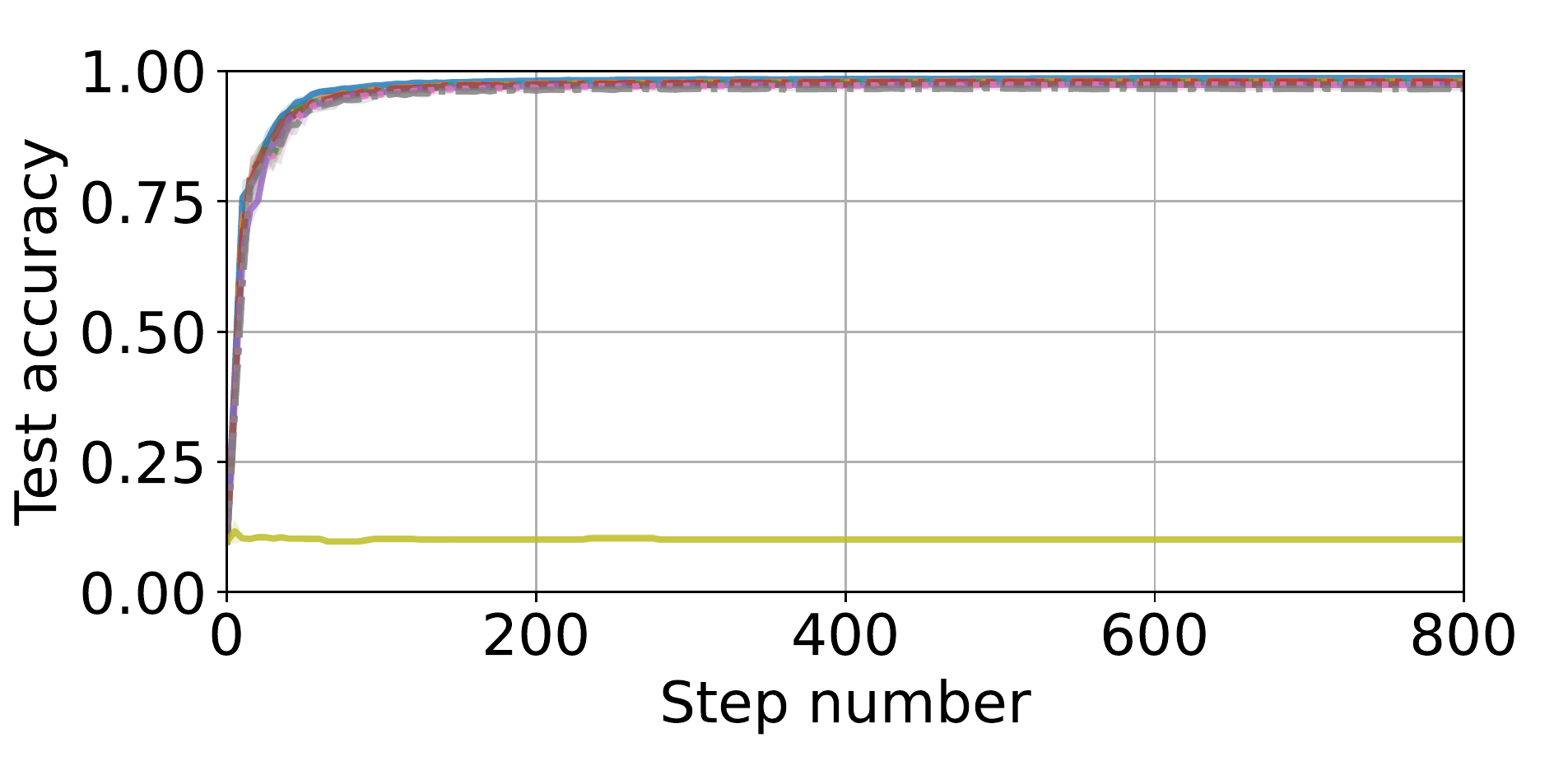}\\%
    \includegraphics[width=0.5\textwidth]{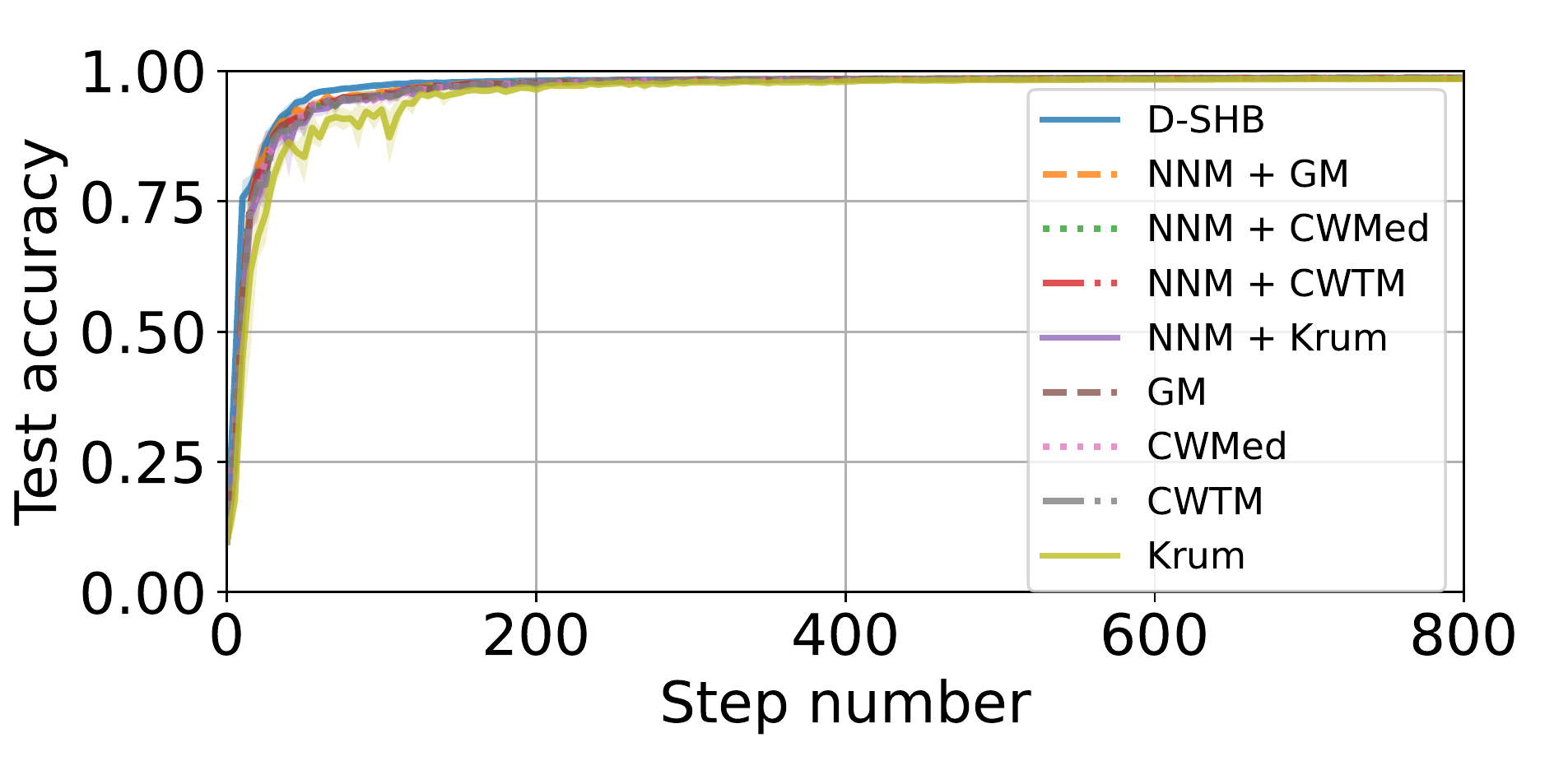}%
    \includegraphics[width=0.5\textwidth]{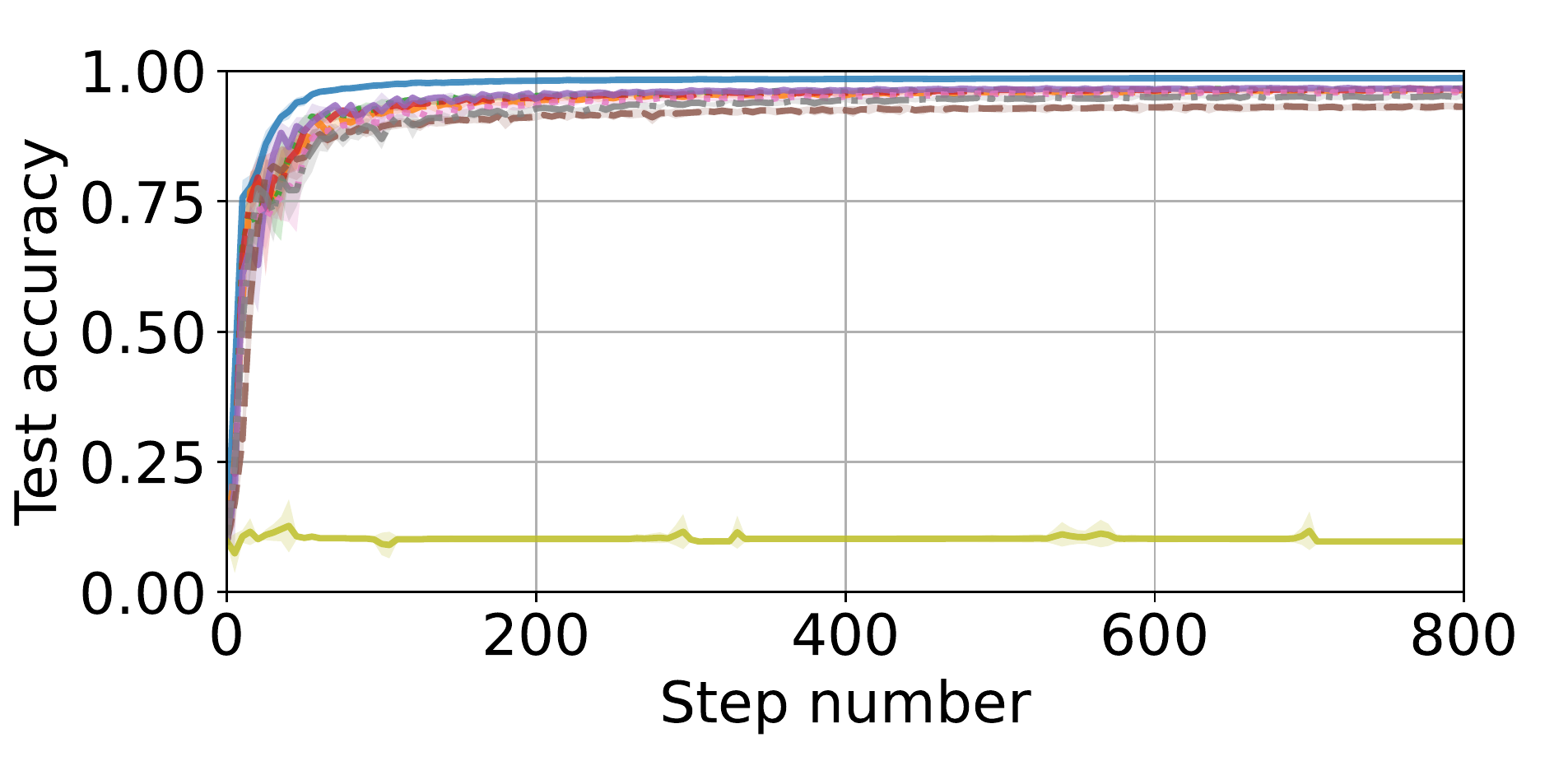}\\%
     \includegraphics[width=0.5\textwidth]{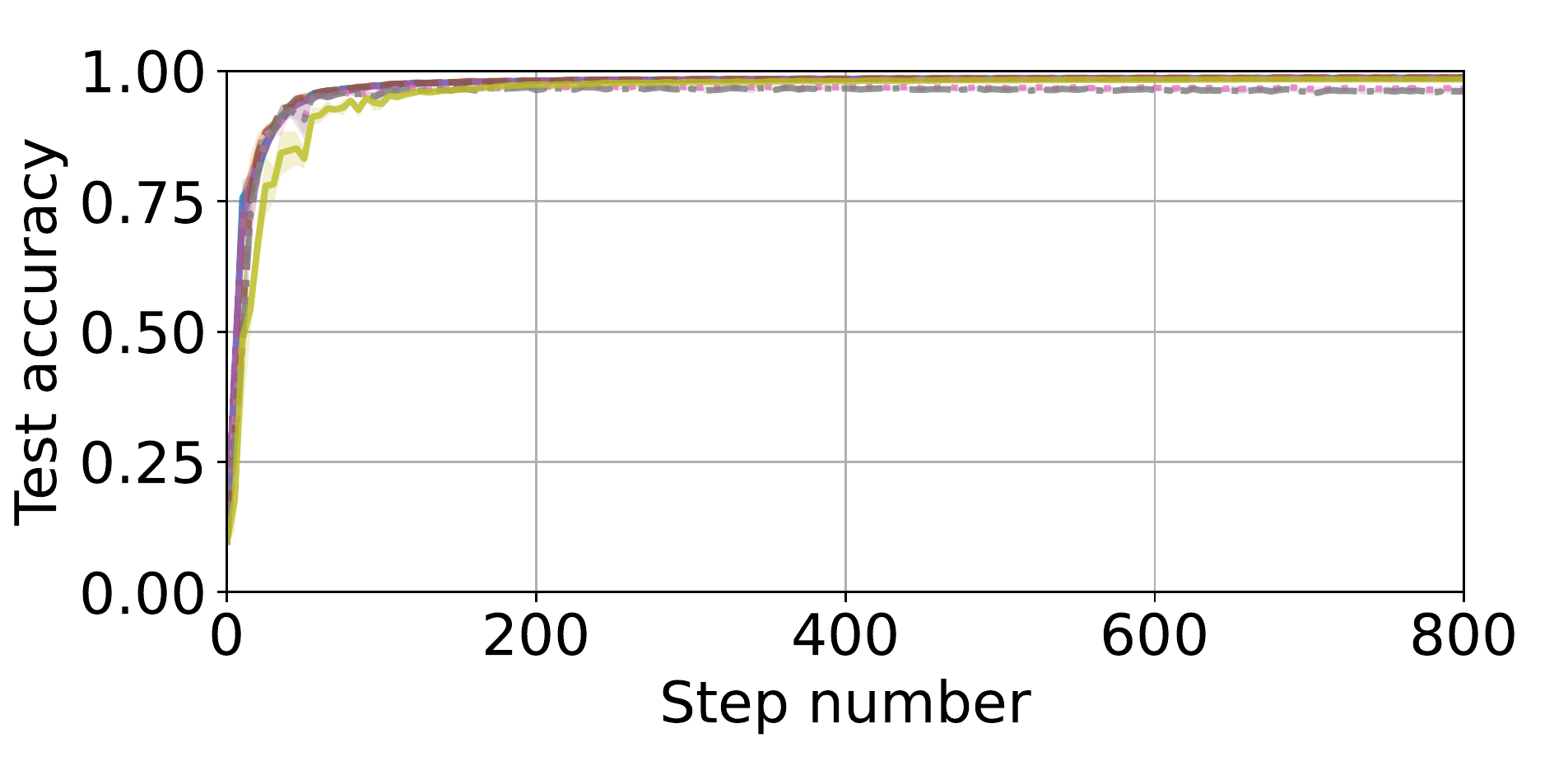}%
    \caption{Experiments on MNIST using robust D-SHB with $f = 6$ among $n = 17$ workers, with $\beta = 0.9$ and $\alpha = 10$. The Byzantine workers execute FOE (\textit{row 1, left}), ALIE (\textit{row 1, right}), Mimic (\textit{row 2, left}), SF (\textit{row 2, right}), and LF (\textit{row 3}).}
\label{fig:plots_mnist_5}
\end{figure*}

\begin{figure*}[ht!]
    \centering
    \includegraphics[width=0.5\textwidth]{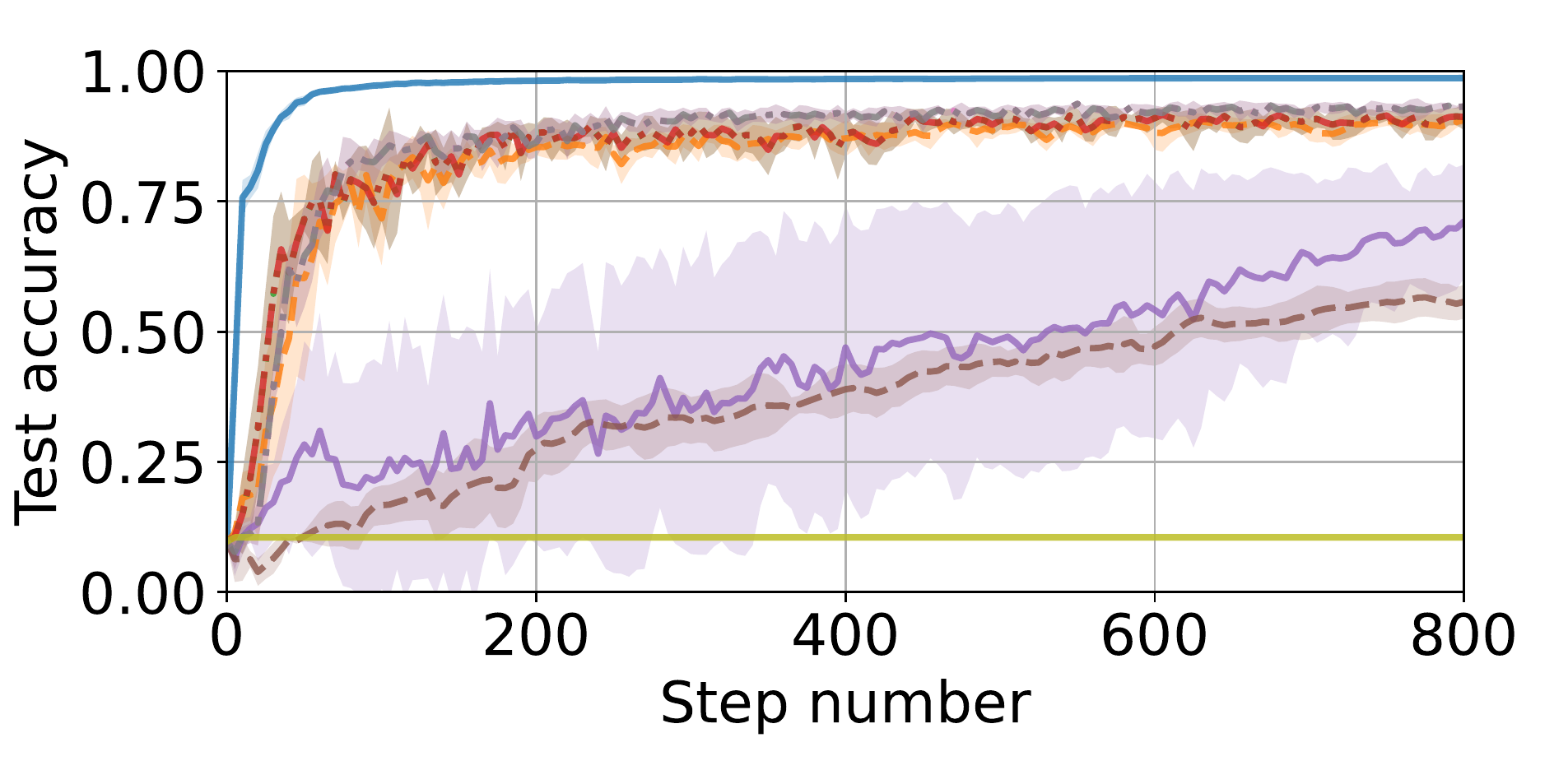}%
    \includegraphics[width=0.5\textwidth]{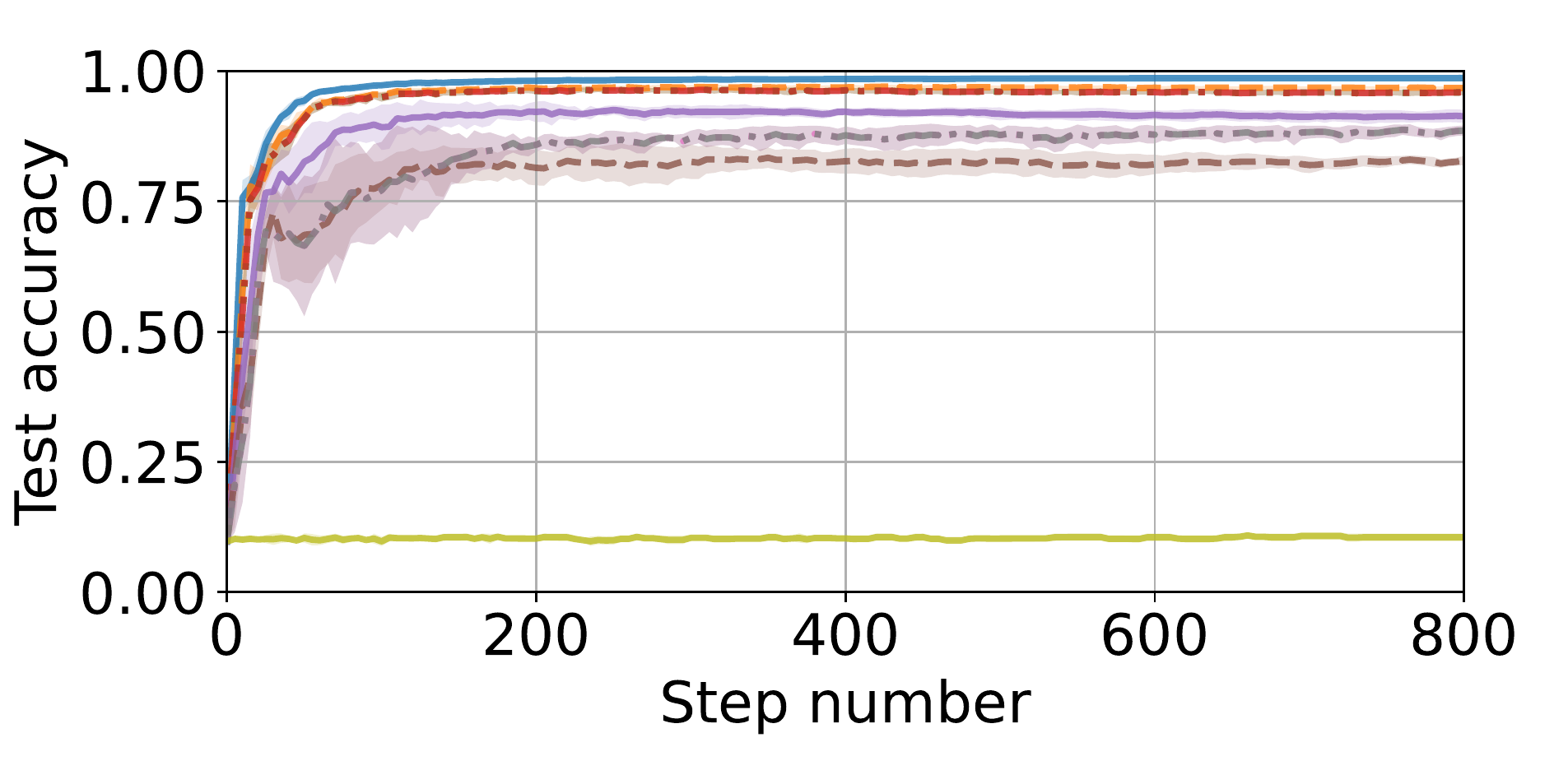}\\%
    \includegraphics[width=0.5\textwidth]{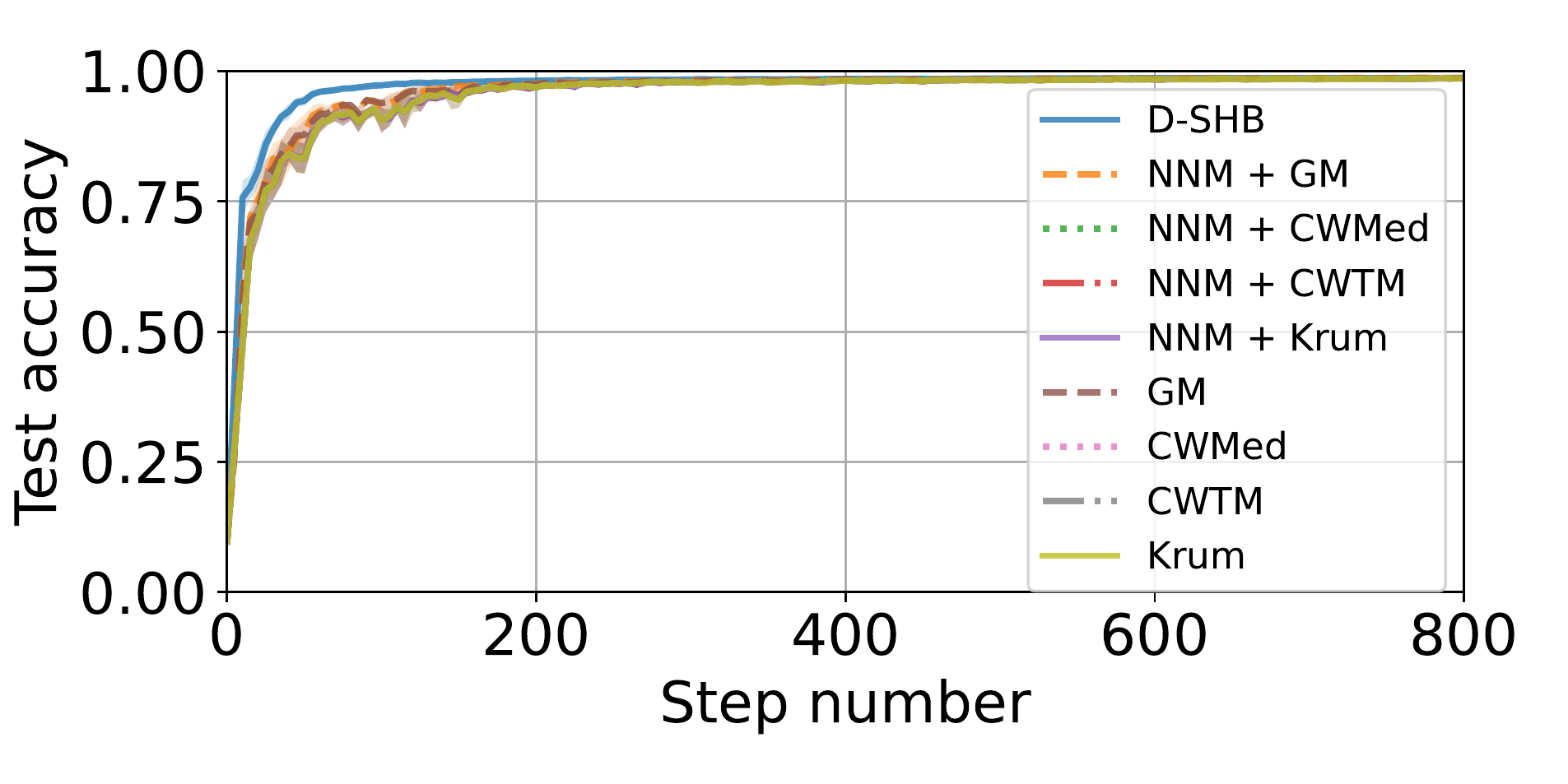}%
    \includegraphics[width=0.5\textwidth]{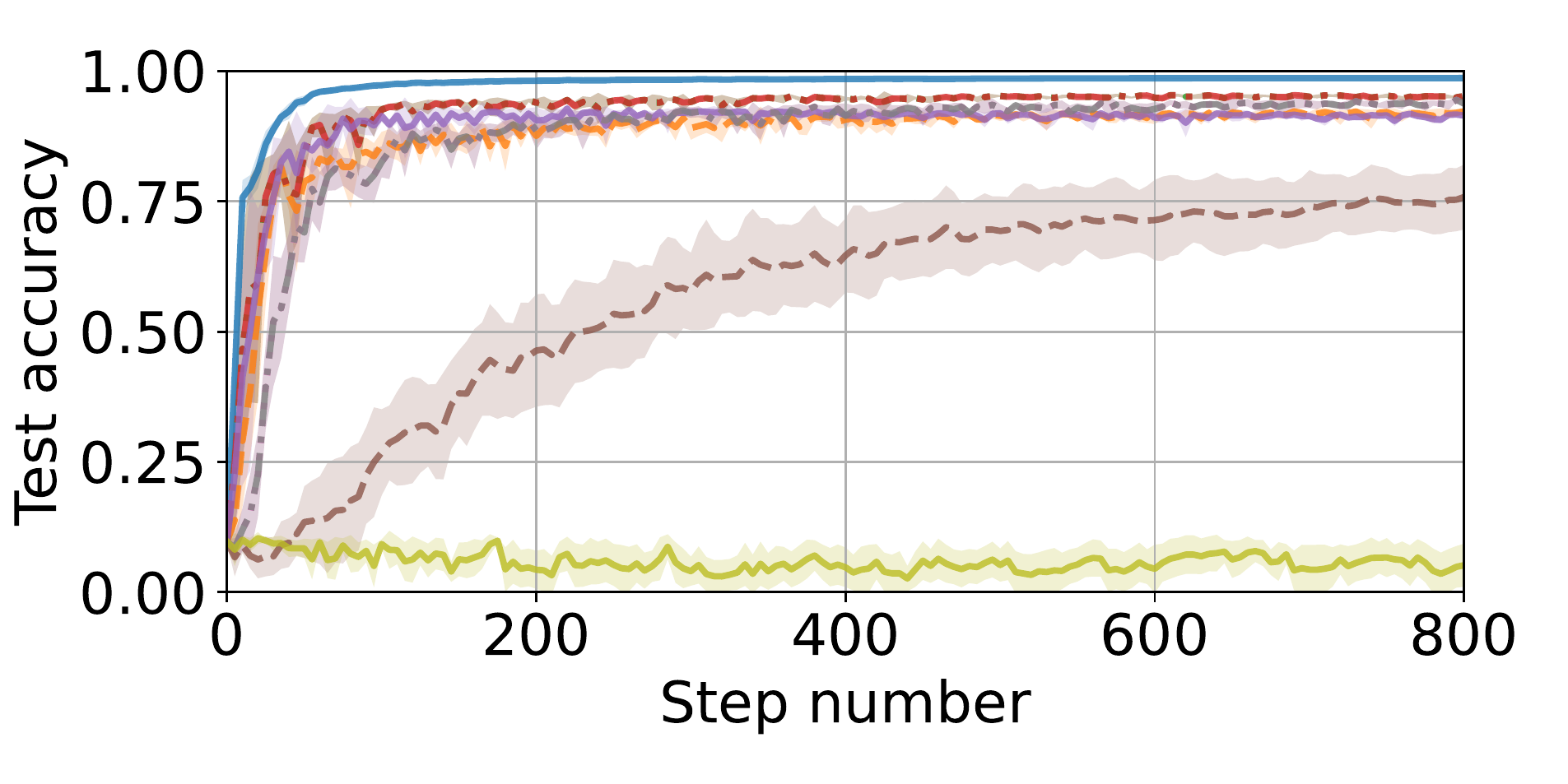}\\%
     \includegraphics[width=0.5\textwidth]{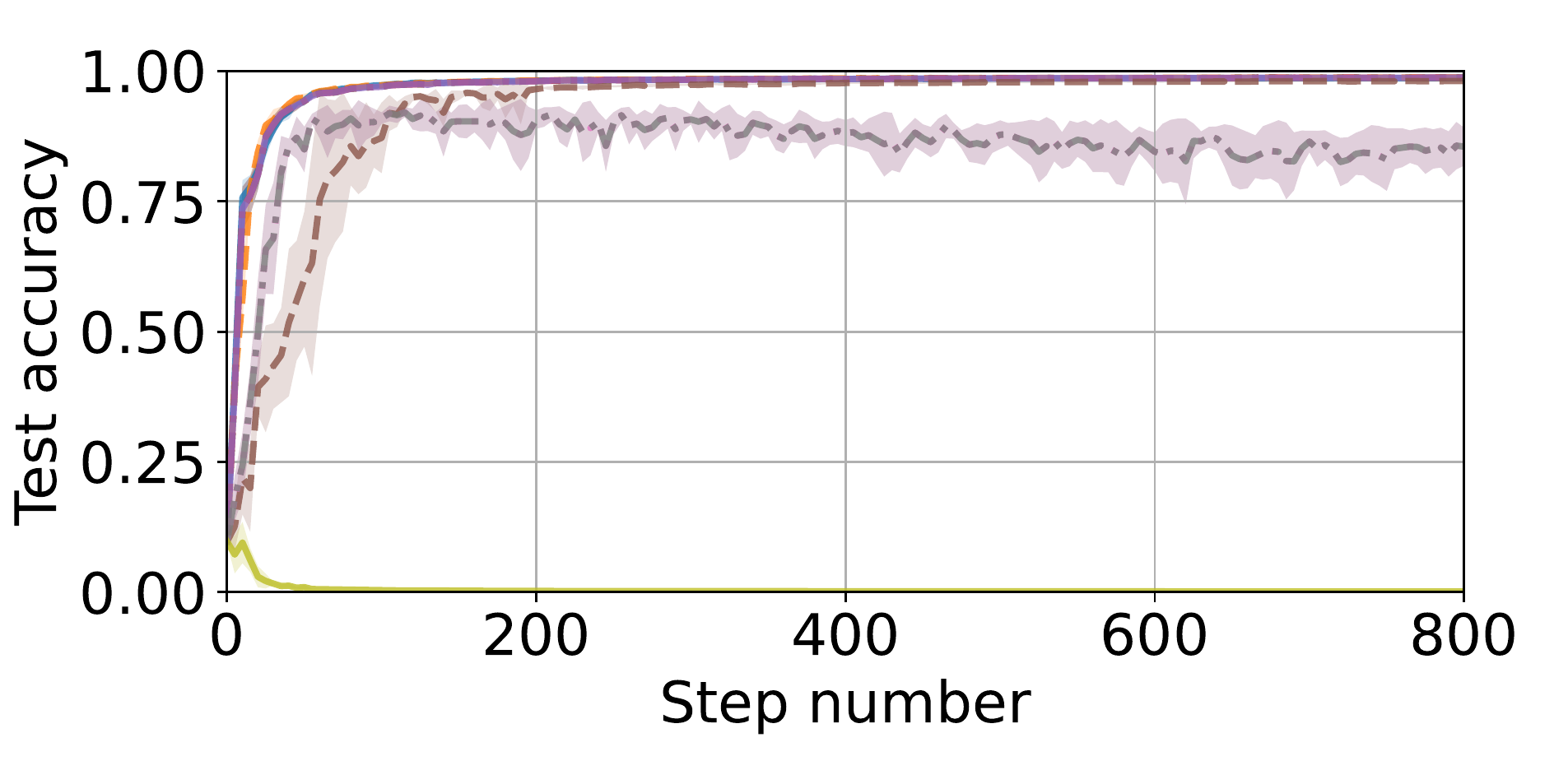}%
    \caption{Experiments on MNIST using robust D-SHB with $f = 8$ among $n = 17$ workers, with $\beta = 0.9$ and $\alpha = 10$. The Byzantine workers execute FOE (\textit{row 1, left}), ALIE (\textit{row 1, right}), Mimic (\textit{row 2, left}), SF (\textit{row 2, right}), and LF (\textit{row 3}).}
\label{fig:plots_mnist_6}
\end{figure*}

\clearpage
\subsubsection{Results on D-GD}
We also test our algorithm with robust D-GD on MNIST in one Byzantine regime: $f=4$ Byzantine workers among a total of $n=17$ workers. We also consider three heterogeneity settings: $\alpha=0.1$, $\alpha=1$, and $\alpha=10$. The results are shown below, and convey the same observations made in Appendix~\ref{app:exp_results_mnist}.

\begin{figure*}[ht!]
    \centering
    \includegraphics[width=0.5\textwidth]{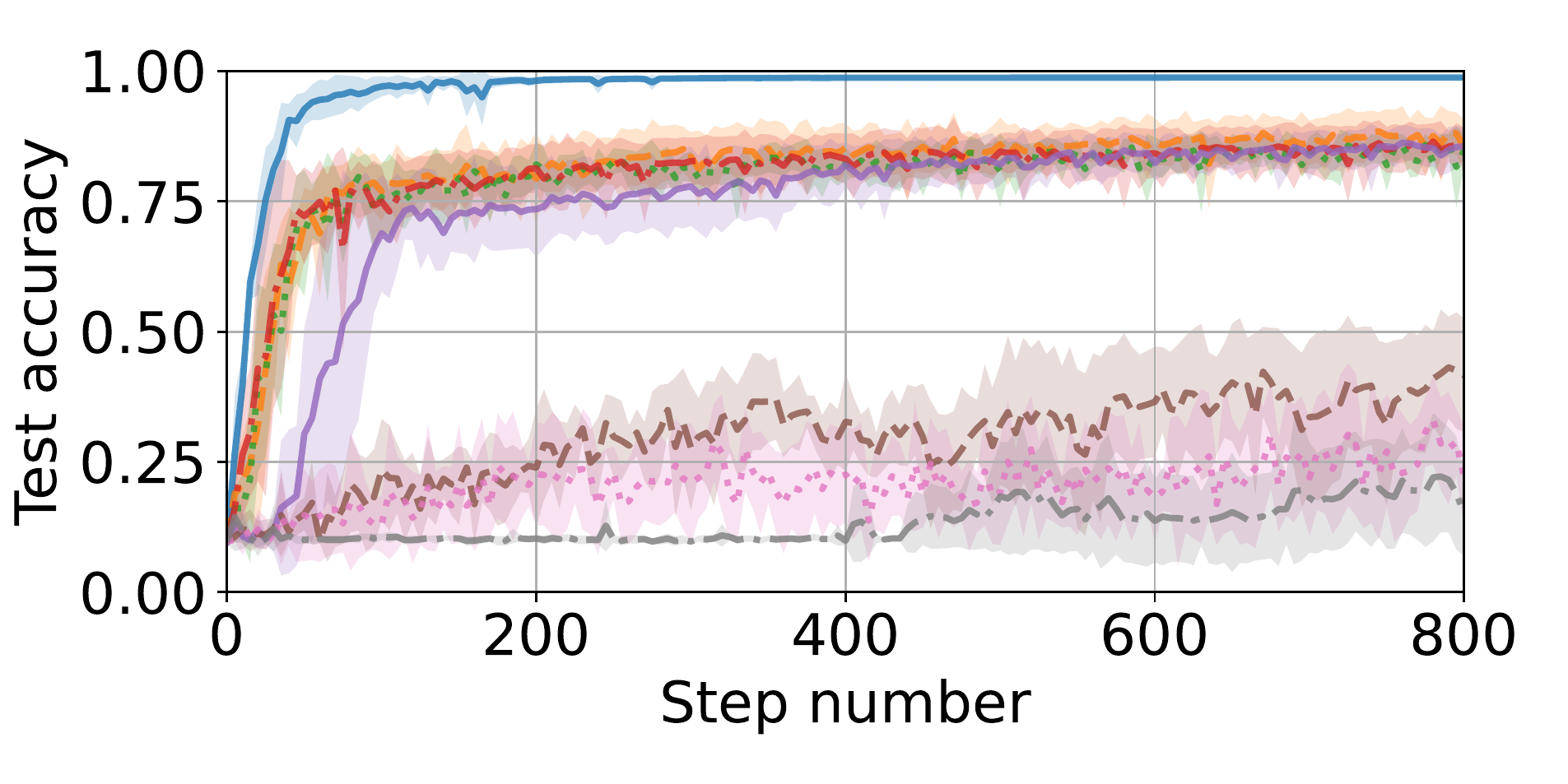}%
    \includegraphics[width=0.5\textwidth]{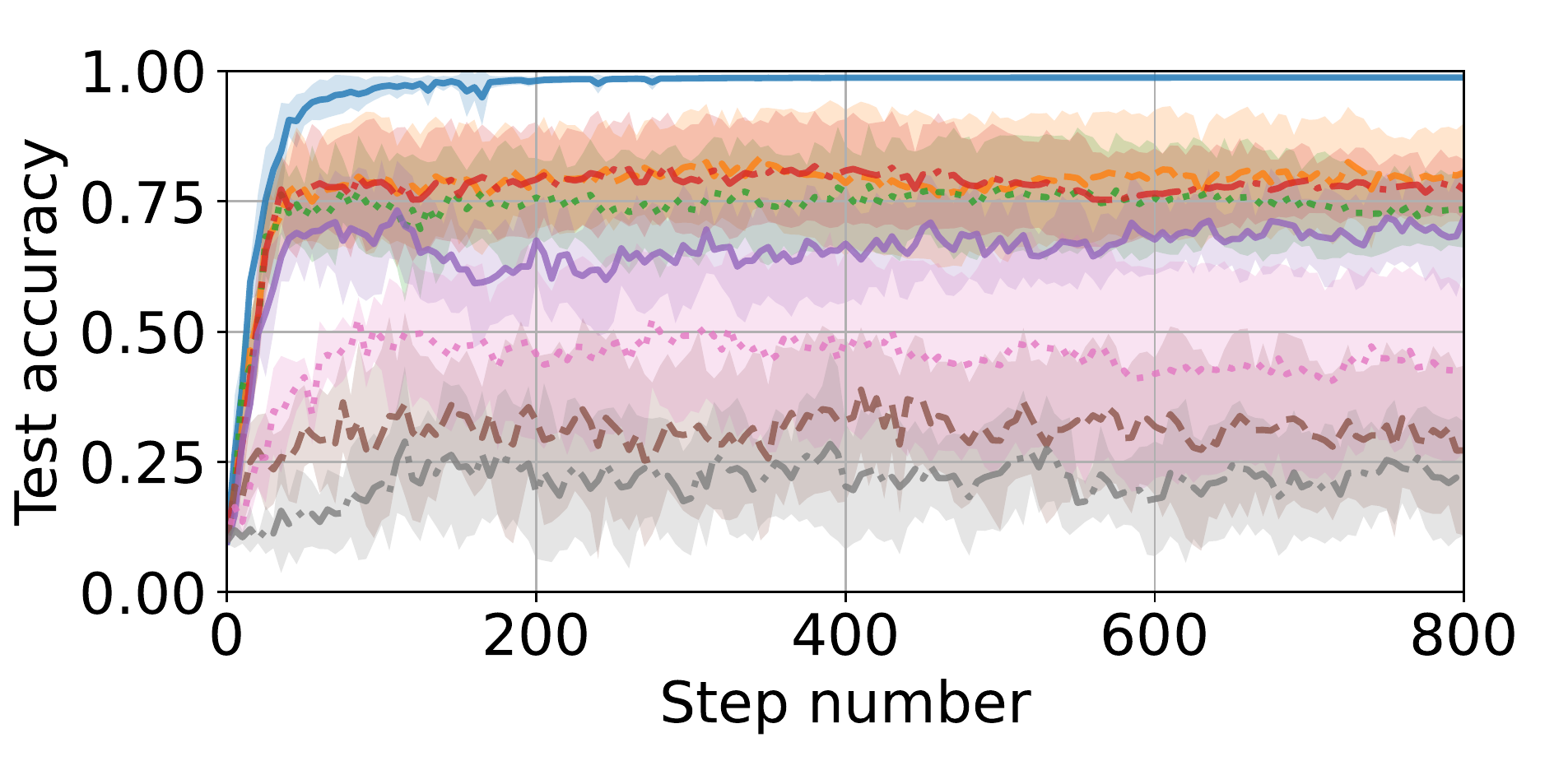}\\%
    \includegraphics[width=0.5\textwidth]{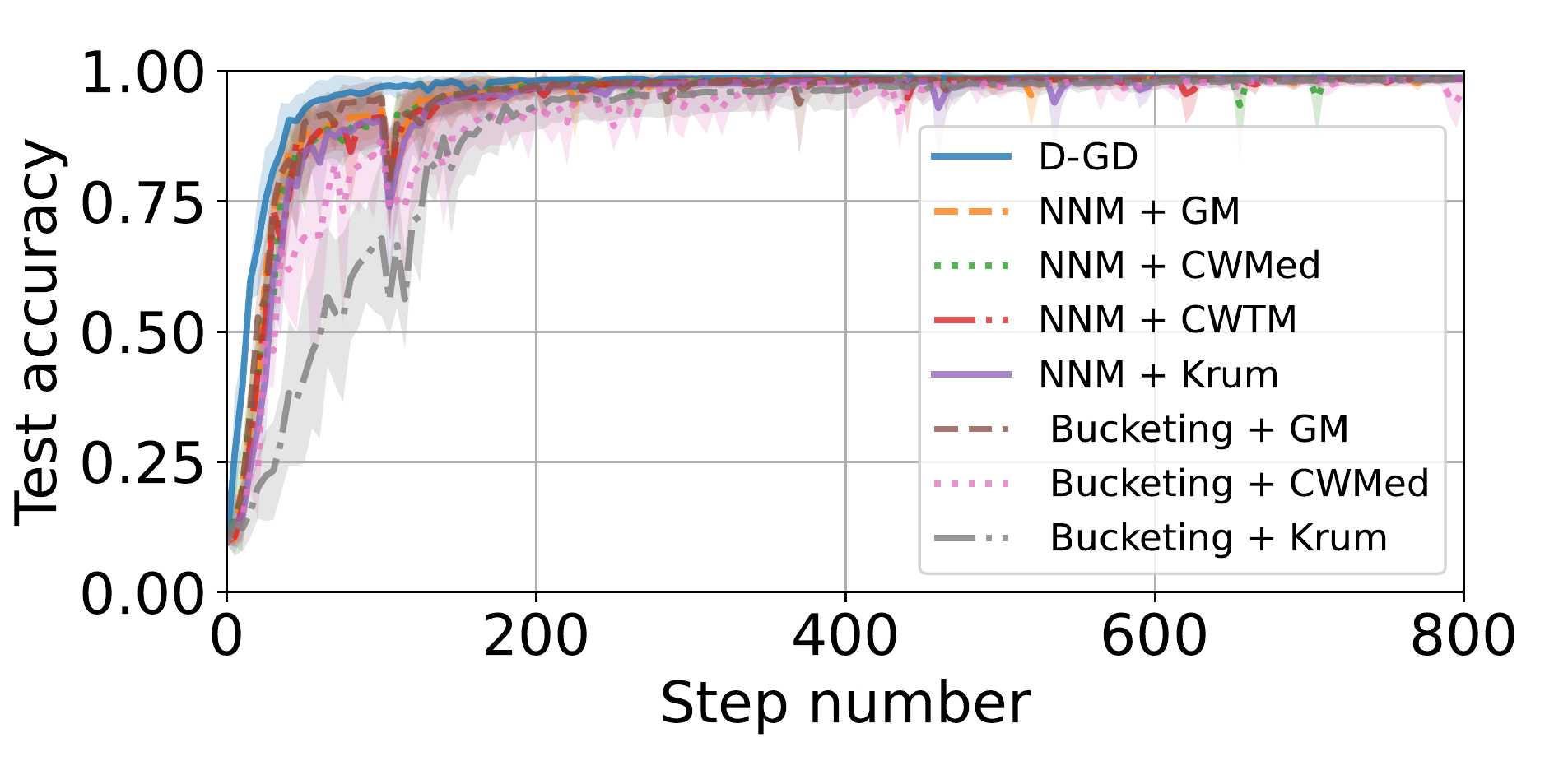}%
    \includegraphics[width=0.5\textwidth]{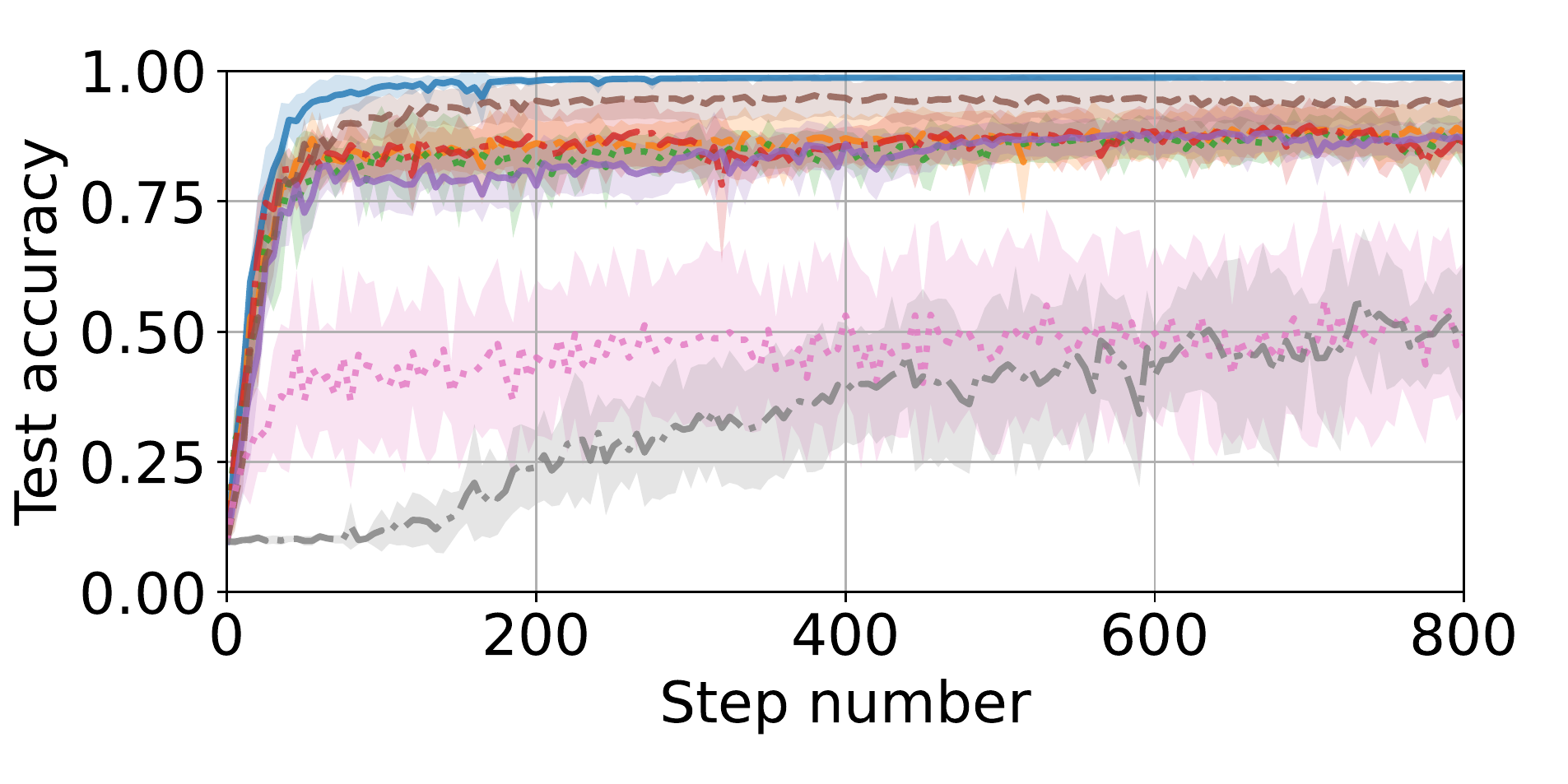}\\%
     \includegraphics[width=0.5\textwidth]{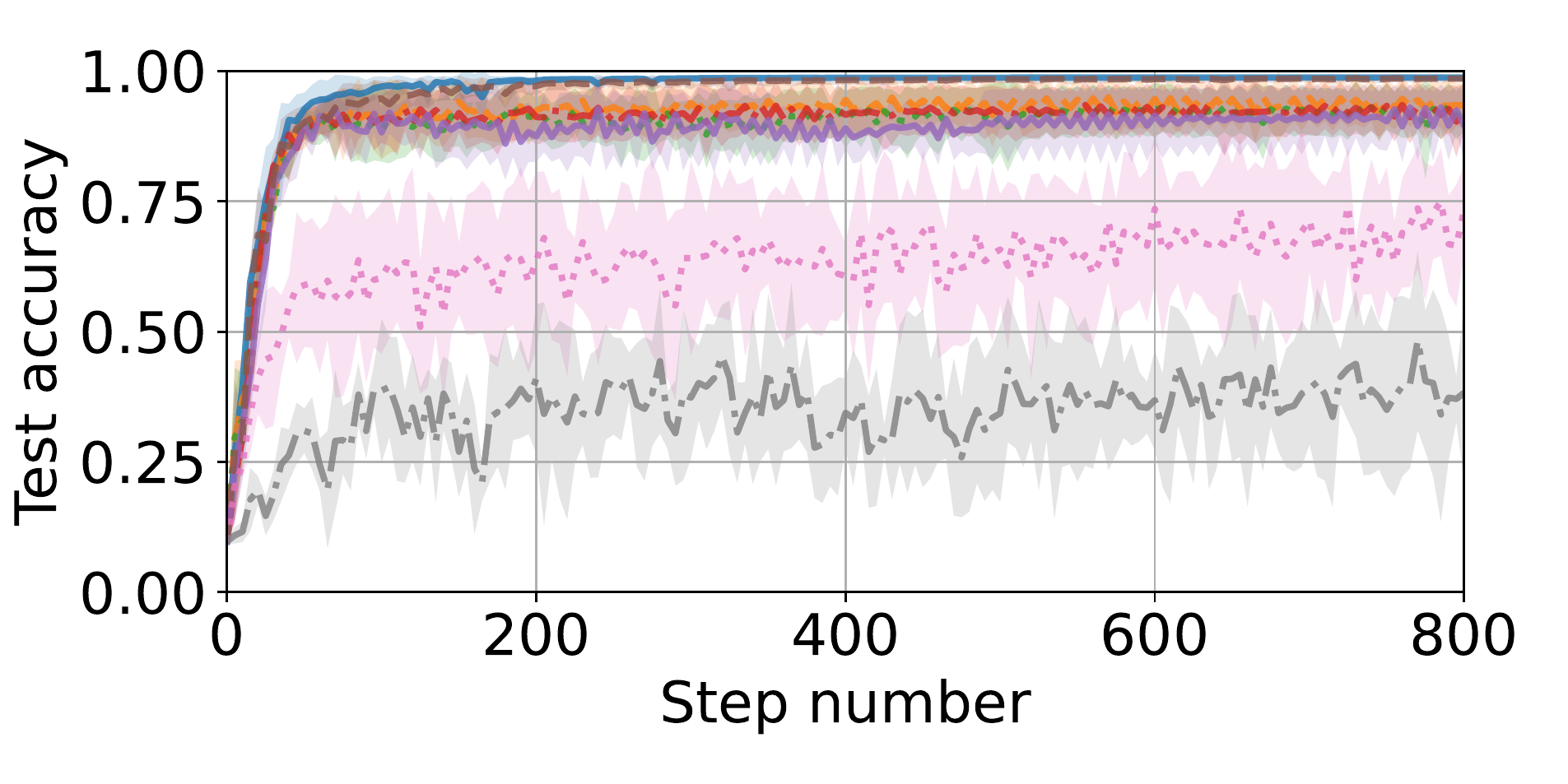}%
    \caption{Experiments on MNIST using robust D-GD with $f = 4$ among $n = 17$ workers, with $\alpha = 0.1$. The Byzantine workers execute the FOE (\textit{row 1, left}), ALIE (\textit{row 1, right}), Mimic (\textit{row 2, left}), SF (\textit{row 2, right}), and LF (\textit{row 3}) attacks.}
\label{fig:plots_mnist_gd}
\end{figure*}

\begin{figure*}[ht!]
    \centering
    \includegraphics[width=0.5\textwidth]{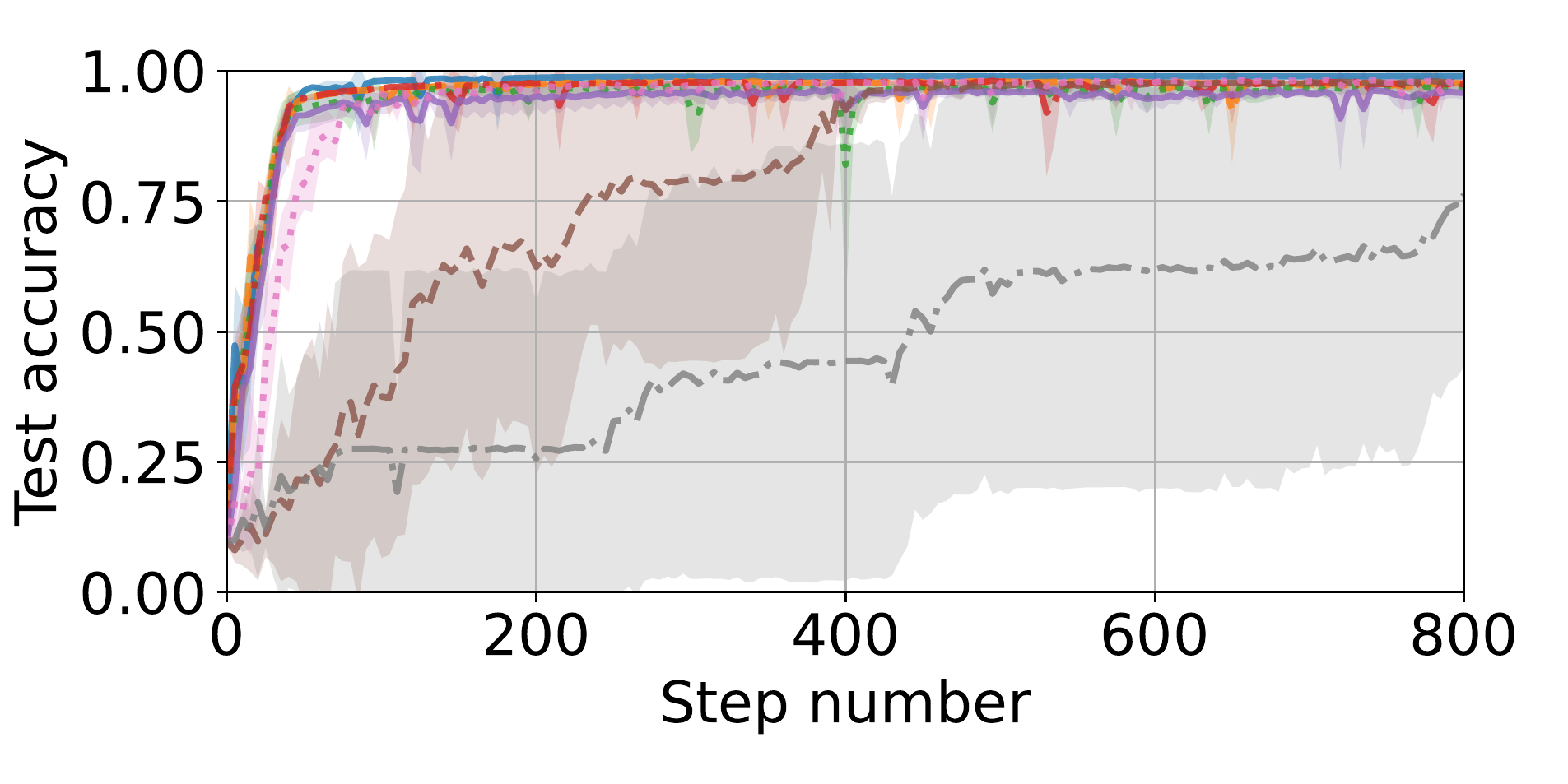}%
    \includegraphics[width=0.5\textwidth]{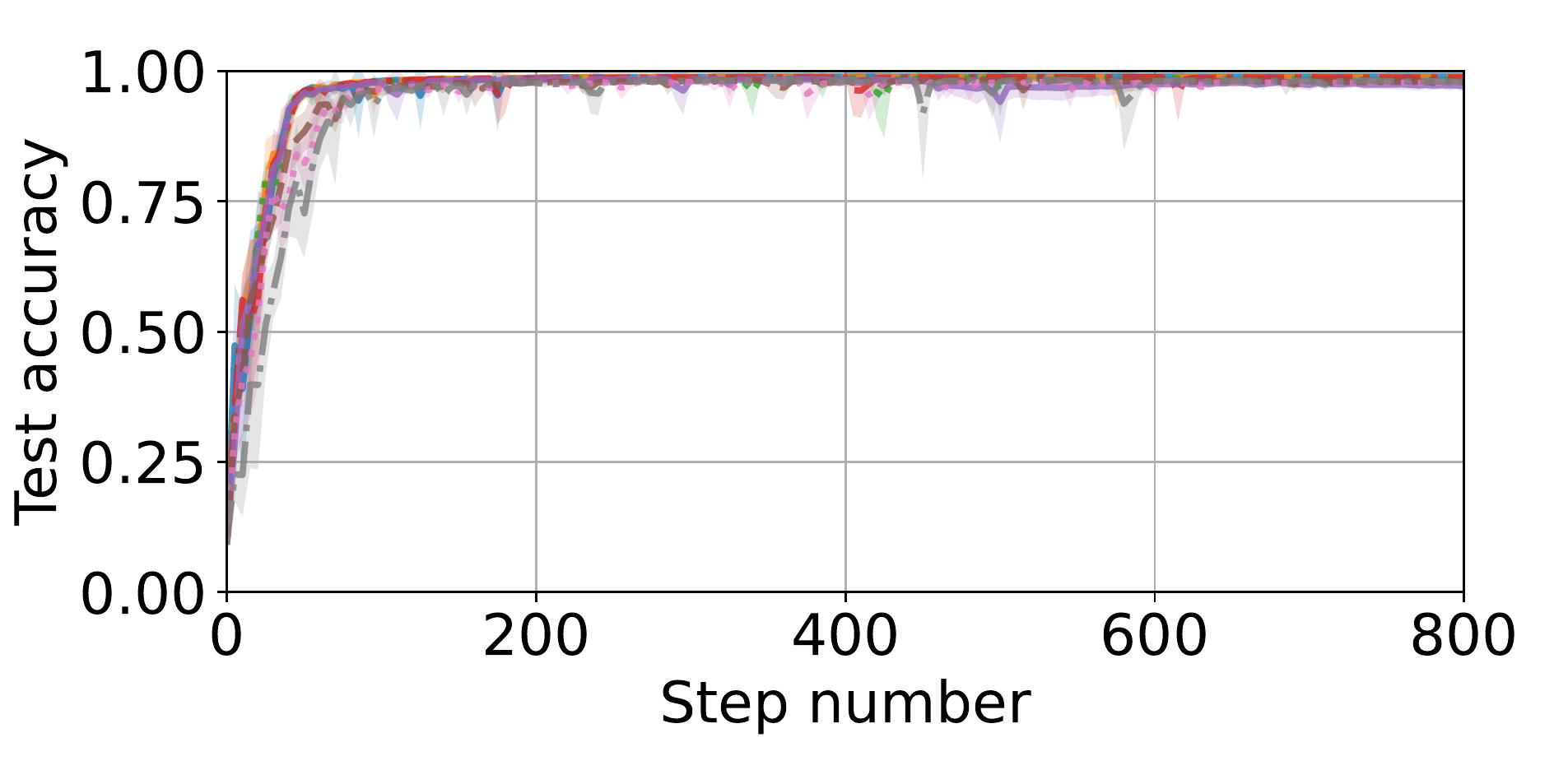}\\%
    \includegraphics[width=0.5\textwidth]{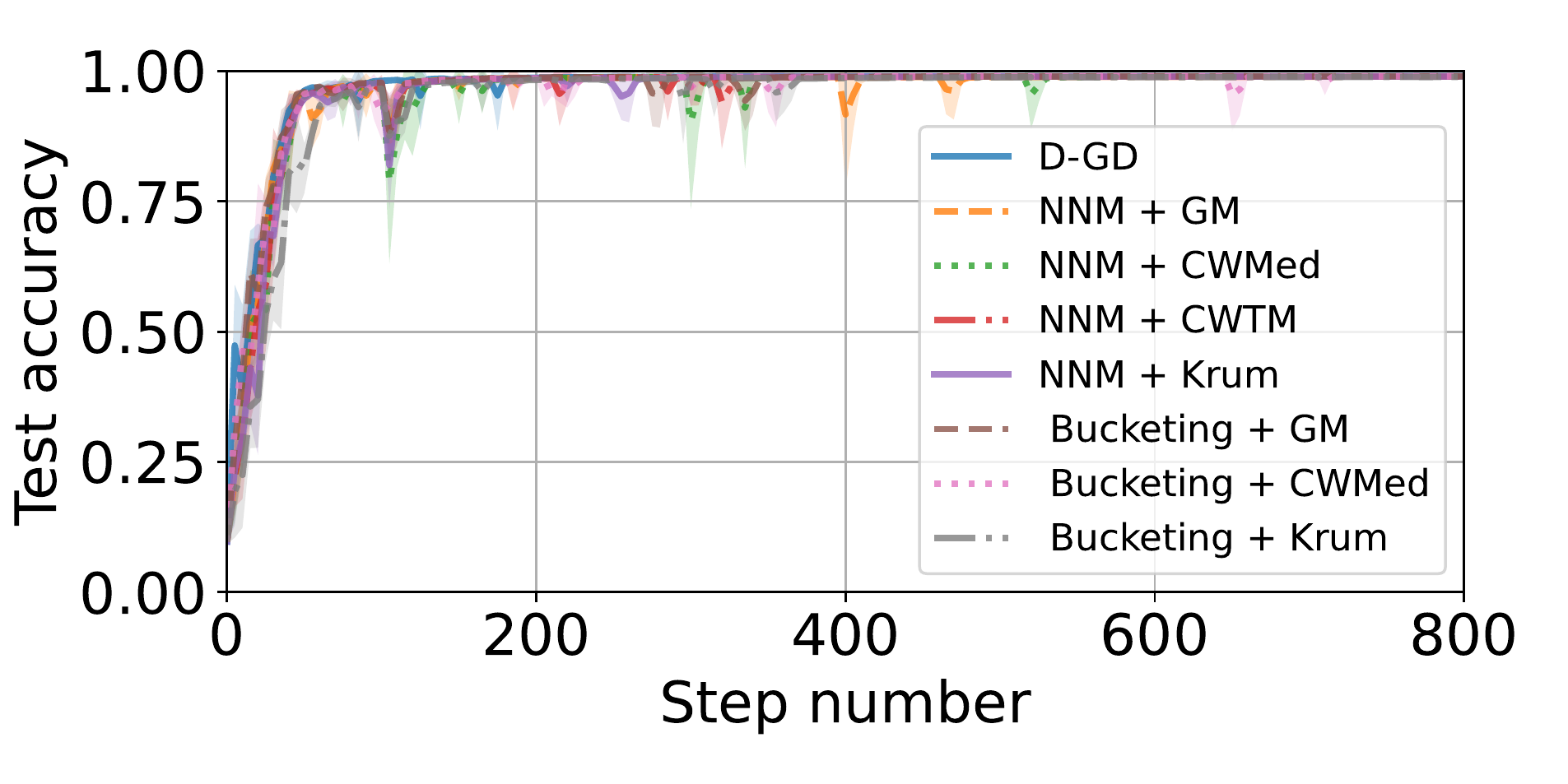}%
    \includegraphics[width=0.5\textwidth]{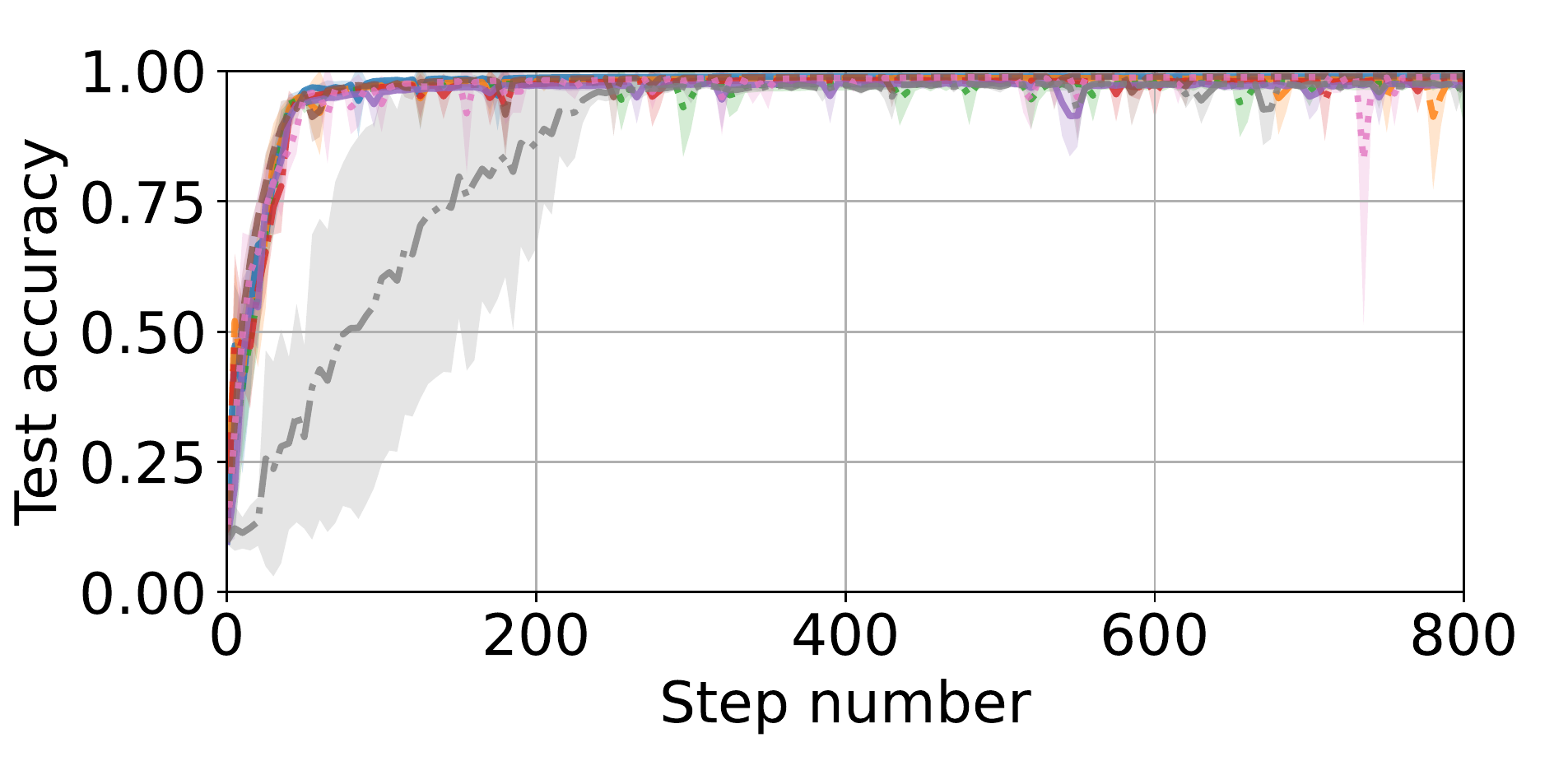}\\%
     \includegraphics[width=0.5\textwidth]{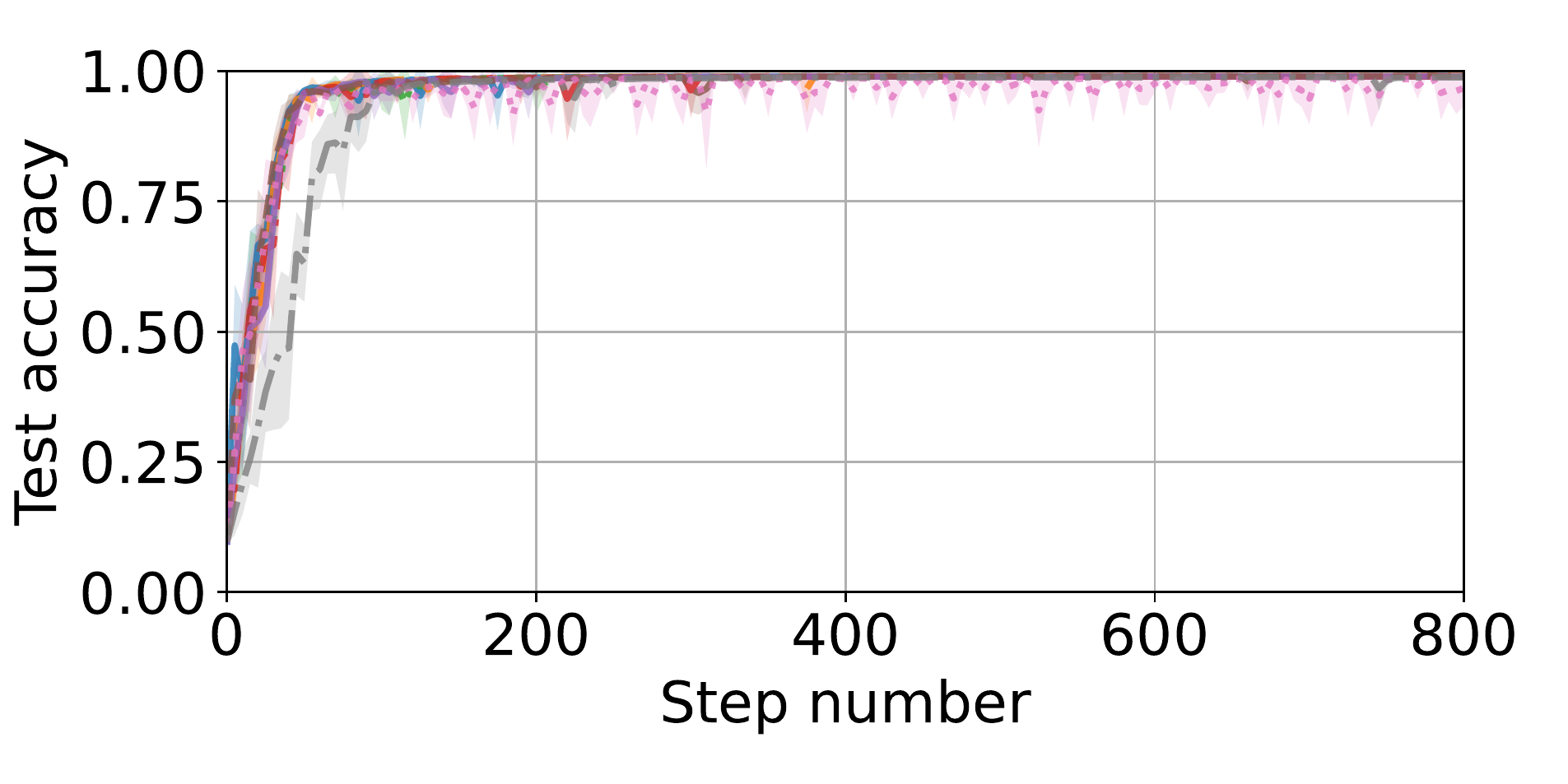}%
    \caption{Experiments on MNIST using robust D-GD with $f = 4$ among $n = 17$ workers, with $\alpha = 1$. The Byzantine workers execute the FOE (\textit{row 1, left}), ALIE (\textit{row 1, right}), Mimic (\textit{row 2, left}), SF (\textit{row 2, right}), and LF (\textit{row 3}) attacks.}
\label{fig:plots_mnist_gd_2}
\end{figure*}

\begin{figure*}[ht!]
    \centering
    \includegraphics[width=0.5\textwidth]{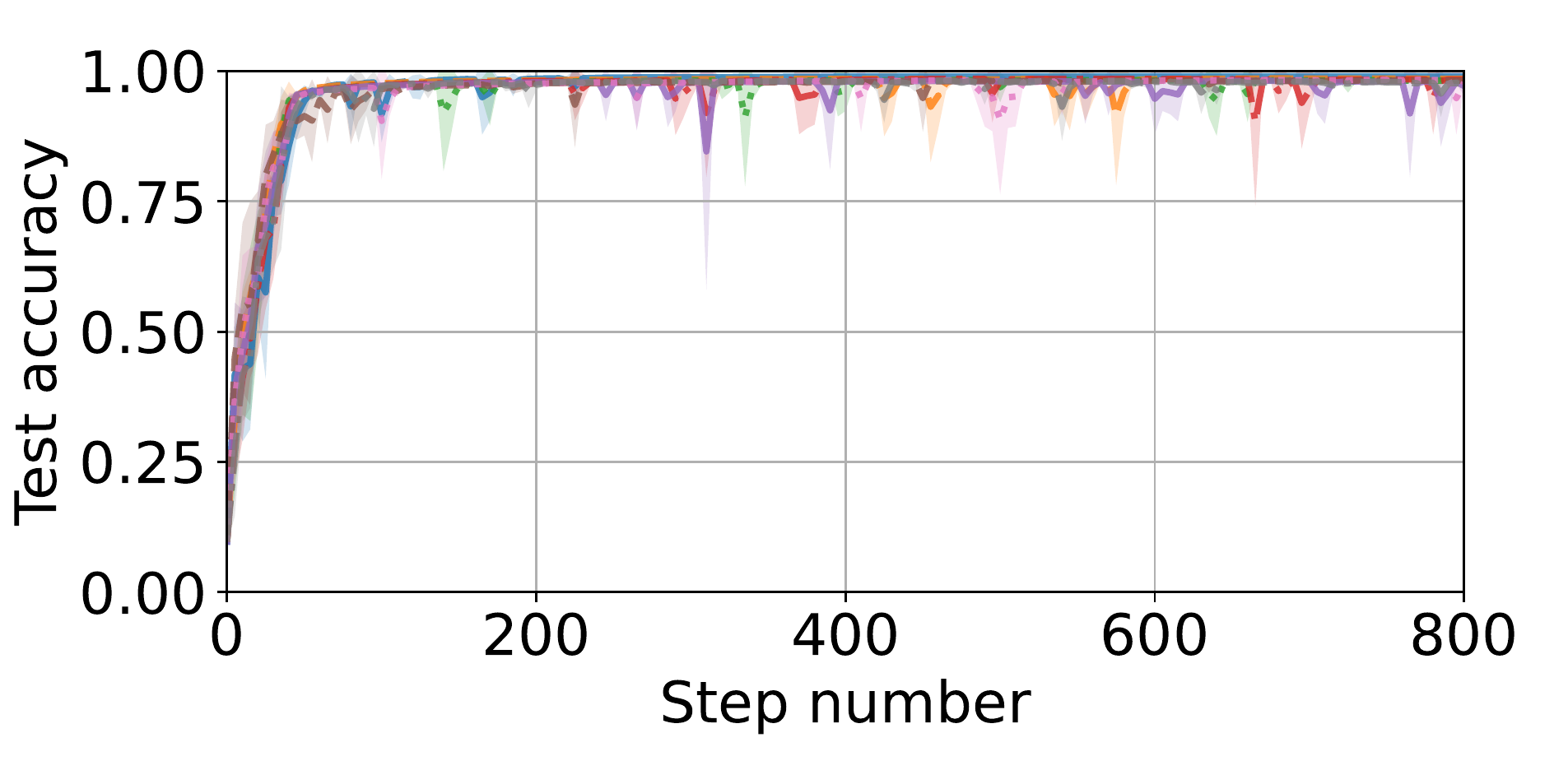}%
    \includegraphics[width=0.5\textwidth]{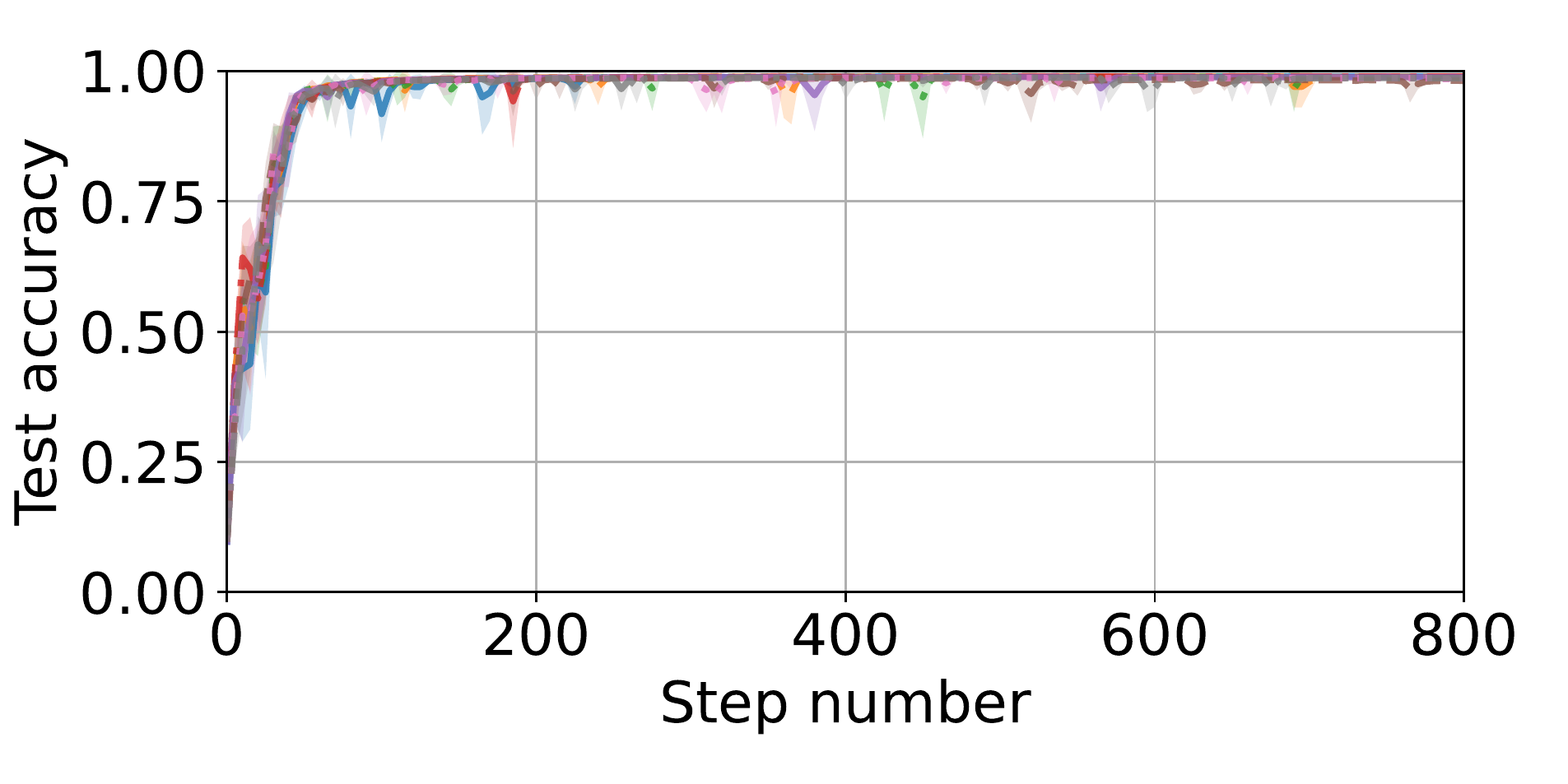}\\%
    \includegraphics[width=0.5\textwidth]{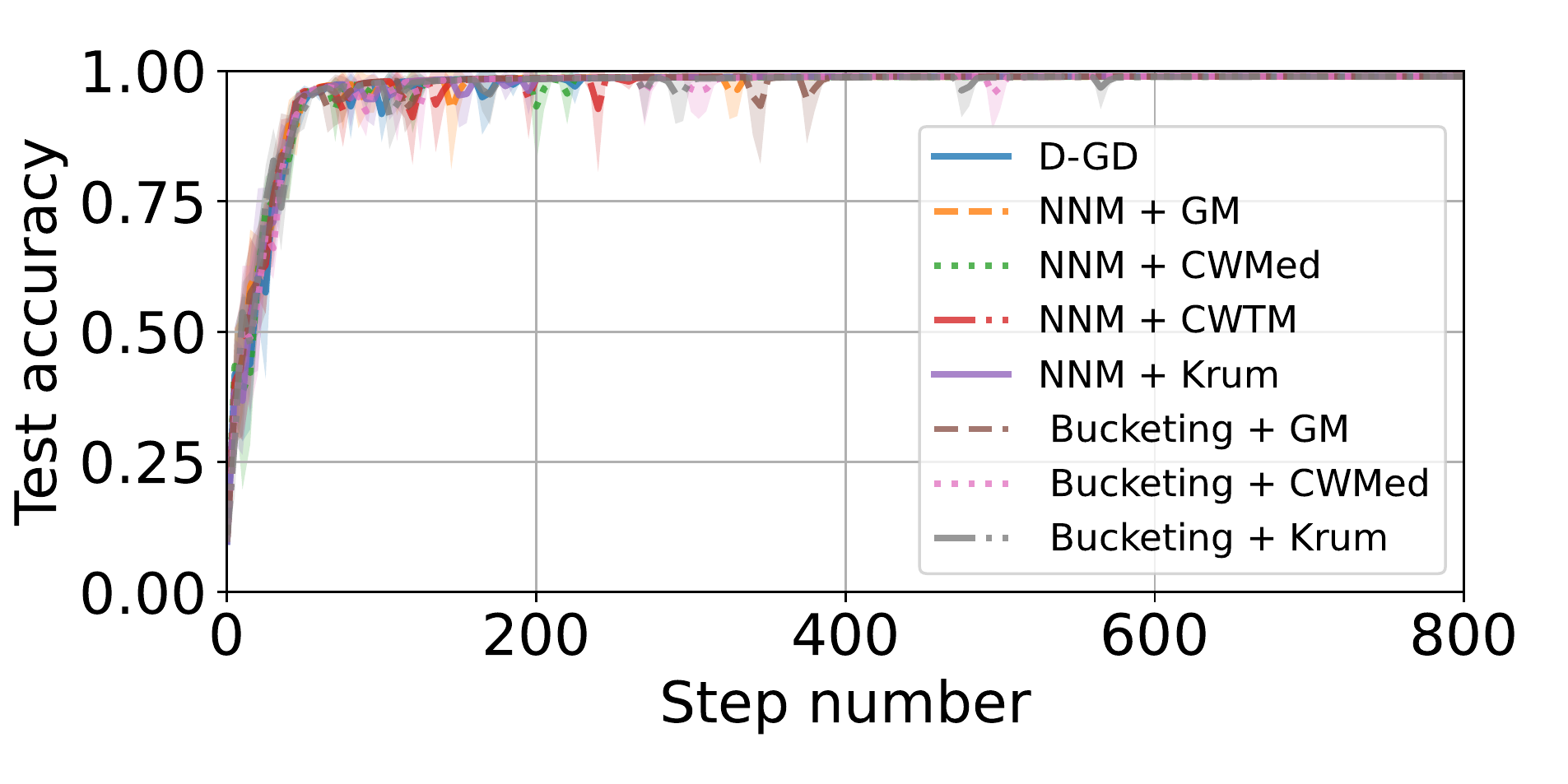}%
    \includegraphics[width=0.5\textwidth]{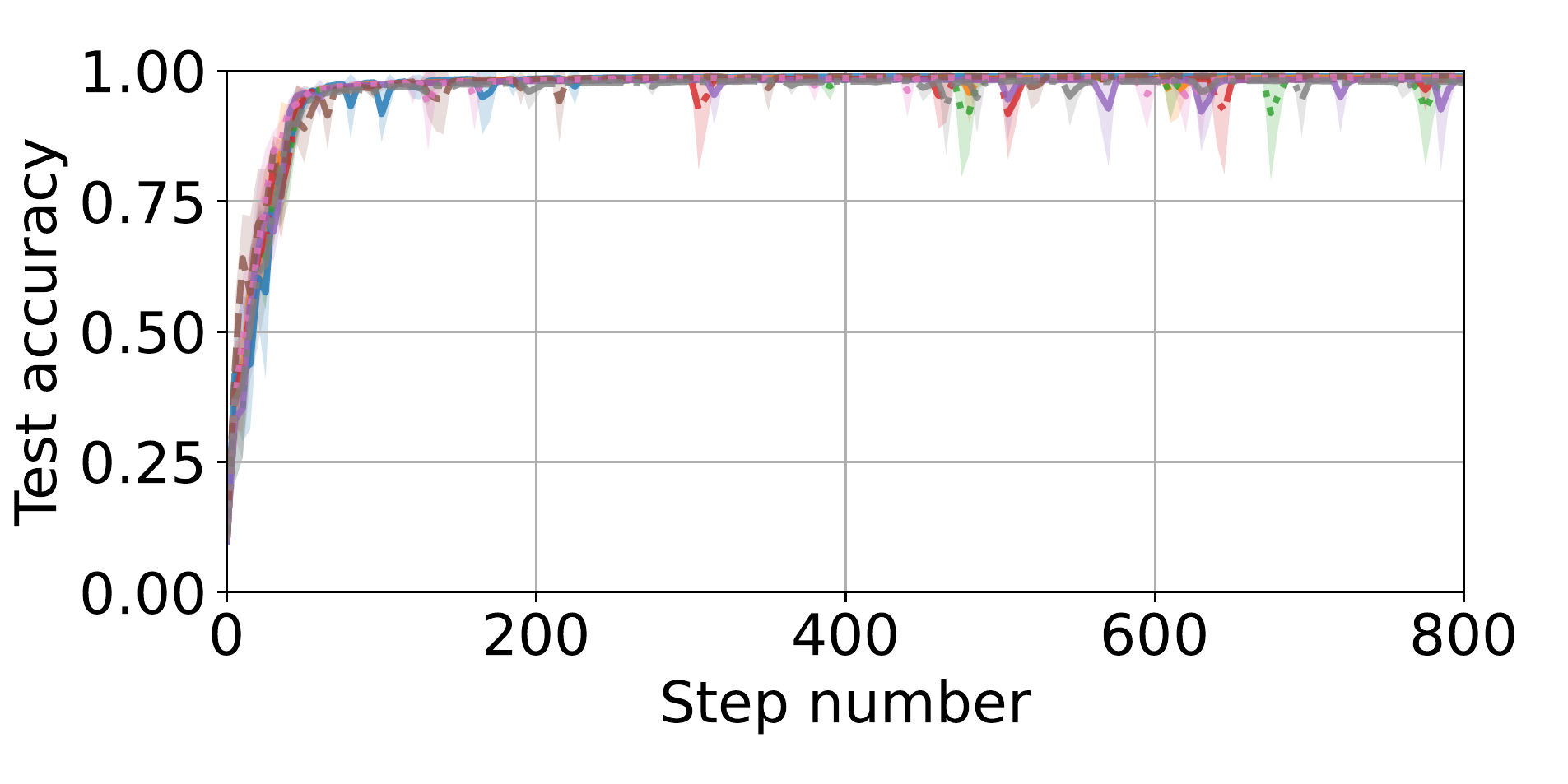}\\%
     \includegraphics[width=0.5\textwidth]{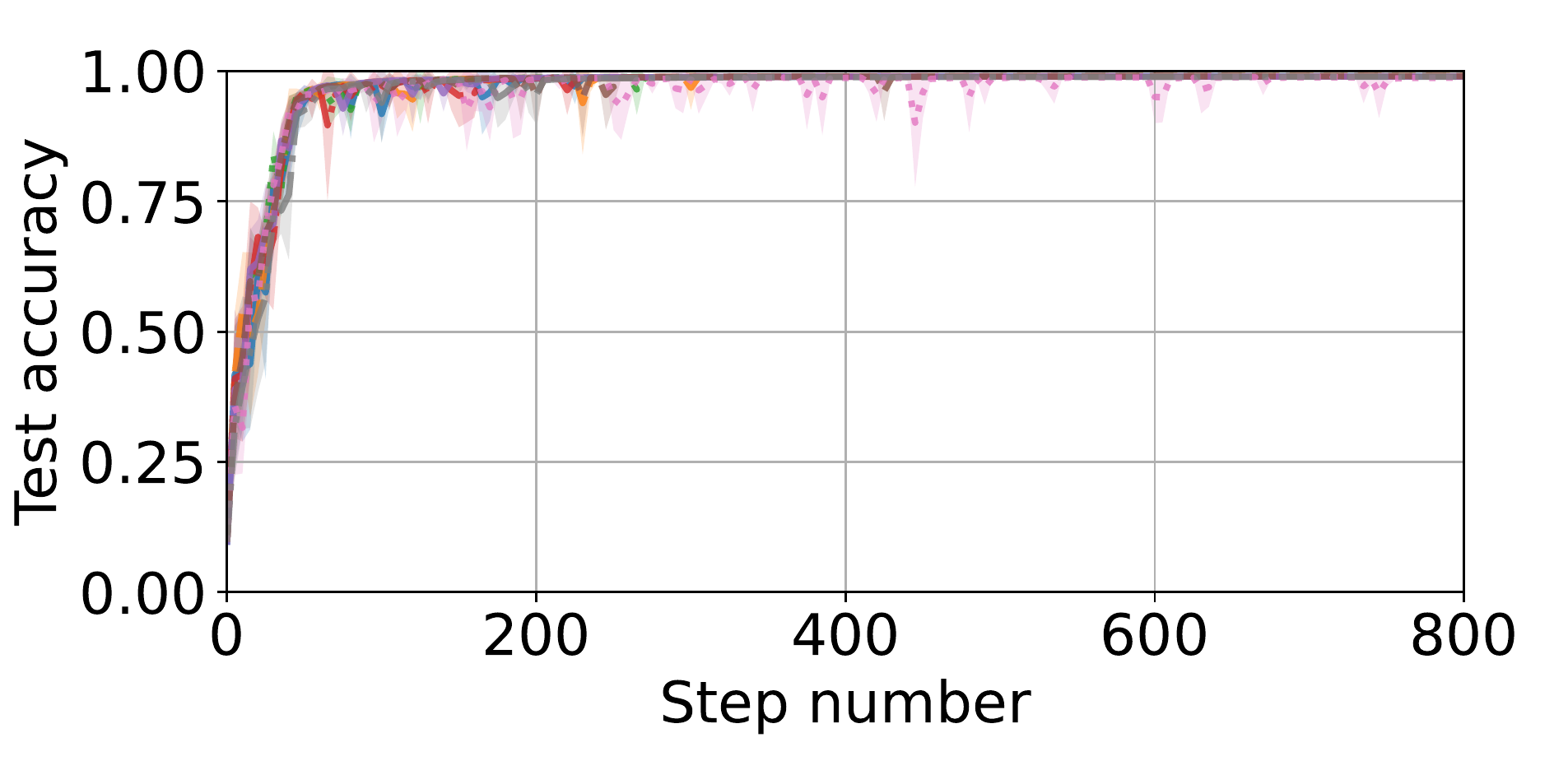}%
    \caption{Experiments on MNIST using robust D-GD with $f = 4$ among $n = 17$ workers, with $\alpha = 10$. The Byzantine workers execute the FOE (\textit{row 1, left}), ALIE (\textit{row 1, right}), Mimic (\textit{row 2, left}), SF (\textit{row 2, right}), and LF (\textit{row 3}) attacks.}
\label{fig:plots_mnist_gd_3}
\end{figure*}

\clearpage
\subsubsection{Results on Other Aggregation Rules: MDA, MultiKrum, and MeaMed}
We also execute $\cenna{}$ with three additional aggregation rules namely MDA~\cite{rousseeuw1985multivariate,collaborativeElMhamdi21}, MeaMed~\cite{meamed}, and Multi-Krum~\cite{krum}. 
The experimental setup is the following. We consider three Byzantine regimes: $f = 4$, $f = 6$, and $f=8$ (largest possible) out of $n=17$ workers in total. We also consider three heterogeneity regimes: $\alpha=0.1$ (extreme), $\alpha=1$ (moderate), and $\alpha=10$ (low).
We compare the performance of $\cenna{}$ and Bucketing when executed with MDA, MultiKrum, and MeaMed.
The results (shown below) are very similar to the ones presented in Appendix~\ref{app:exp_results_mnist}, and convey exactly the same observations made in Section~\ref{sec:exp}. Indeed, $\cenna{}$ combined with these methods provides consistently good performance on MNIST in three regimes of data heterogeneity and under five state-of-the-art Byzantine attacks. Given that no guarantees have yet been derived for these aggregation techniques, an interesting future direction would be to prove that MDA, MultiKrum, and MeaMed are $(f, \kappa)$-robust with $\kappa \in \mathcal{O}(1)$, confirming the modularity of our solution as a reliable framework for heterogeneous Byzantine learning.

\begin{figure*}[ht!]
    \centering
    \includegraphics[width=0.5\textwidth]{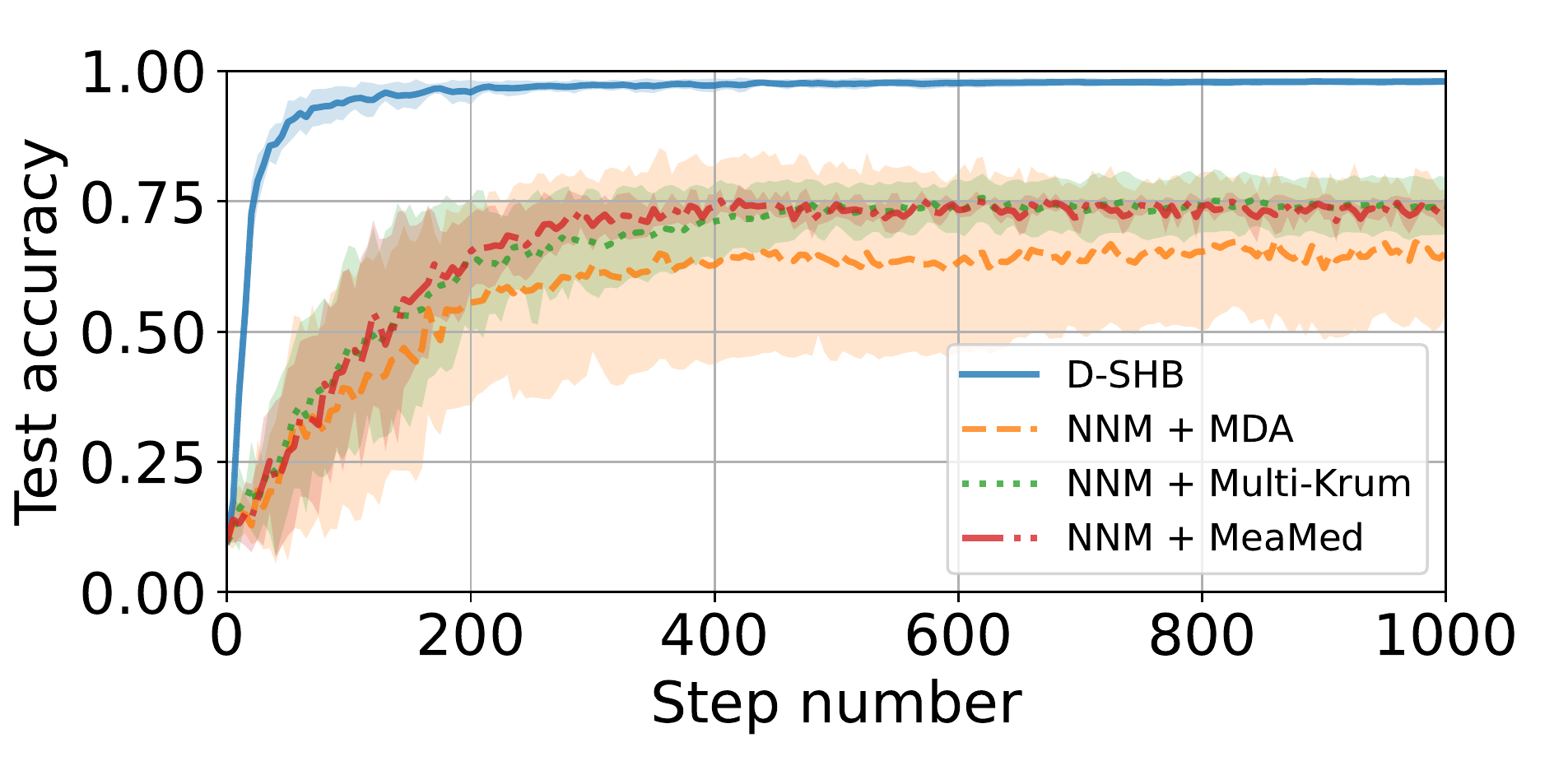}%
    \includegraphics[width=0.5\textwidth]{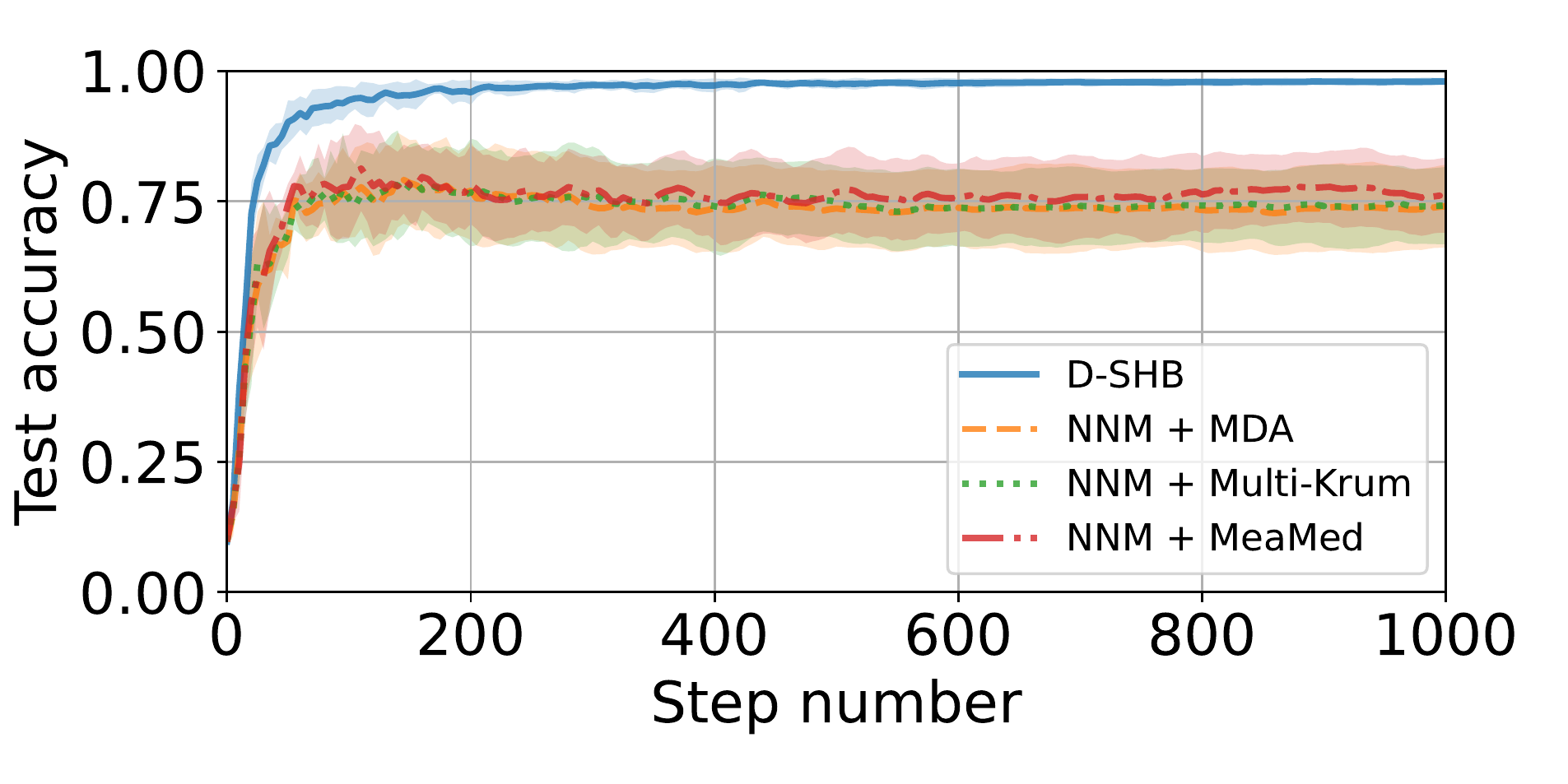}\\%
    \includegraphics[width=0.5\textwidth]{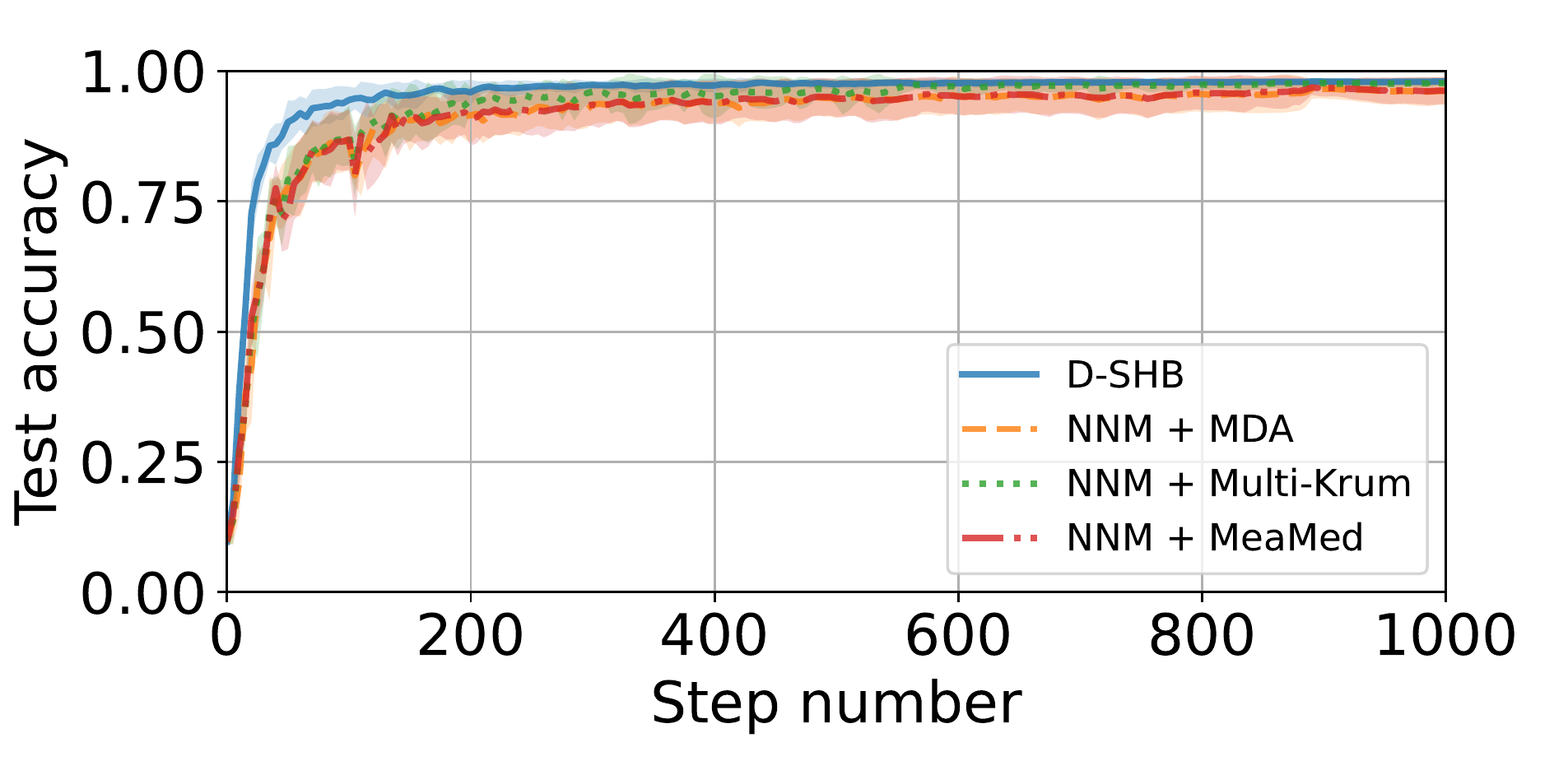}%
    \includegraphics[width=0.5\textwidth]{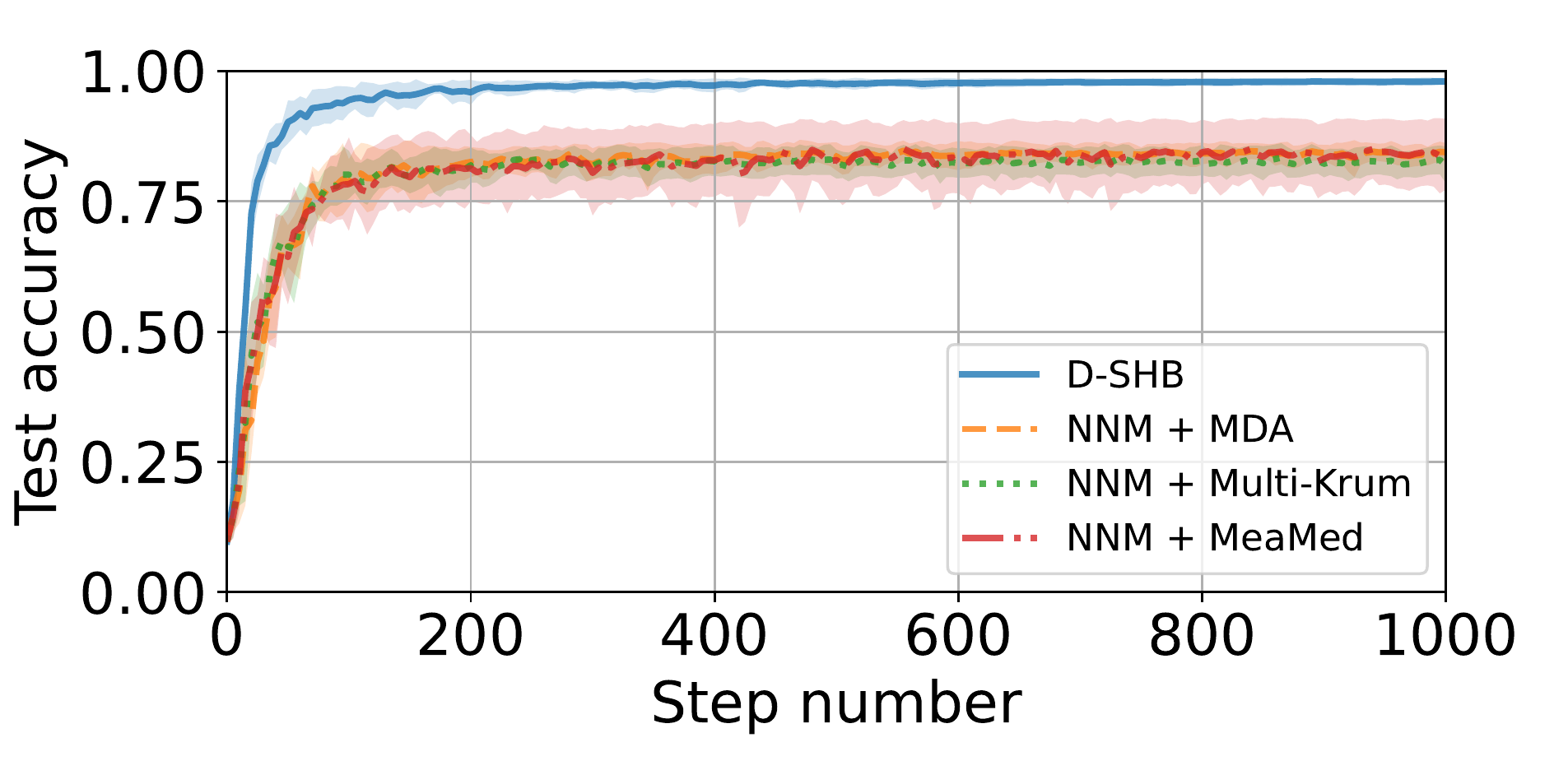}\\%
     \includegraphics[width=0.5\textwidth]{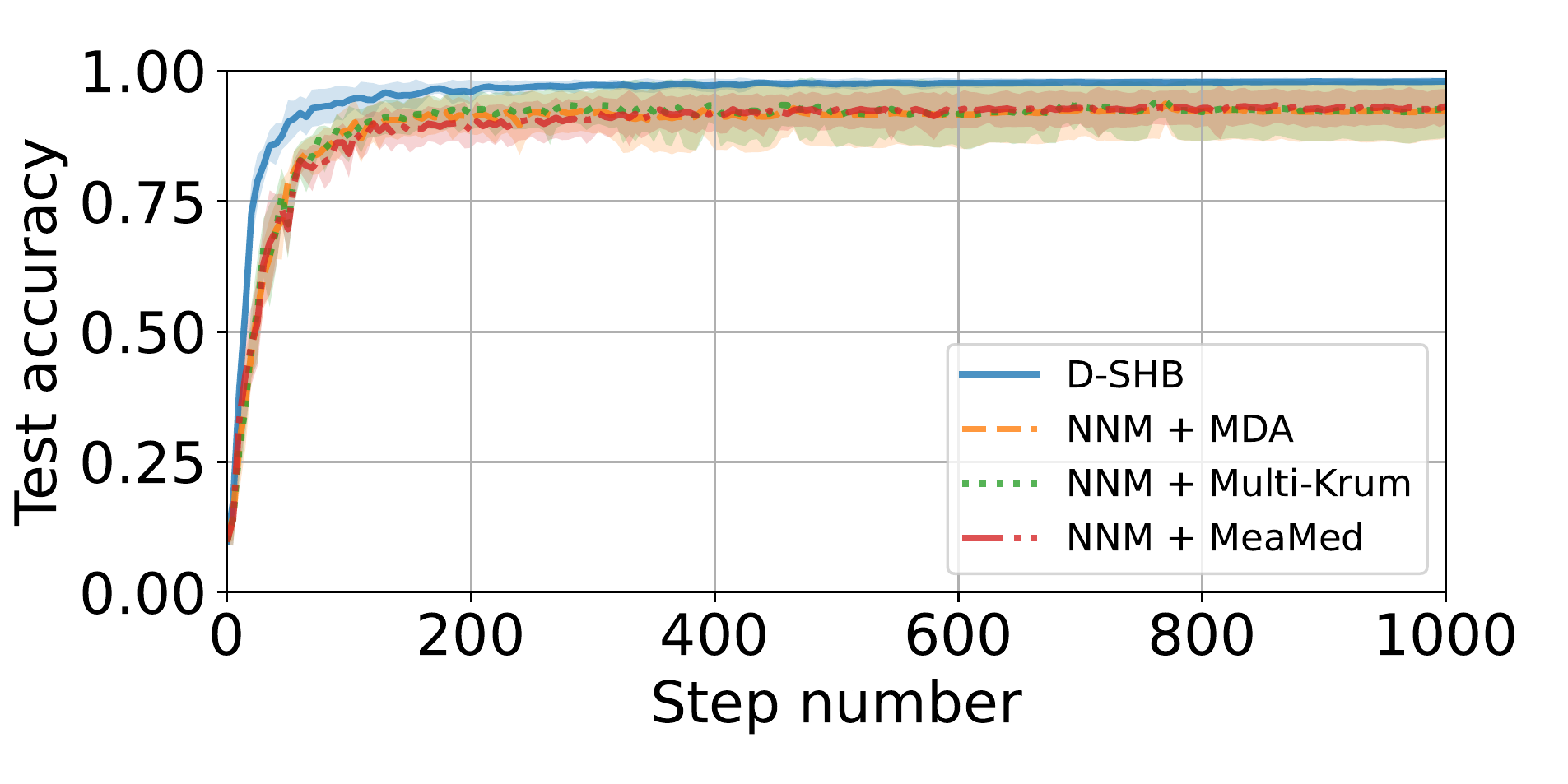}%
    \caption{Experiments on MNIST using robust D-SHB with $f = 4$ Byzantine among $n = 17$ workers, with $\beta = 0.9$ and $\alpha = 0.1$. The Byzantine workers execute the FOE (\textit{row 1, left}), ALIE (\textit{row 1, right}), Mimic (\textit{row 2, left}), SF (\textit{row 2, right}), and LF (\textit{row 3}) attacks.}
\label{fig:plots_other_1}
\end{figure*}

\begin{figure*}[ht!]
    \centering
    \includegraphics[width=0.5\textwidth]{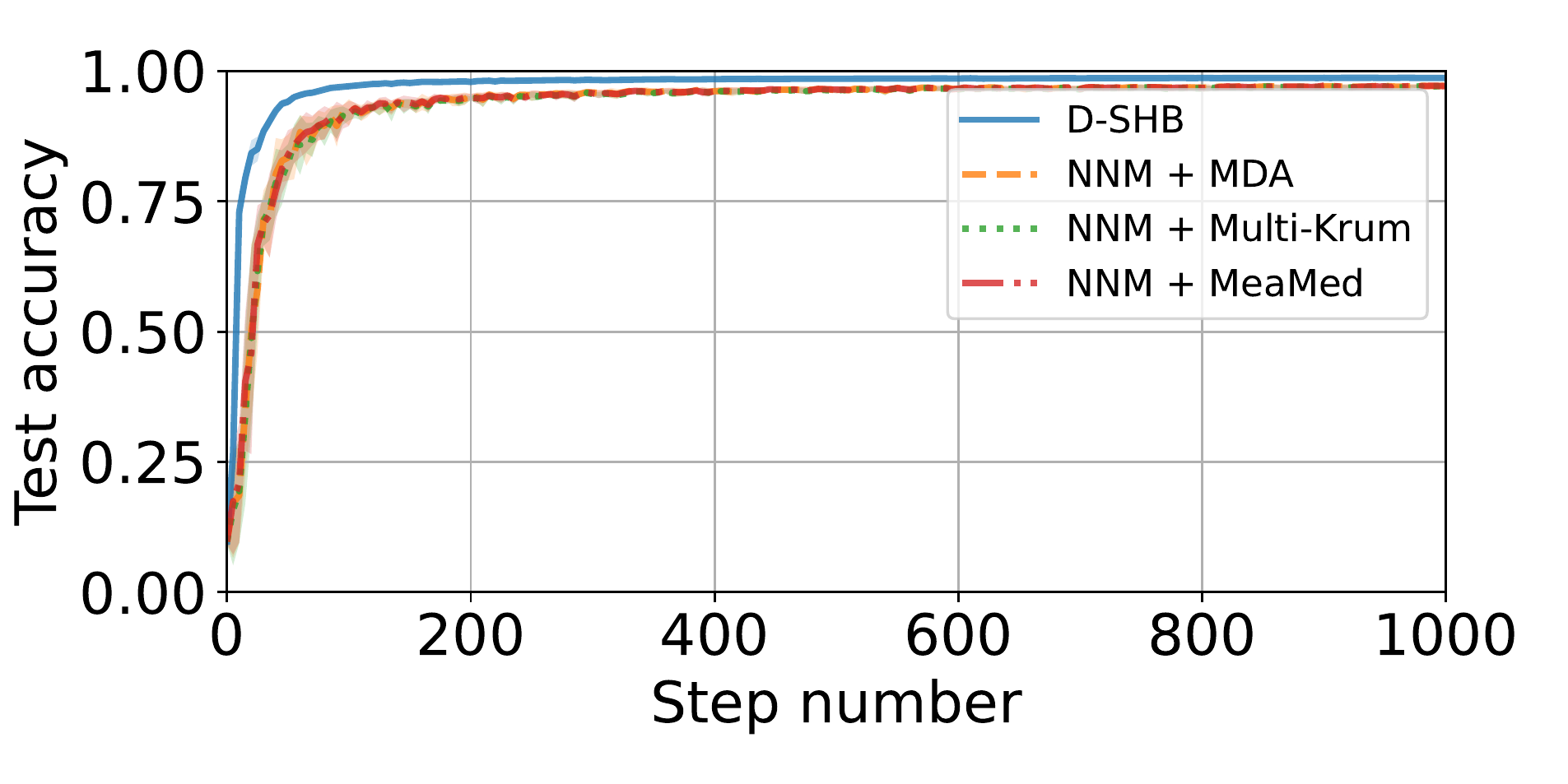}%
    \includegraphics[width=0.5\textwidth]{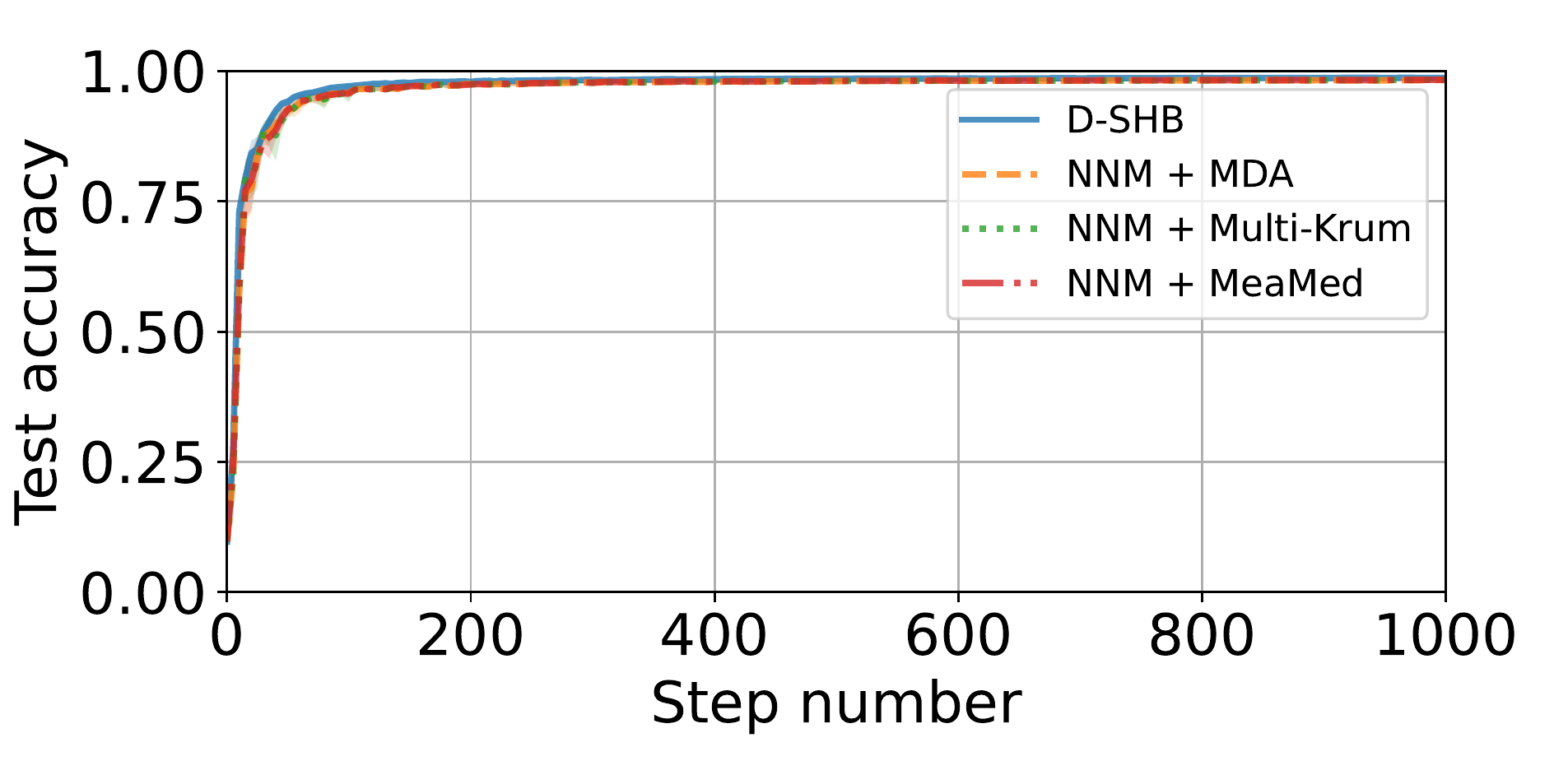}\\%
    \includegraphics[width=0.5\textwidth]{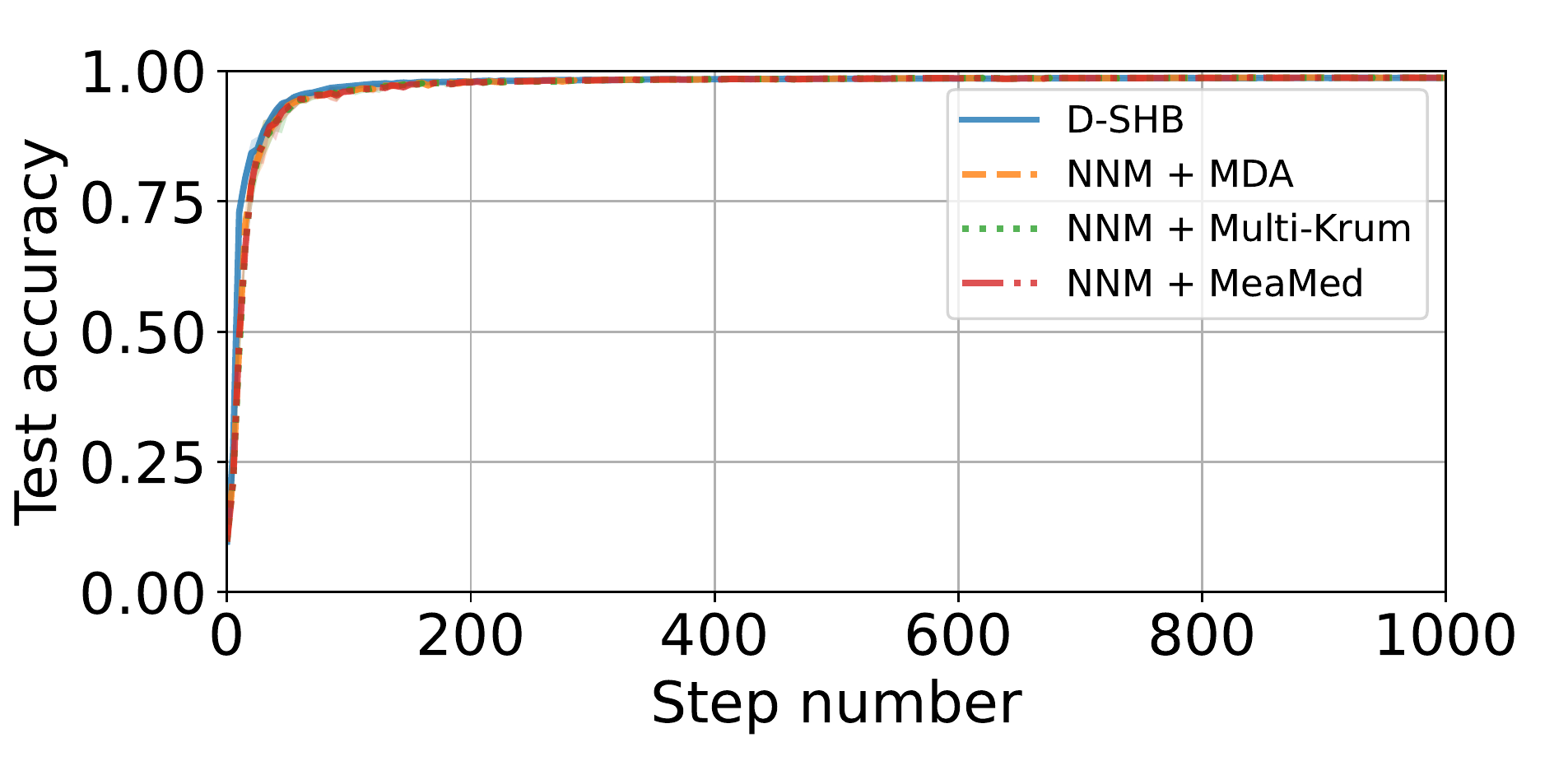}%
    \includegraphics[width=0.5\textwidth]{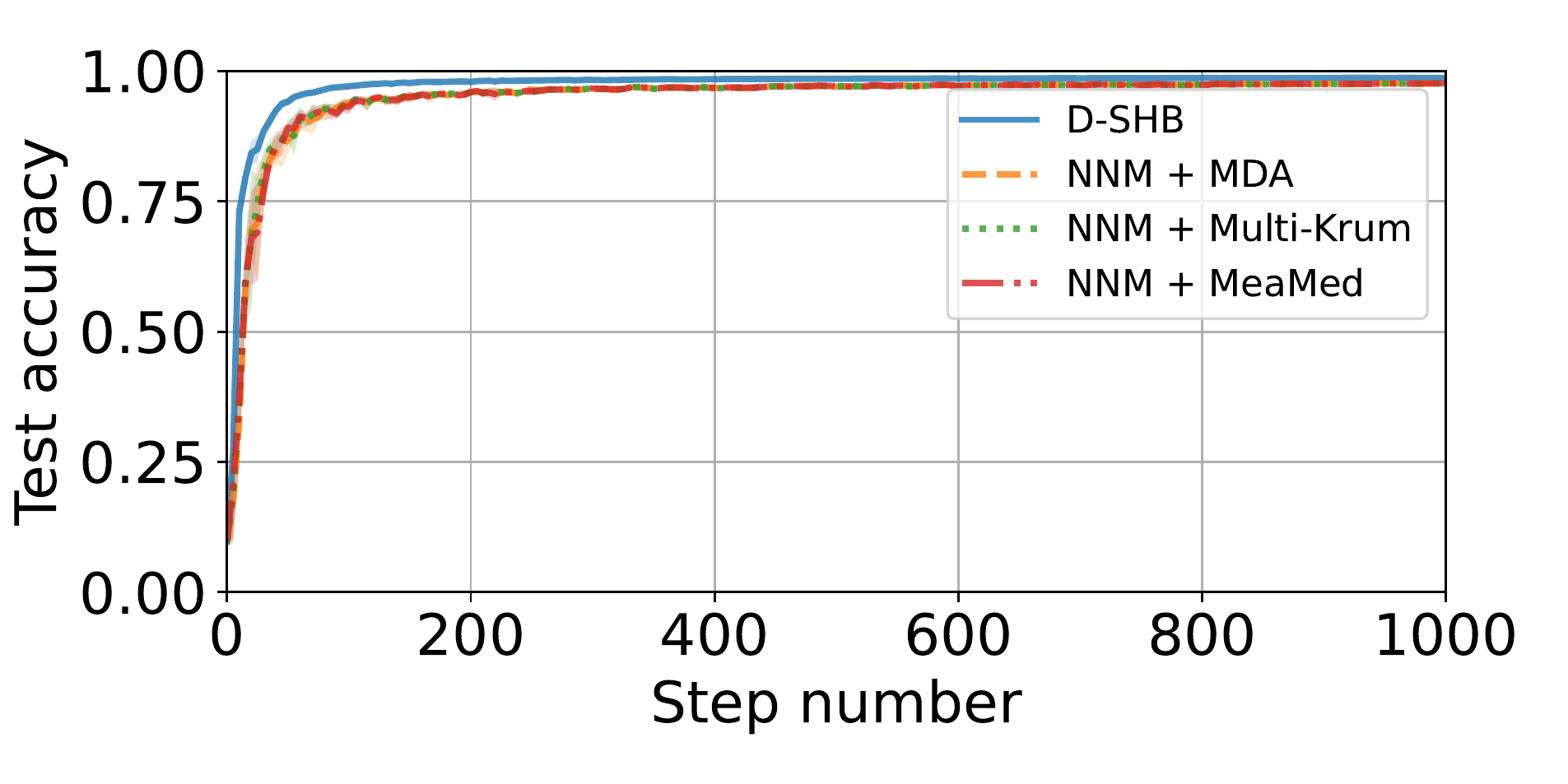}\\%
     \includegraphics[width=0.5\textwidth]{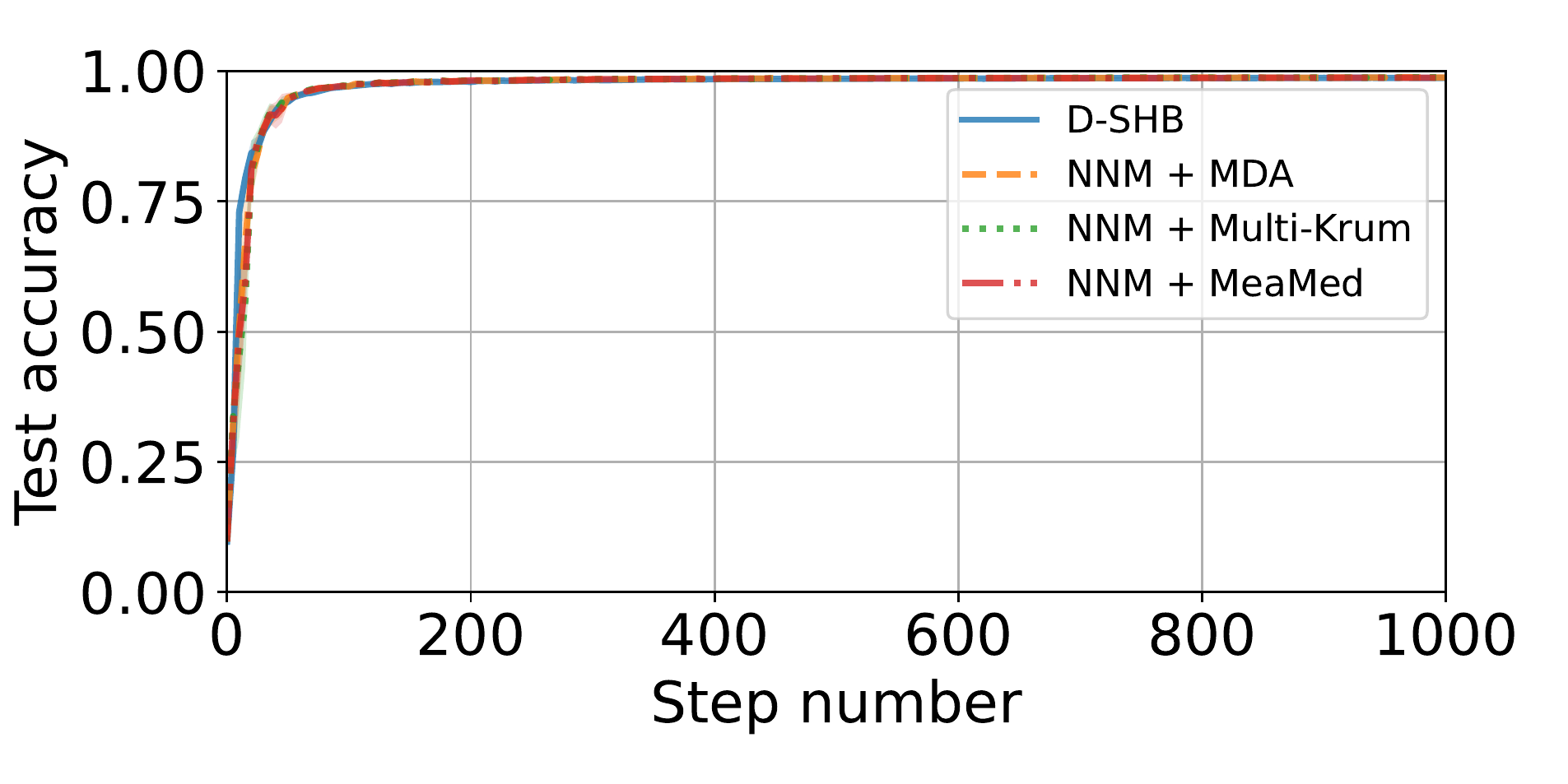}%
    \caption{Experiments on MNIST using robust D-SHB with $f = 4$ Byzantine among $n = 17$ workers, with $\beta = 0.9$ and $\alpha = 1$. The Byzantine workers execute the FOE (\textit{row 1, left}), ALIE (\textit{row 1, right}), Mimic (\textit{row 2, left}), SF (\textit{row 2, right}), and LF (\textit{row 3}) attacks.}
\label{fig:plots_other_2}
\end{figure*}

\begin{figure*}[ht!]
    \centering
    \includegraphics[width=0.5\textwidth]{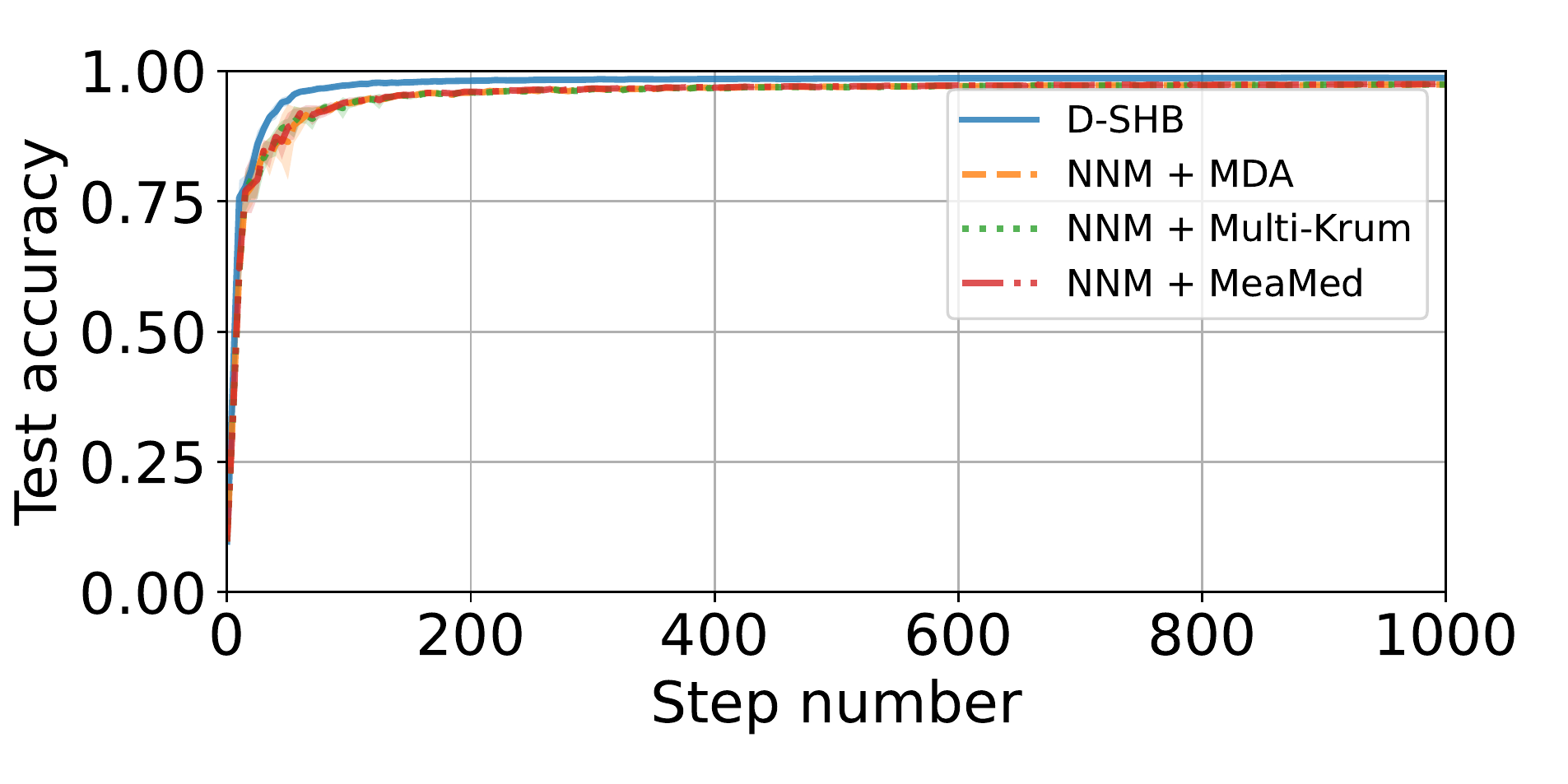}%
    \includegraphics[width=0.5\textwidth]{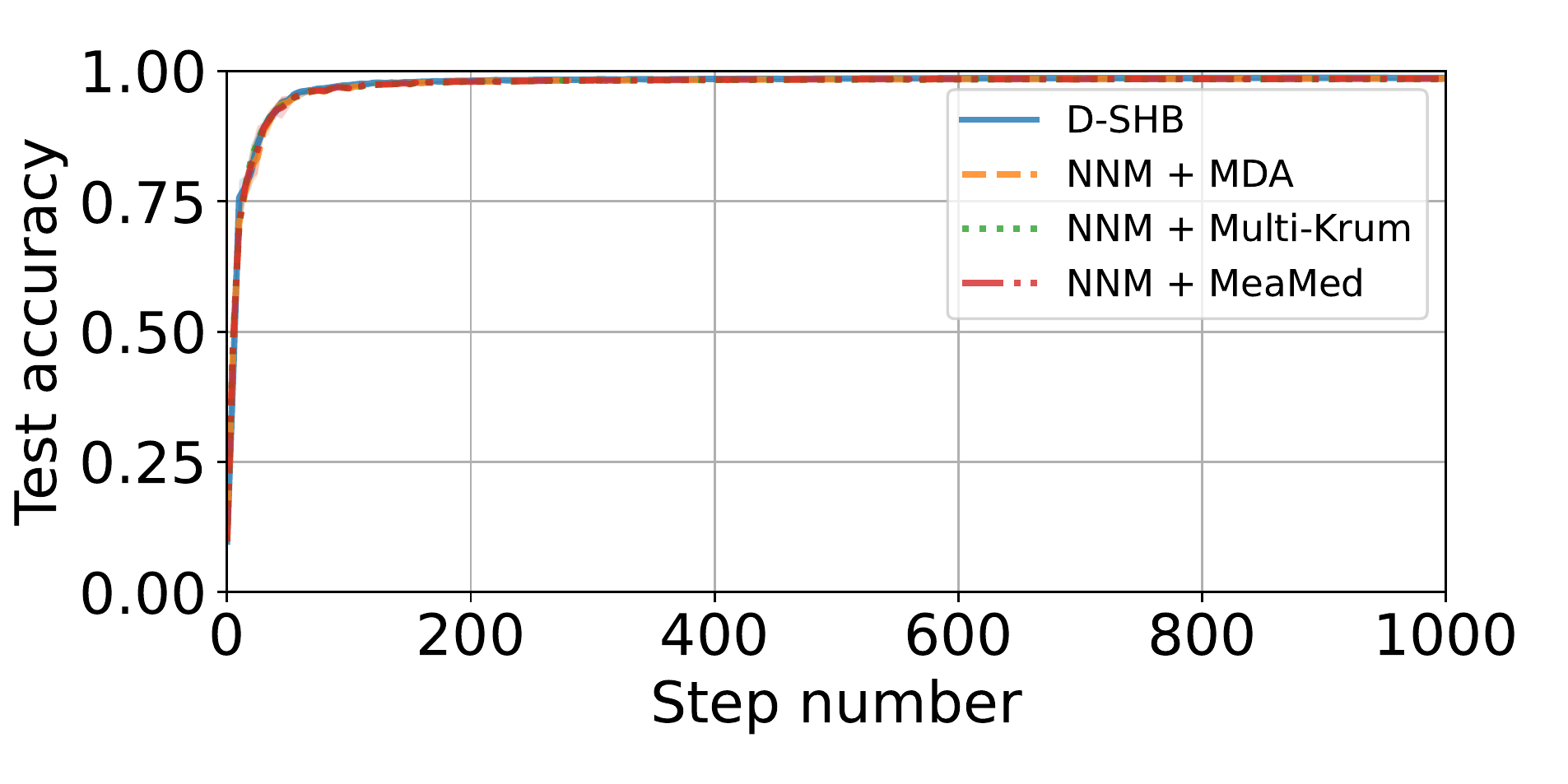}\\%
    \includegraphics[width=0.5\textwidth]{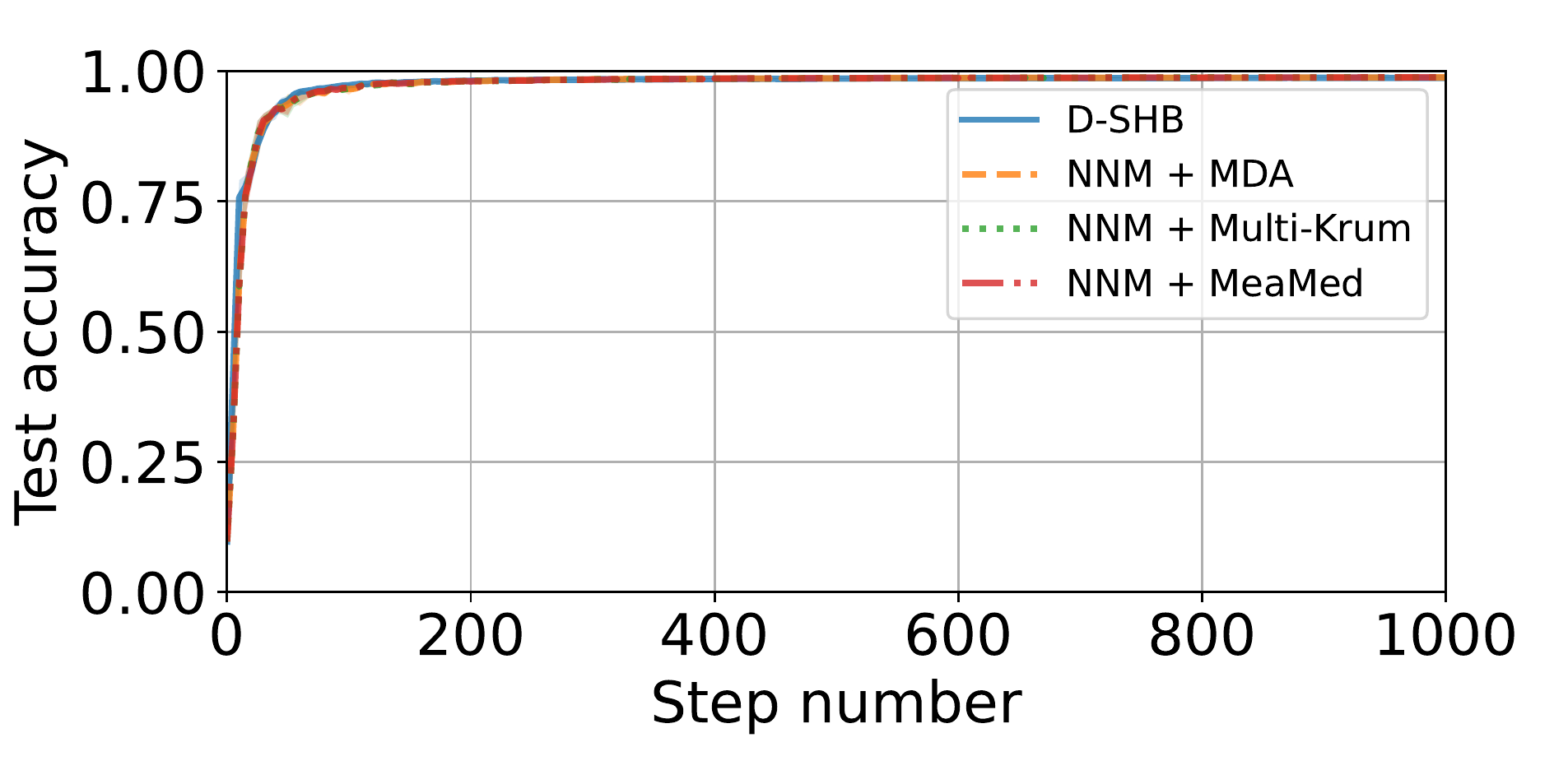}%
    \includegraphics[width=0.5\textwidth]{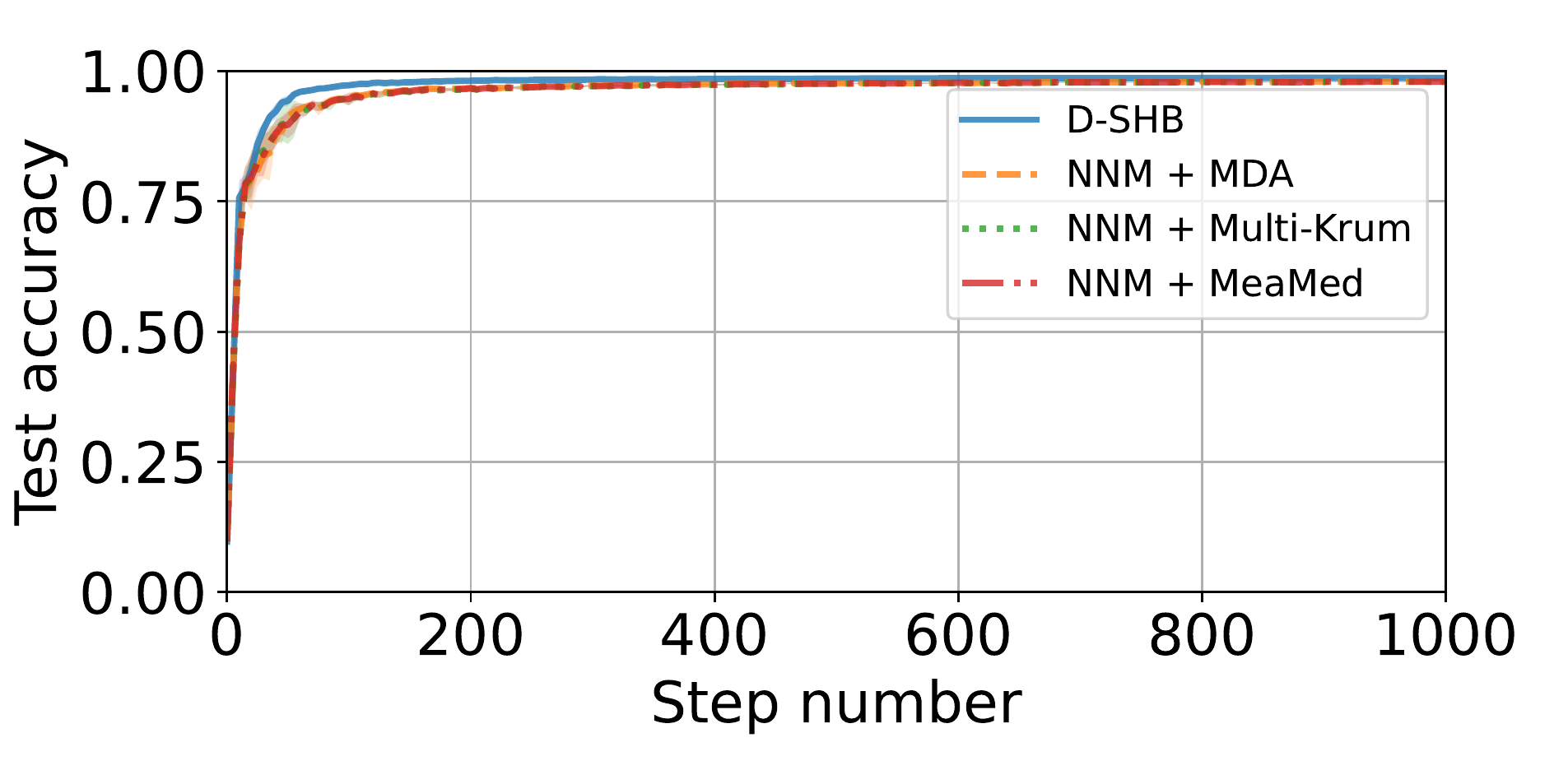}\\%
     \includegraphics[width=0.5\textwidth]{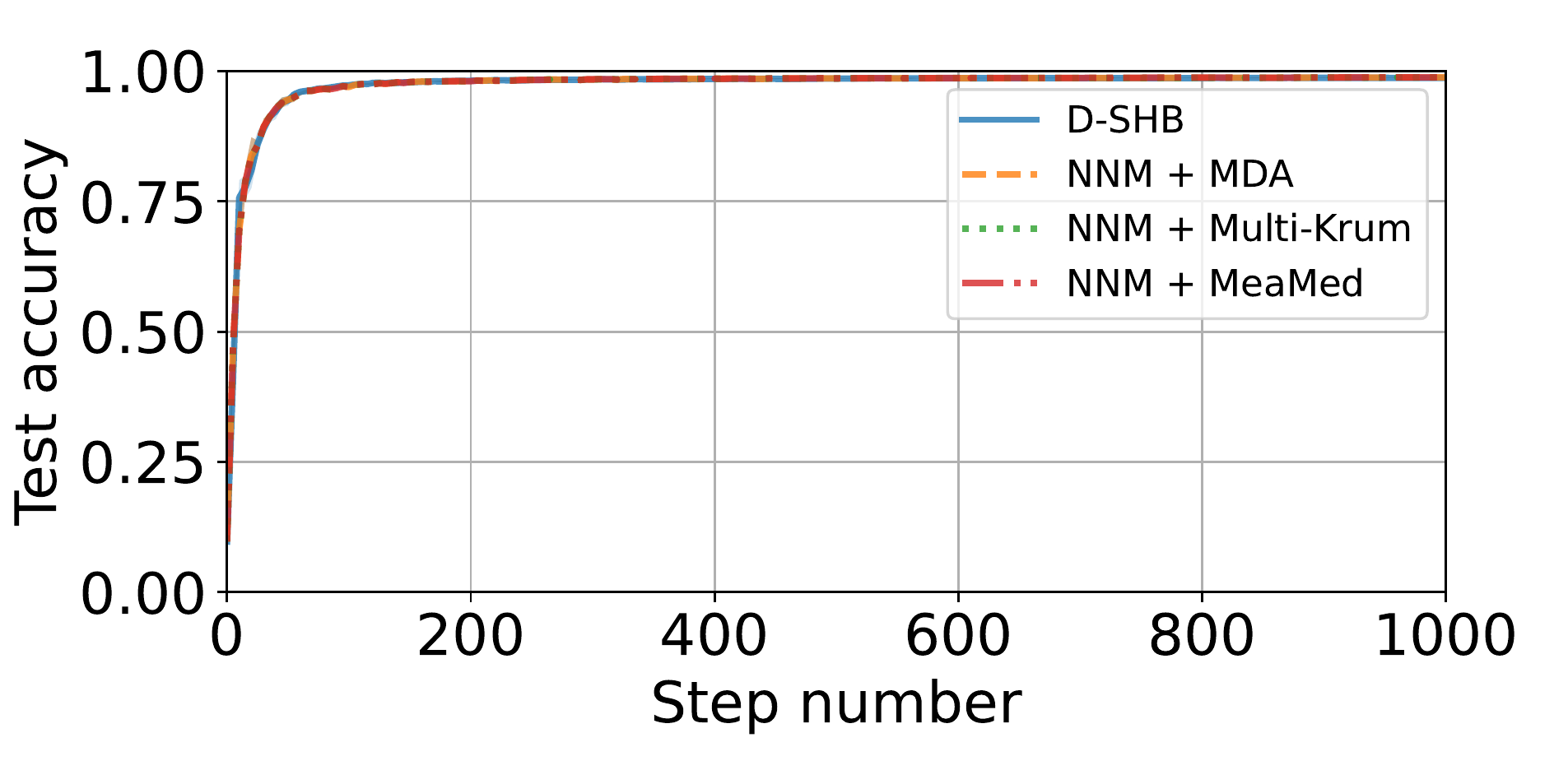}%
    \caption{Experiments on MNIST using robust D-SHB with $f = 4$ Byzantine among $n = 17$ workers, with $\beta = 0.9$ and $\alpha = 10$. The Byzantine workers execute the FOE (\textit{row 1, left}), ALIE (\textit{row 1, right}), Mimic (\textit{row 2, left}), SF (\textit{row 2, right}), and LF (\textit{row 3}) attacks.}
\label{fig:plots_other_3}
\end{figure*}

\begin{figure*}[ht!]
    \centering
    \includegraphics[width=0.5\textwidth]{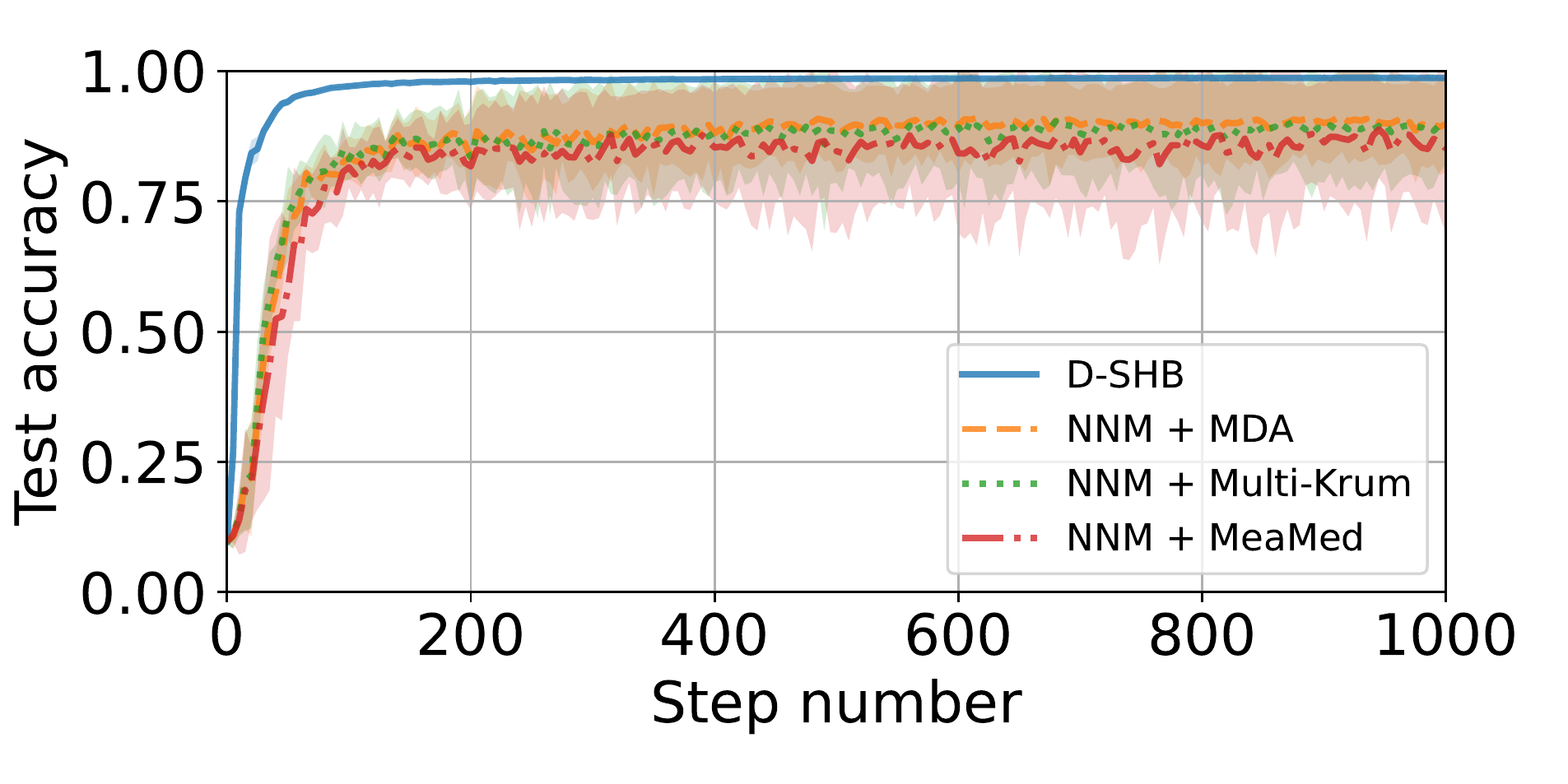}%
    \includegraphics[width=0.5\textwidth]{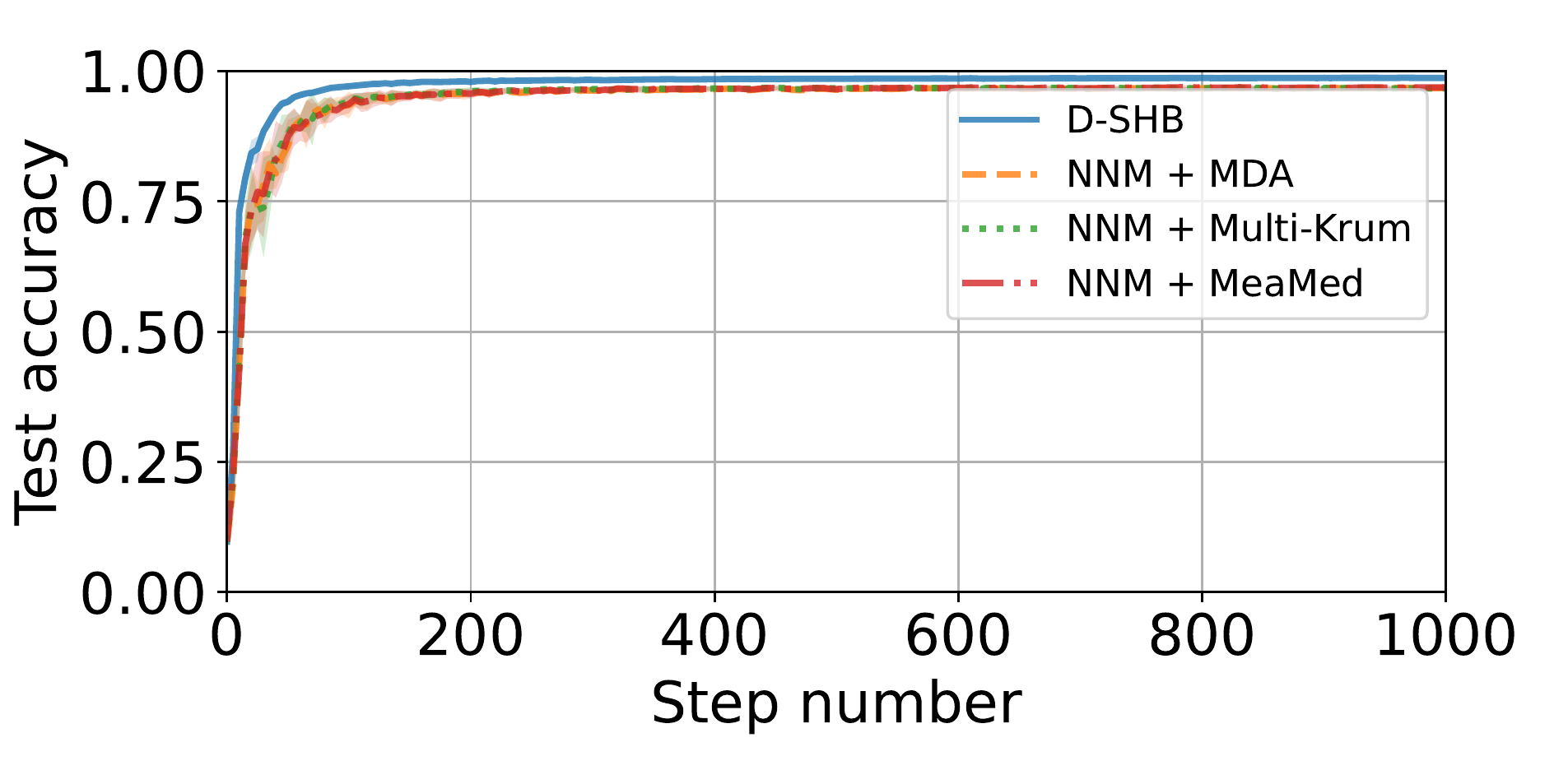}\\%
    \includegraphics[width=0.5\textwidth]{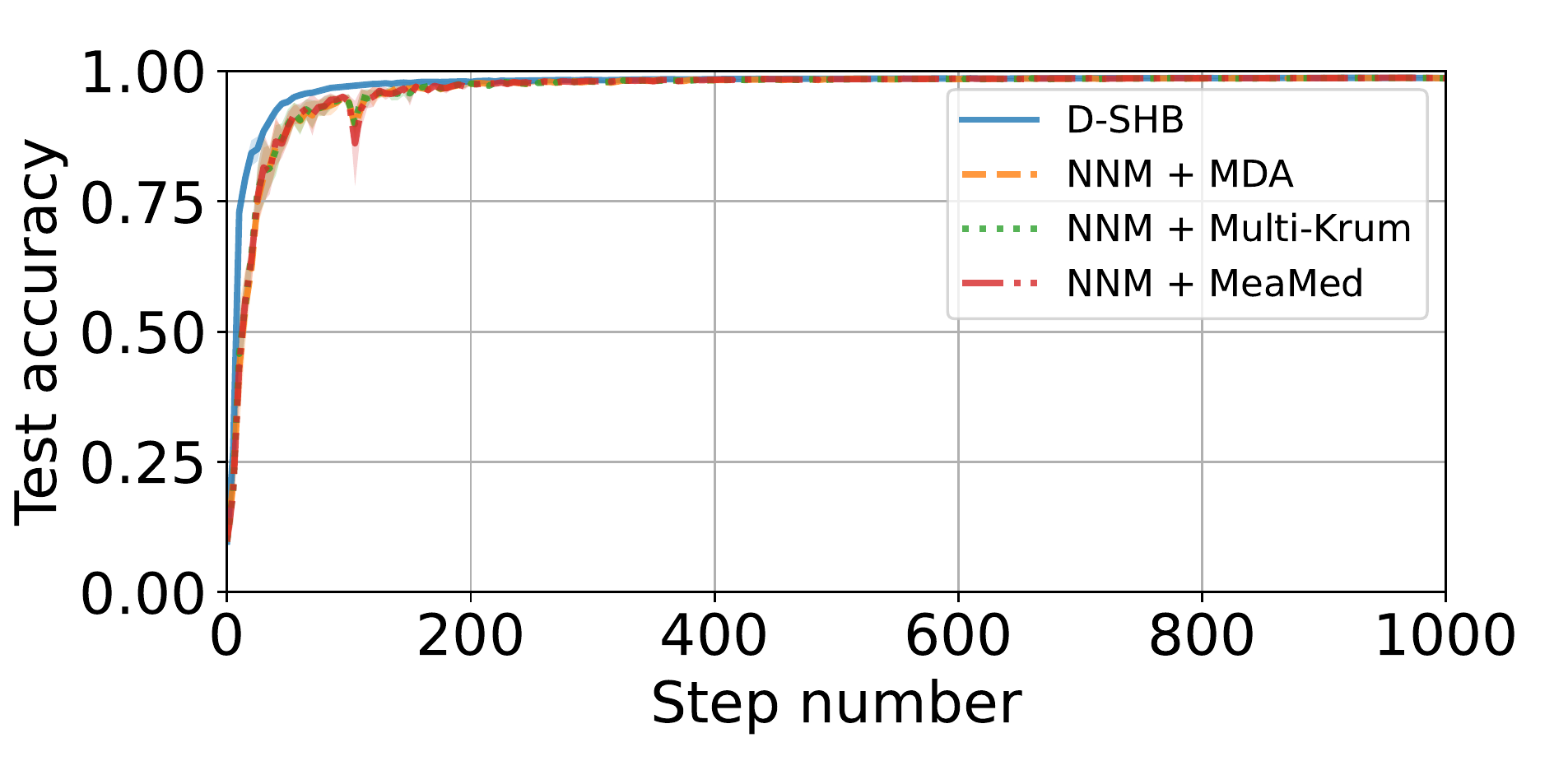}%
    \includegraphics[width=0.5\textwidth]{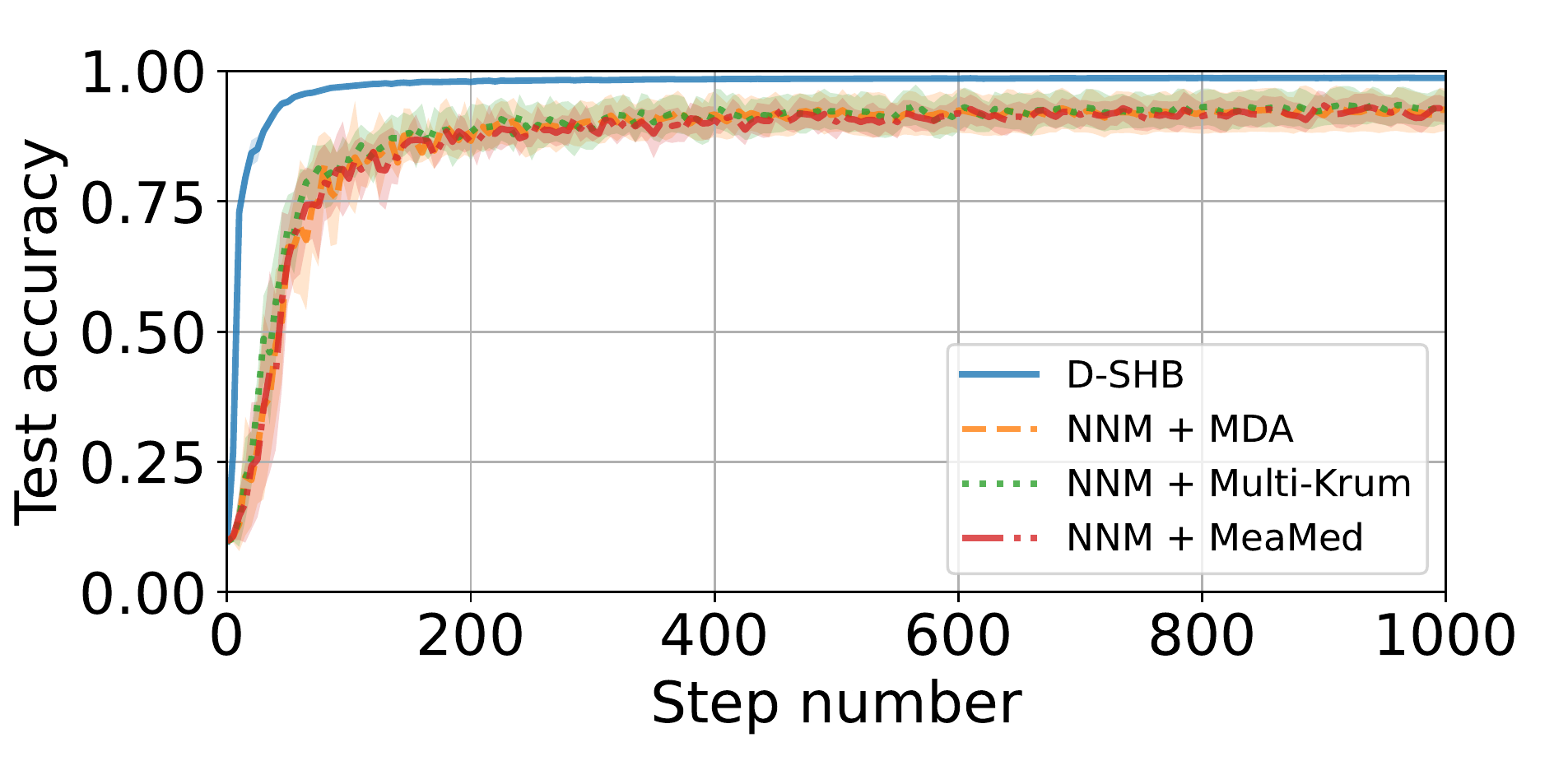}\\%
     \includegraphics[width=0.5\textwidth]{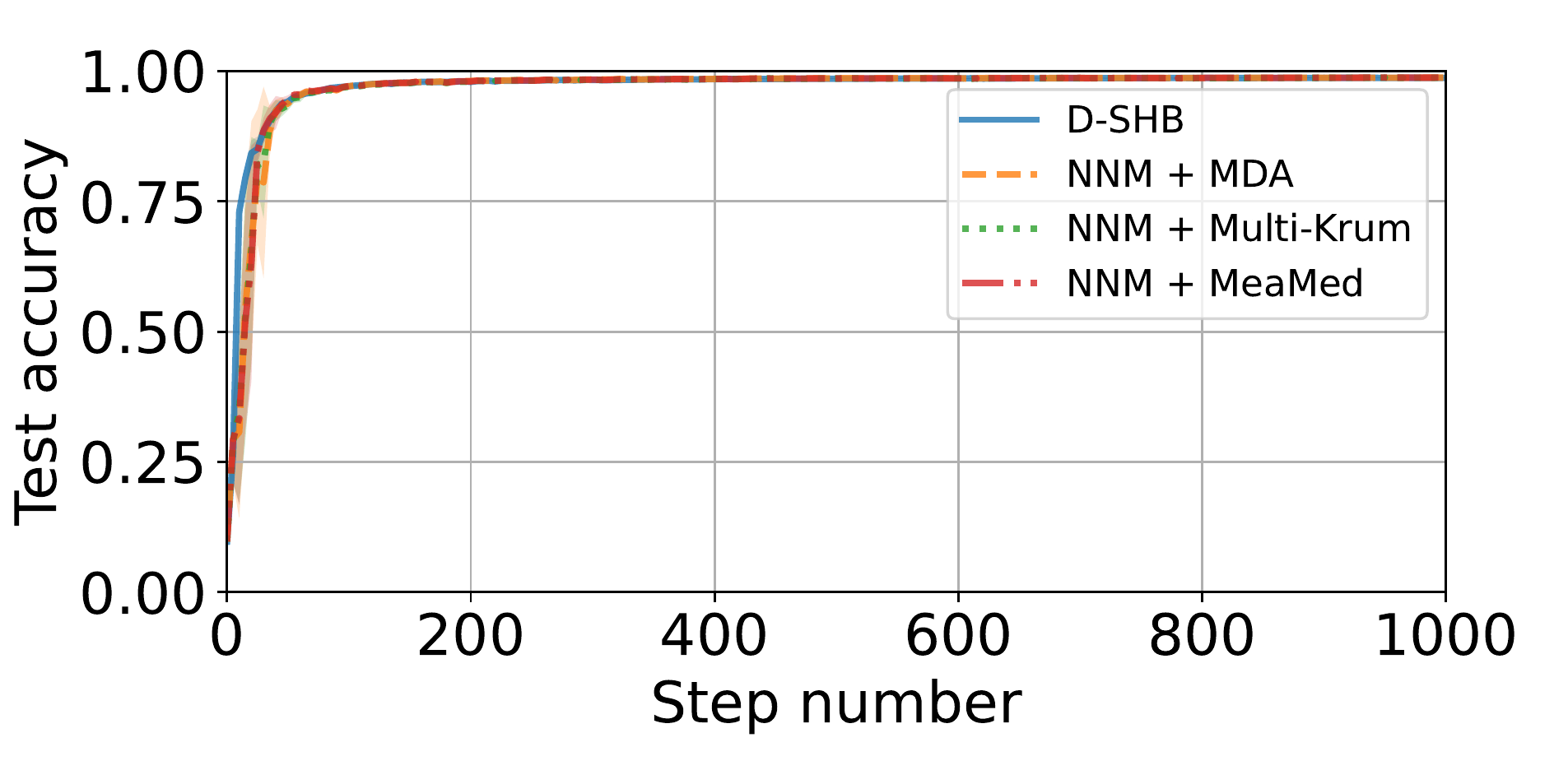}%
    \caption{Experiments on MNIST using robust D-SHB with $f = 6$ Byzantine among $n = 17$ workers, with $\beta = 0.9$ and $\alpha = 1$. The Byzantine workers execute the FOE (\textit{row 1, left}), ALIE (\textit{row 1, right}), Mimic (\textit{row 2, left}), SF (\textit{row 2, right}), and LF (\textit{row 3}) attacks.}
\label{fig:plots_other_4}
\end{figure*}

\begin{figure*}[ht!]
    \centering
    \includegraphics[width=0.5\textwidth]{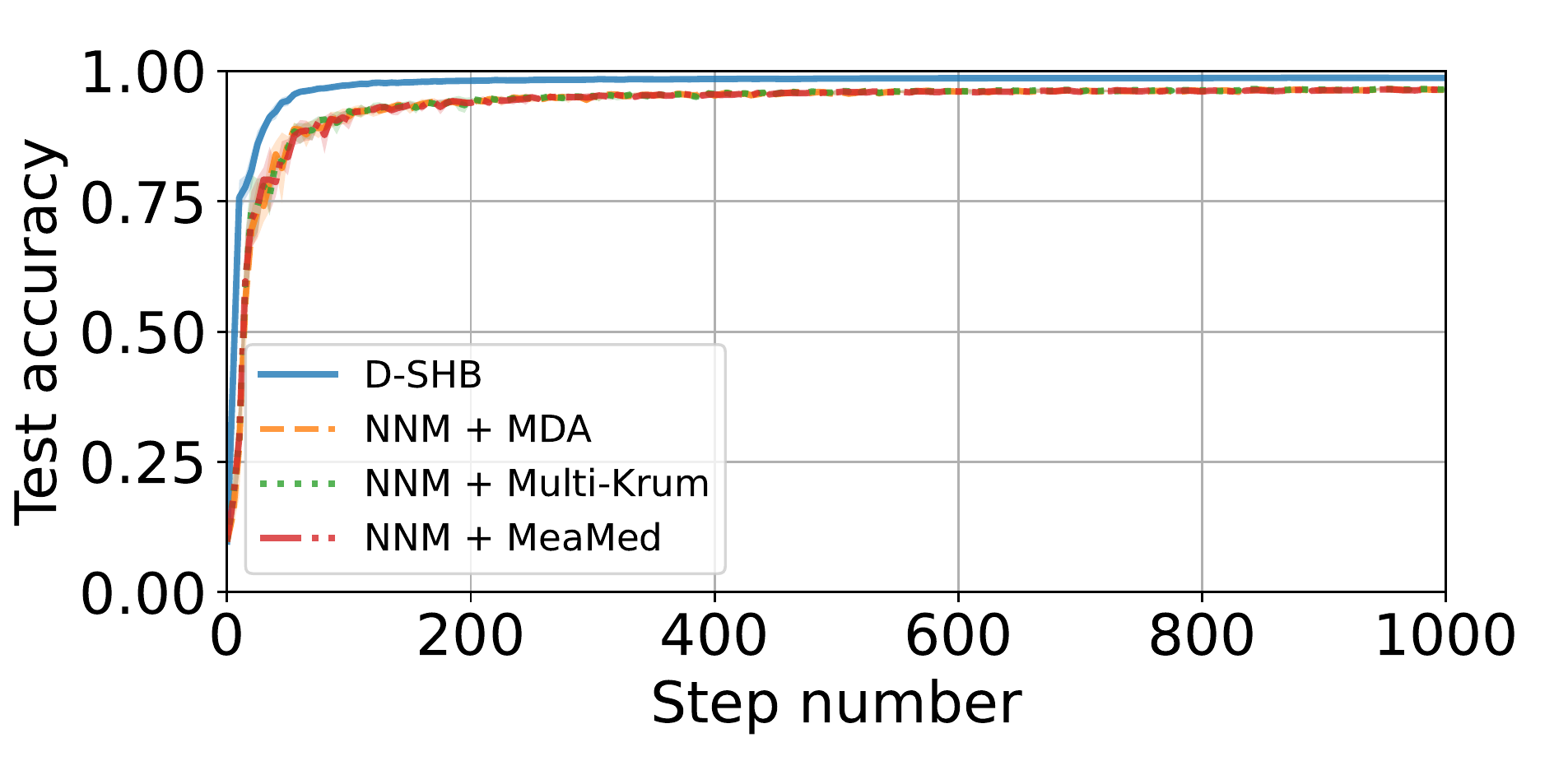}%
    \includegraphics[width=0.5\textwidth]{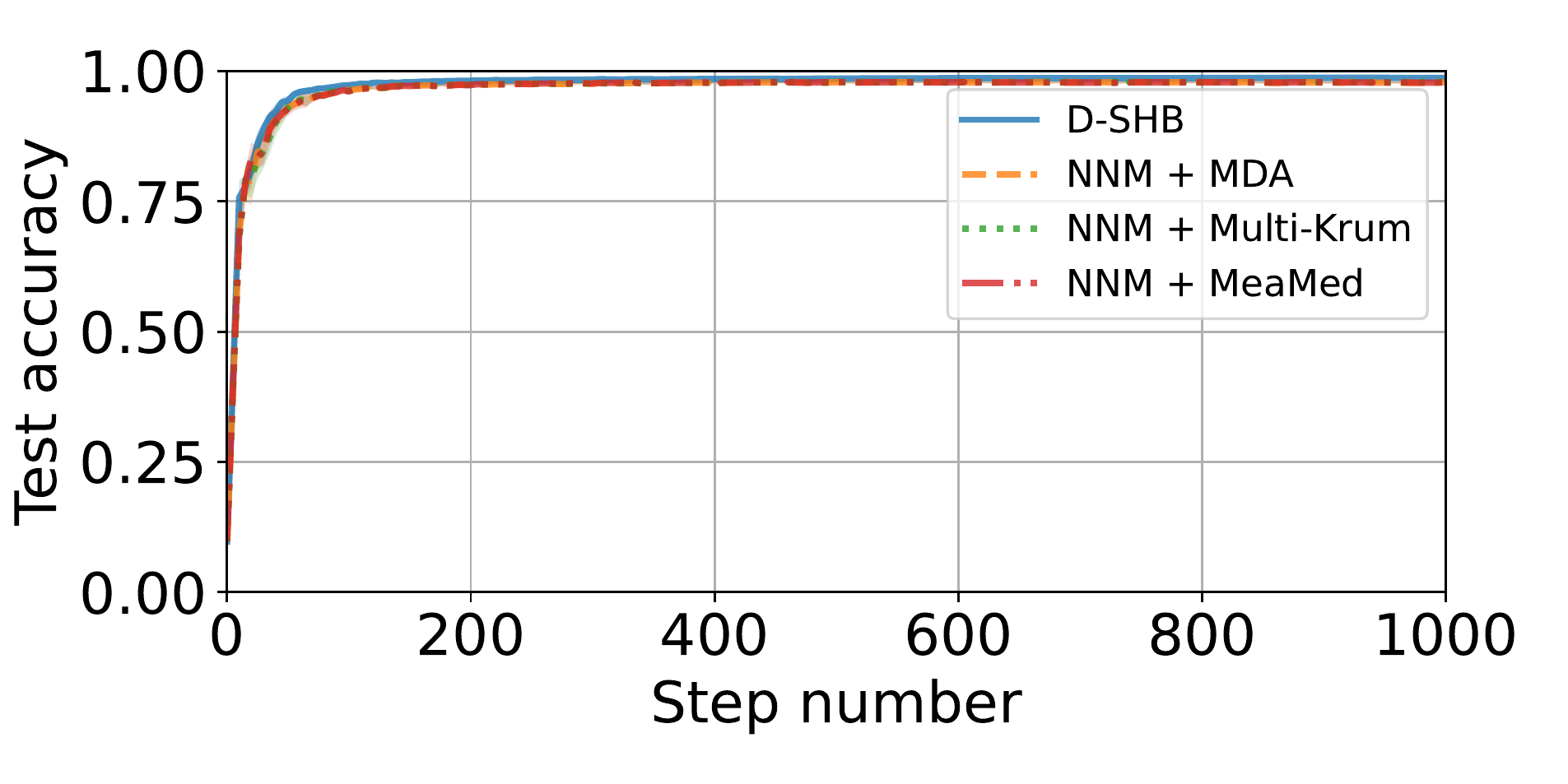}\\%
    \includegraphics[width=0.5\textwidth]{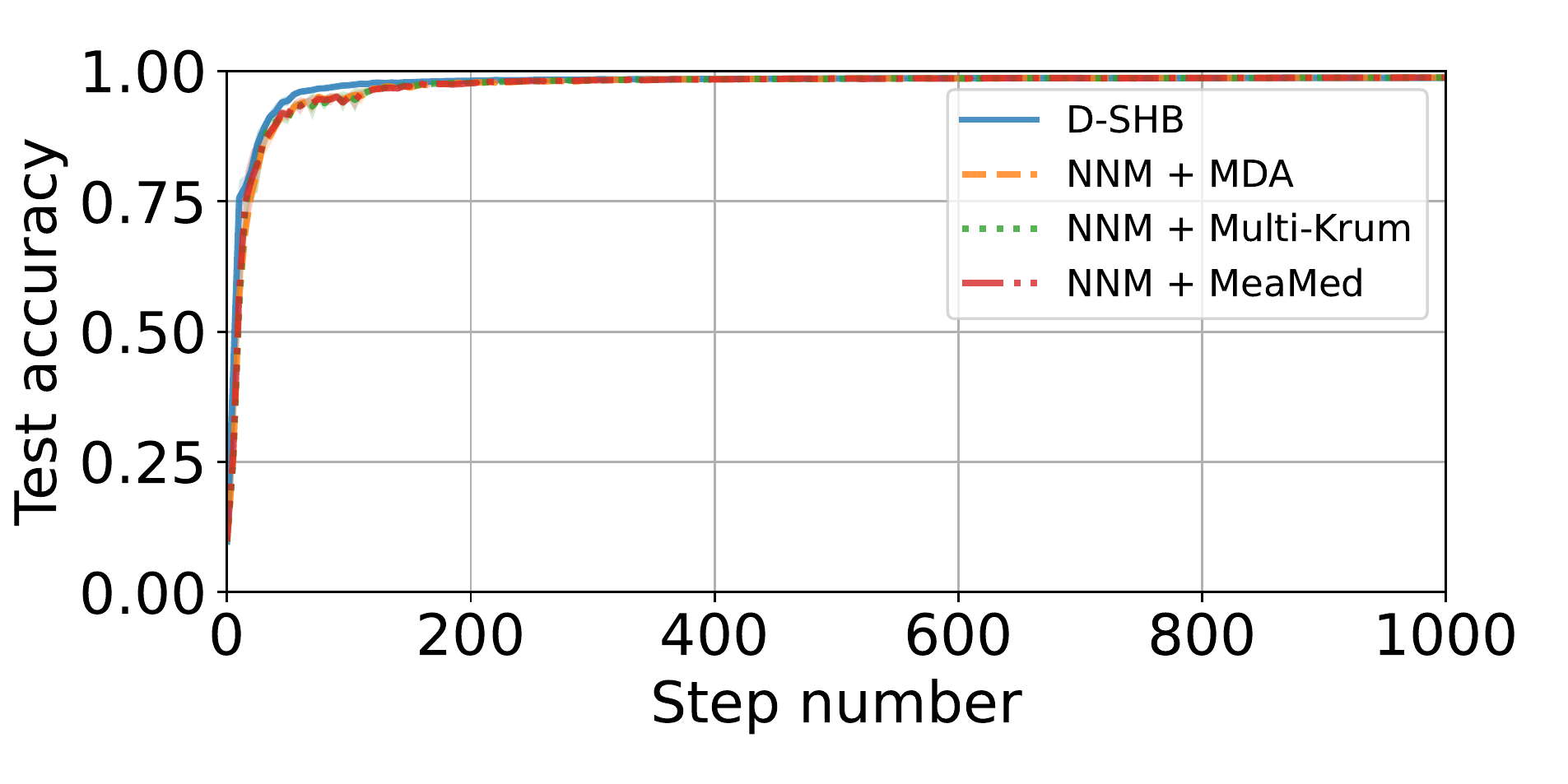}%
    \includegraphics[width=0.5\textwidth]{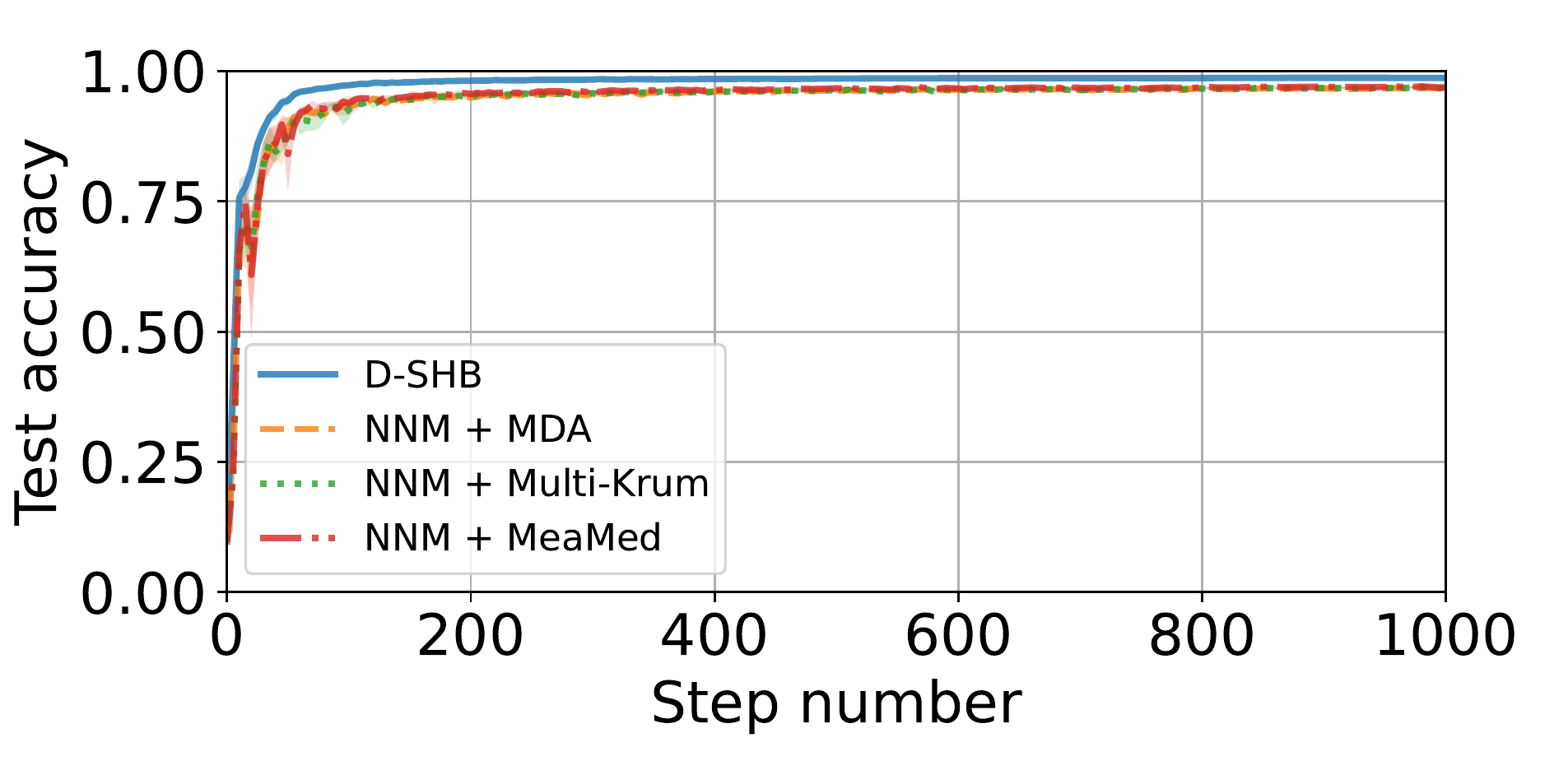}\\%
     \includegraphics[width=0.5\textwidth]{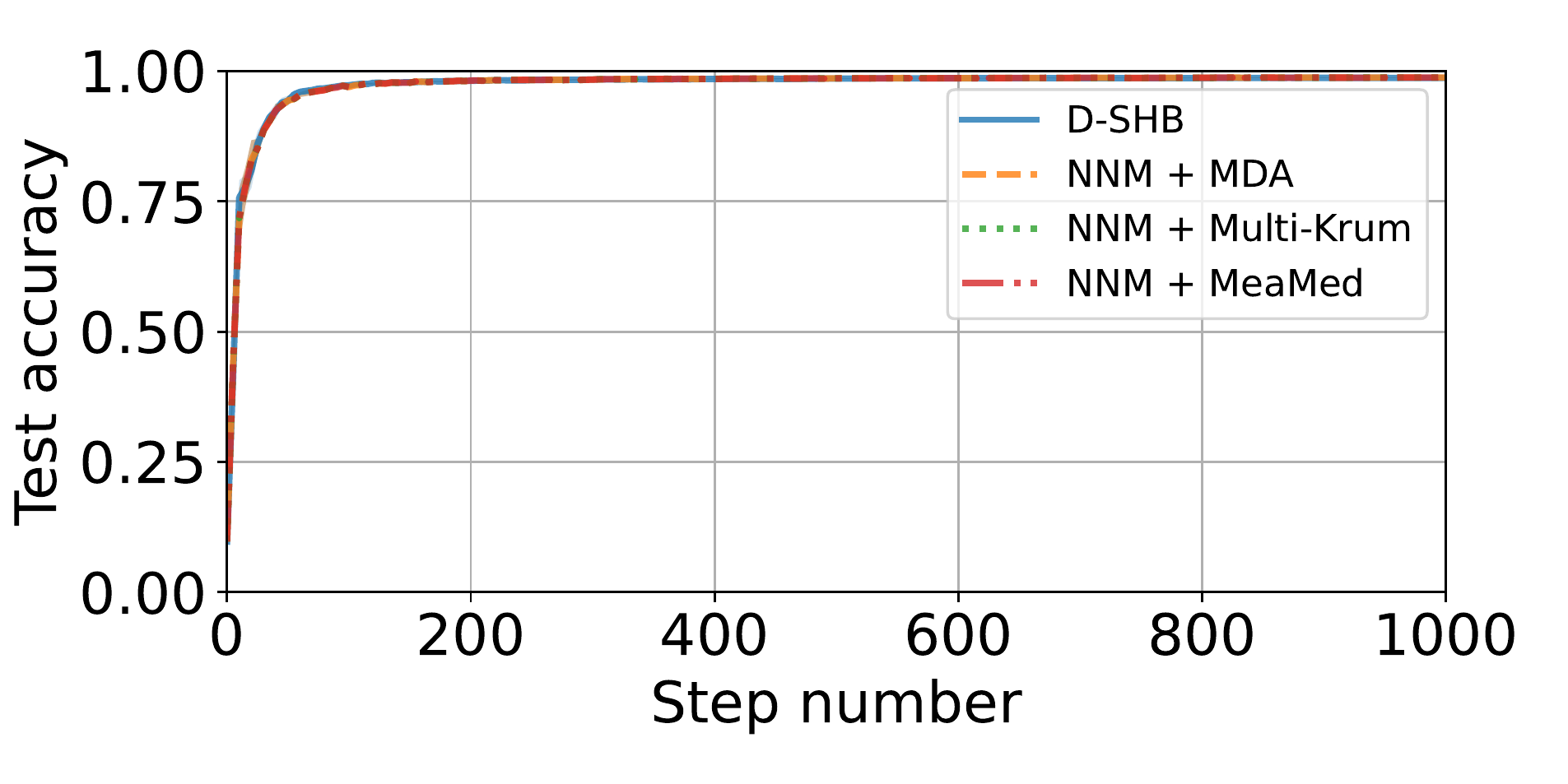}%
    \caption{Experiments on MNIST using robust D-SHB with $f = 6$ Byzantine among $n = 17$ workers, with $\beta = 0.9$ and $\alpha = 10$. The Byzantine workers execute the FOE (\textit{row 1, left}), ALIE (\textit{row 1, right}), Mimic (\textit{row 2, left}), SF (\textit{row 2, right}), and LF (\textit{row 3}) attacks.}
\label{fig:plots_other_5}
\end{figure*}

\begin{figure*}[ht!]
    \centering
    \includegraphics[width=0.5\textwidth]{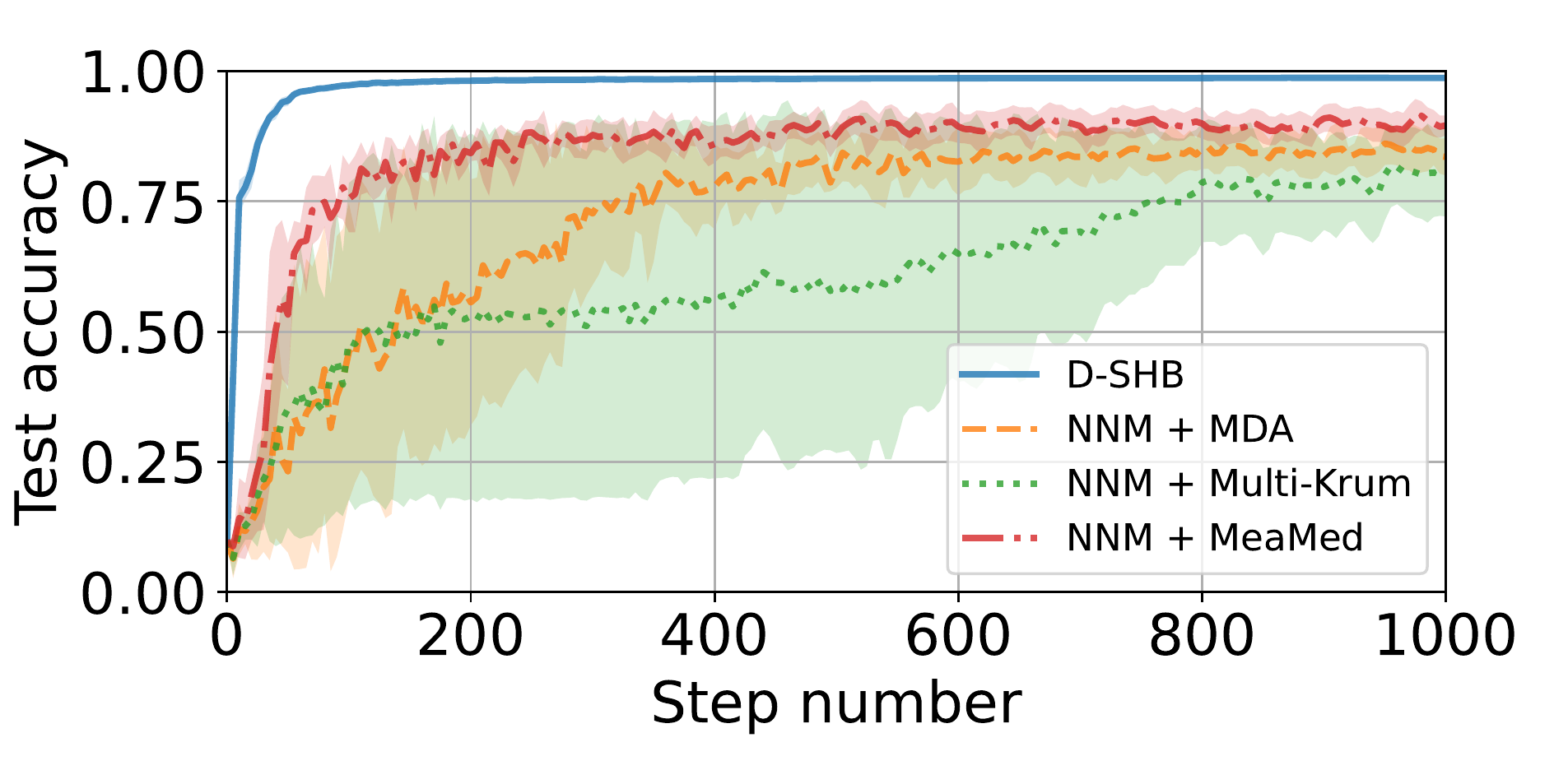}%
    \includegraphics[width=0.5\textwidth]{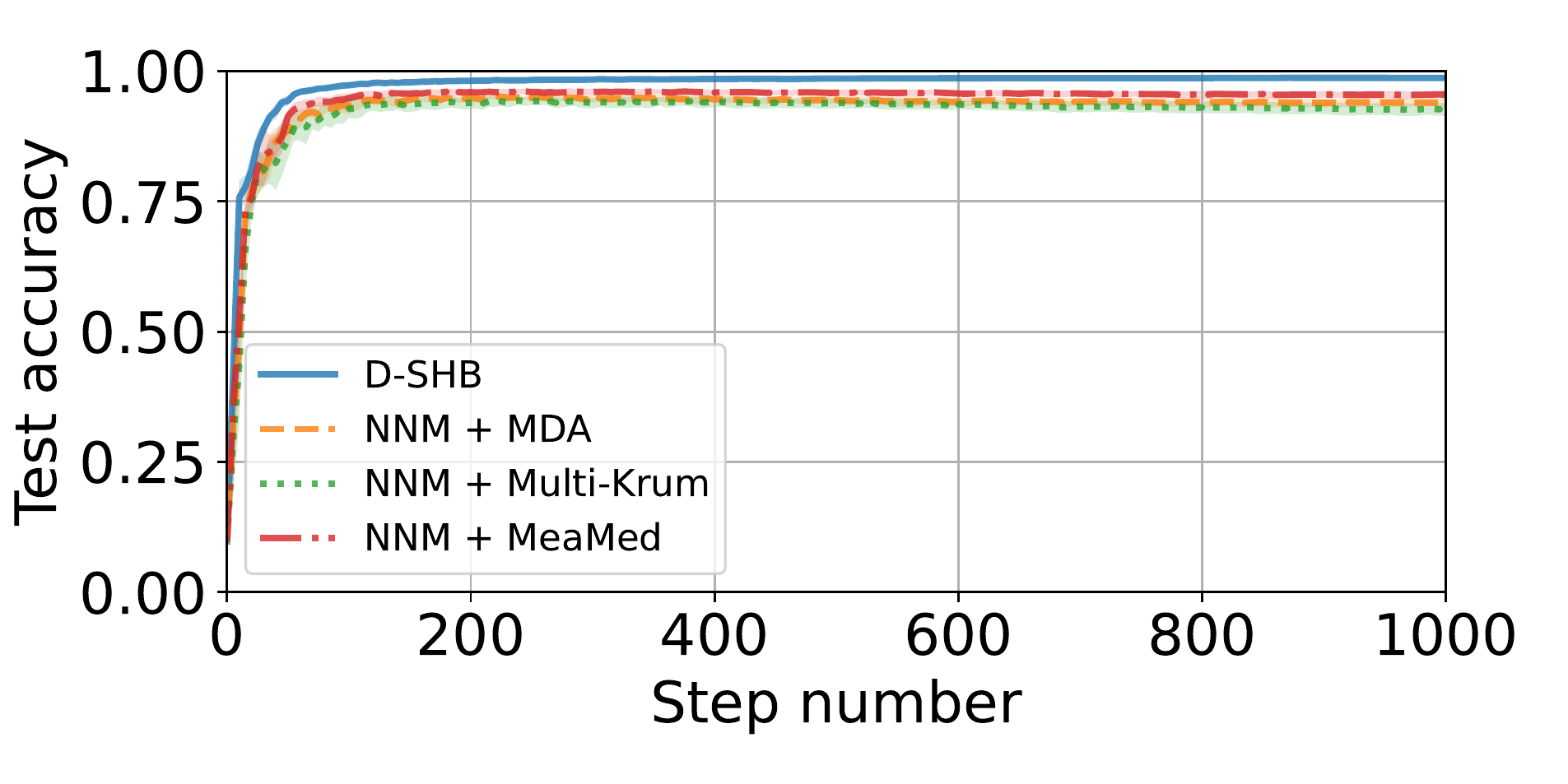}\\%
    \includegraphics[width=0.5\textwidth]{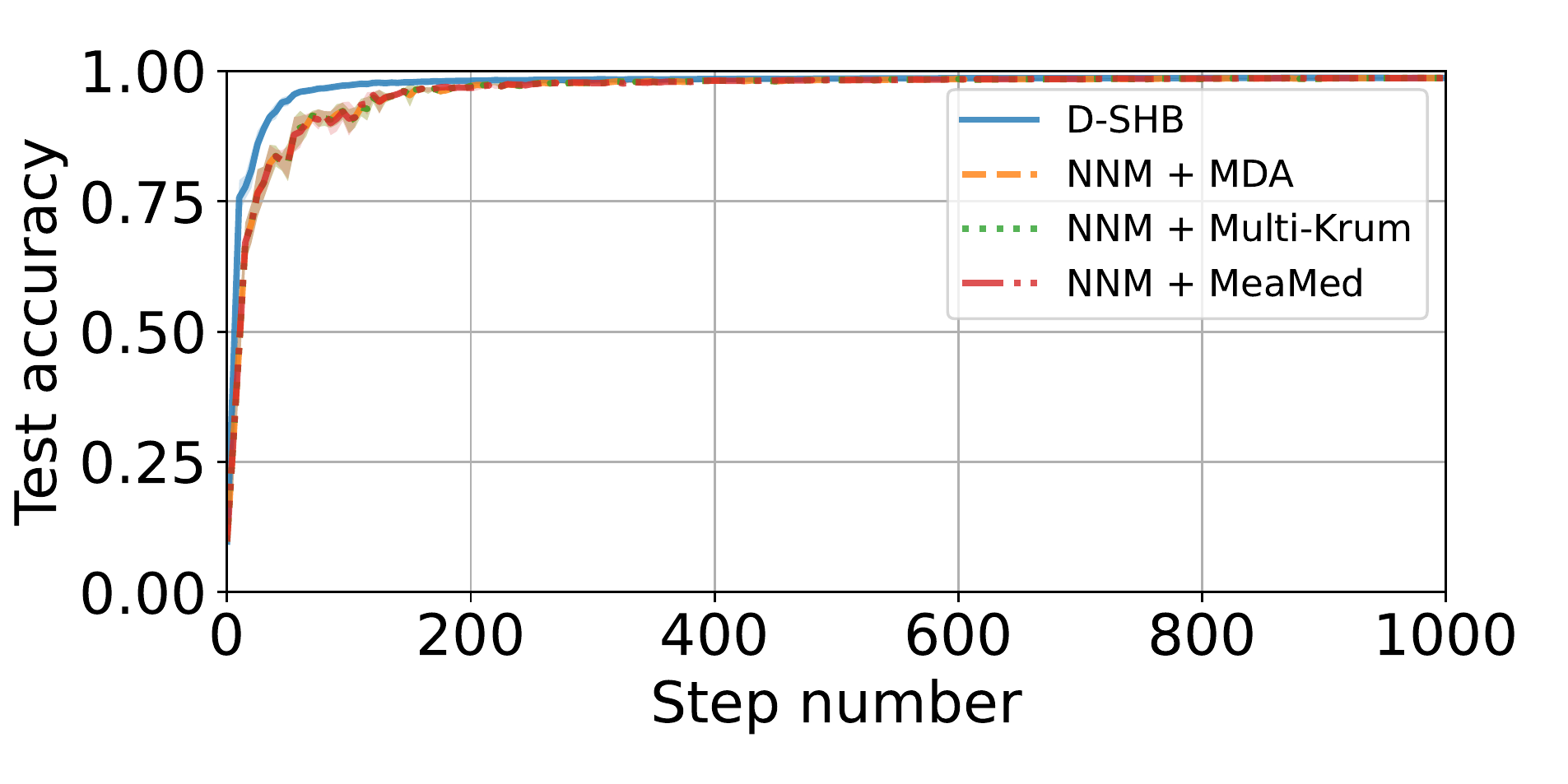}%
    \includegraphics[width=0.5\textwidth]{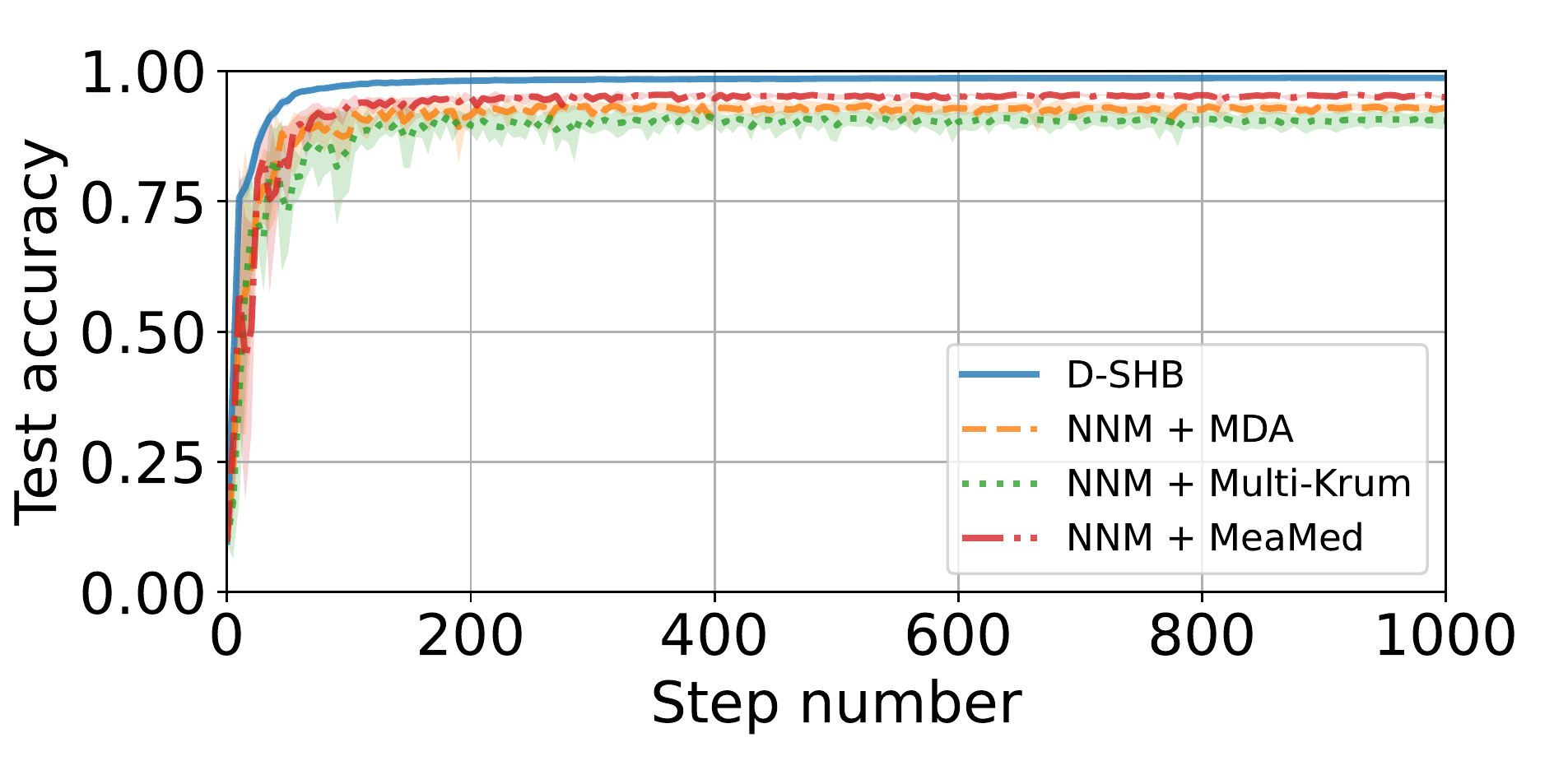}\\%
     \includegraphics[width=0.5\textwidth]{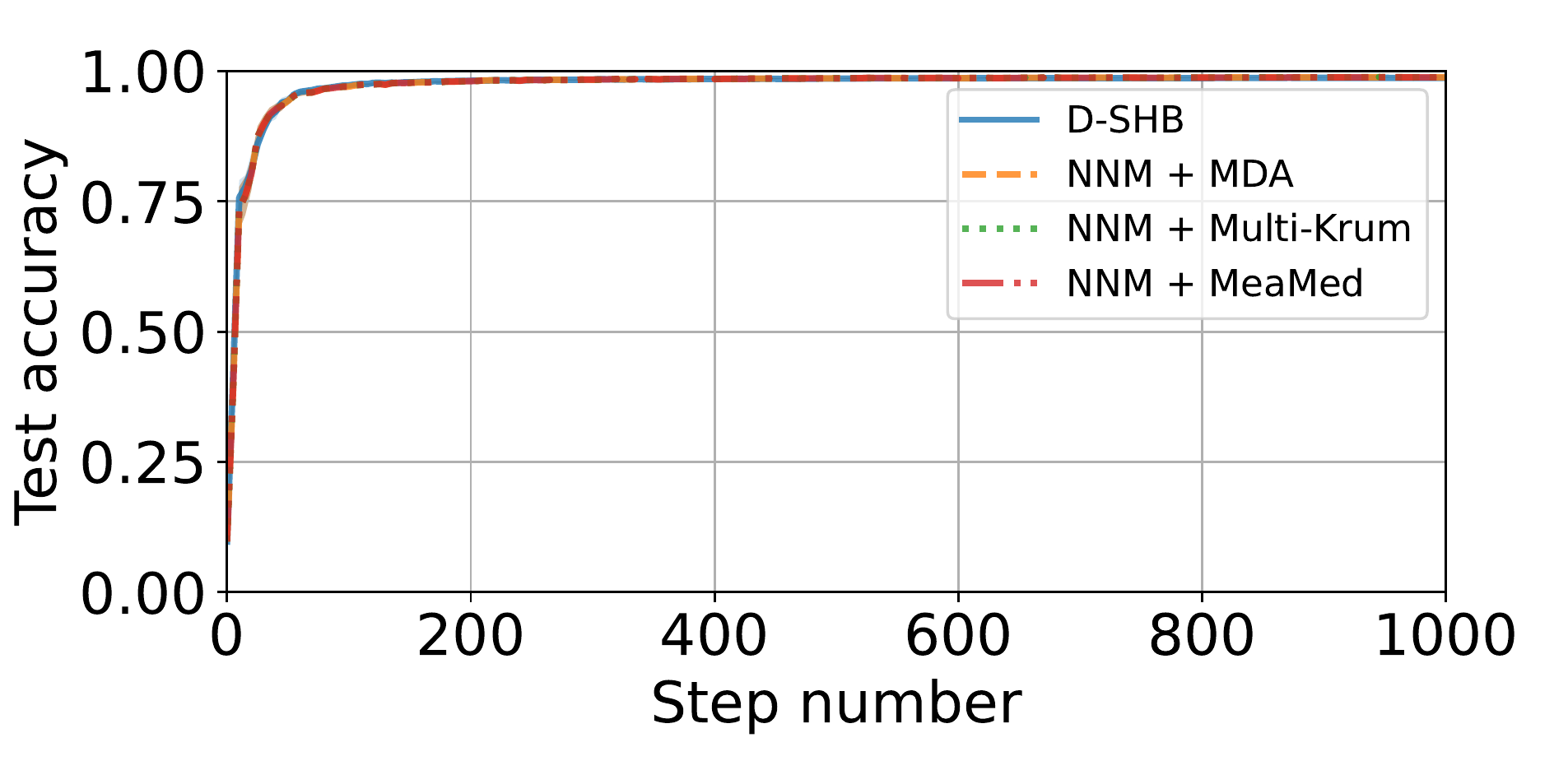}%
    \caption{Experiments on MNIST using robust D-SHB with $f = 8$ Byzantine among $n = 17$ workers, with $\beta = 0.9$ and $\alpha = 10$. The Byzantine workers execute the FOE (\textit{row 1, left}), ALIE (\textit{row 1, right}), Mimic (\textit{row 2, left}), SF (\textit{row 2, right}), and LF (\textit{row 3}) attacks.}
\label{fig:plots_other_6}
\end{figure*}

\clearpage
\subsection{Comprehensive Results on Fashion-MNIST}\label{app:exp_results_fashion}
In this section, we showcase our results on the Fashion-MNIST dataset. We consider two Byzantine regimes: $f = 4$ and $f = 6$ out of $n=17$ workers in total. We also consider three heterogeneity regimes: $\alpha=0.1$ (extreme), $\alpha=1$ (moderate), and $\alpha=10$ (low).
We plots compare the performance of $\cenna{}$ and Bucketing when executed with four aggregation rules namely Krum, GM, CWMed, and CWTM. The plots are presented below.
We note that in the strongest Byzantine setting, i.e., when $f > 4$, Bucketing can only be applied with bucket size $s=1$~\cite{karimireddy2022byzantinerobust}. This means that when $f=6$, executing Bucketing boils down to running the vanilla aggregation rules.

\begin{figure*}[ht!]
    \centering
    \includegraphics[width=0.5\textwidth]{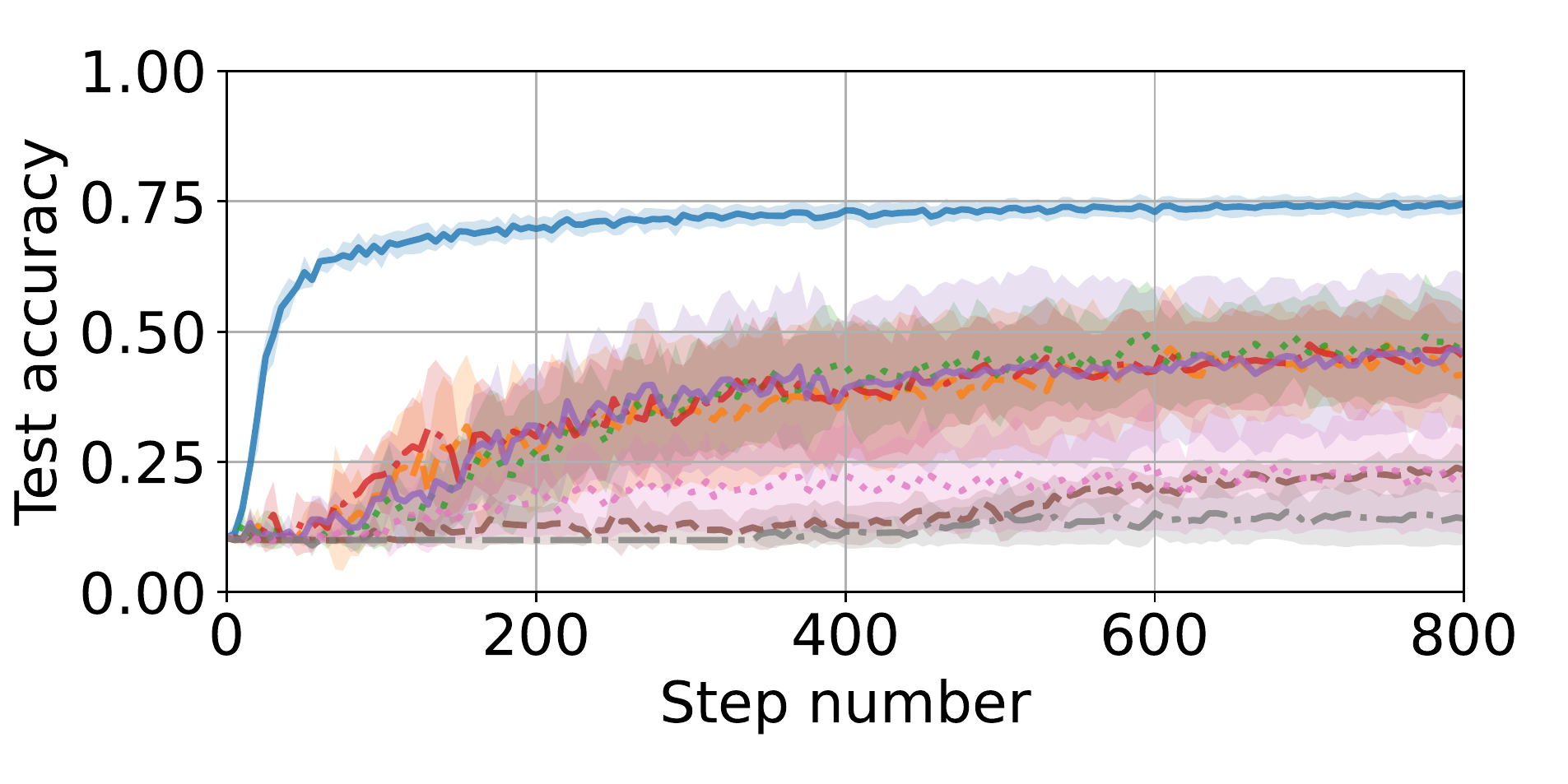}%
    \includegraphics[width=0.5\textwidth]{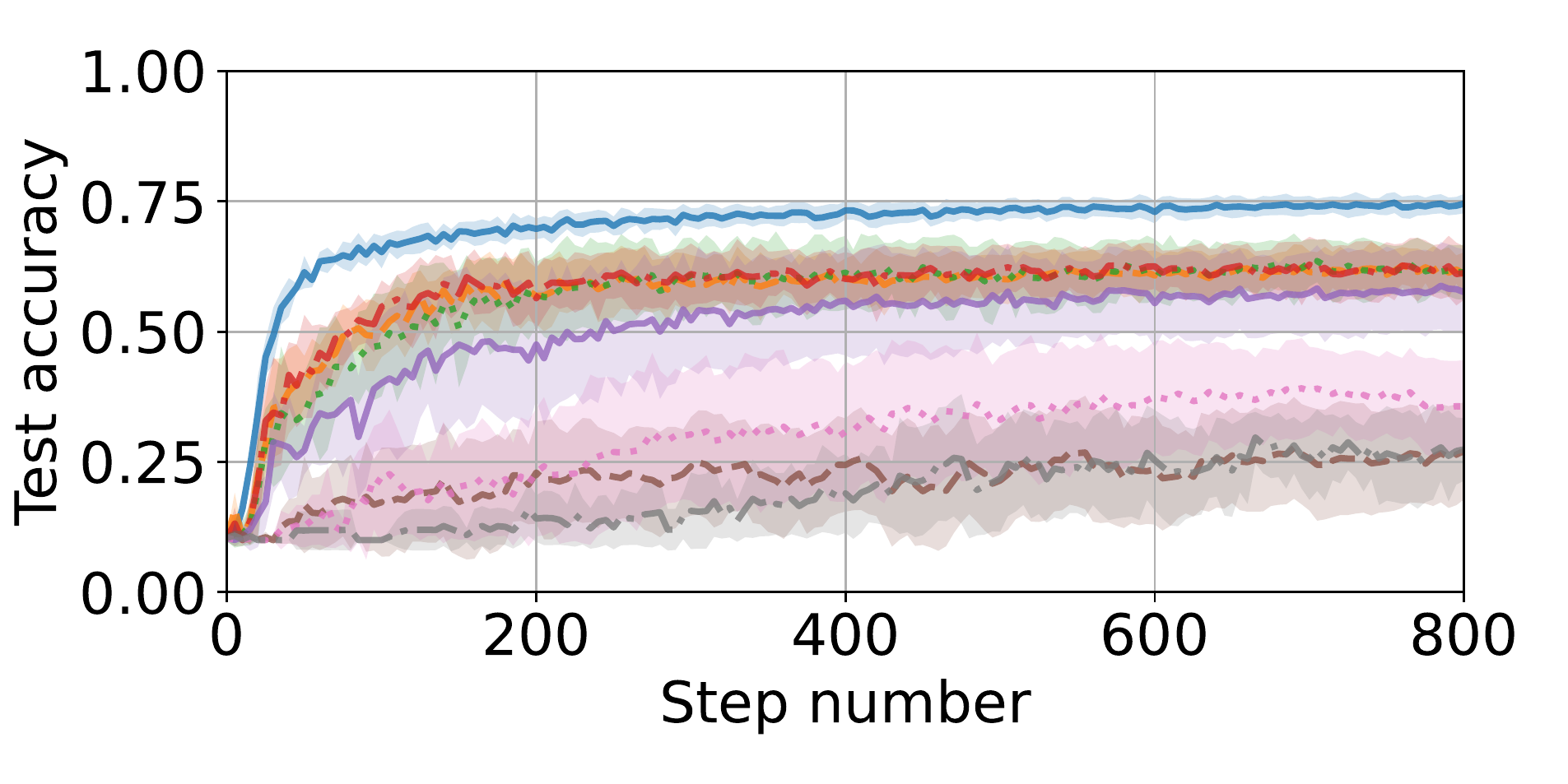}\\%
    \includegraphics[width=0.5\textwidth]{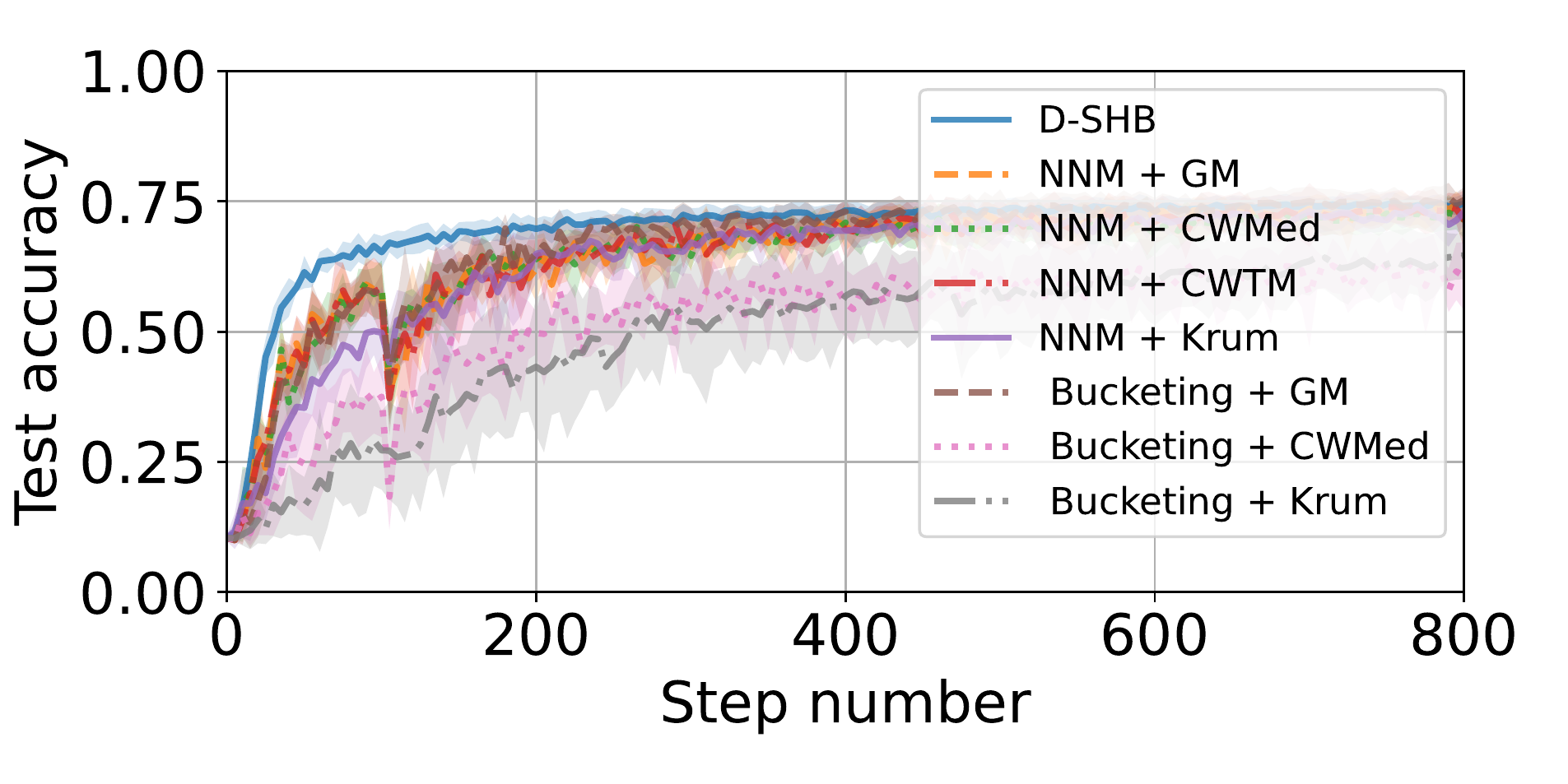}%
    \includegraphics[width=0.5\textwidth]{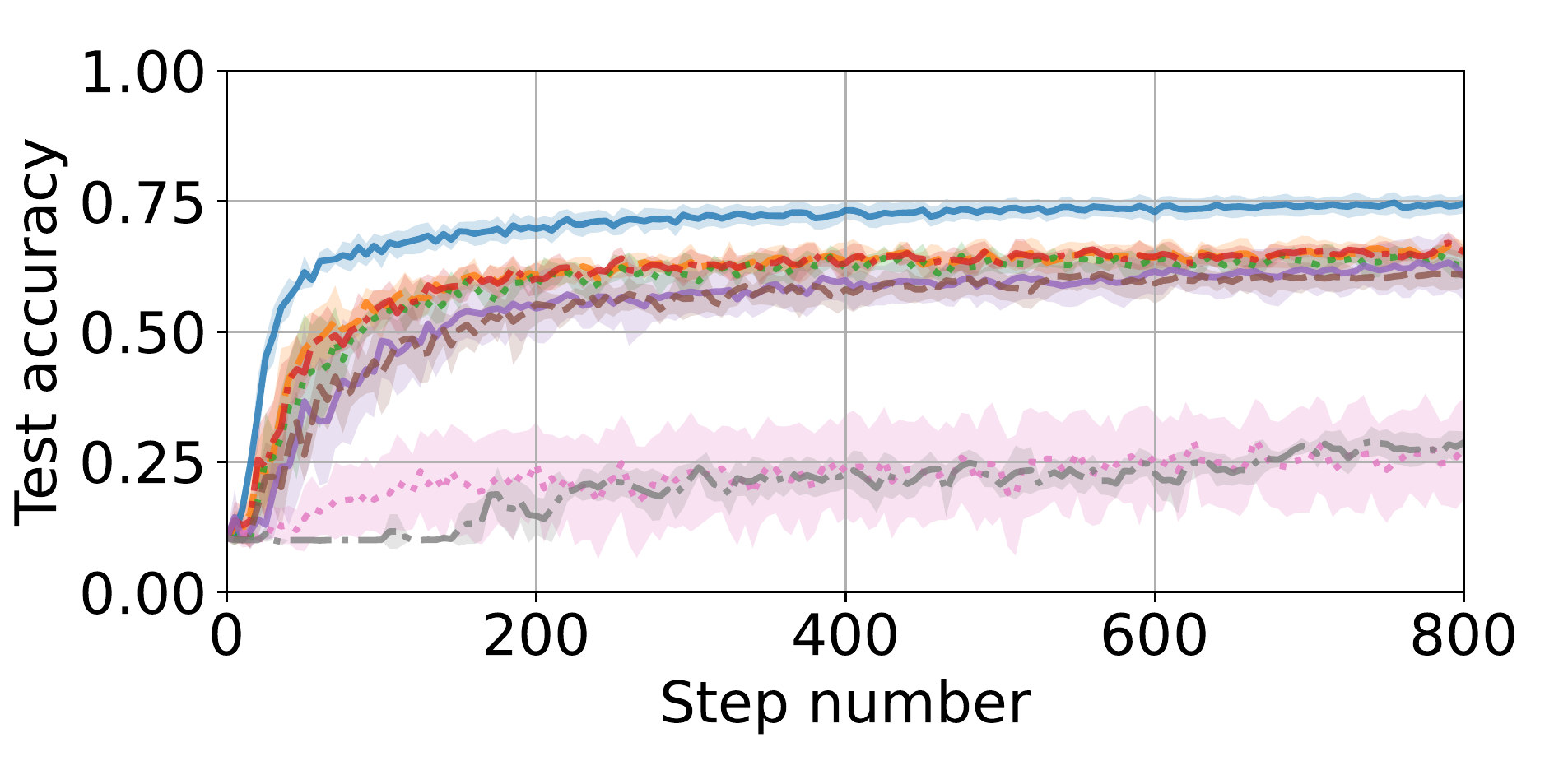}\\%
     \includegraphics[width=0.5\textwidth]{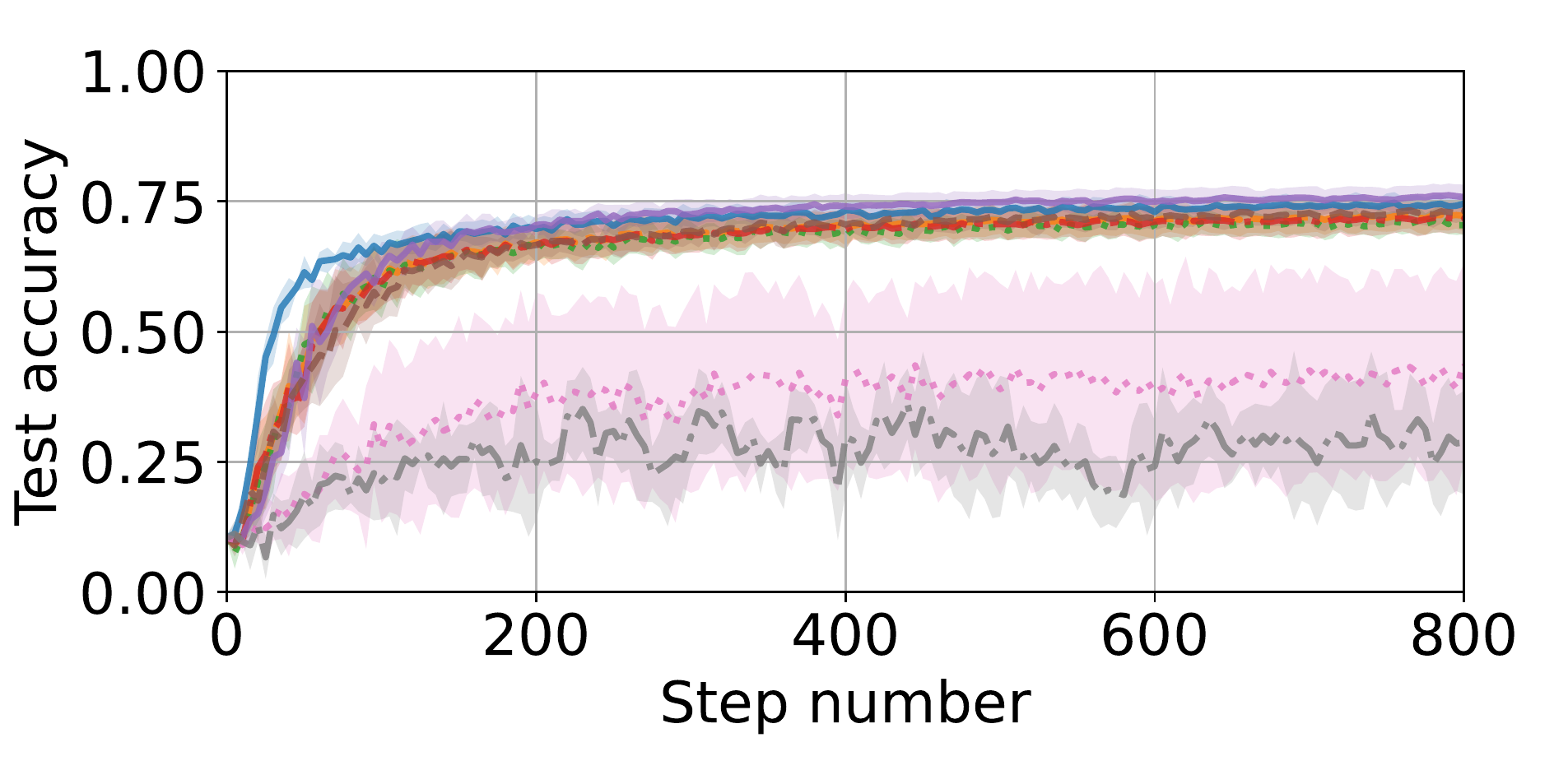}%
    \caption{Experiments on Fashion-MNIST using robust D-SHB with $f = 4$ Byzantine among $n = 17$ workers, with $\beta = 0.9$ and $\alpha = 0.1$. The Byzantine workers execute the FOE (\textit{row 1, left}), ALIE (\textit{row 1, right}), Mimic (\textit{row 2, left}), SF (\textit{row 2, right}), and LF (\textit{row 3}) attacks.}
\label{fig:plots_fashionmnist_1}
\end{figure*}

\begin{figure*}[ht!]
    \centering
    \includegraphics[width=0.5\textwidth]{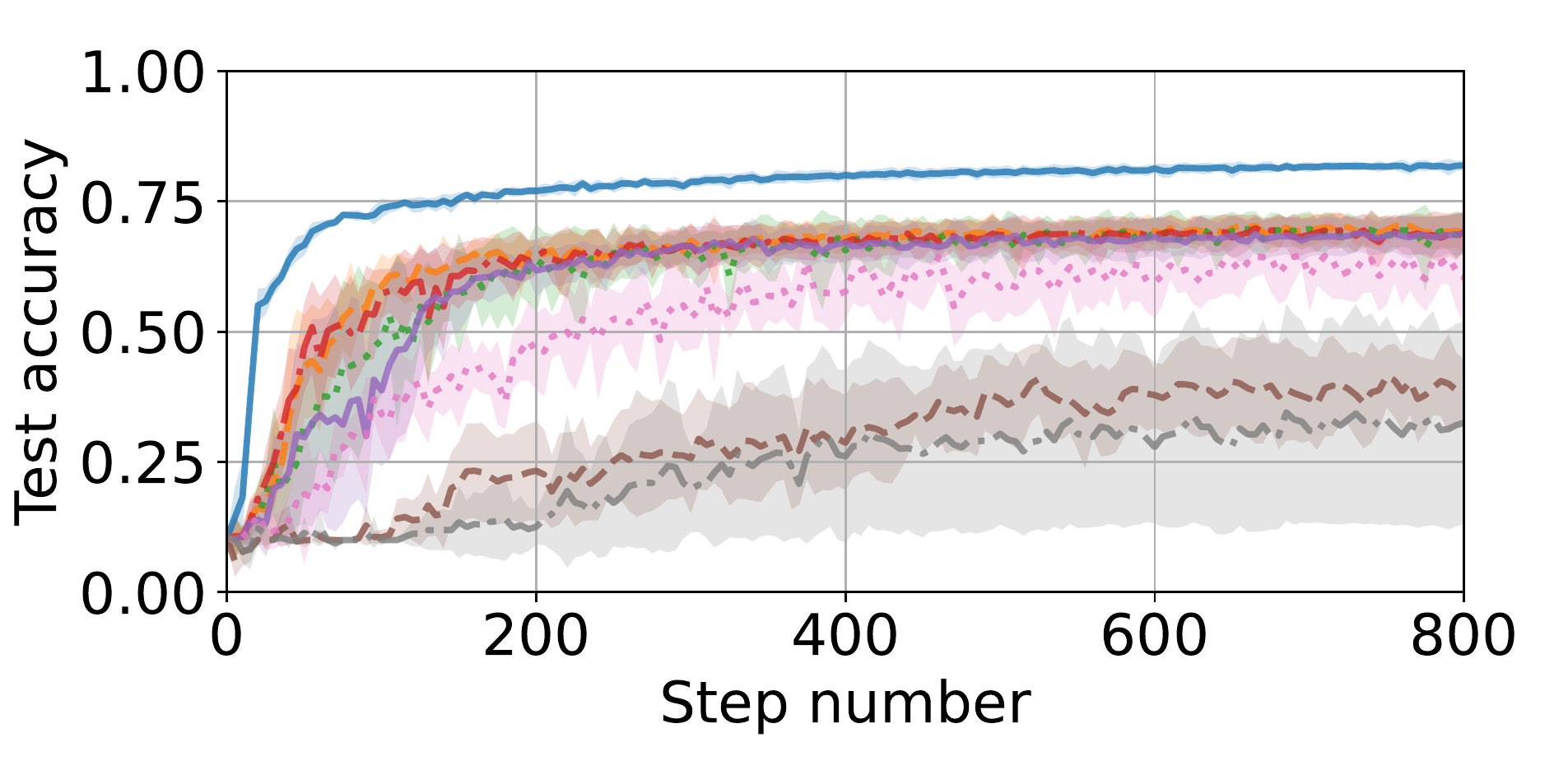}%
    \includegraphics[width=0.5\textwidth]{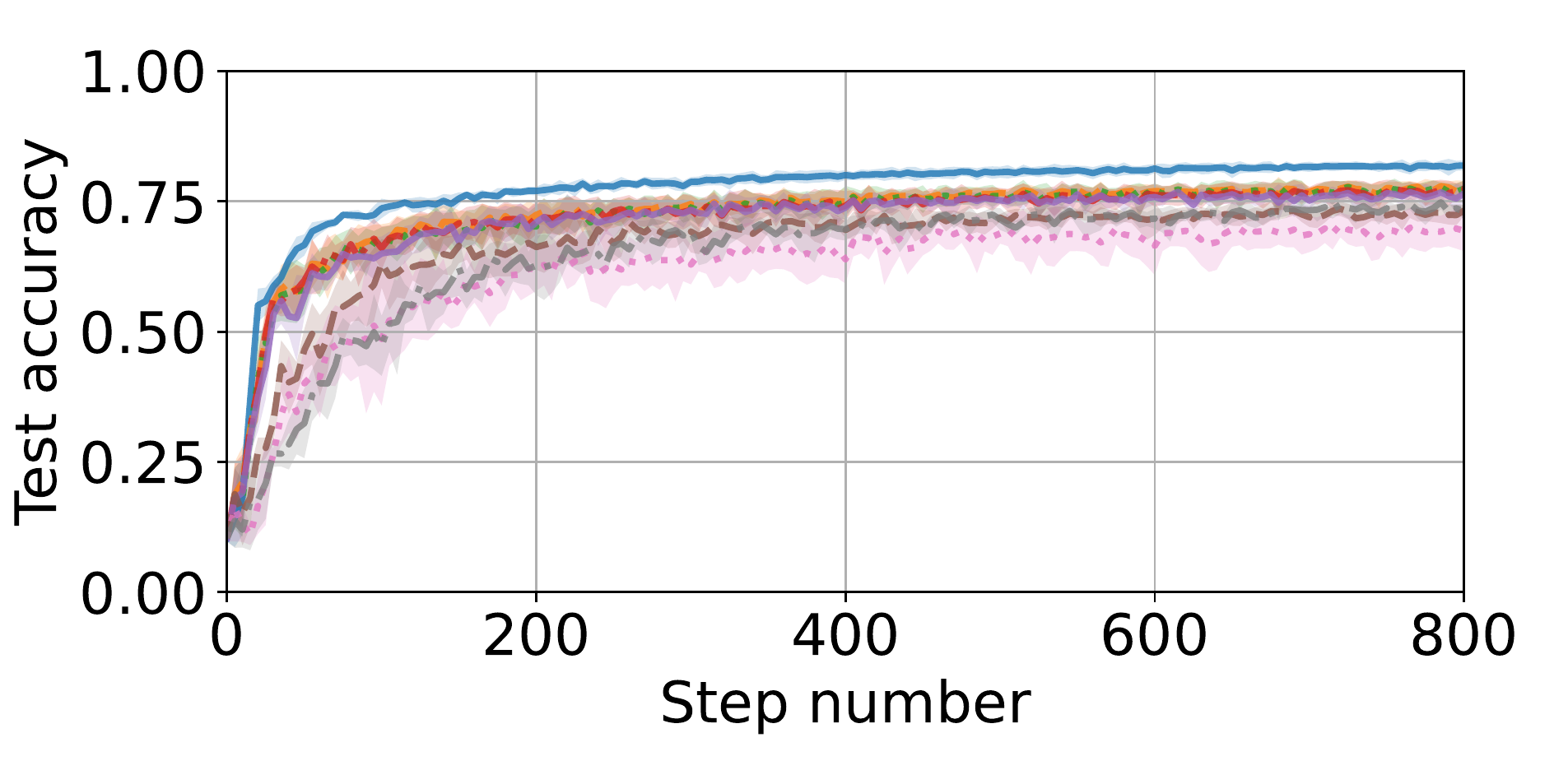}\\%
    \includegraphics[width=0.5\textwidth]{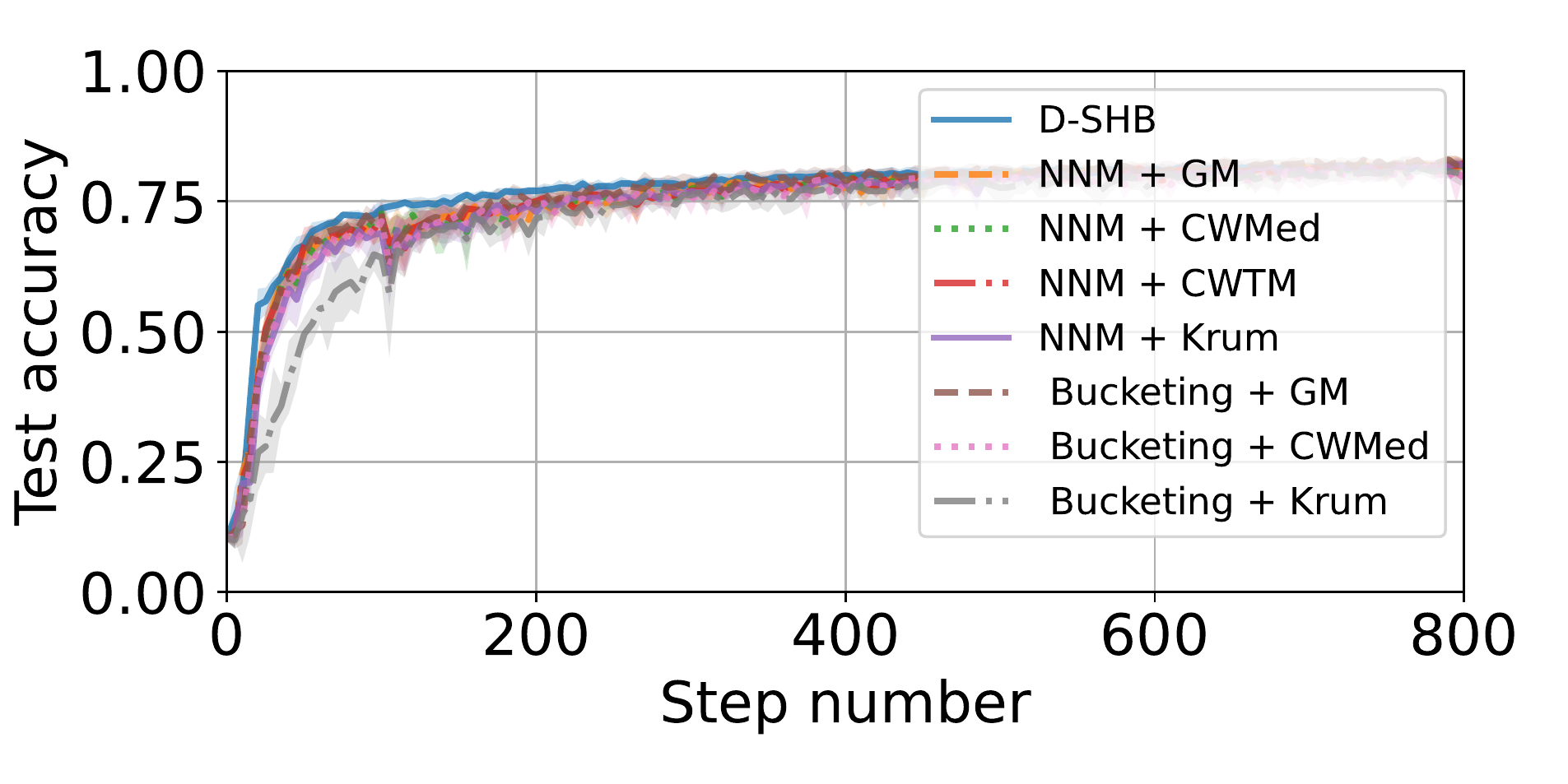}%
    \includegraphics[width=0.5\textwidth]{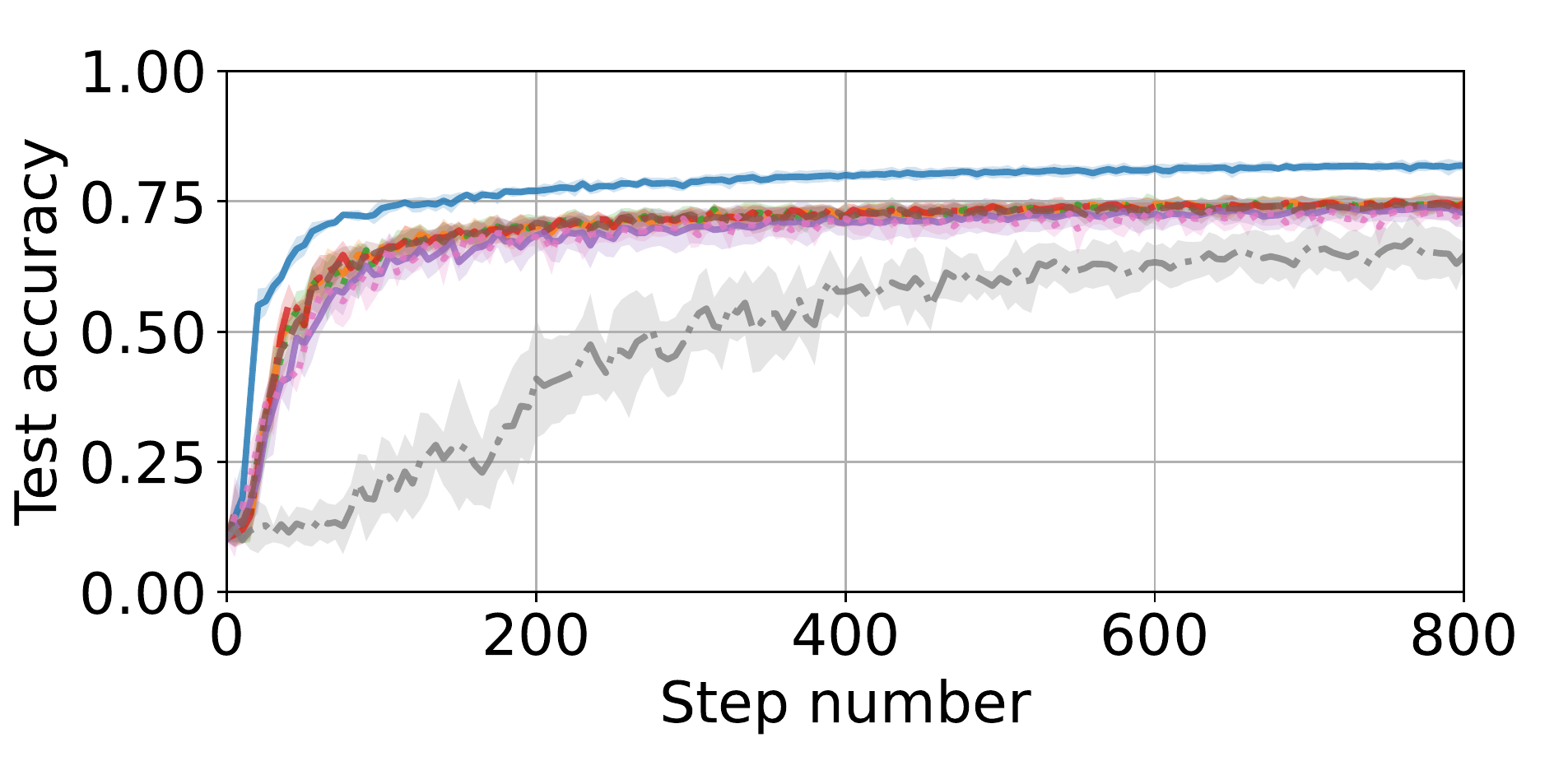}\\%
     \includegraphics[width=0.5\textwidth]{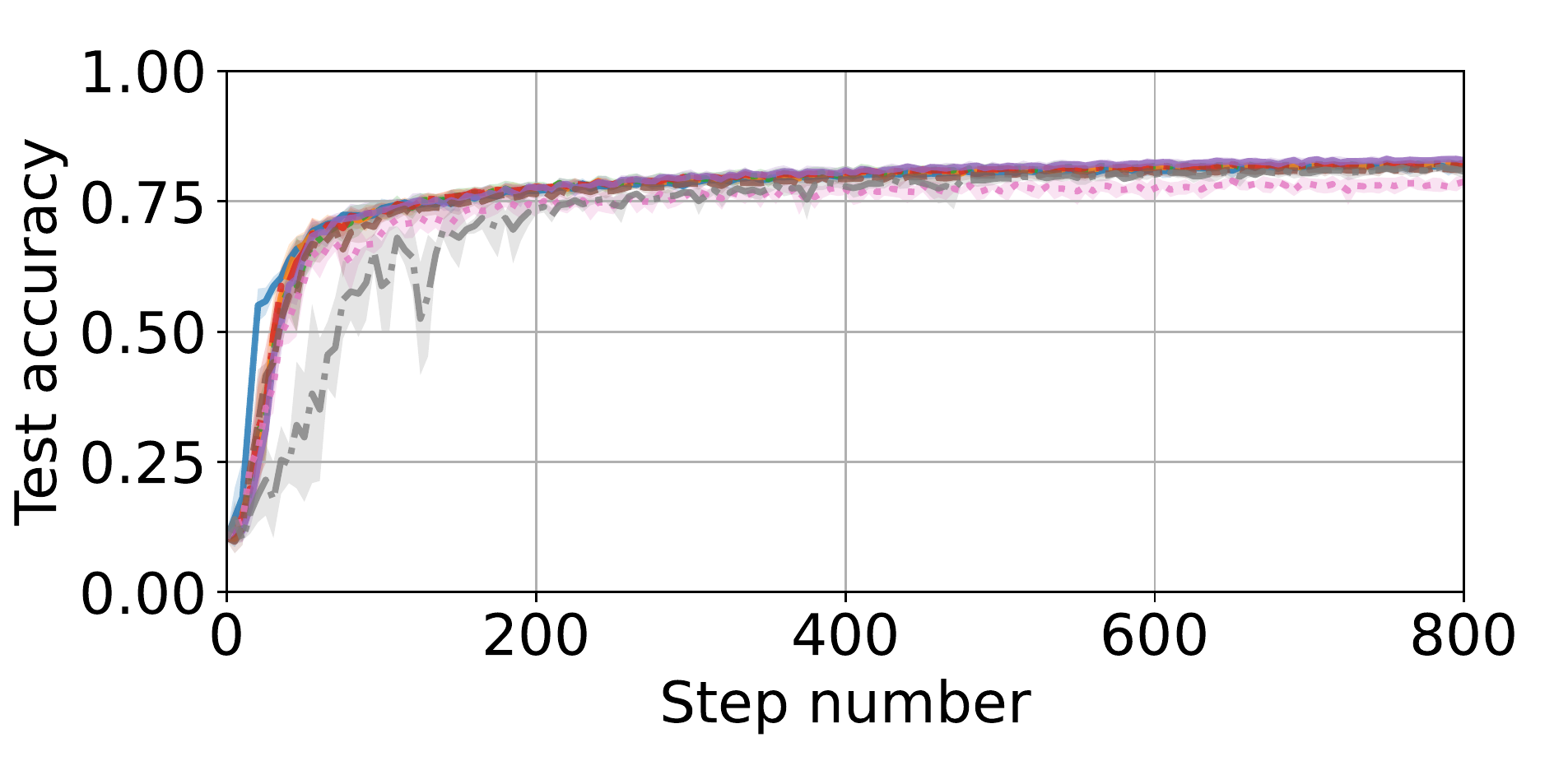}%
    \caption{Experiments on Fashion-MNIST using robust D-SHB with $f = 4$ Byzantine among $n = 17$ workers, with $\beta = 0.9$ and $\alpha = 1$. The Byzantine workers execute the FOE (\textit{row 1, left}), ALIE (\textit{row 1, right}), Mimic (\textit{row 2, left}), SF (\textit{row 2, right}), and LF (\textit{row 3}) attacks.}
\label{fig:plots_fashionmnist_2}
\end{figure*}

\begin{figure*}[ht!]
    \centering
    \includegraphics[width=0.5\textwidth]{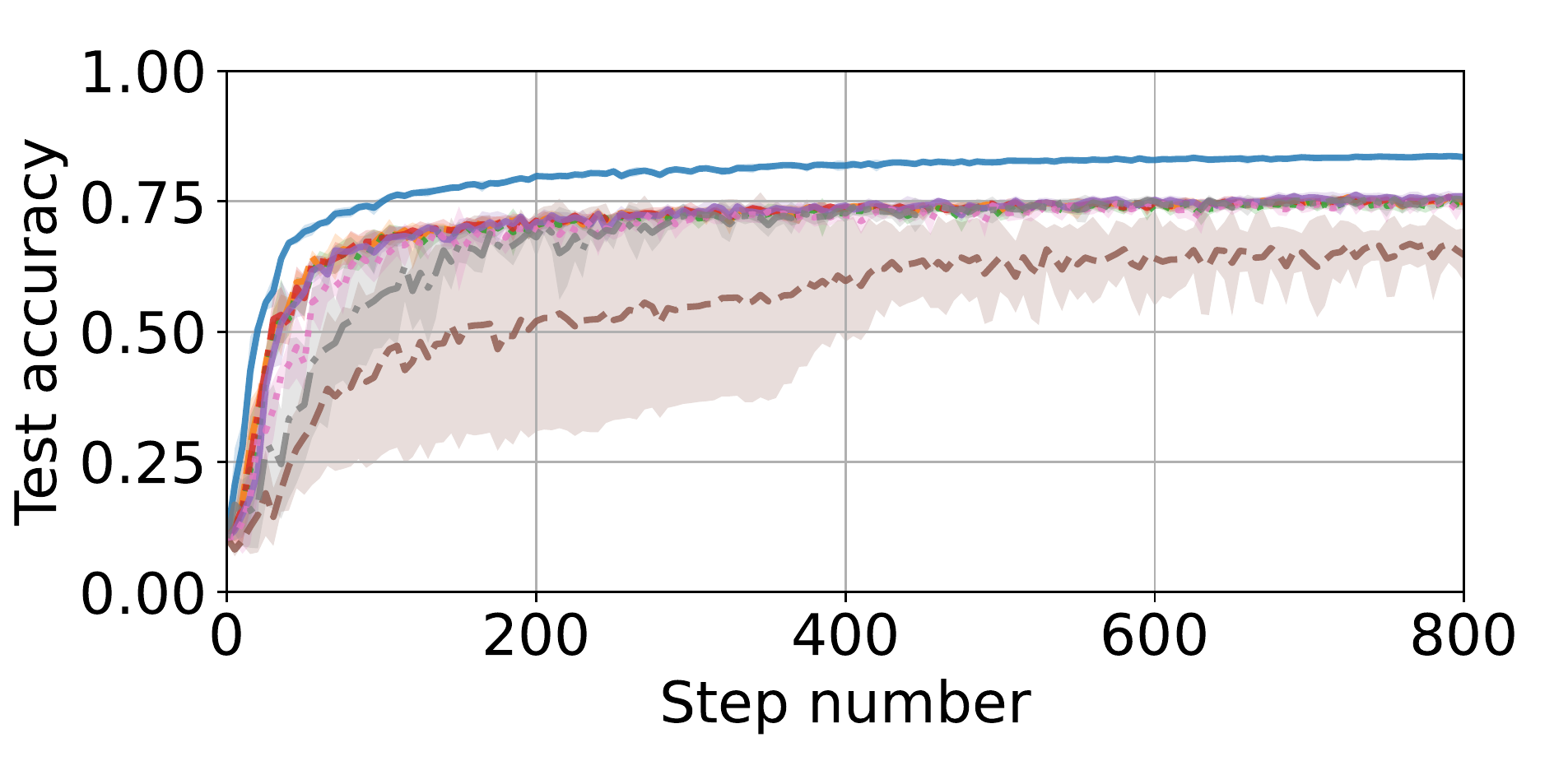}%
    \includegraphics[width=0.5\textwidth]{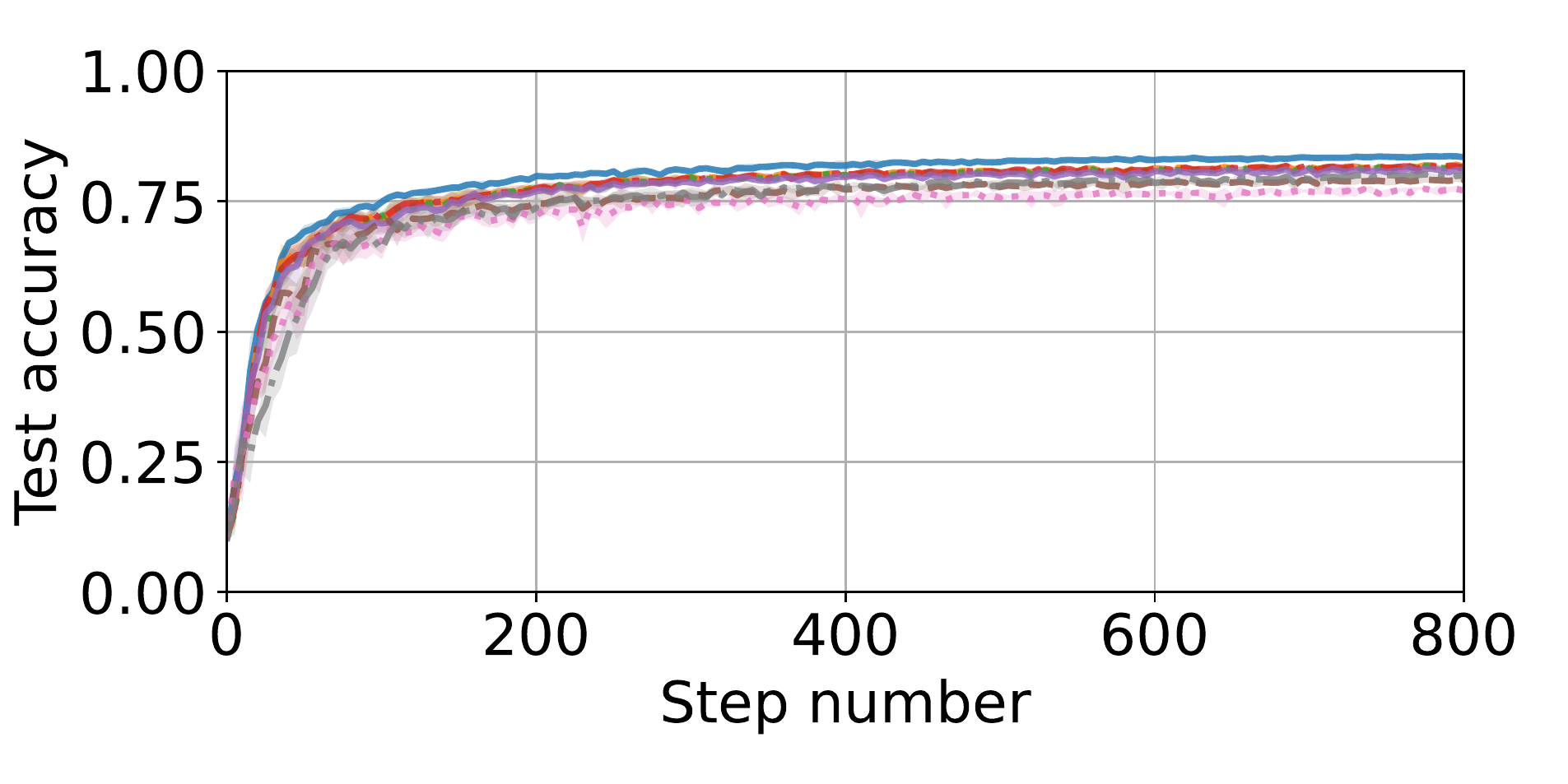}\\%
    \includegraphics[width=0.5\textwidth]{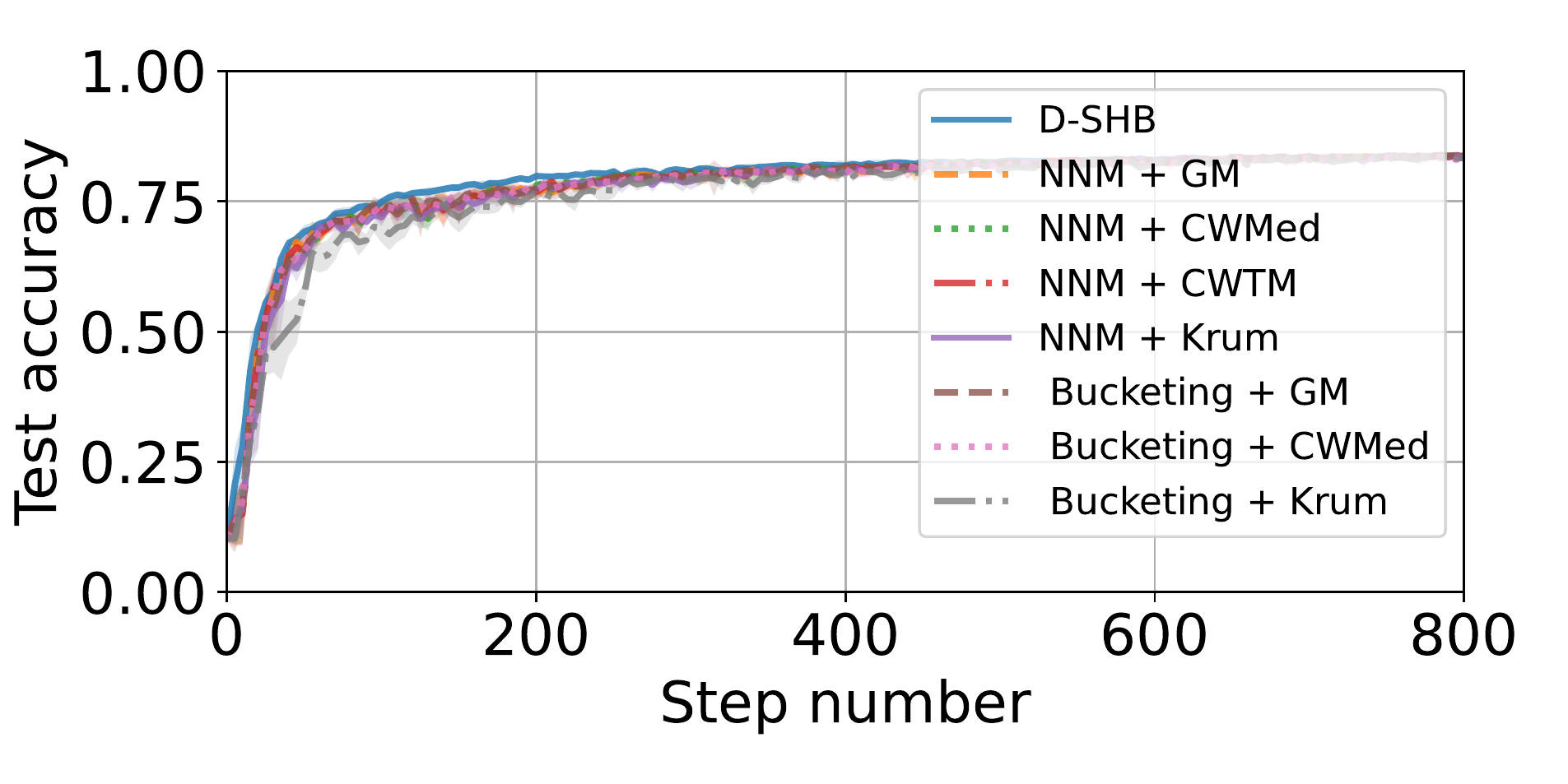}%
    \includegraphics[width=0.5\textwidth]{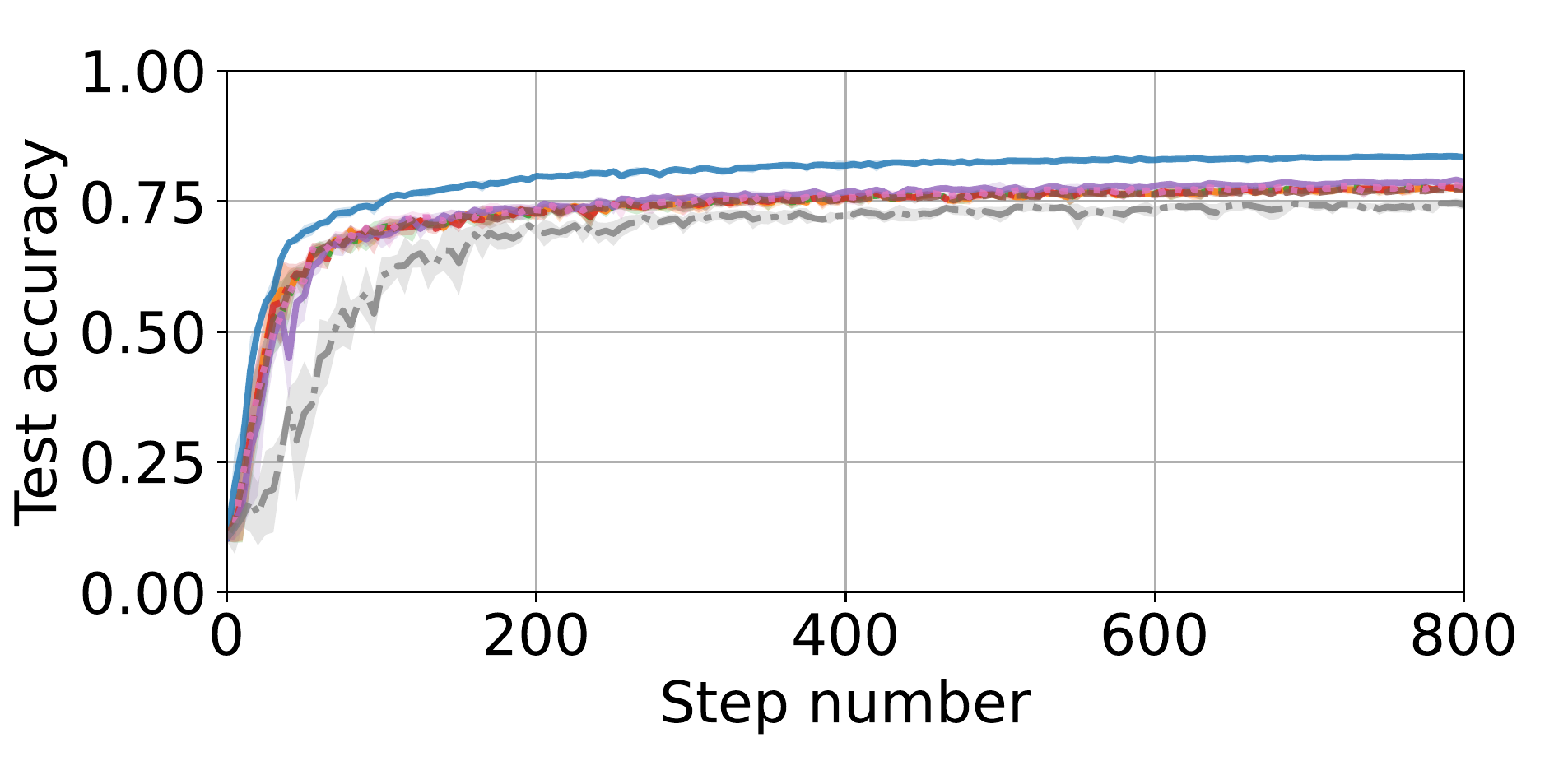}\\%
     \includegraphics[width=0.5\textwidth]{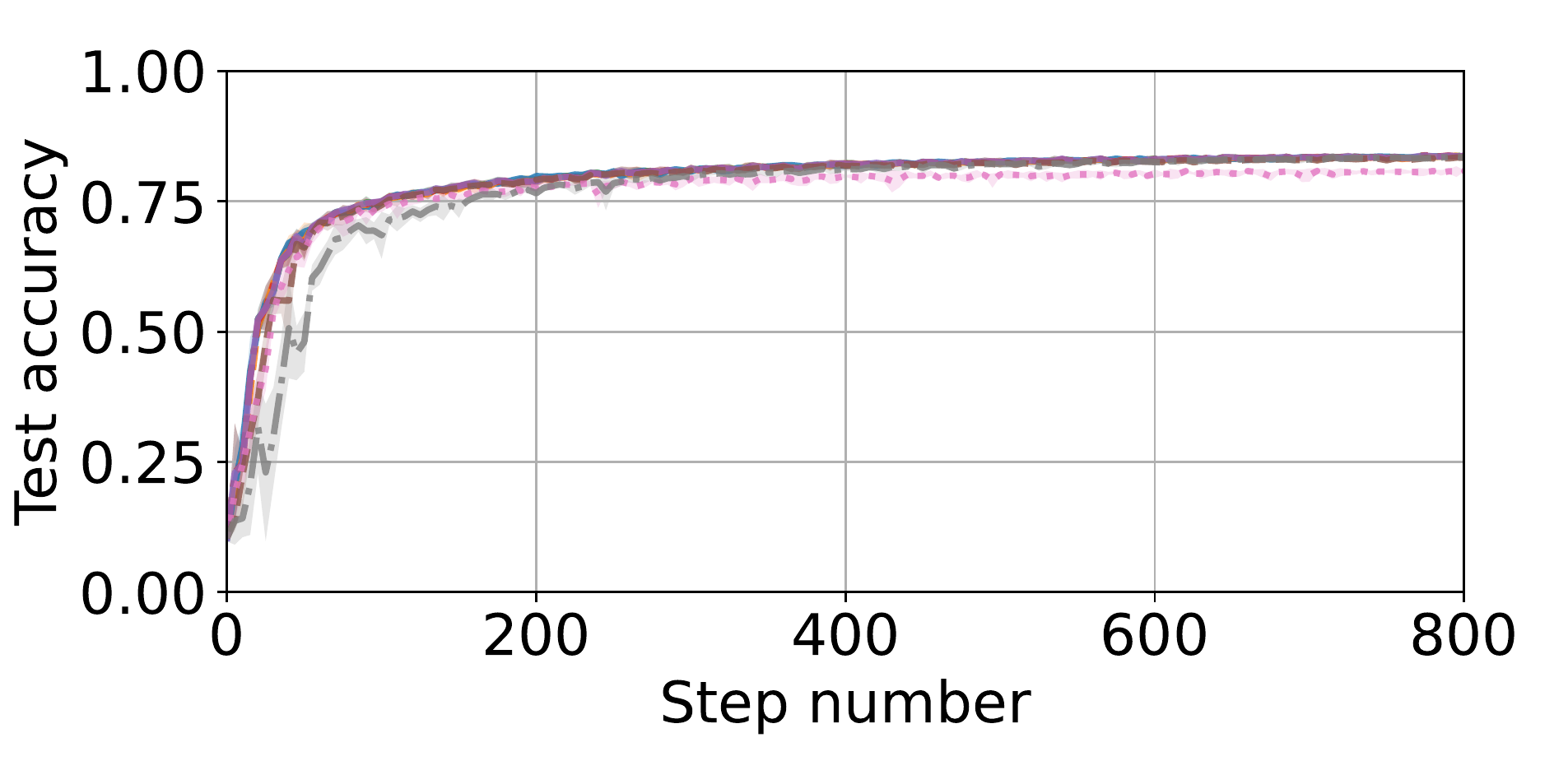}%
    \caption{Experiments on Fashion-MNIST using robust D-SHB with $f = 4$ Byzantine among $n = 17$ workers, with $\beta = 0.9$ and $\alpha = 10$. The Byzantine workers execute the FOE (\textit{row 1, left}), ALIE (\textit{row 1, right}), Mimic (\textit{row 2, left}), SF (\textit{row 2, right}), and LF (\textit{row 3}) attacks.}
\label{fig:plots_fashionmnist_3}
\end{figure*}

\begin{figure*}[ht!]
    \centering
    \includegraphics[width=0.5\textwidth]{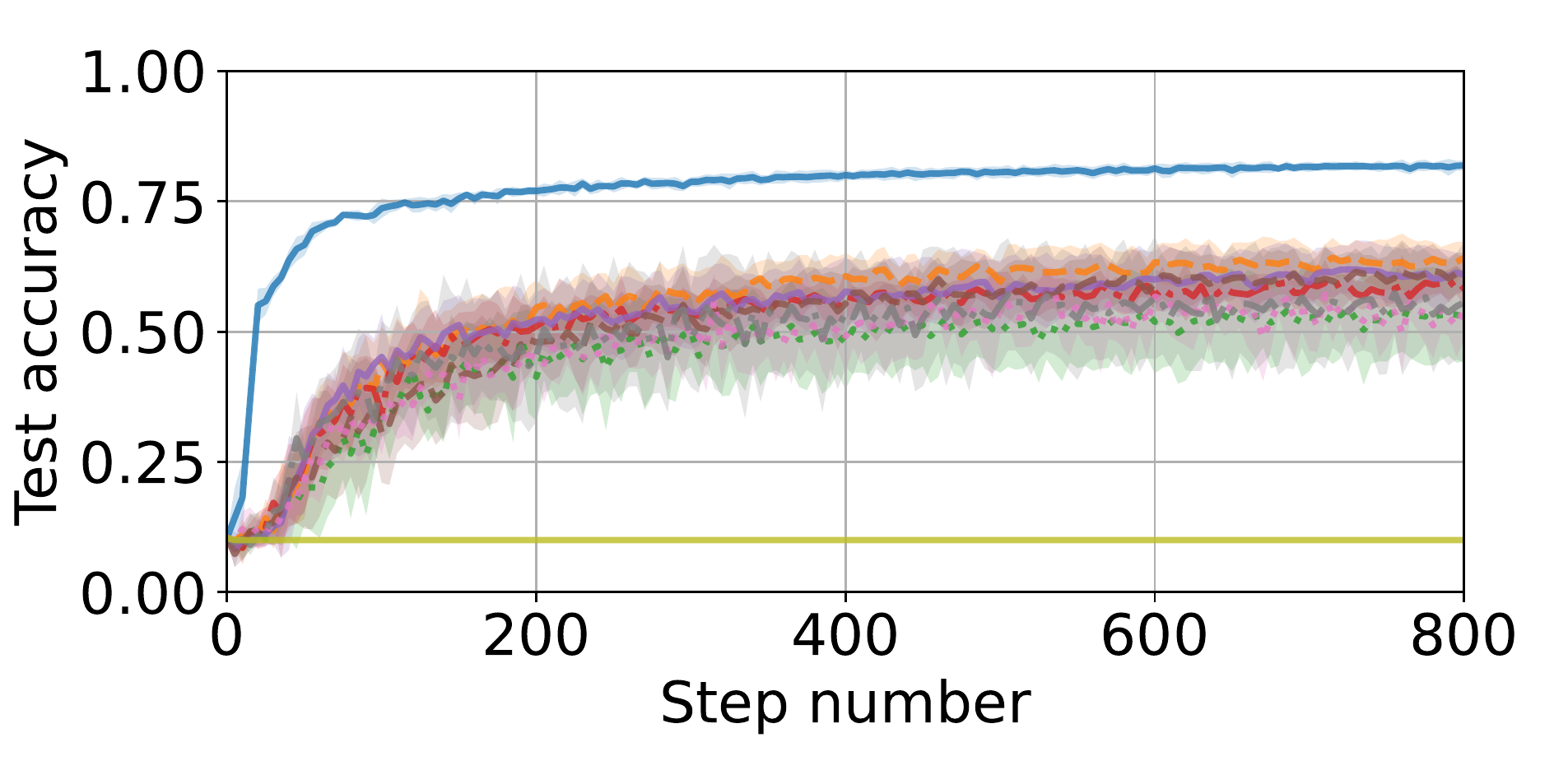}%
    \includegraphics[width=0.5\textwidth]{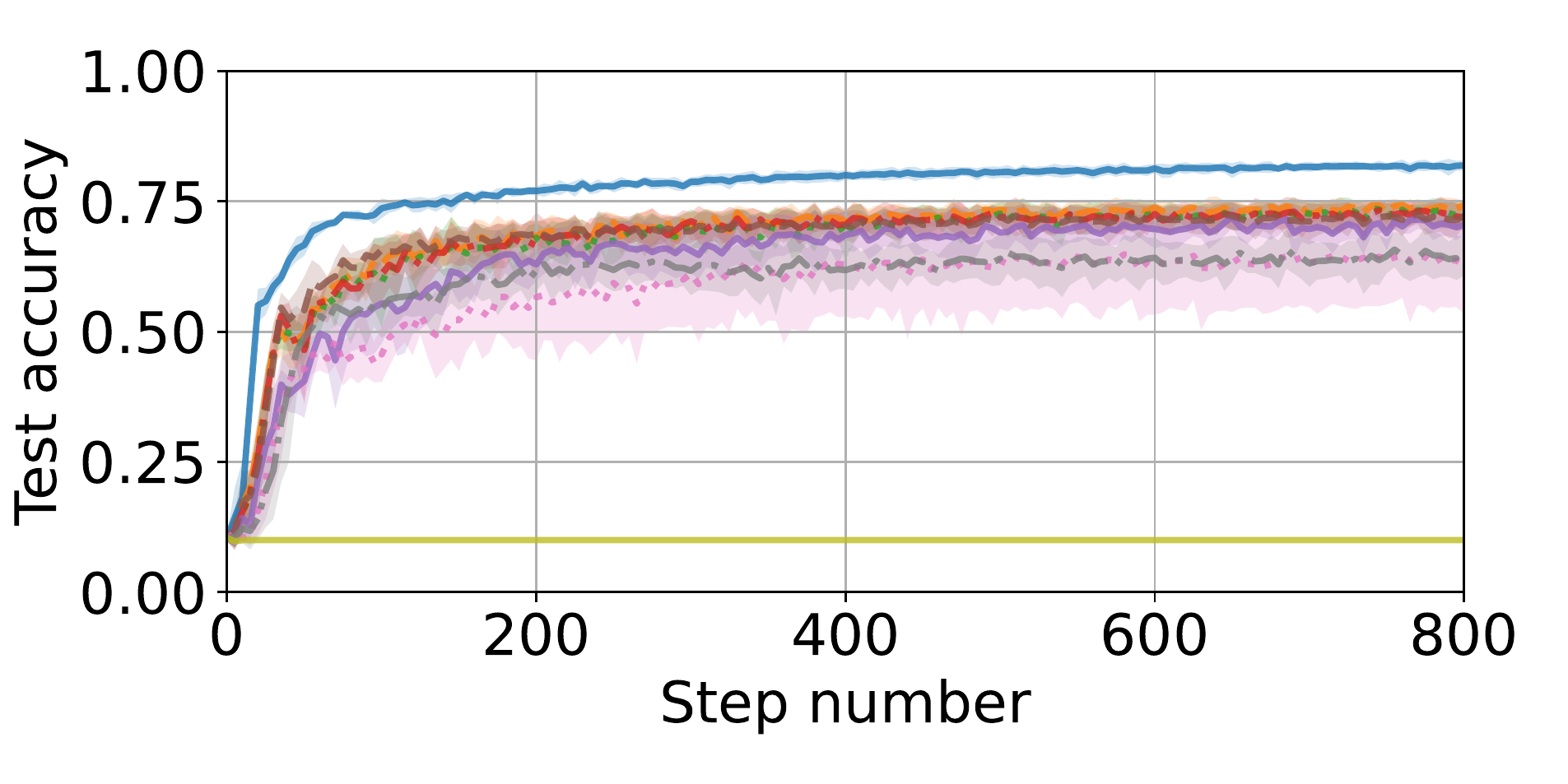}\\%
    \includegraphics[width=0.5\textwidth]{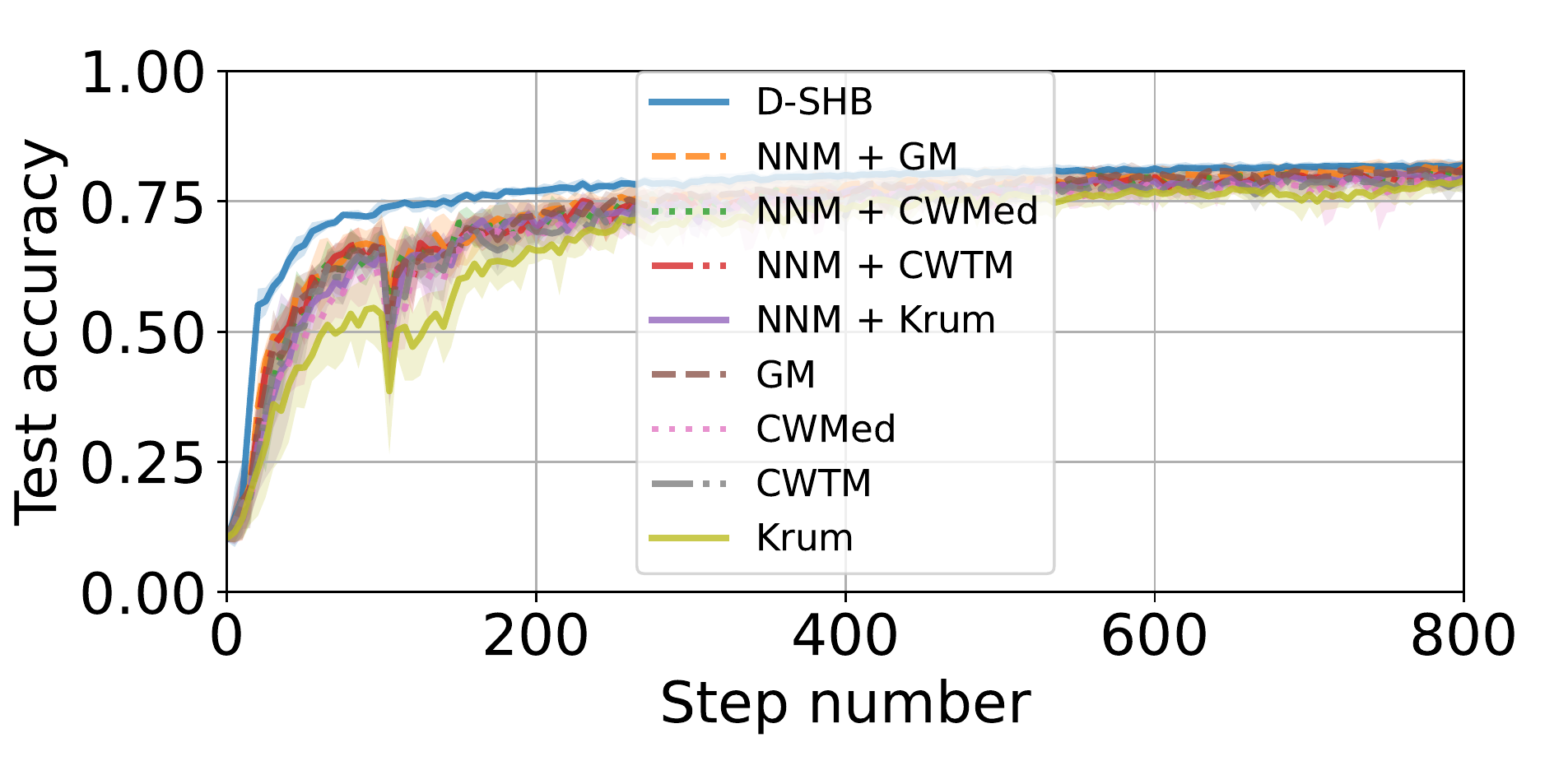}%
    \includegraphics[width=0.5\textwidth]{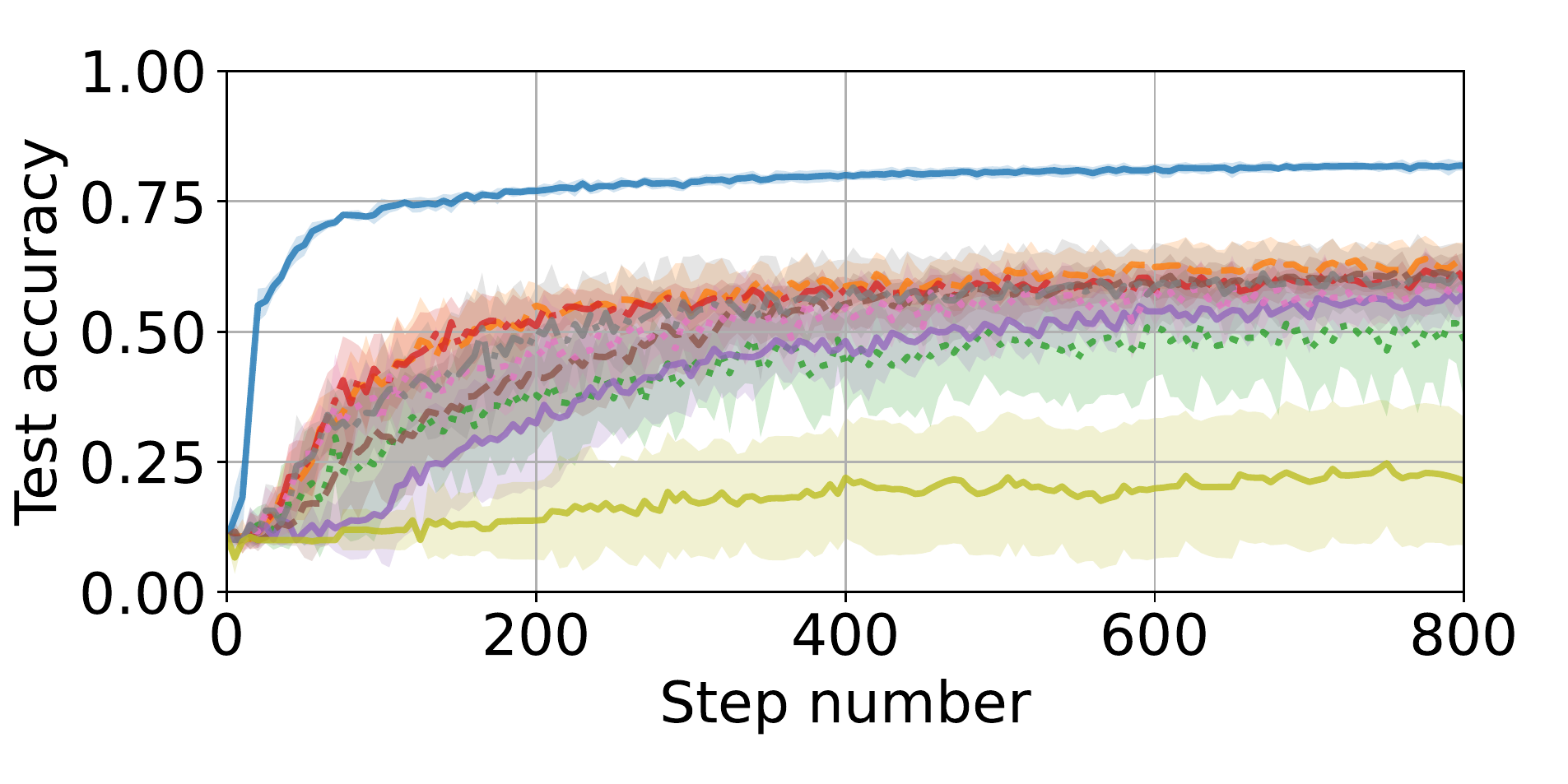}\\%
     \includegraphics[width=0.5\textwidth]{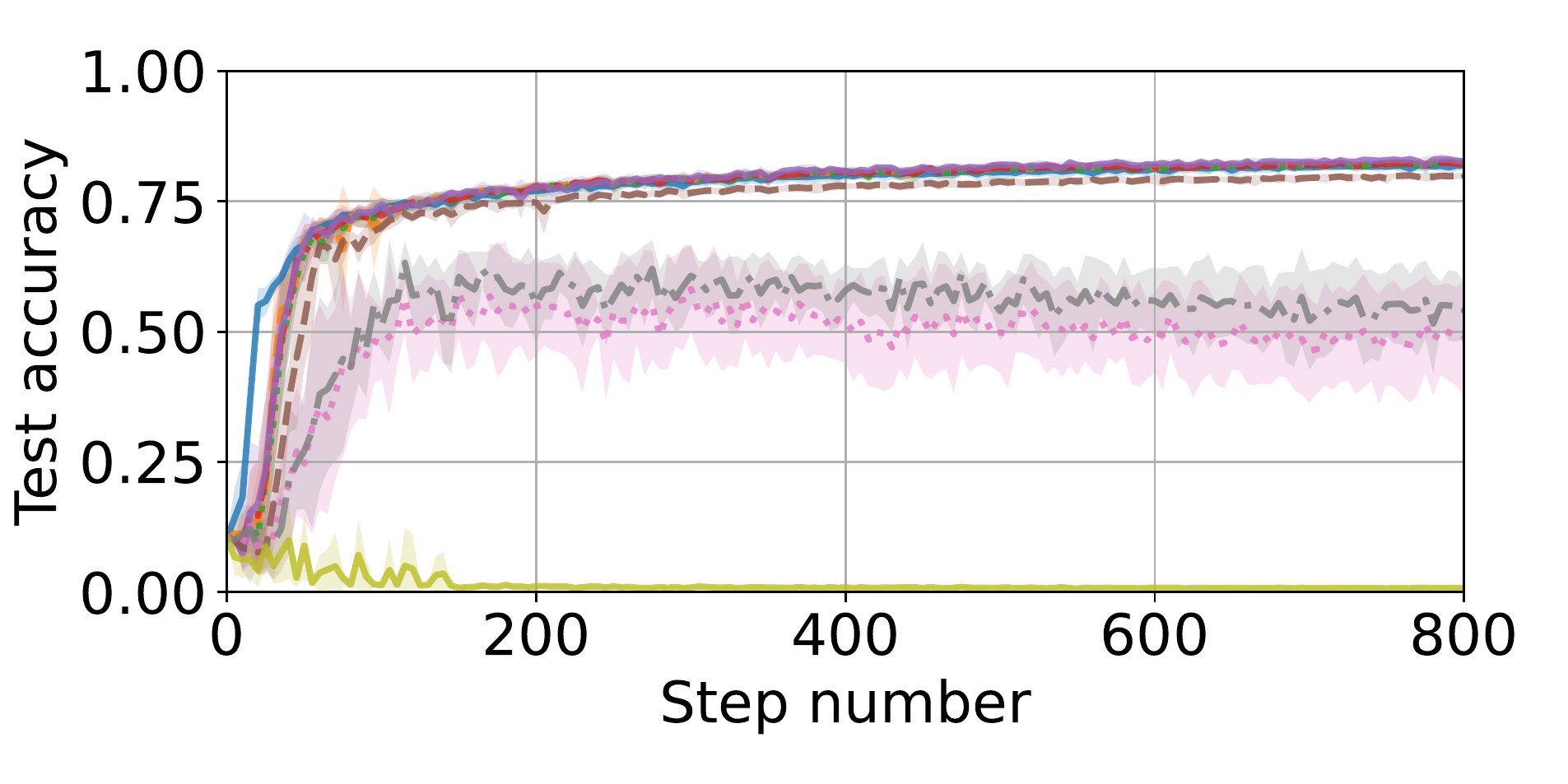}%
    \caption{Experiments on Fashion-MNIST using robust D-SHB with $f = 6$ Byzantine among $n = 17$ workers, with $\beta = 0.9$ and $\alpha = 1$. The Byzantine workers execute the FOE (\textit{row 1, left}), ALIE (\textit{row 1, right}), Mimic (\textit{row 2, left}), SF (\textit{row 2, right}), and LF (\textit{row 3}) attacks.}
\label{fig:plots_fashionmnist_4}
\end{figure*}

\begin{figure*}[ht!]
    \centering
    \includegraphics[width=0.5\textwidth]{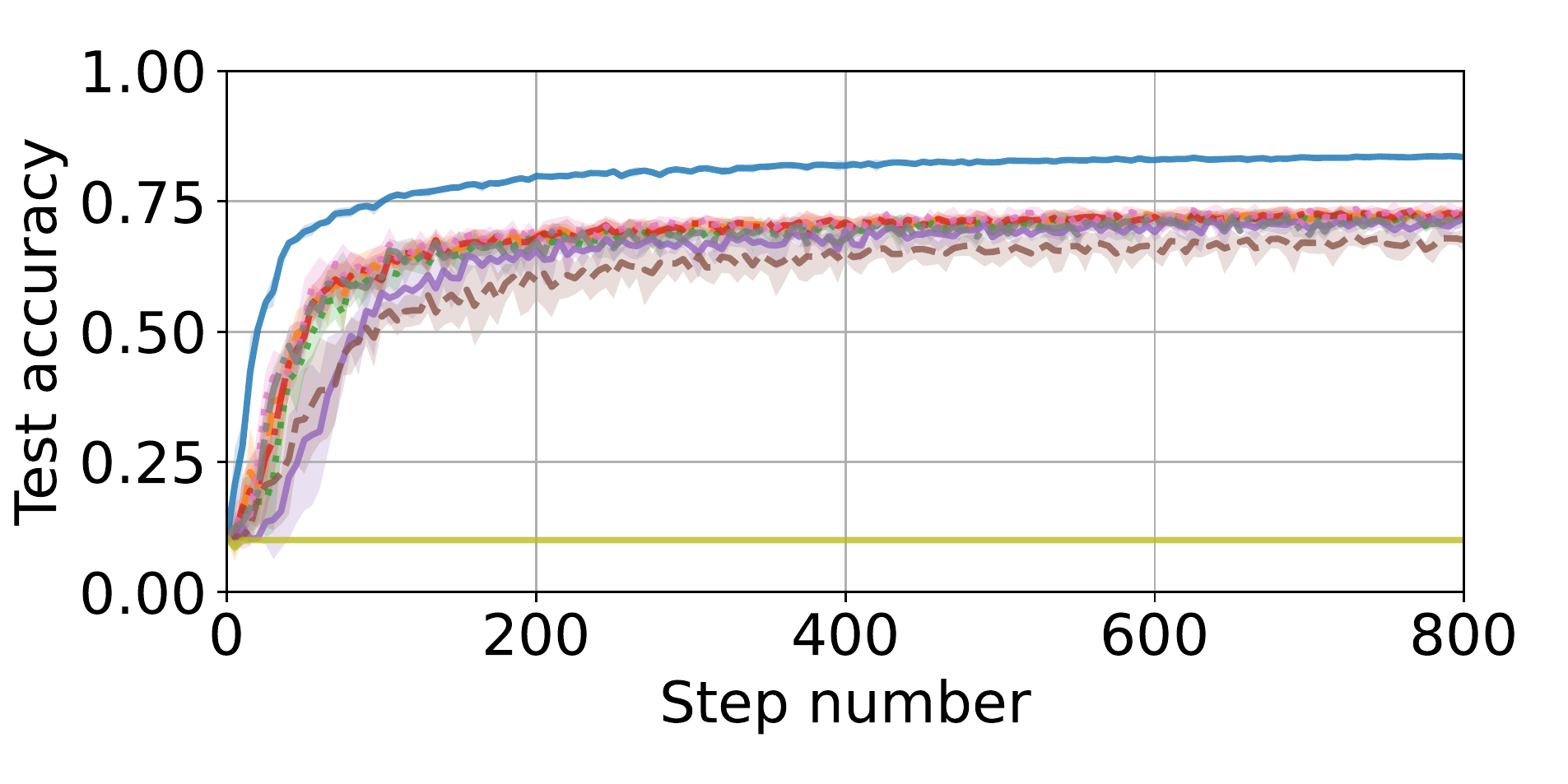}%
    \includegraphics[width=0.5\textwidth]{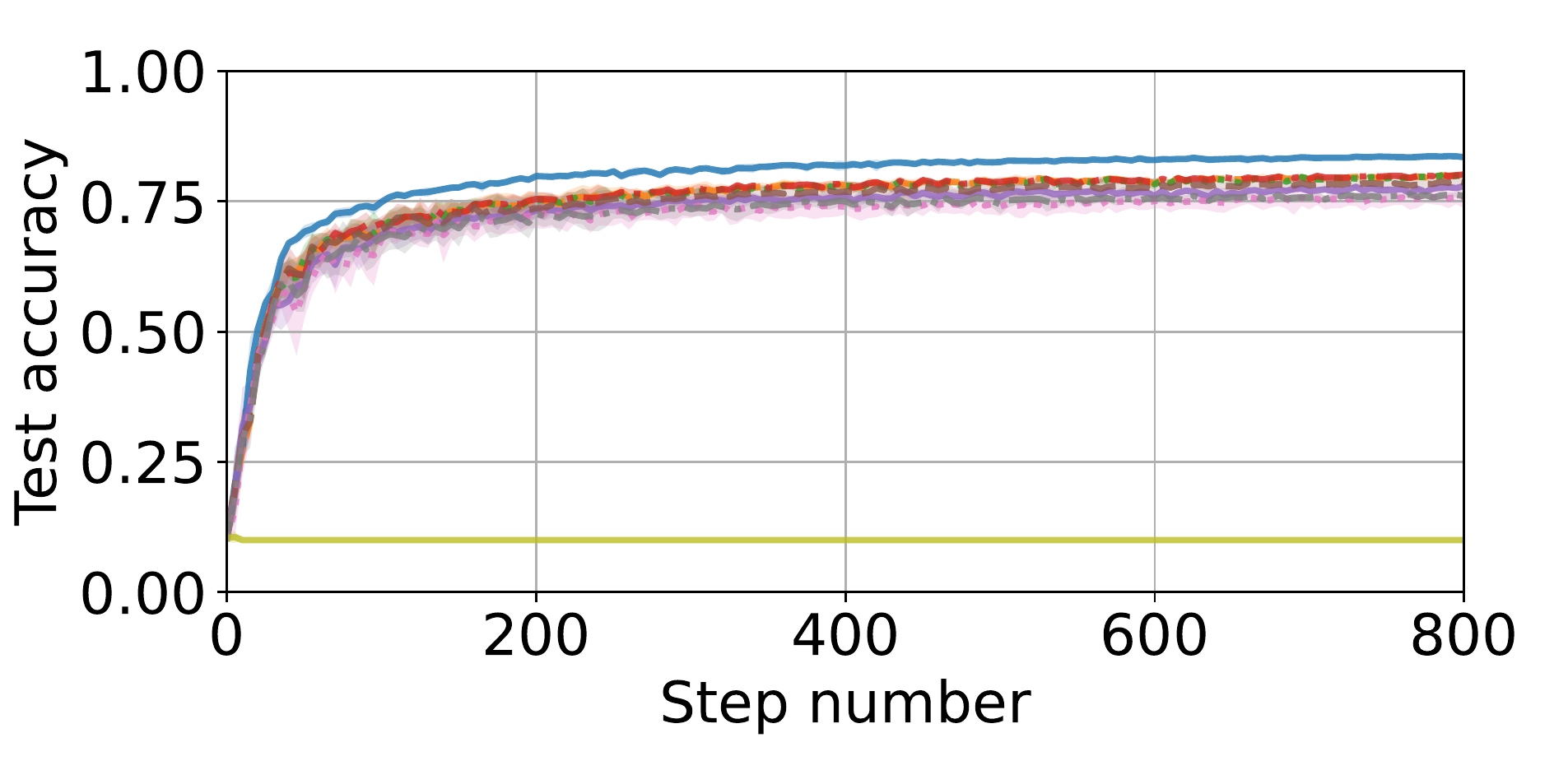}\\%
    \includegraphics[width=0.5\textwidth]{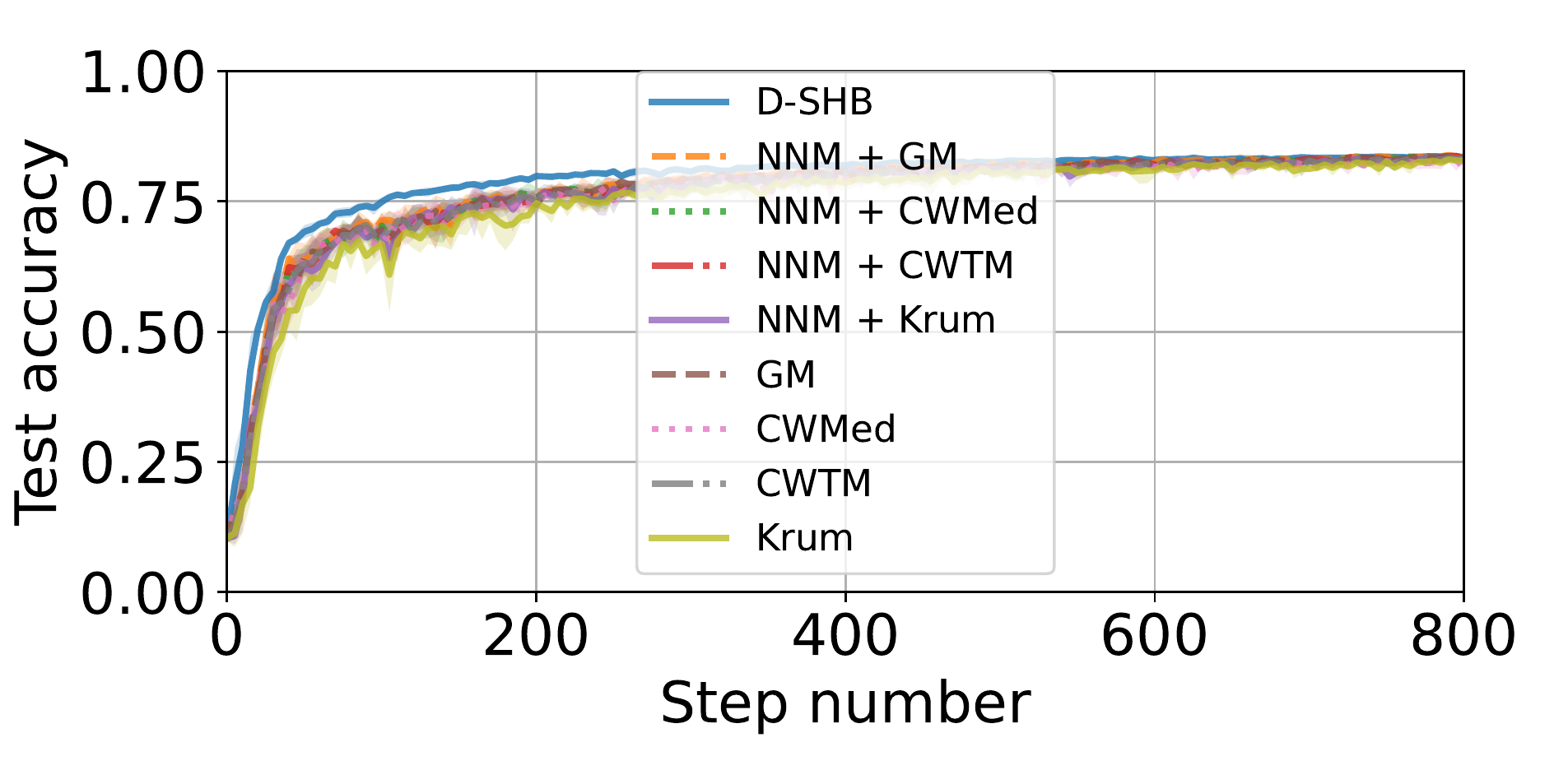}%
    \includegraphics[width=0.5\textwidth]{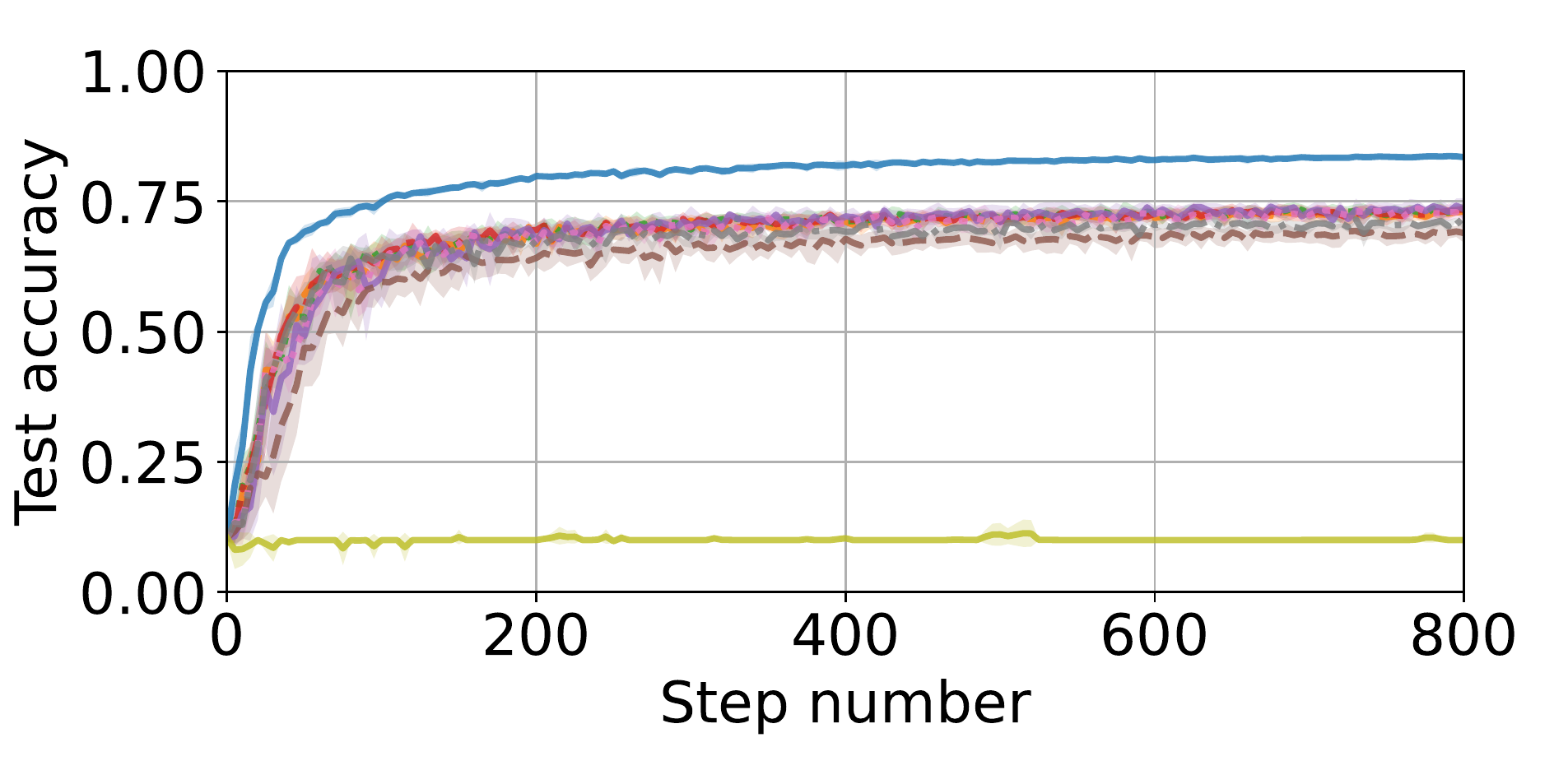}\\%
     \includegraphics[width=0.5\textwidth]{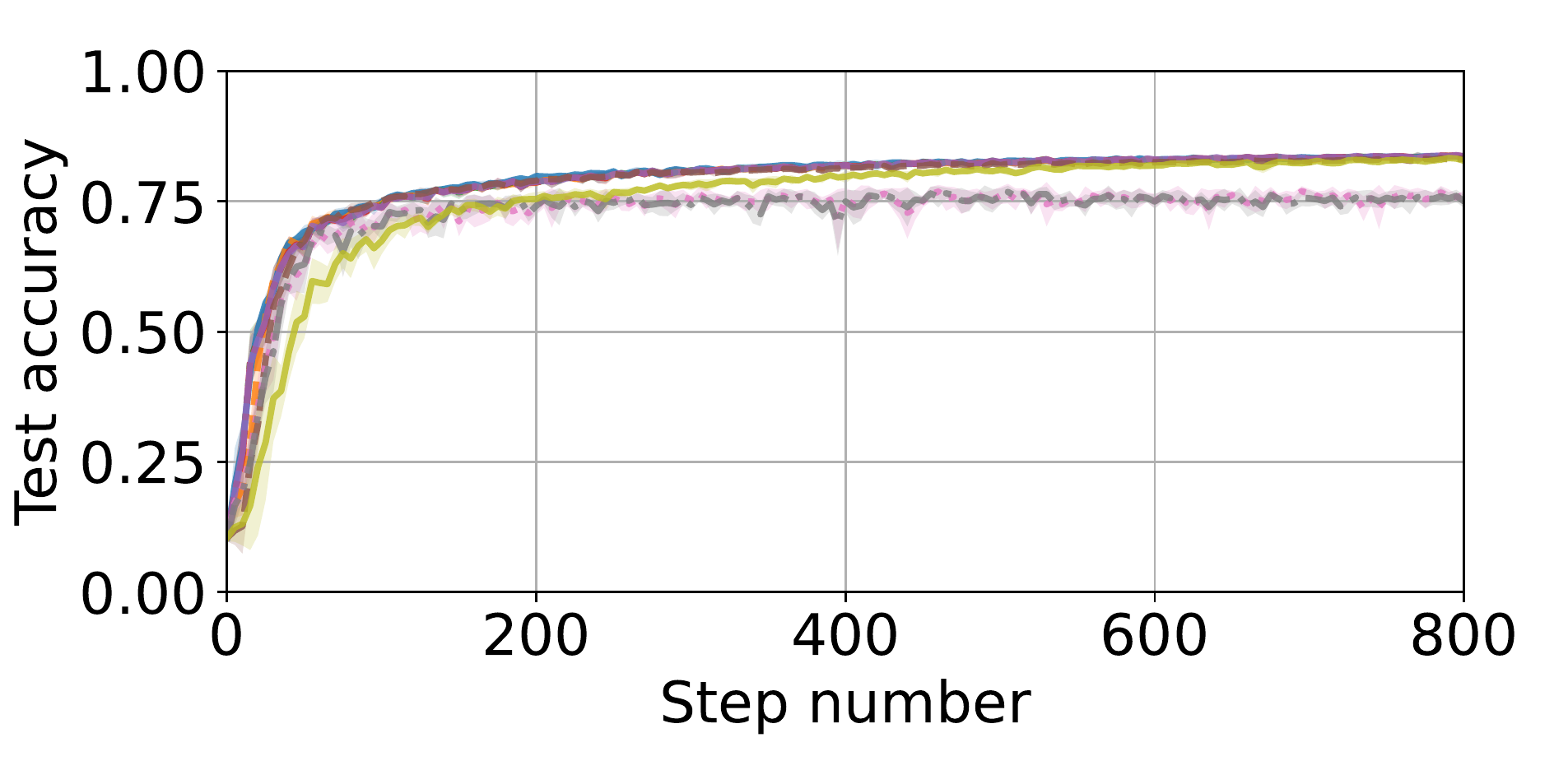}%
    \caption{Experiments on Fashion-MNIST using robust D-SHB with $f = 6$ Byzantine among $n = 17$ workers, with $\beta = 0.9$ and $\alpha = 10$. The Byzantine workers execute the FOE (\textit{row 1, left}), ALIE (\textit{row 1, right}), Mimic (\textit{row 2, left}), SF (\textit{row 2, right}), and LF (\textit{row 3}) attacks.}
\label{fig:plots_fashionmnist_5}
\end{figure*}

\clearpage
\subsection{Comprehensive Results on CIFAR-10}\label{app:exp_results_cifar}
In this section, we present the entirety of our results on CIFAR-10. We consider three Byzantine regimes: $f = 2$, $f = 3$, and $f=4$ out of $n=17$ workers in total. We also consider two heterogeneity regimes: $\alpha=1$ (moderate), and $\alpha=10$ (low).
We compare the performance of $\cenna{}$ and Bucketing when executed with four aggregation rules namely Krum, GM, CWMed, and CWTM.
The plots are presented below and complement our (partial) results in Figure~\ref{fig:plots_cifar_main} in Section~\ref{exp_results_cifar} of the main paper.

\begin{figure*}[ht!]
    \centering
    \includegraphics[width=0.5\textwidth]{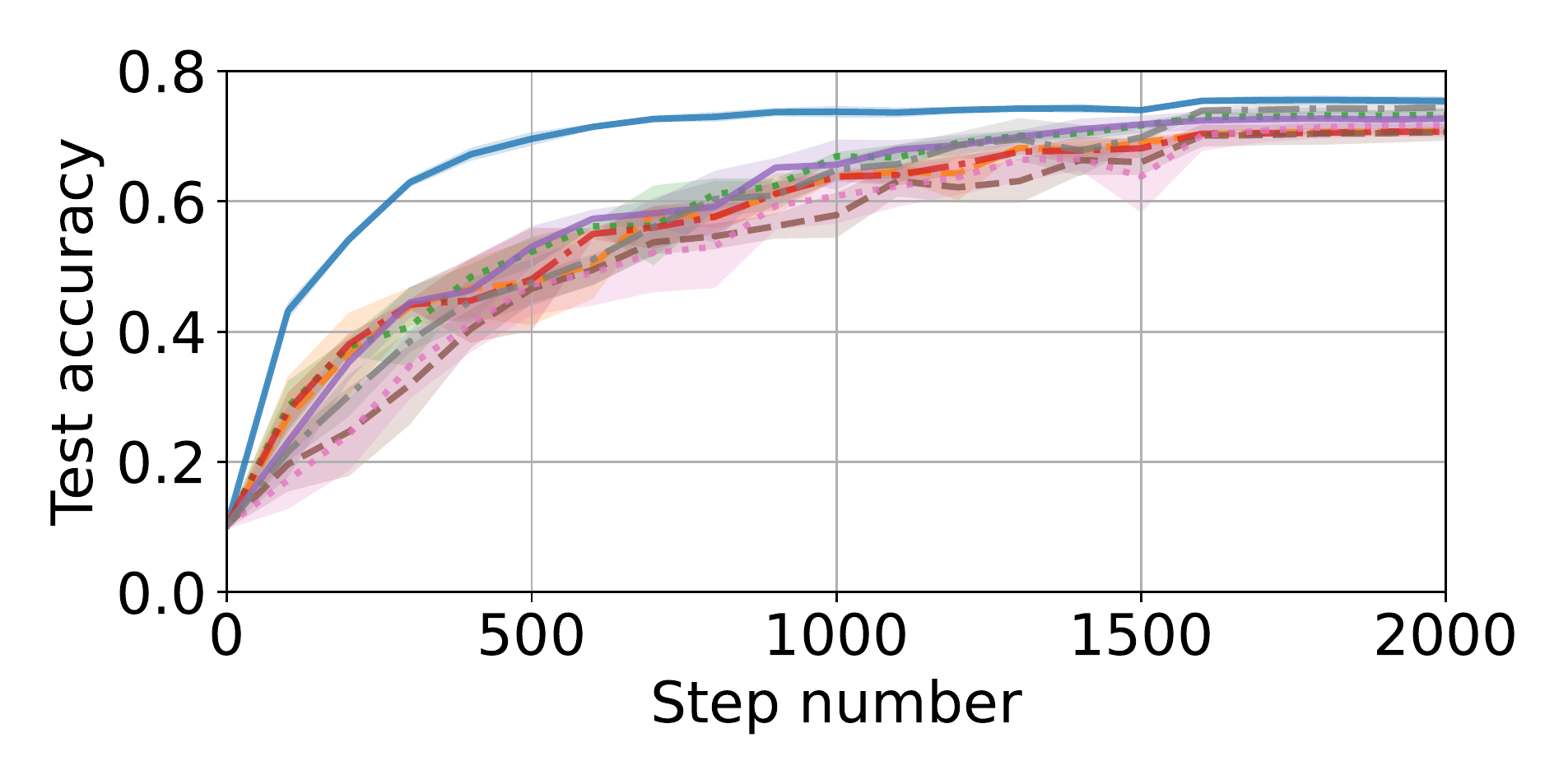}%
    \includegraphics[width=0.5\textwidth]{plots/cifar10_empire-cnn_little_f=2_beta=0.9_alpha=1.pdf}\\%
    \includegraphics[width=0.5\textwidth]{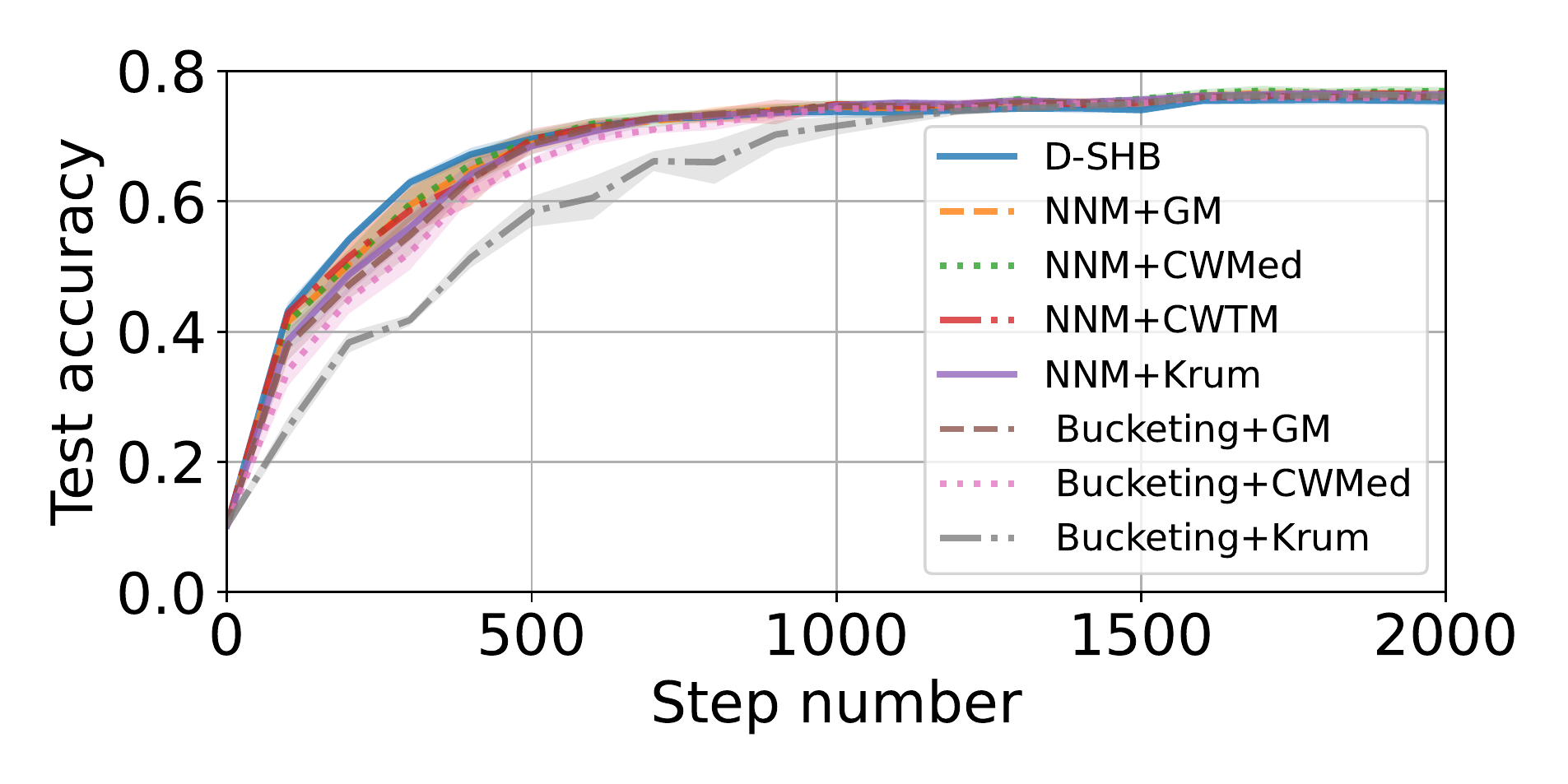}%
    \includegraphics[width=0.5\textwidth]{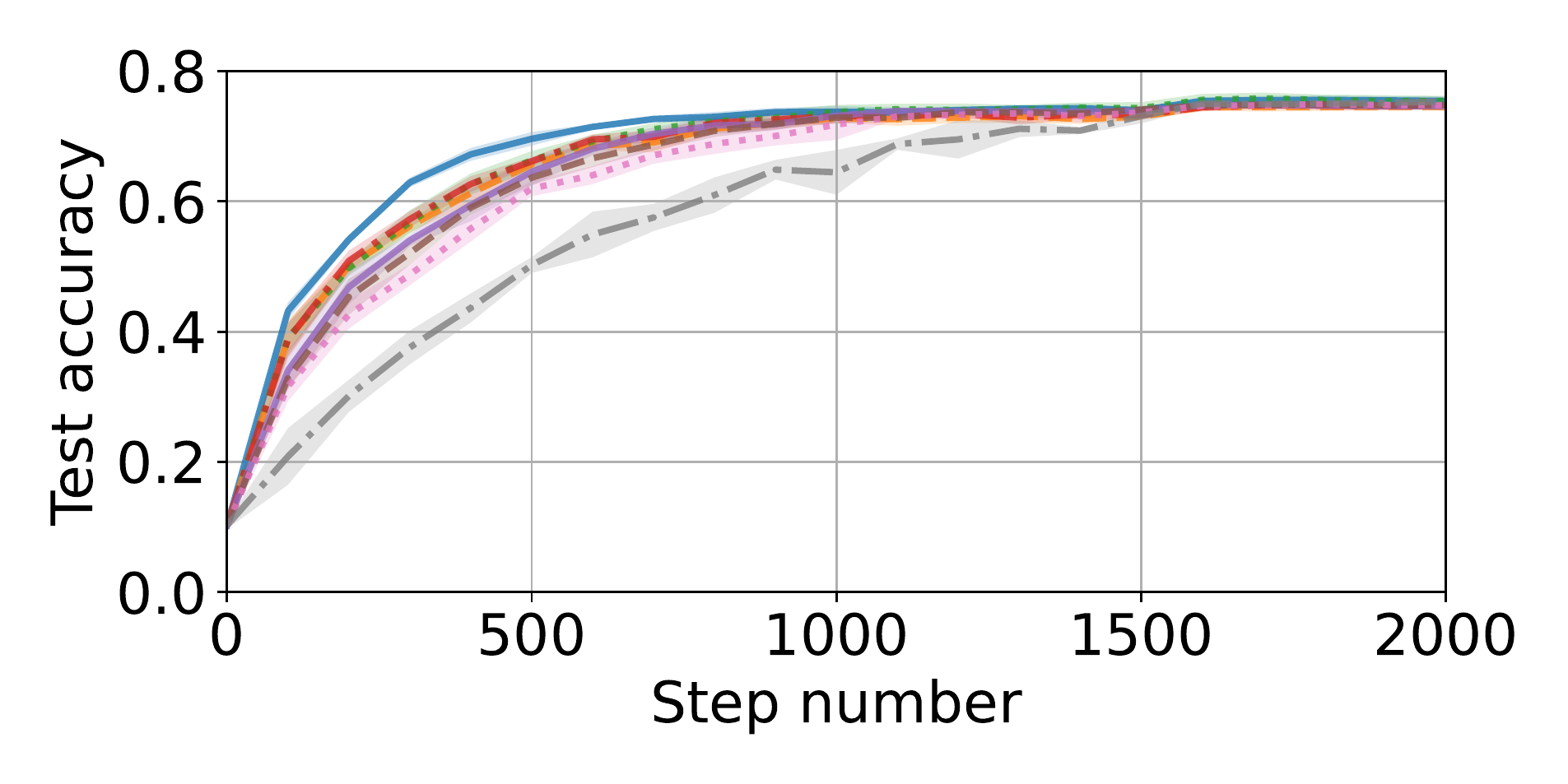}\\%
     \includegraphics[width=0.5\textwidth]{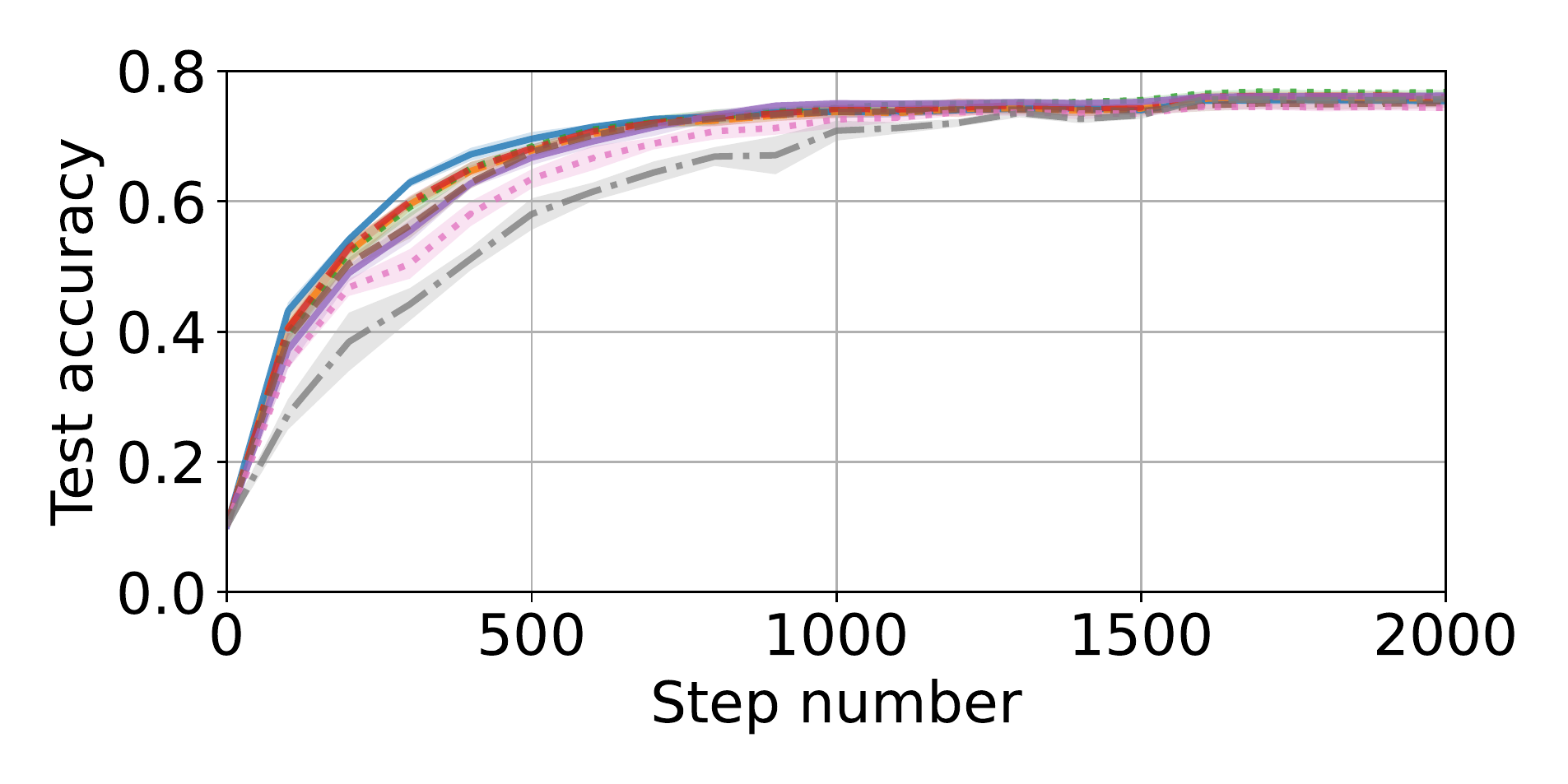}%
    \caption{Experiments on CIFAR-10 using robust D-SHB with $f = 2$ Byzantine among $n = 17$ workers, with $\beta = 0.9$ and $\alpha = 1$. The Byzantine workers execute the FOE (\textit{row 1, left}), ALIE (\textit{row 1, right}), Mimic (\textit{row 2, left}), SF (\textit{row 2, right}), and LF (\textit{row 3}) attacks.}
\label{fig:plots_cifar_1}
\end{figure*}

\begin{figure*}[ht!]
    \centering
    \includegraphics[width=0.5\textwidth]{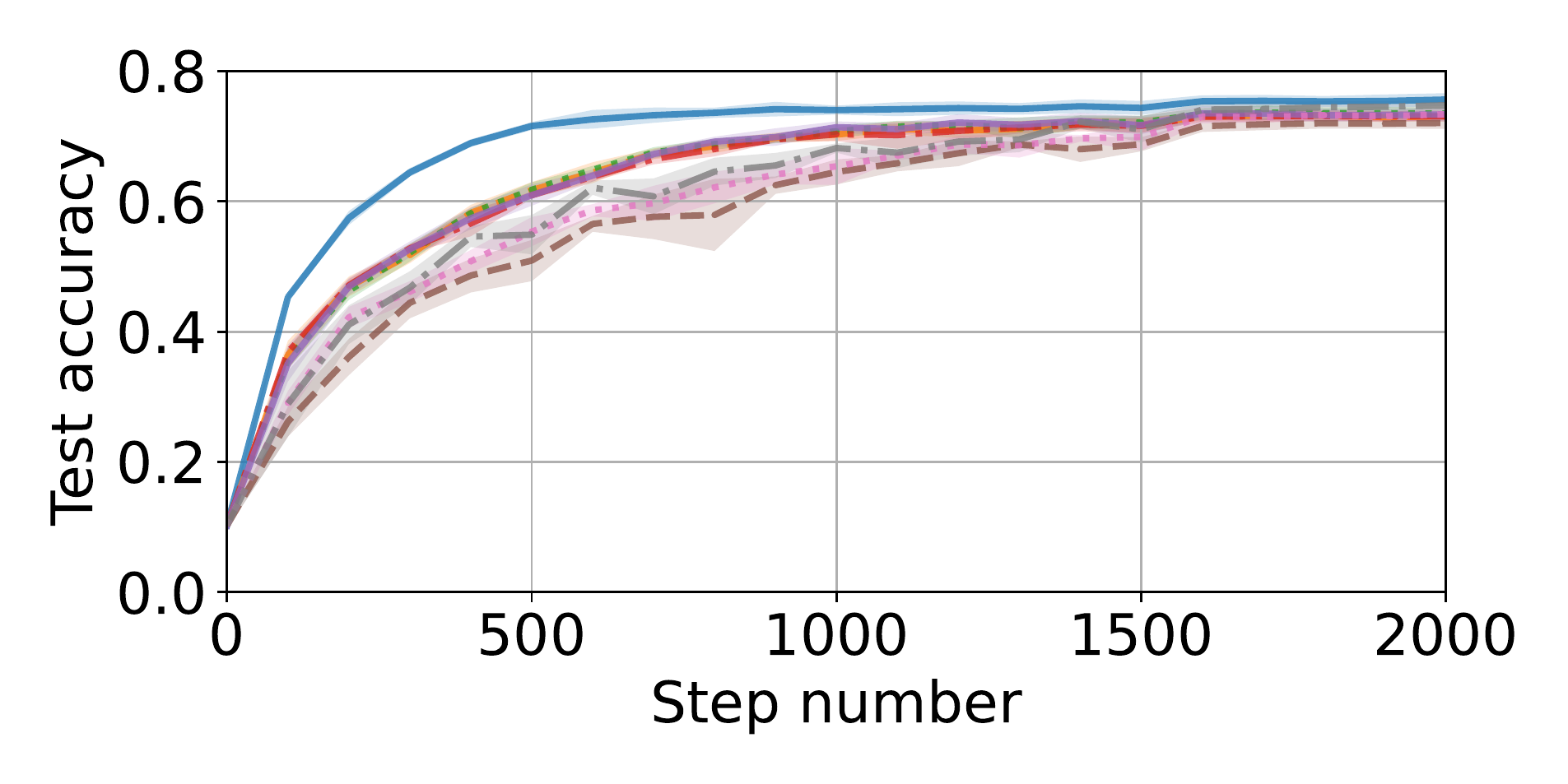}%
    \includegraphics[width=0.5\textwidth]{plots/cifar10_empire-cnn_little_f=2_beta=0.9_alpha=10.pdf}\\%
    \includegraphics[width=0.5\textwidth]{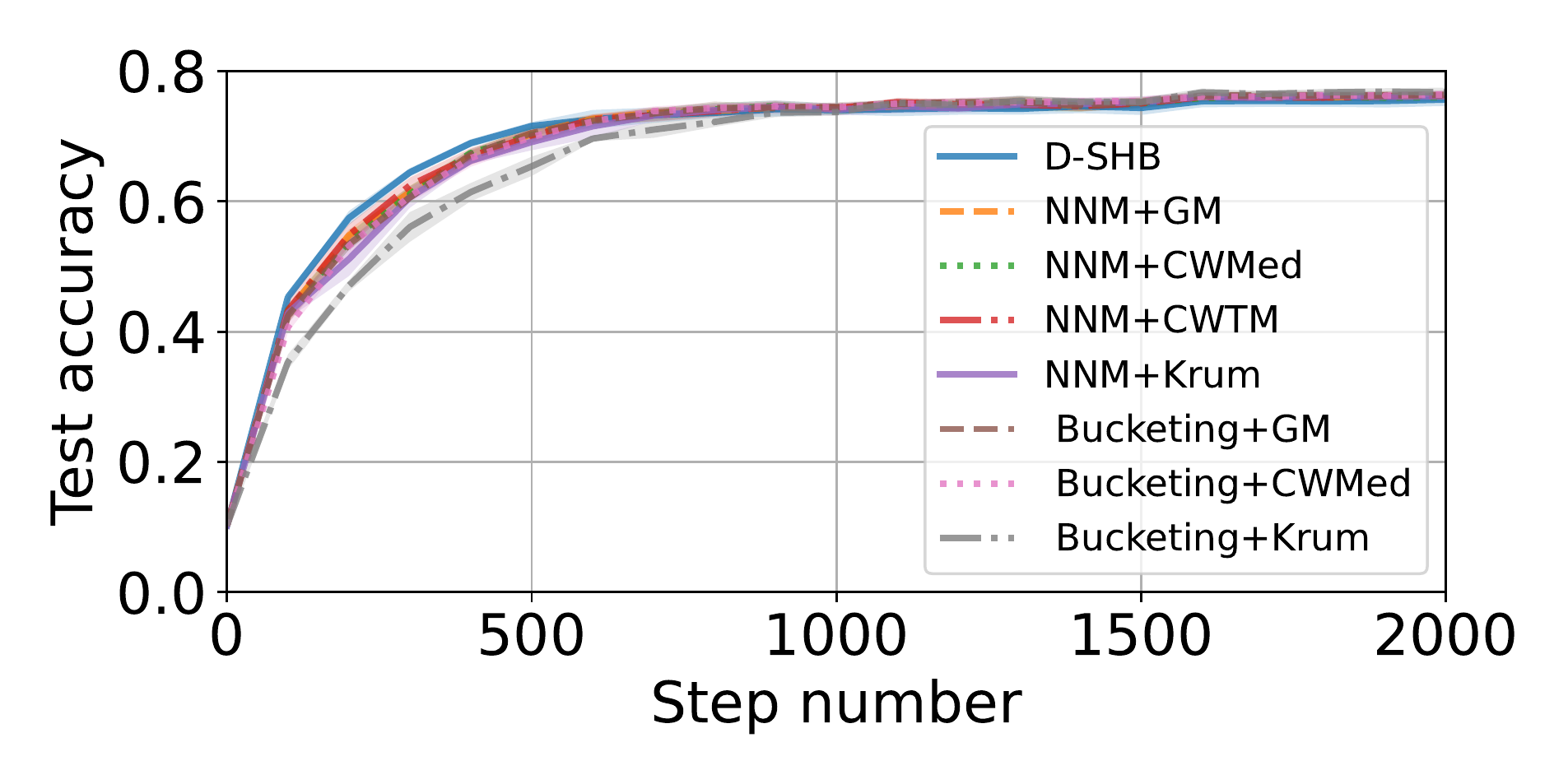}%
    \includegraphics[width=0.5\textwidth]{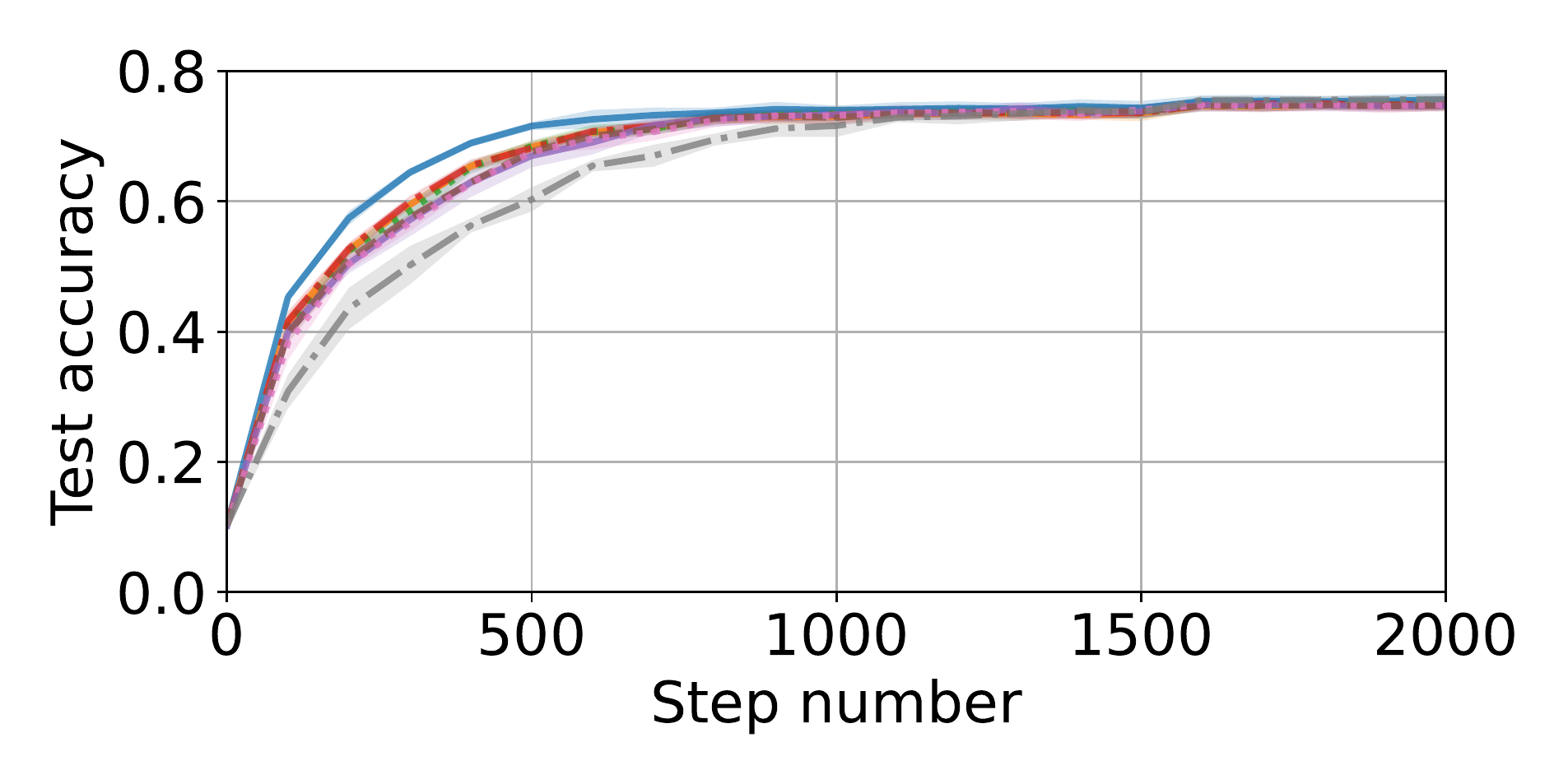}\\%
     \includegraphics[width=0.5\textwidth]{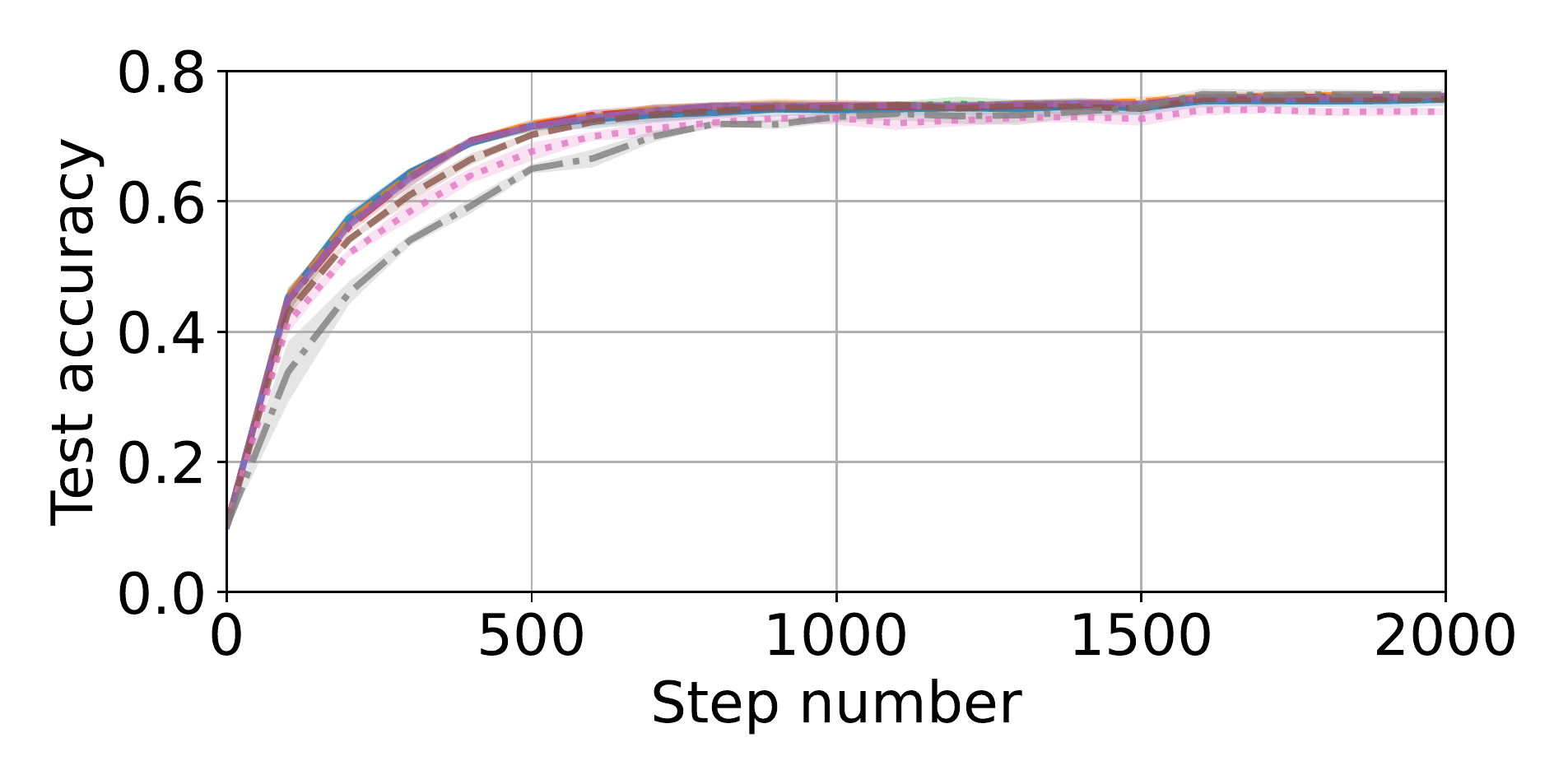}%
    \caption{Experiments on CIFAR-10 using robust D-SHB with $f = 2$ Byzantine among $n = 17$ workers, with $\beta = 0.9$ and $\alpha = 10$. The Byzantine workers execute the FOE (\textit{row 1, left}), ALIE (\textit{row 1, right}), Mimic (\textit{row 2, left}), SF (\textit{row 2, right}), and LF (\textit{row 3}) attacks.}
\label{fig:plots_cifar_2}
\end{figure*}

\begin{figure*}[ht!]
    \centering
    \includegraphics[width=0.5\textwidth]{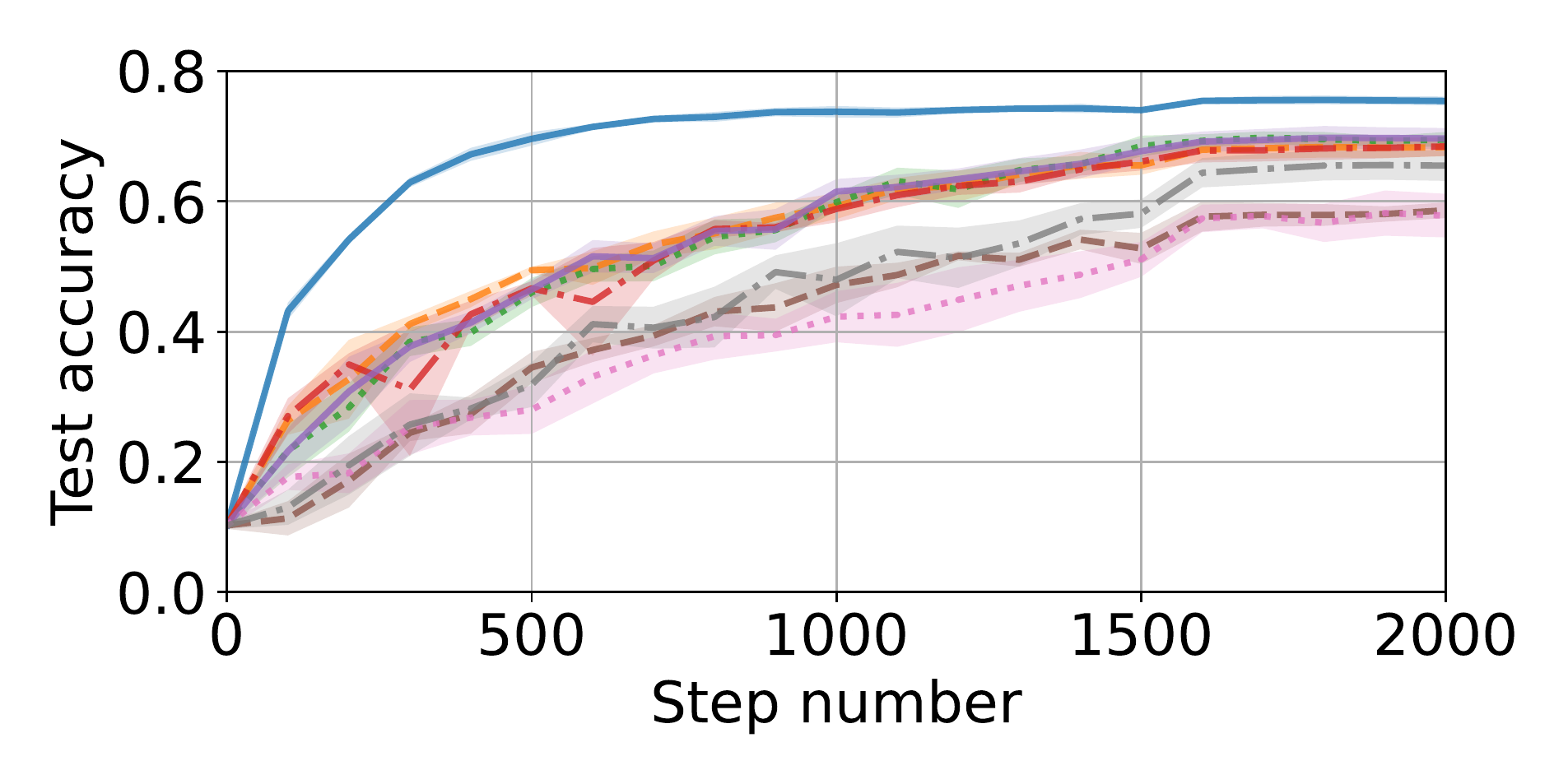}%
    \includegraphics[width=0.5\textwidth]{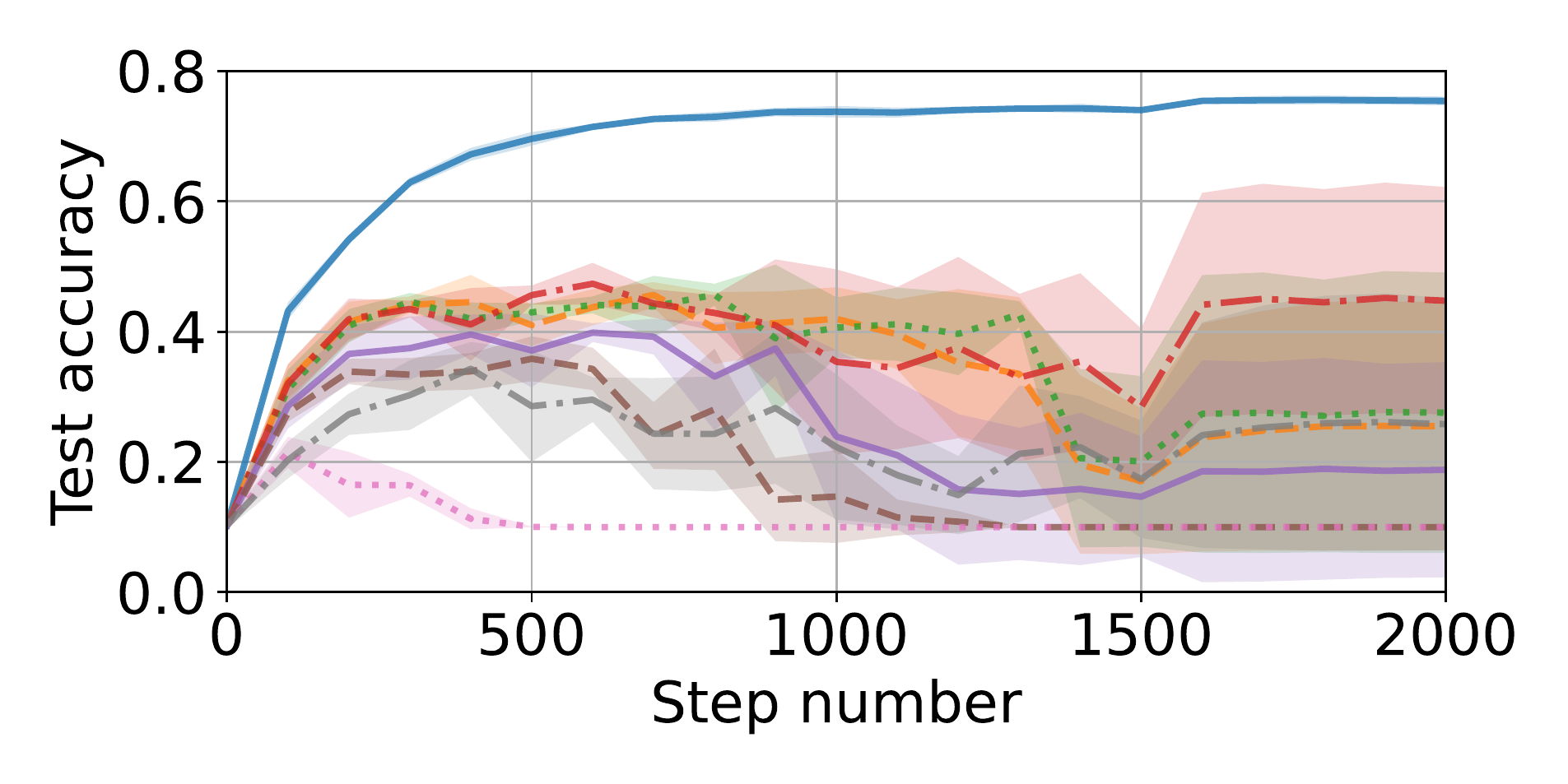}\\%
    \includegraphics[width=0.5\textwidth]{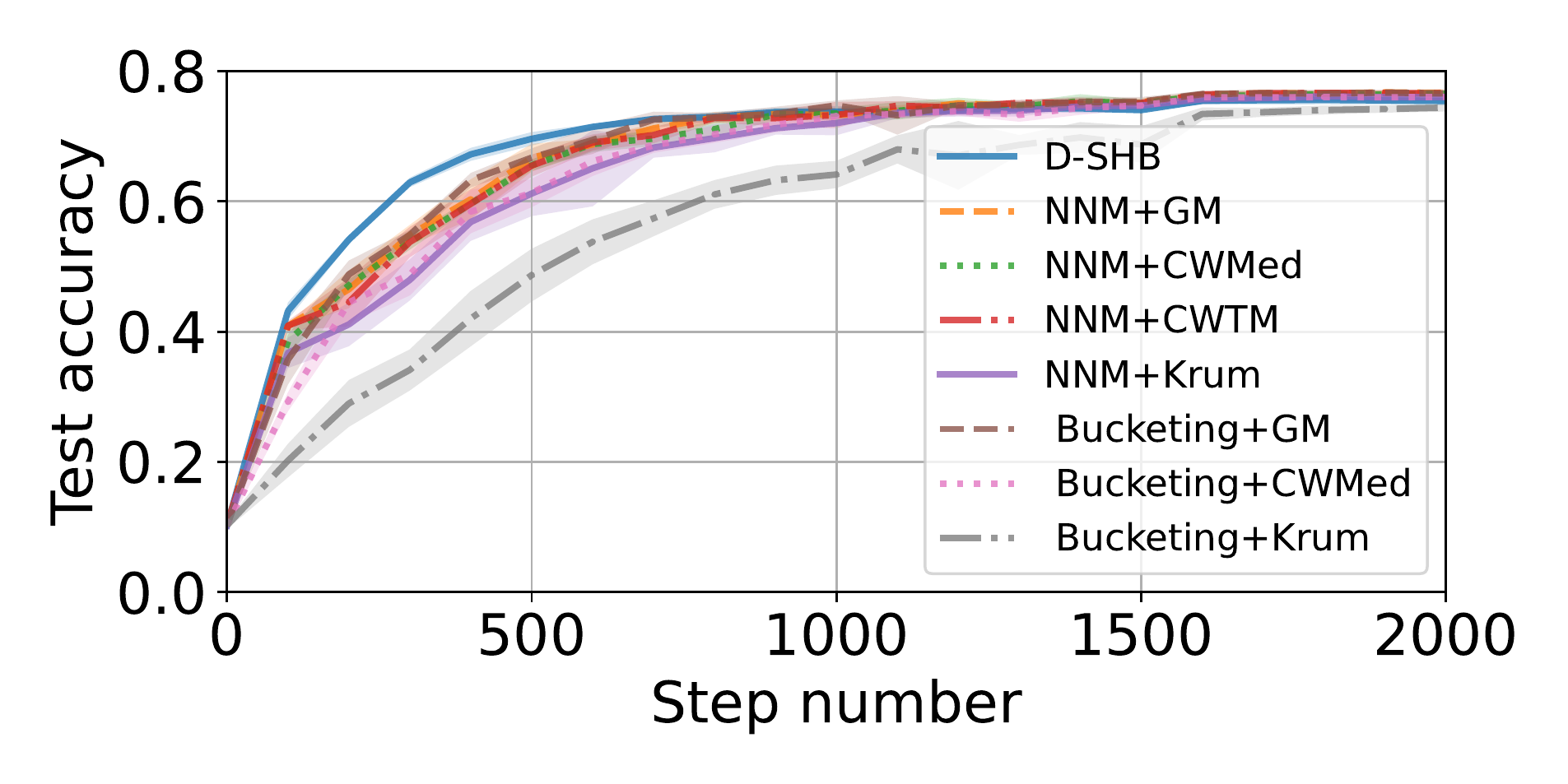}%
    \includegraphics[width=0.5\textwidth]{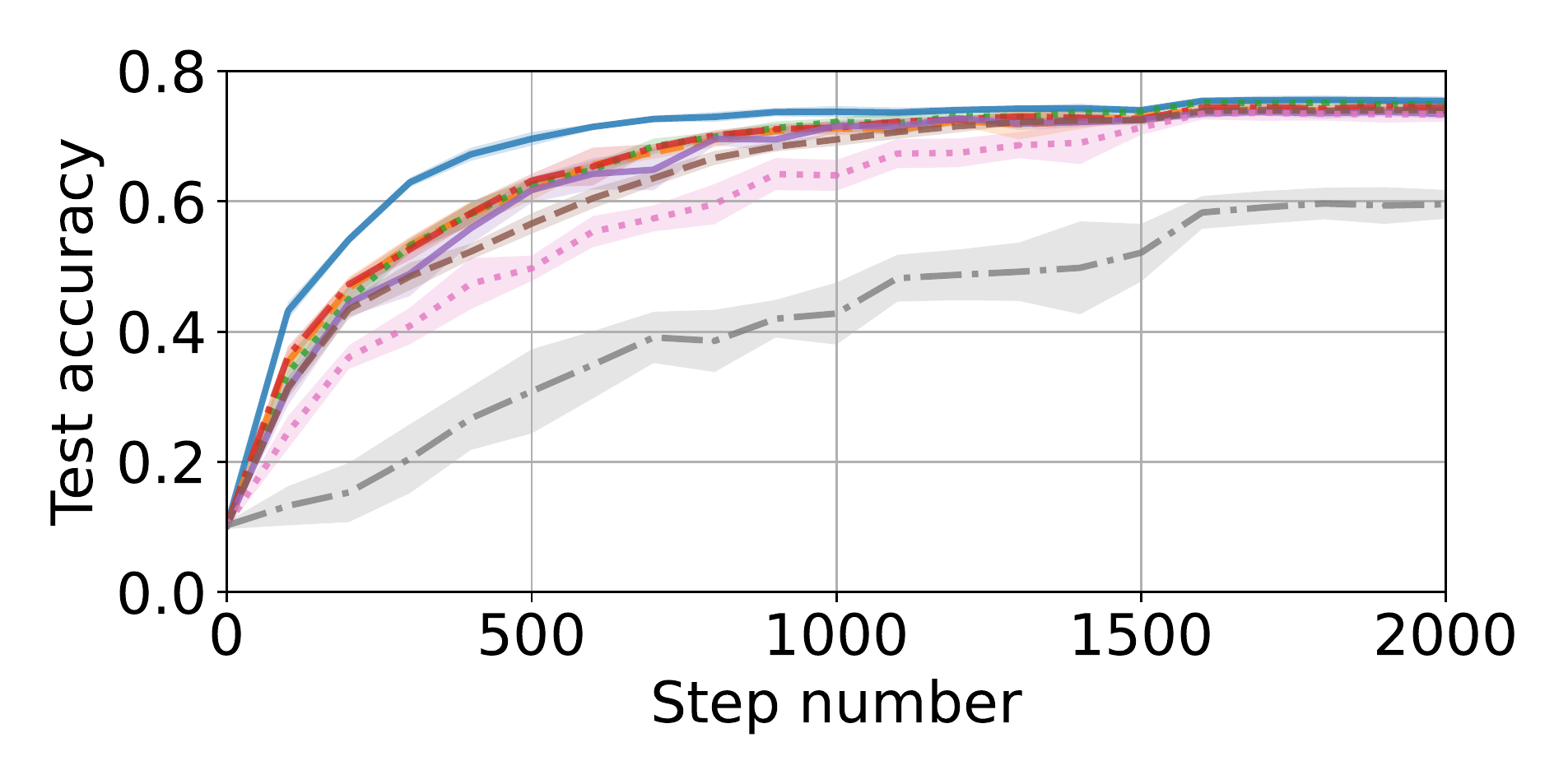}\\%
     \includegraphics[width=0.5\textwidth]{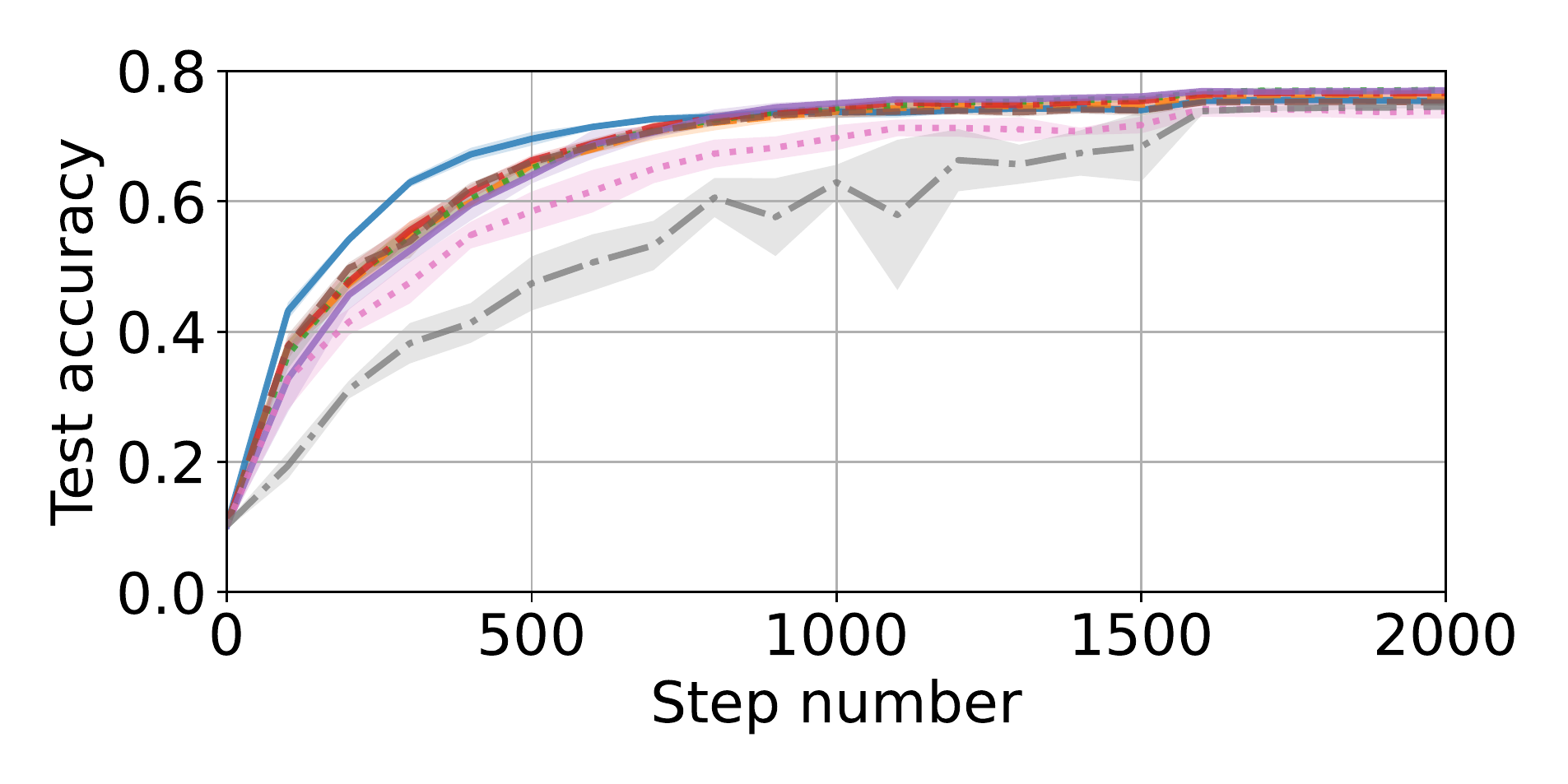}%
    \caption{Experiments on CIFAR-10 using robust D-SHB with $f = 3$ Byzantine among $n = 17$ workers, with $\beta = 0.9$ and $\alpha = 1$. The Byzantine workers execute the FOE (\textit{row 1, left}), ALIE (\textit{row 1, right}), Mimic (\textit{row 2, left}), SF (\textit{row 2, right}), and LF (\textit{row 3}) attacks.}
\label{fig:plots_cifar_3}
\end{figure*}

\begin{figure*}[ht!]
    \centering
    \includegraphics[width=0.5\textwidth]{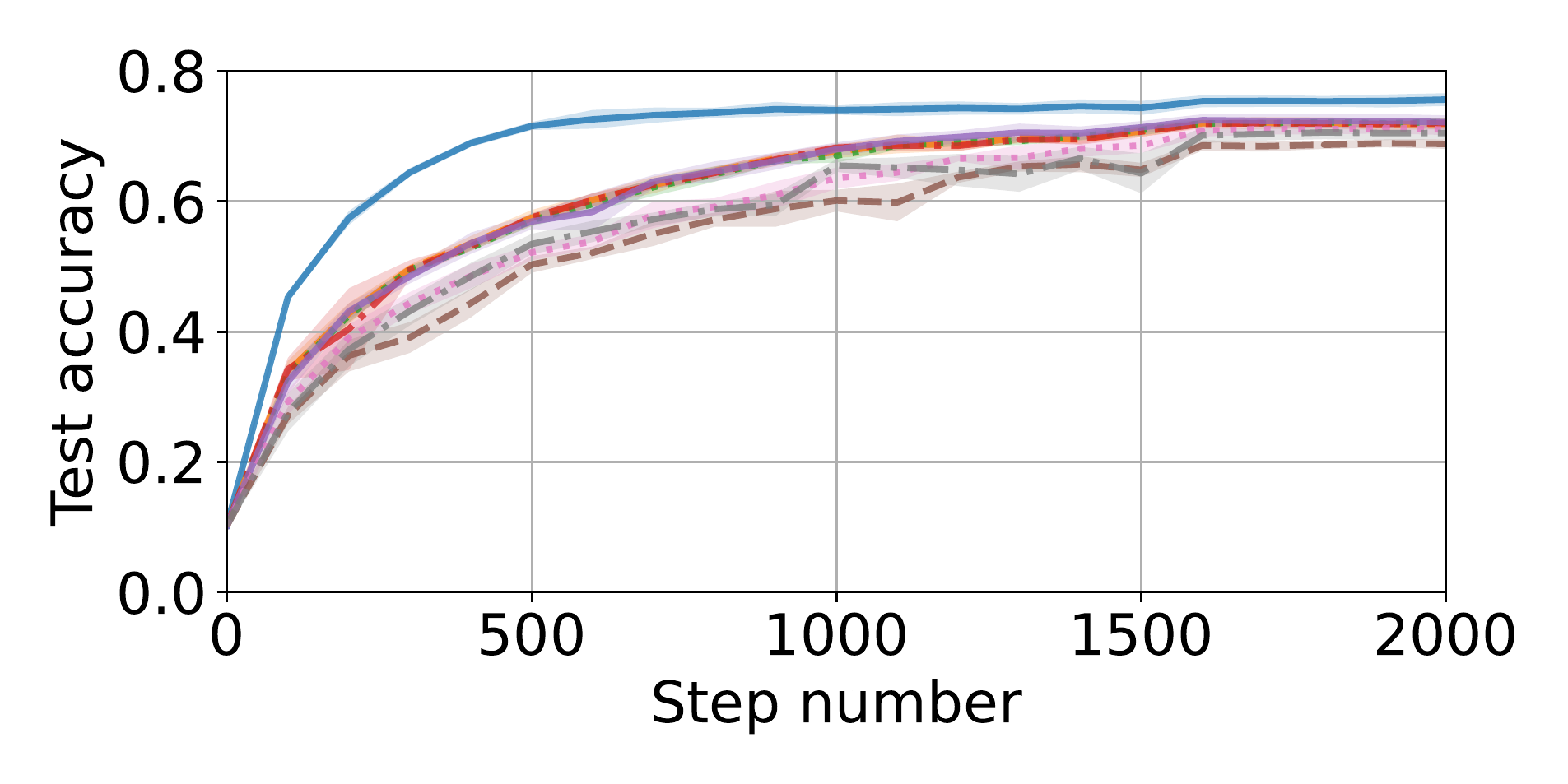}%
    \includegraphics[width=0.5\textwidth]{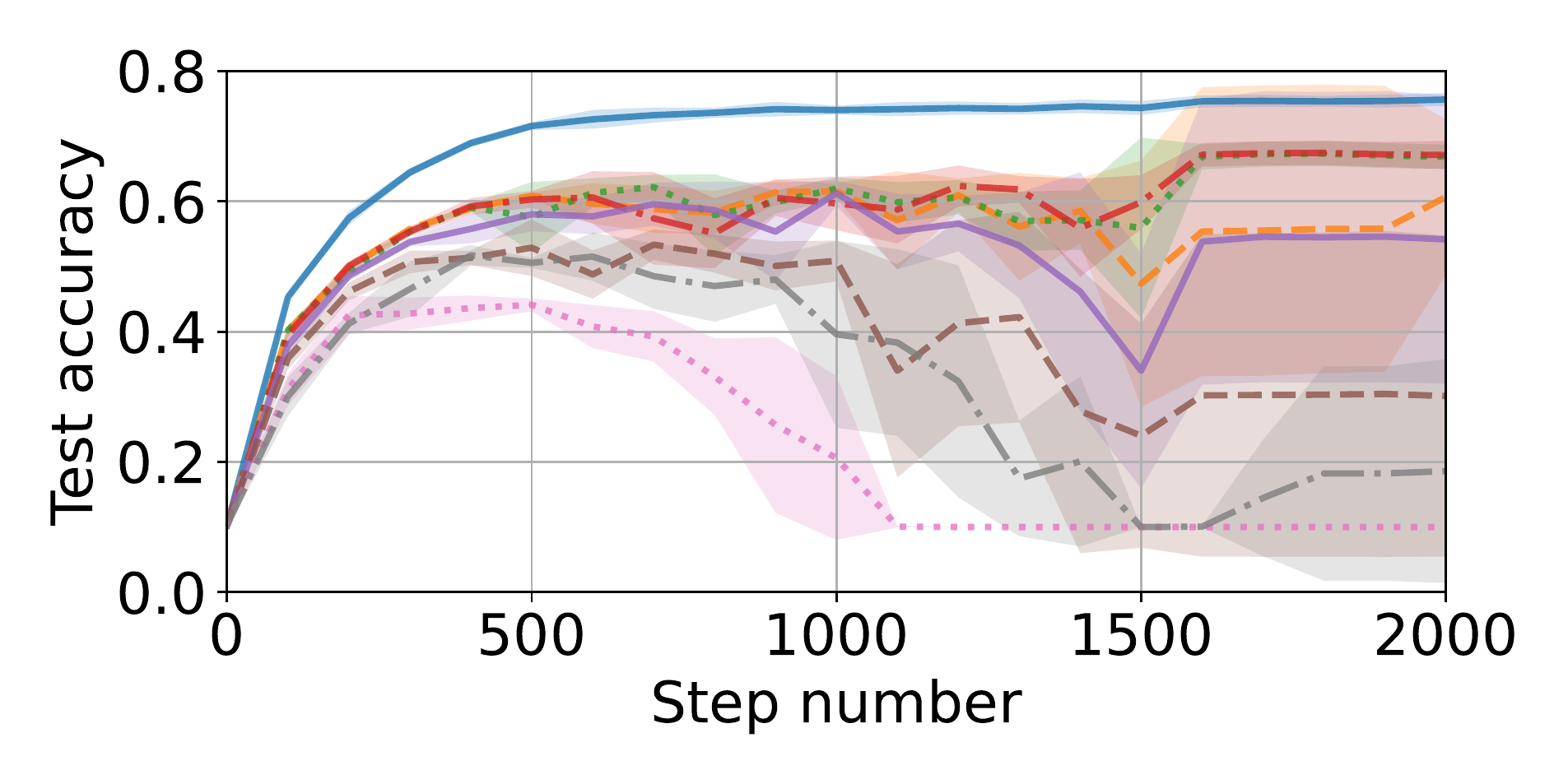}\\%
    \includegraphics[width=0.5\textwidth]{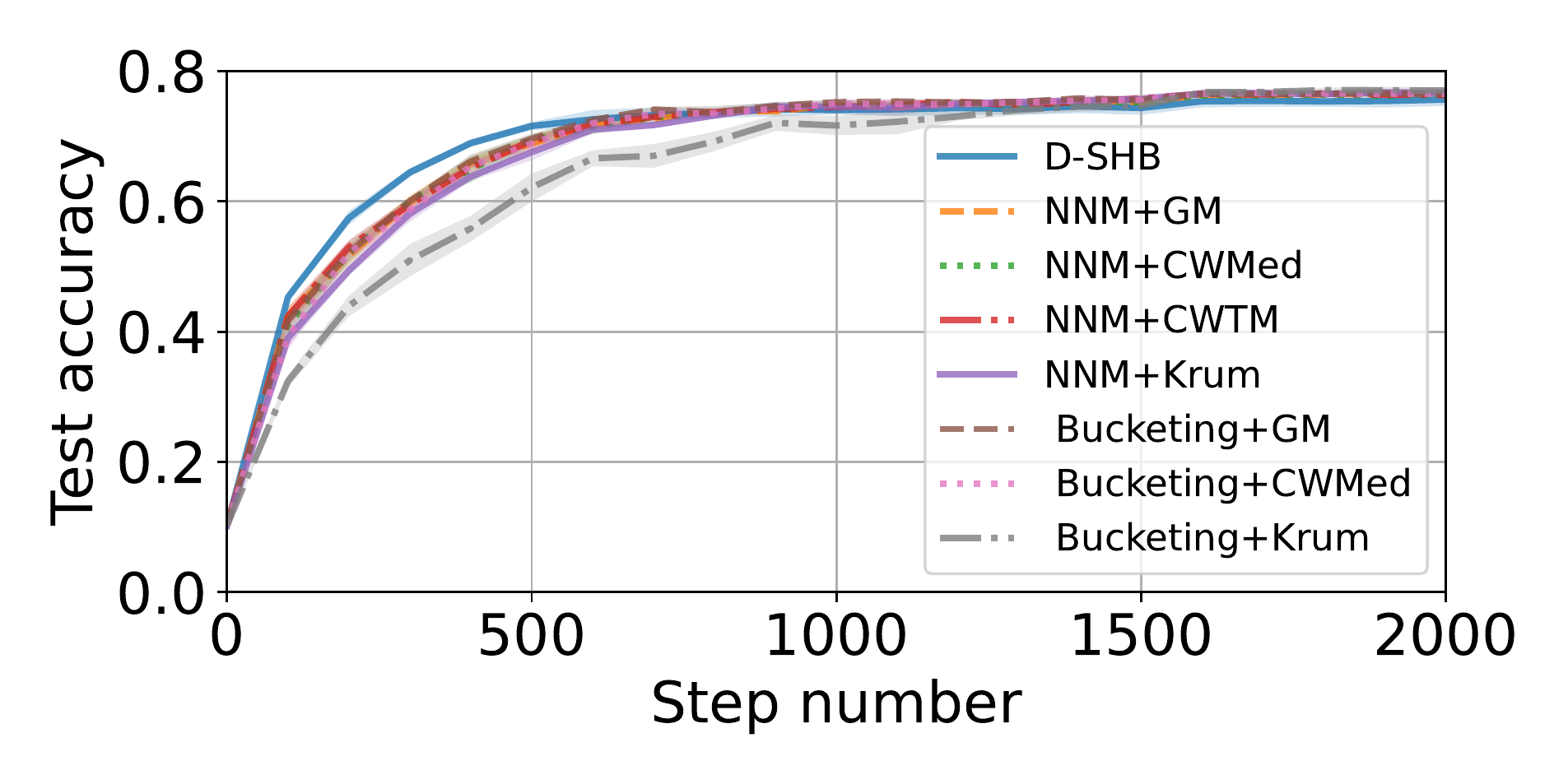}%
    \includegraphics[width=0.5\textwidth]{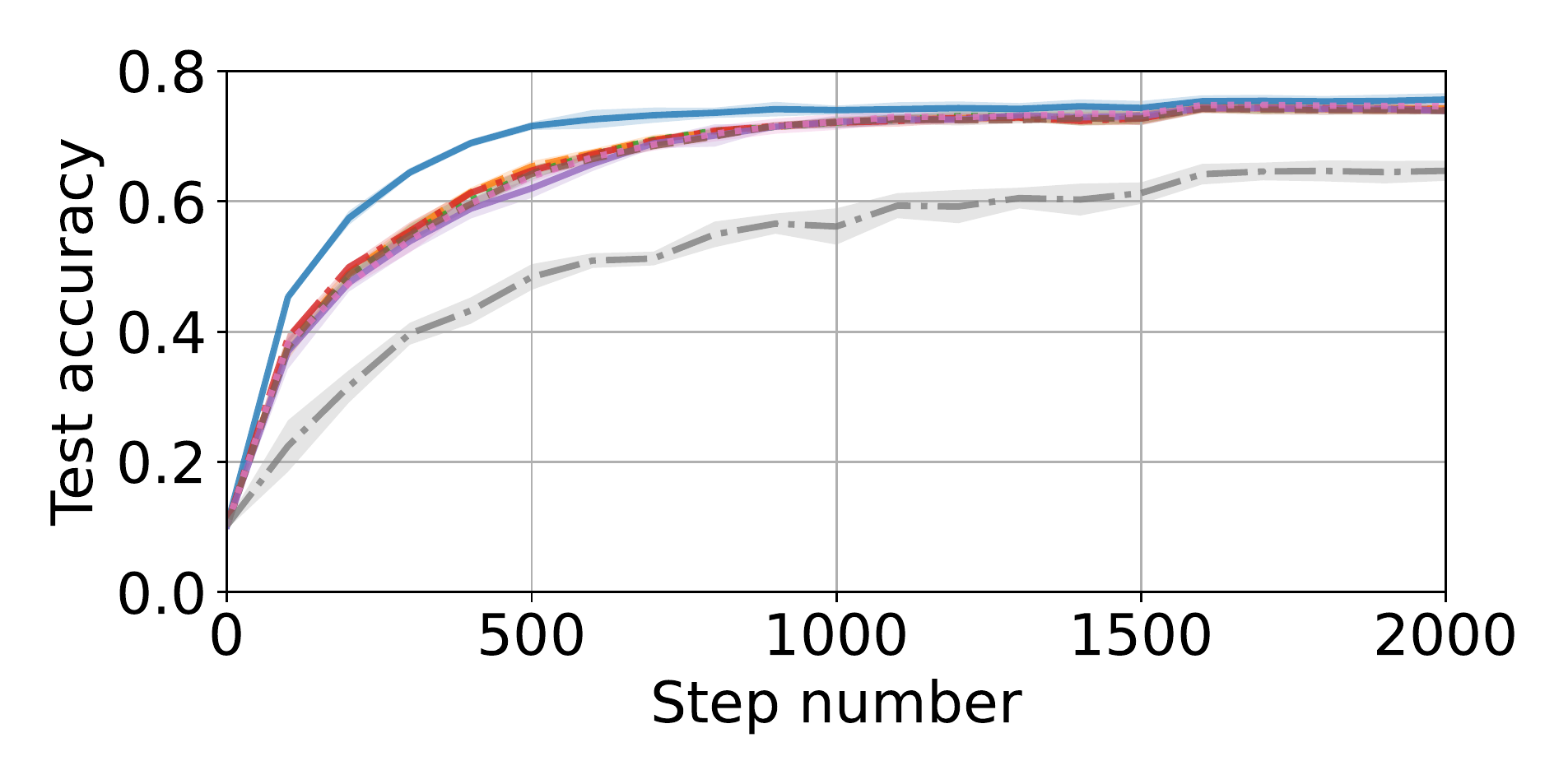}\\%
     \includegraphics[width=0.5\textwidth]{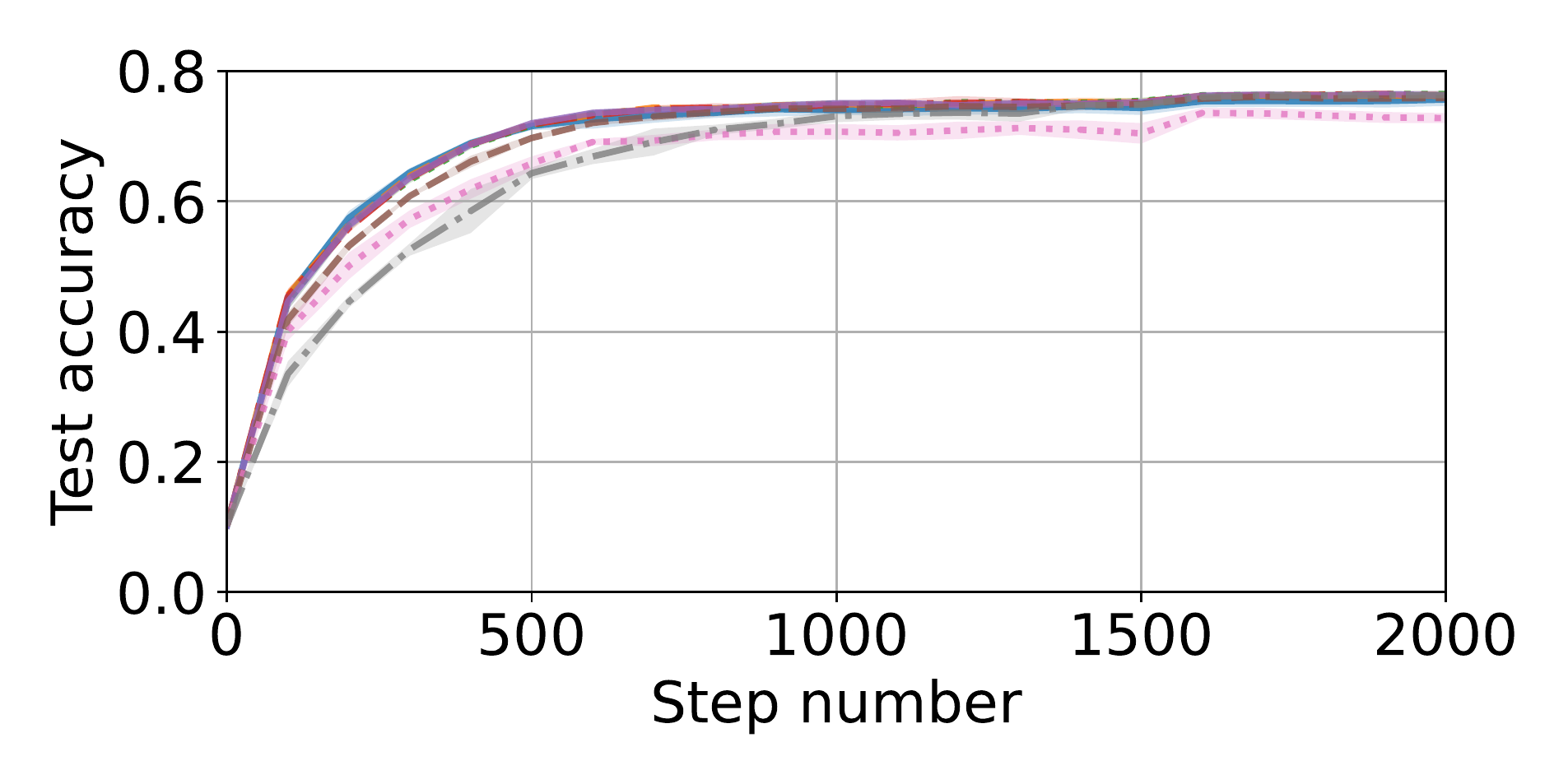}%
    \caption{Experiments on CIFAR-10 using robust D-SHB with $f = 3$ Byzantine among $n = 17$ workers, with $\beta = 0.9$ and $\alpha = 10$. The Byzantine workers execute the FOE (\textit{row 1, left}), ALIE (\textit{row 1, right}), Mimic (\textit{row 2, left}), SF (\textit{row 2, right}), and LF (\textit{row 3}) attacks.}
\label{fig:plots_cifar_4}
\end{figure*}

\begin{figure*}[ht!]
    \centering
    \includegraphics[width=0.5\textwidth]{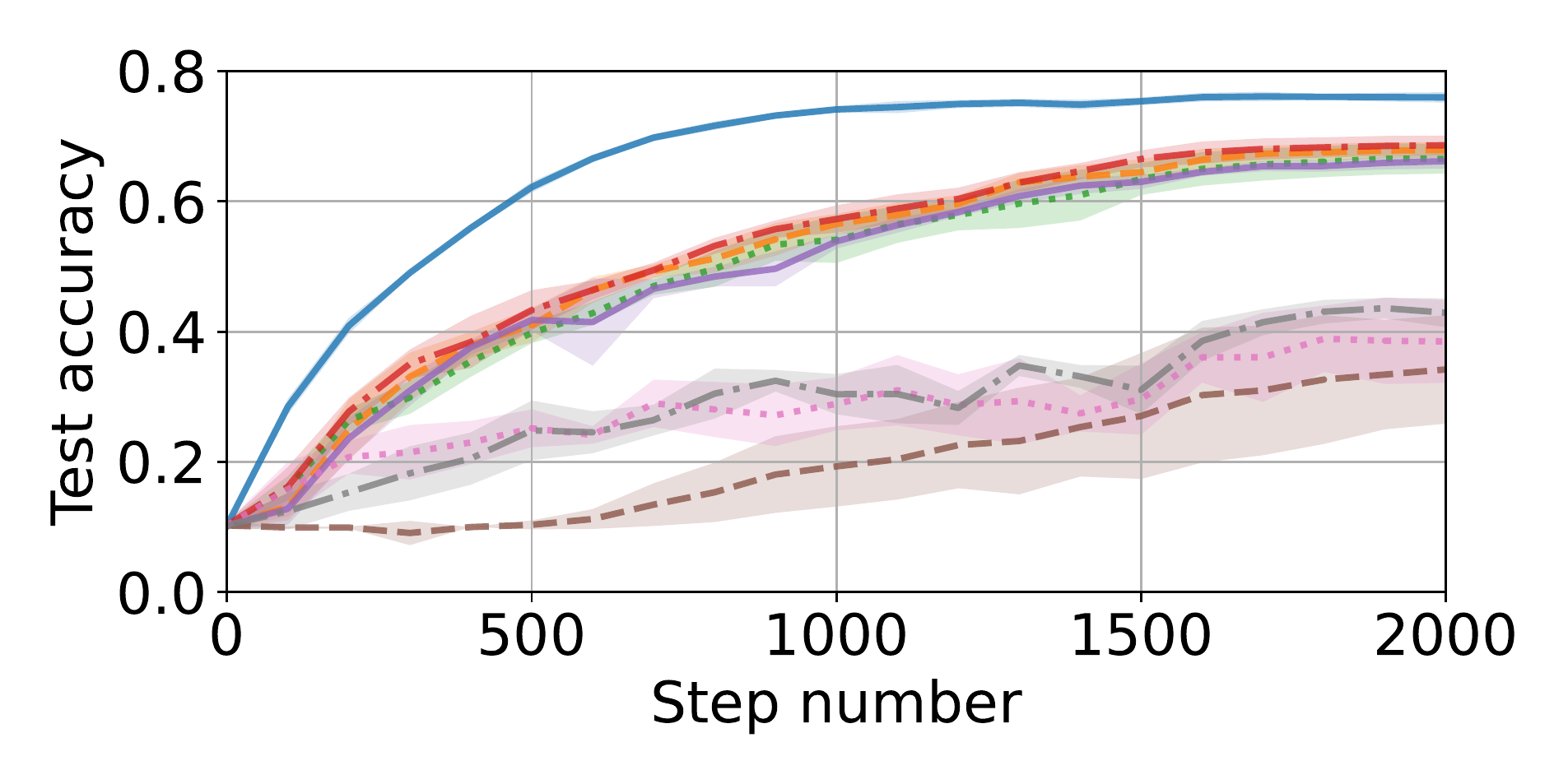}%
    \includegraphics[width=0.5\textwidth]{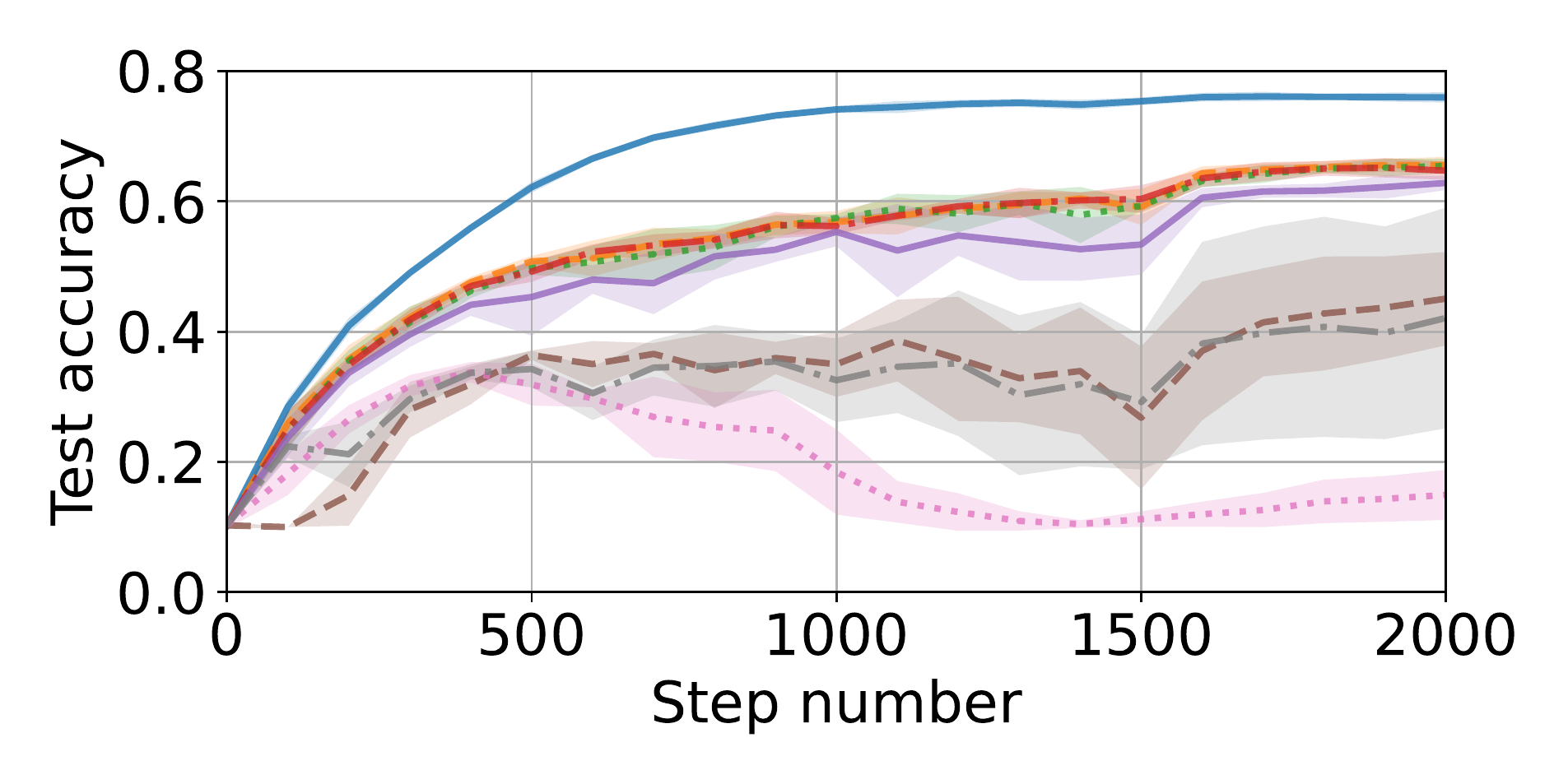}\\%
    \includegraphics[width=0.5\textwidth]{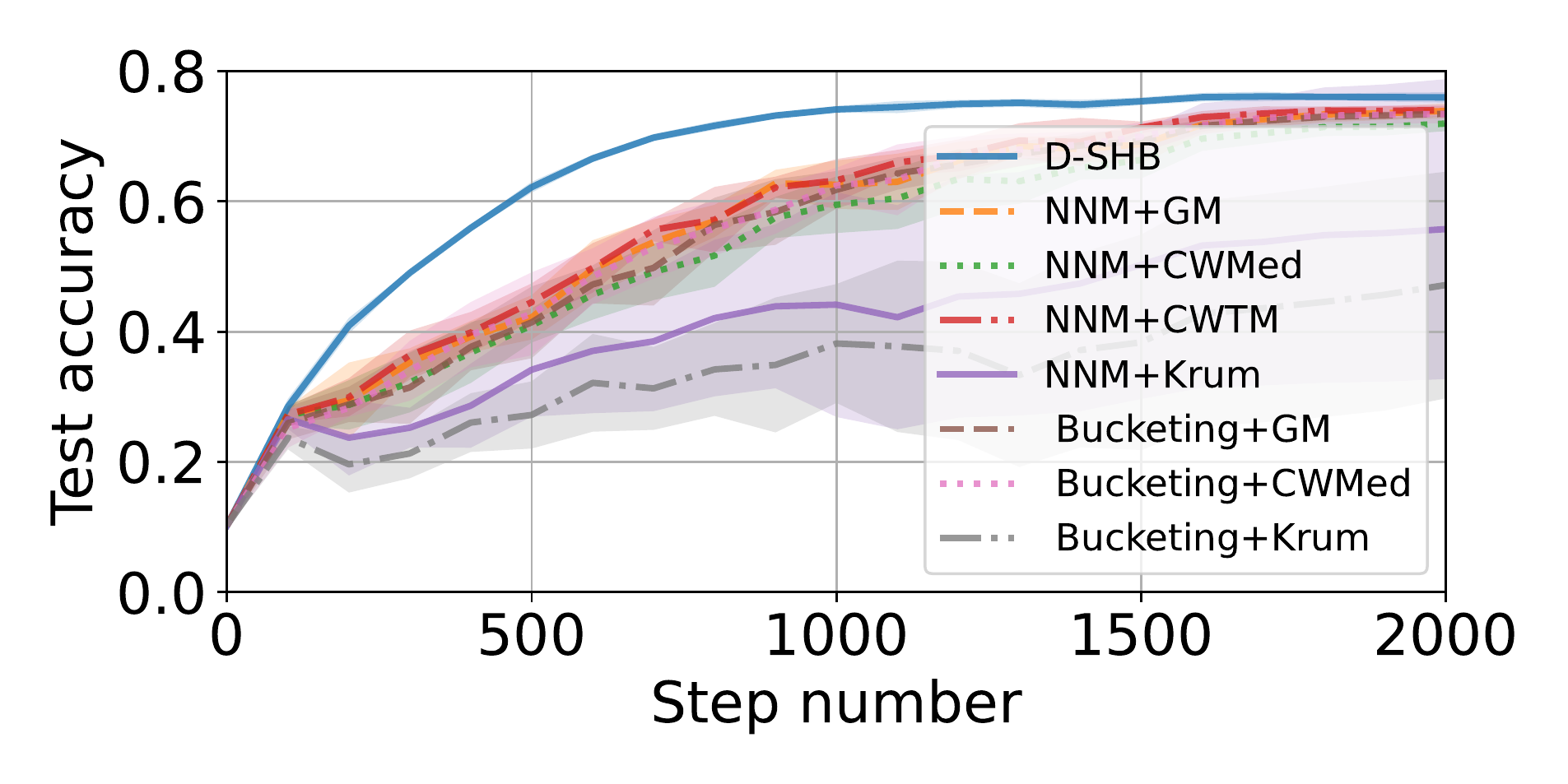}%
    \includegraphics[width=0.5\textwidth]{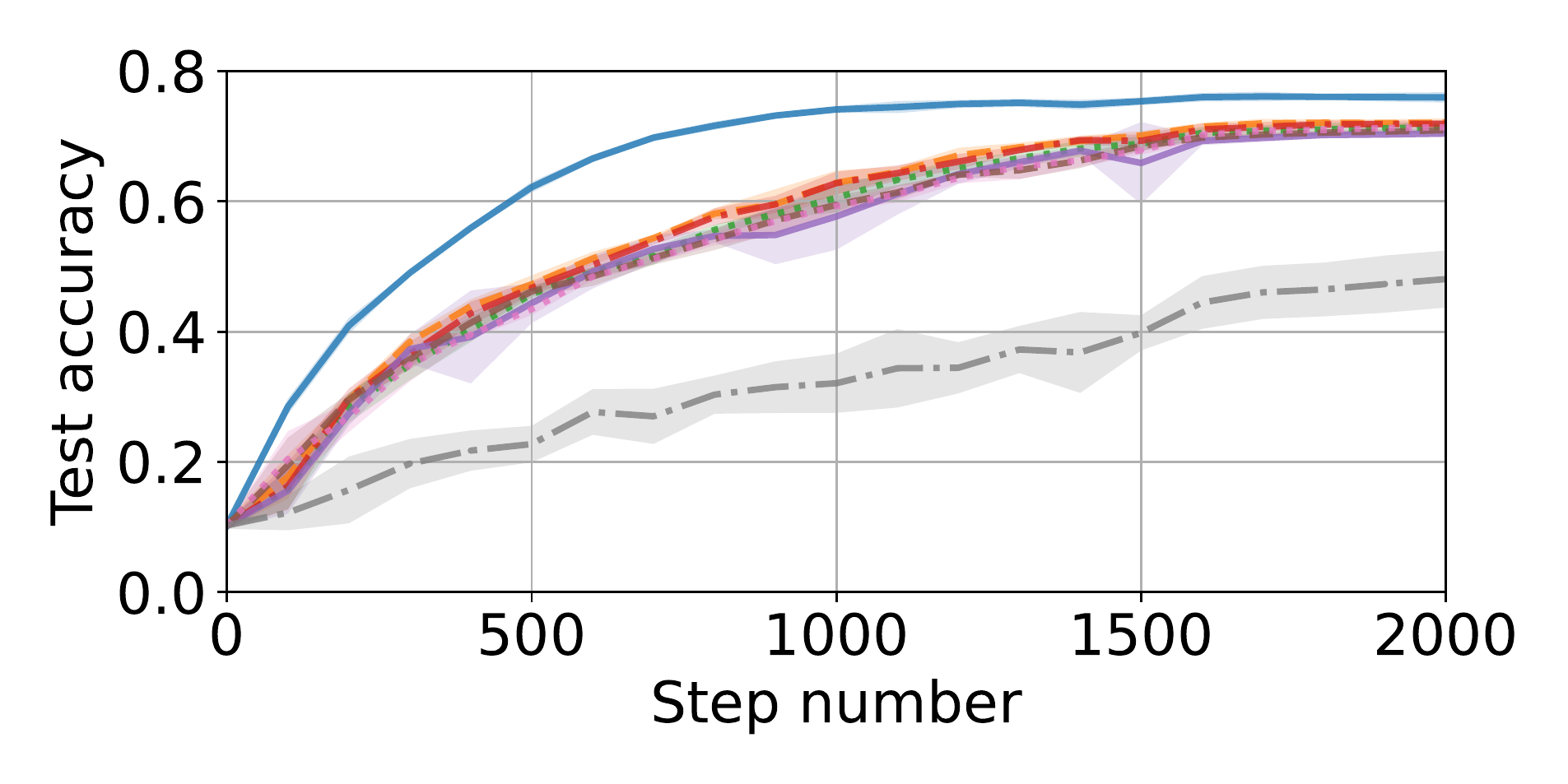}\\%
     \includegraphics[width=0.5\textwidth]{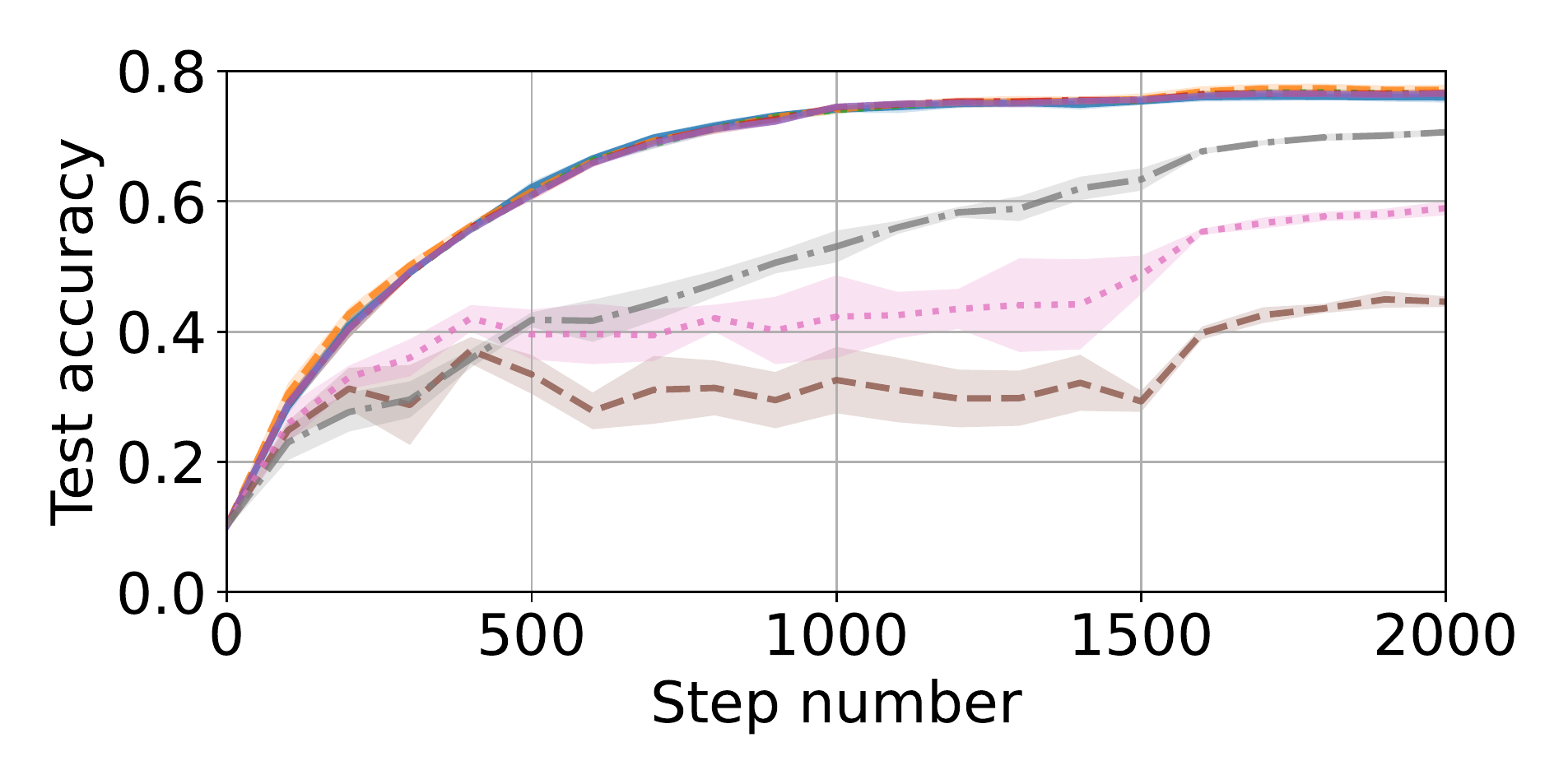}%
    \caption{Experiments on CIFAR-10 using robust D-SHB with $f = 4$ Byzantine among $n = 17$ workers, with $\beta = 0.99$ and $\alpha = 10$. The Byzantine workers execute the FOE (\textit{row 1, left}), ALIE (\textit{row 1, right}), Mimic (\textit{row 2, left}), SF (\textit{row 2, right}), and LF (\textit{row 3}) attacks.}
\label{fig:plots_cifar_5}
\end{figure*}

\end{document}


%

%

\onecolumn
\aistatstitle{Instructions for Paper Submissions to AISTATS 2022: \\
Supplementary Materials}

\section{FORMATTING INSTRUCTIONS}

To prepare a supplementary pdf file, we ask the authors to use \texttt{aistats2022.sty} as a style file and to follow the same formatting instructions as in the main paper.
The only difference is that the supplementary material must be in a \emph{single-column} format.
You can use \texttt{supplement.tex} in our starter pack as a starting point, or append the supplementary content to the main paper and split the final PDF into two separate files.

Note that reviewers are under no obligation to examine your supplementary material.

\section{MISSING PROOFS}

The supplementary materials may contain detailed proofs of the results that are missing in the main paper.

\subsection{Proof of Lemma 3}

\textit{In this section, we present the detailed proof of Lemma 3 and then [ ... ]}

\section{ADDITIONAL EXPERIMENTS}

If you have additional experimental results, you may include them in the supplementary materials.

\subsection{The Effect of Regularization Parameter}

\textit{Our algorithm depends on the regularization parameter $\lambda$. Figure 1 below illustrates the effect of this parameter on the performance of our algorithm. As we can see, [ ... ]}

\section{MISSING PROOFS}

The supplementary materials may contain detailed proofs of the results that are missing in the main paper.

\subsection{Proof of Lemma 3}

\textit{In this section, we present the detailed proof of Lemma 3 and then [ ... ]}

\section{ADDITIONAL EXPERIMENTS}

If you have additional experimental results, you may include them in the supplementary materials.

\subsection{The Effect of Regularization Parameter}

\textit{Our algorithm depends on the regularization parameter $\lambda$. Figure 1 below illustrates the effect of this parameter on the performance of our algorithm. As we can see, [ ... ]}

\section{MISSING PROOFS}

The supplementary materials may contain detailed proofs of the results that are missing in the main paper.

\subsection{Proof of Lemma 3}

\textit{In this section, we present the detailed proof of Lemma 3 and then [ ... ]}

\section{ADDITIONAL EXPERIMENTS}

If you have additional experimental results, you may include them in the supplementary materials.

\subsection{The Effect of Regularization Parameter}

\textit{Our algorithm depends on the regularization parameter $\lambda$. Figure 1 below illustrates the effect of this parameter on the performance of our algorithm. As we can see, [ ... ]}

\vfill